\def\eqref#1{equation~\ref{#1}}
\def\1{\bm{1}}
\def\vone{{\bm{1}}}
\def\vtheta{{\bm{\theta}}}
\def\vlambda{{\bm{\lambda}}}
\def\vb{{\bm{b}}}
\def\ve{{\bm{e}}}
\def\vh{{\bm{h}}}
\def\vj{{\bm{j}}}
\def\vu{{\bm{u}}}
\def\vw{{\bm{w}}}
\def\vx{{\bm{x}}}
\def\mA{{\bm{A}}}
\def\mB{{\bm{B}}}
\def\mC{{\bm{C}}}
\def\mE{{\bm{E}}}
\def\mG{{\bm{G}}}
\def\mH{{\bm{H}}}
\def\mI{{\bm{I}}}
\def\mK{{\bm{K}}}
\def\mL{{\bm{L}}}
\def\mM{{\bm{M}}}
\def\mU{{\bm{U}}}
\def\mW{{\bm{W}}}
\def\mX{{\bm{X}}}
\def\mLambda{{\bm{\Lambda}}}
\def\mSigma{{\bm{\Sigma}}}
\def\mTheta{{\bm{\Theta}}}
\DeclareMathAlphabet{\mathsfit}{\encodingdefault}{\sfdefault}{m}{sl}
\SetMathAlphabet{\mathsfit}{bold}{\encodingdefault}{\sfdefault}{bx}{n}
\def\emA{{A}}
\def\emB{{B}}
\def\emC{{C}}
\newcommand{\R}{\mathbb{R}}
\newcommand{\rvec}{\mathrm{rvec}}
\newcommand{\rvech}{\mathrm{rvech}}
\newcommand{\Hess}{\mathrm{Hess}}
\newcommand{\Gram}{\mathrm{Gram}}
\DeclareMathOperator*{\argmax}{arg\,max}
\DeclareMathOperator*{\argmin}{arg\,min}
\DeclareMathOperator{\Tr}{Tr}
\DeclareRobustCommand\onedot{\futurelet\@let@token\@onedot}
\def\@onedot{\ifx\@let@token.\else.\null\fi\xspace}
\def\eg{\emph{e.g}\onedot} 
\def\ie{\emph{i.e}\onedot}
\newtheorem{theorem}{Theorem}
\newtheorem{corollary}{Corollary}[theorem]
\newtheorem{proposition}{Proposition}[theorem]
\newcommand{\denselist}{\itemsep -3pt\partopsep -7pt}
\title{Guiding Neural Collapse: Optimising Towards the Nearest Simplex Equiangular Tight Frame}
\author{%
  Evan Markou\\
  Australian National University\\
  \texttt{evan.markou@anu.edu.au} \\
  \And
  Thalaiyasingam Ajanthan \\
  Australian National University \& Amazon \\
  \texttt{thalaiyasingam.ajanthan@anu.edu.au} \\
  \AND
  Stephen Gould \\
  Australian National University\\
  \texttt{stephen.gould@anu.edu.au} \\
}
\begin{document}

\maketitle

\begin{abstract}
  Neural Collapse (NC) is a recently observed phenomenon in neural networks that characterises the solution space of the final classifier layer when trained until zero training loss. Specifically, NC suggests that the final classifier layer converges to a Simplex Equiangular Tight Frame (ETF), which maximally separates the weights corresponding to each class. By duality, the penultimate layer feature means also converge to the same simplex ETF. Since this simple symmetric structure is optimal, our idea is to utilise this property to improve convergence speed. Specifically, we introduce the notion of \textit{nearest simplex ETF geometry} for the penultimate layer features at any given training iteration, by formulating it as a Riemannian optimisation. Then, at each iteration, the classifier weights are implicitly set to the nearest simplex ETF by solving this inner-optimisation, which is encapsulated within a declarative node to allow backpropagation. Our experiments on synthetic and real-world architectures for classification tasks demonstrate that our approach accelerates convergence and enhances training stability\footnote{Code available at \href{https://github.com/evanmarkou/Guiding-Neural-Collapse.git}{https://github.com/evanmarkou/Guiding-Neural-Collapse.git}.}. 
\end{abstract}

\section{Introduction}

While modern deep neural networks (DNNs) have demonstrated remarkable success in solving diverse machine learning problems~\cite{GoodBengCour16,Jumper2021HighlyAP,LeCun2015DeepL}, the fundamental mechanisms underlying their training process remain elusive.
In recent years, considerable research efforts have focused on delineating the optimisation trajectory and characterising the solution space resulting from the optimisation process in training neural networks~\cite{zhang2017understanding, du2019gradient, neyshabur2014search, li2018visualizing}. One such finding is that gradient descent algorithms, when combined with certain loss functions, introduce an implicit bias that often favours max-margin solutions, influencing the learned representations and decision boundaries.~\cite{Lyu2020Gradient,soudry2018implicitbias, pmlr-v125-ji20a, gunasekar2018implicit, NEURIPS2019_f39ae9ff, shamir2021gradient, you2020robust, jagadeesan2022inductive, neyshabur2014search}.

In this vein, Neural Collapse (NC) is a recently observed phenomenon in neural networks that characterises the solution space of the final classifier layer in both balanced~\cite{nc-original, zhu2021a, zhou2022optimization, han2022neural, mixon-nc, weinan2022emergence, lu2022neural, ji2021unconstrained} and imbalanced dataset settings~\cite{layer-peeled, thrampoulidis2022imbalance, behnia2023implicit}.
Specifically, NC suggests that the final classifier layer converges to a Simplex Equiangular Tight Frame (ETF), which maximally separates the weights corresponding to each class, and by duality, the penultimate layer feature means converge to the classifier weights, \ie, to the simplex ETF (formal definitions are provided in Appendix~\ref{appendix-background}). This simple, symmetric structure is shown to be the only set of optimal solutions for a variety of loss functions when the features are also assumed to be free parameters, \ie, Unconstrained Feature Models (UFMs)~\cite{ji2021unconstrained, layer-peeled, zhou2022optimization, zhu2021a, han2022neural, zhou2022all}. Nevertheless, even in realistic large-scale deep networks, this phenomenon is observed when trained to convergence, even after attaining zero training error.

Since we can characterise the optimal solution space for the classifier layer, a natural extension is to \emph{leverage the simplex ETF structure of the classifier weights to improve training}. 
To this end, researchers have tried fixing the classifier weights to a canonical simplex ETF, effectively reducing the number of trainable parameters~\cite{zhu2021a}. However, in practice, this approach does not improve the convergence speed as the backbone network still needs to do the heavy lifting of matching feature means to the chosen fixed simplex ETF.

In this work, we introduce a mechanism for finding the nearest simplex ETF to the features at any given training iteration. Specifically, the nearest simplex ETF is determined by solving a Riemannian optimisation problem. Therefore, our classifier weights are dynamically updated based on the penultimate layer feature means at each iteration, \ie, implicitly defined rather than trained using gradient descent. Additionally, by constructing this inner-optimisation problem as a deep declarative node~\cite{ddn-original}, we allow gradients to propagate through the Riemannian optimisation facilitating end-to-end learning. Our whole framework significantly speeds up convergence to a NC solution compared to the fixed simplex ETF and conventional learnable classifier approaches. We demonstrate the effectiveness of our approach on synthetic UFMs and standard image classification experiments.

Our main contributions are as follows:
\begin{enumerate}
    \itemsep -1pt
    \item We introduce the notion of the nearest simplex ETF geometry given the penultimate layer features. Instead of selecting a predetermined simplex ETF (canonical or random), we implicitly fix the classifier as the solution to a Riemannian optimisation problem.
    \item To establish end-to-end learning, we encapsulate the Riemannian optimisation problem of determining the nearest simplex ETF geometry within a declarative node. This allows for efficient backpropagation throughout the network.
    \item We demonstrate that our method achieves an optimal neural collapse solution more rapidly compared to fixed simplex ETF methods or conventional training approaches, where a learned linear classifier is employed. Additionally, our method ensures training stability by markedly reducing variance in network performance.
\end{enumerate}

\section{Related Work}

\paragraph{Neural Collapse and Simplex ETFs.}

\citet{zhu2021a} proposed fixing classifier weights to a simplex ETF, reducing parameters while maintaining performance. Simplex ETFs effectively tackle imbalanced learning, as demonstrated by~\citet{yang2022inducing}, where they fix the target classifier to an arbitrary simplex ETF, relying on the network's over-parameterisation to adapt. Similarly,~\citet{UniCIL} addressed class incremental learning by fixing the target classifier to a simplex ETF. They advocate adjusting prototype means towards the simplex ETF using a convex combination, smoothly guiding backbone features into the targeted simplex ETF. However, these methods did not yield any benefits regarding convergence speed. The work most relevant to ours is that of \citet{peifeng-directions-matter}, who argued about the significance of feature directions, particularly in long-tailed learning scenarios. They compared their method against a fixed simplex ETF target, formulating their problem to enable the network to learn feature direction through a rotation matrix. Additionally, they efficiently addressed their optimisation using trivialisation techniques~\cite{trivialisation, lezcano2019cheap}. However, they did not demonstrate any improvements in convergence speed over the fixed simplex ETF, achieving only a minimal increase in test accuracy. Fixing a classifier is not a recent concept, as it has been proposed prior to the emergence of neural collapse~\cite{pernici2021regular, tanwisuth2021prototype, hoffer2018fix}. Most notably,~\citet{pernici2021regular} demonstrated improved convergence speed by fixing the classifier to a simplex structure only on ImageNet while maintaining comparable performance on smaller-scale datasets. In contrast, our method shows superior convergence speed compared to both a fixed simplex ETF and a learned classifier across both small and large-scale datasets.

\paragraph{Optimisation on Smooth Manifolds.}
Our optimisation problem involves orthogonality constraints, characterised by the Stiefel manifold~\cite{optimization-matrix-mani, boumal2023introduction}. Due to the nonlinearity of these constraints, efficiently solving such problems requires leveraging Riemannian geometry~\cite{edelman1998geometry}. A multitude of works are dedicated to solving such problems by either transforming existing classical optimisation techniques into Riemannian equivalent algorithms~\cite{genrtr, zhang2016first, gao2018new, wen2013feasible, JCM-39-207} or by carefully designing penalty functions to address equivalent unconstrained problems~\cite{xiao2024dissolving, xiao2021solving}. In our approach, we opt for a retraction-based Riemannian optimisation algorithm~\cite{genrtr} to optimally handle orthogonality constraints.

\paragraph{Implicit Differentiable Optimisation.}
In a neural network setting, end-to-end architectures are commonplace. To backpropagate solutions to optimisation problems, we rely on machinery from implicit differentiation. Pioneering works~\cite{amos2017optnet, agrawal2019differentiable} demonstrated efficient gradient backpropagation when dealing with solutions of convex optimisation problems. This concept was independently introduced as a generalised version by \citet{ddn-original, gould2016differentiating} to encompass any twice-differentiable optimisation problem. A key advantage of Deep Declarative Networks (DDNs) lies in their ability to efficiently solve problems at any scale by leveraging the problem's underlying structure~\cite{gould2022exploiting}. Our setting involves utilising an equality-constrained declarative node to efficiently backpropagate through the network.

\section{Optimising Towards the Nearest Simplex ETF}
In this section, we introduce our method to determine the nearest simplex ETF geometry and detail how we can dynamically steer the training algorithm to converge towards this particular solution.

\subsection{Problem Setup}
Let us first introduce key notation that will be useful when formulating our optimisation problem.

\paragraph{Simplex ETF.}
Mathematically, a general simplex ETF is a collection of points in $\R^C$ specified by the columns of a matrix
\begin{equation}
    \mM = \alpha\sqrt{\frac{C}{C-1}}\mU\left(\mI_C - \frac{1}{C}\vone_C \vone_C^\top\right)\ .
\end{equation}
Here, $\alpha \in \R_+$ denotes an arbitrary scale factor, $\vone_C$ is the $C$-dimensional vector of ones, and $\mU \in \R^{d \times C}$ (with $d \geq C$) represents a semi-orthogonal matrix ($\mU^\top \mU = \mI_C$). Note that there are many simplex ETFs in $\R^C$ as the rotation $\mU$ varies, and $\mM$ is rank-deficient.
Additionally, the standard simplex ETF with unit Frobenius norm is defined as: $\Tilde{\mM} = \frac{1}{\sqrt{C-1}}\left(\mI_C - \frac{1}{C}\vone_C\vone_C^\top\right)$.

\paragraph{Mean of Features.}
Consider a classification dataset $\mathcal{D} = \{(\vx_i, y_i) \mid i=1,\ldots, N\}$ where the data $\vx_i\in\mathcal{X}$ and labels $y_i\in\mathcal{Y}=\{1,\ldots,C\}$. Suppose, $n_c$ is the number of samples correspond to label $c$, then $\sum_{c=1}^C n_c=N$. Let us consider a scenario where we have a collection of features defined as, 
\begin{equation}
    \mH \triangleq [\vh_{c,i} : 1 \leq c \leq C,\, 1 \leq i \leq n_c]\in \R^{d\times N}\ .
\end{equation}
Here, each feature may originate from a nonlinear compound mapping of input data through a neural network, denoted as, $\vh_{y_i,i} = \phi_{\vtheta}(\vx_i)$ for the data sample $(\vx_i, y_i)$.
Now, for the final layer, our decision variables (weights and biases) are represented as $\mW \triangleq [\vw_1, \ldots, \vw_C]^\top\in\R^{C\times d}$, and $\vb\in\R^C$, and the logits for the $i$-th sample is computed as,
\begin{equation}\label{eqn:generalnn}
\psi_\mTheta (\vx_i) = \mW \vh_{y_i,i} + \vb\ ,\qquad\mbox{where $\quad \vh_{y_i,i} = \phi_{\vtheta}(\vx_i)$}\ .
\end{equation}
In UFMs, the features are assumed to be free variables, which serves as a rough approximation for neural networks and helps derive theoretical guarantees.
Additionally, we define the global mean and per-class mean of the features $\{\vh_{c,i}\}$ as:
\begin{equation}
    \vh_G \triangleq \frac{1}{N}\sum_{c=1}^C\sum_{i=1}^{n_c}\vh_{c,i}\ , \quad \Bar{\vh}_c \triangleq \frac{1}{n_c}\sum_{i=1}^{n_c}\vh_{c,i}\ , \quad(1\leq c \leq C)\ ,
    \label{eqn:globalclassmean}
\end{equation}

and the globally centred feature mean matrix as,
\begin{equation}
    \Bar{\mH} \triangleq [\Bar{\vh}_1 - \vh_G, \ldots, \Bar{\vh}_C - \vh_G] \in\R^{d\times C}\ .
\end{equation}
Finally, we scale the feature mean matrix to have unit Frobenius norm, \ie, $\Tilde{\mH} = \Bar{\mH}/\|\Bar{\mH}\|_F$ which will be used in formulation below.

\subsection{Nearest Simplex ETF through Riemannian Optimisation}
\label{subsec:nearestetf}
Once we obtain the feature means, our objective is to calculate the nearest simplex ETF based on these means and subsequently adjust the classifier weights $\mW$ to align with this particular simplex ETF. The rationale is to identify and establish a simplex ETF that closely corresponds to the feature means at any given iteration. This approach aims to expedite convergence during the training process by providing the algorithm with a starting point that is closer to an optimal solution rather than requiring it to learn a simplex ETF direction or converge towards an arbitrary one. 

To find the nearest simplex ETF geometry, we solve the following Riemannian optimisation problem
\begin{equation}
    \mathop{\text{minimize}}_{\mU \in St^d_C}\, \left\|\Tilde{\mH} - \mU\Tilde{\mM}\right\|^2_F
    \label{eqn:closestetf}
\end{equation}
where $St^d_C = \{\mX \in \R^{d\times C}: \mX^\top \mX = \mI_C\}$.
Here, $\Tilde{\mM}$ is the standard simplex ETF with unit Frobenius norm, and the set of the orthogonality constraints $St^d_C$ forms a compact Stiefel manifold~\cite{optimization-matrix-mani, boumal2023introduction} embedded in a Euclidean space.

\subsection{Proximal Problem}

The solution to the Riemannian optimisation problem, denoted as $\mU^\star$, is not unique since a component of $\mU^\star$ lies in the null space of $\Tilde{\mM}$. As simplex ETFs reside in $(C-1)$-dimensional space, the matrix $\Tilde{\mM}$ is rank-one deficient. Consequently, we are faced with a family of solutions, leading to challenges in training stability, as we may oscillate between multiple simplex ETF directions. We address this issue by introducing a proximal term to the problem's objective function. This guarantees the uniqueness of the solution and stabilises the training process, ensuring that our problem converges to a solution closer to the previous one.

So, the original problem in Equation~\ref{eqn:closestetf} is transformed into:
\begin{equation}
    \mathop{\text{minimize}}_{\mU \in St^d_C}\, \left\|\Tilde{\mH} - \mU\Tilde{\mM}\right\|^2_F + \frac{\delta}{2}\Big\|\mU - \mU_{\text{prox}}\Big\|^2_F\ .
    \label{eqn:closestetfprox}
\end{equation}
Here, $\mU_{\text{prox}}$ represents the proximal target simplex ETF direction, and $\delta>0$ serves as the proximal coefficient, handling the trade-off between achieving the optimal solution's proximity to the feature means and its proximity to a given simplex ETF direction. In fact, one can perceive our problem formulation in Equation~\ref{eqn:closestetfprox} as a generalisation to a predetermined fixed simplex ETF solution. This is evident when considering that if we significantly increase $\delta$, the optimal direction $\mU^\star$ would converge towards the fixed proximal direction $\mU_{\text{prox}}$.

\subsection{General Learning Setting}

Our problem formulation, following the general deep neural network architecture in Equation~\ref{eqn:generalnn}, can be seen as a bilevel optimisation problem as follows:

\begin{equation}
    \begin{aligned}
        &\mathop{\text{minimize}}_\mTheta\, \mathcal{L}(\mathcal{D}; \mTheta, \mU^\star) \triangleq -\frac{1}{N} \sum_{i=1}^N \log\left(\frac{\exp\left(\psi_{\mTheta}(\vx_i, \mU^\star)_{y_i}\right)}{\sum_{j=1}^C \exp\left(\psi_{\mTheta}(\vx_i, \mU^\star)_{j}\right)}\right)\ ,\\[5pt]
        &\mathop{\text{subject to}}\, \mU^\star \in \argmin_{\mU\in St^d_C}  \left\|\Tilde{\mH} - \mU\Tilde{\mM}\right\|^2_F + \frac{\delta}{2}\Big\|\mU - \mU_{\text{prox}}\Big\|^2_F\ ,
    \end{aligned}
\end{equation}
where $\psi_{\mTheta}(\vx_i, \mU^\star) = \tau\mM \mU^\star (\vh_i - \vh_G)$ with $\vh_i = \phi_{\vtheta}(\vx_i)$.
Here, $\psi$ denotes the logits, where the classifier weights are set as $\mW=\mM \mU^\star$, and the bias is set to $\vb=-\mM \mU^\star\vh_G$ to account for feature centring.
Furthermore, $\mM$ is the standard simplex ETF, $\Tilde{\mM}$ is its normalised version, and $\Tilde{\mH}$ is the normalised centred feature matrix. The temperature parameter $\tau>0$ controls the lower bound of the cross-entropy loss when dealing with normalised features, as defined in~\cite[Theorem 1]{yaras-nc-riemannian}.

\subsection{Handling Stochastic Updates}

In practice, we use stochastic gradient descent updates, and, as such, adjustments to our computations are necessary. With each gradient update now based on a mini-batch, we implement two key changes. First, rather than directly optimising the problem of finding the nearest simplex ETF geometry concerning the feature means of the mini-batch, we introduce an exponential moving average operation during the computation of the feature means. This operation accumulates statistics and enhances training stability throughout iterations. Formally, at time step $t$, we have the following equation, where $\alpha \in \R$ represents the smoothing factor:
\begin{equation}
    \Tilde{\mH}_t = \alpha\Tilde{\mH}_{\text{batch}} + (1-\alpha)\Tilde{\mH}_{t-1}\ .
\end{equation}
Second, we employ stratified batch sampling to guarantee that all class labels are represented in the mini-batch. This ensures that we avoid degenerate solutions when finding the nearest simplex ETF geometry, as our optimisation problem requires input feature means for all $C$ classes. In cases where the number of classes exceeds the chosen batch size, we compute the per-class feature mean for the class labels present in the given batch. For the remaining class labels, we set their feature mean as the global mean of the batch. We repeat this process for each training iteration until we have sampled examples belonging to the missing class labels. At that point, we update the feature mean of those missing class labels with the new feature statistics. We reserve this method only for cases where the batch size is smaller than the number of labels since it can introduce instability during early iterations.

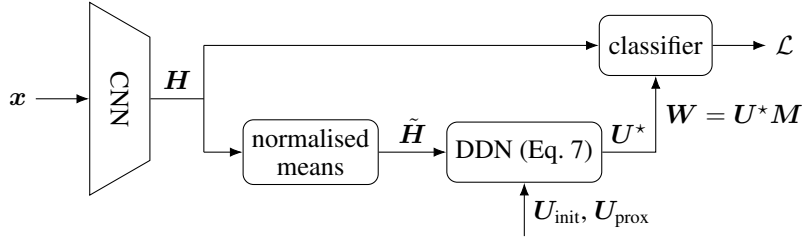
\begin{figure}
    \centering
    \begin{tikzpicture}[scale=0.71]
	\node[draw, trapezium, minimum height=0.8cm, rotate=-90] (cnn) at (0, 0) {CNN};	
	\draw[-{Latex[]}] ($(cnn.south) - (1,0)$) node[left] {$\vx$} -- (cnn.south);
	\coordinate (fork) at ($(cnn.north) + (1,0)$);
	\draw (cnn.north) -- (fork) node[pos=0.5, above] {$\mH$};
	\draw ($(fork) + (0,-1)$) -- ($(fork) + (0,1)$);
	
	\node[draw, minimum height=0.8cm, minimum width=1.5cm, rounded corners, align=center] (mu) at ($(fork) + (2,-1)$) {normalised\\means}; 
	\node[draw, minimum height=0.8cm, minimum width=1.5cm, rounded corners] (ddn) at ($(fork) + (6,-1)$) {DDN (Eq.~\ref{eqn:closestetfprox})};
	\node[draw, minimum height=0.8cm, minimum width=1.5cm, rounded corners] (cls) at ($(ddn.east) + (1,2)$) {classifier};

	\draw[-{LaTeX[]}] ($(fork) + (0,-1)$) -- (mu);
	\draw[-{LaTeX[]}] (mu) -- (ddn) node[pos=0.5, above] {$\tilde{\mH}$};
	\draw[-{LaTeX[]}] ($(fork) + (0,1)$) -- (cls);
	\draw[-{LaTeX[]}] (ddn) -- (cls|-ddn) node[pos=0.5, above] {$\mU^\star$} -- (cls) node[pos=0.5, right] {$\mW = \mU^\star \mM$};

	\draw[-{LaTeX[]}] (cls.east) -- ($(cls.east) + (1,0)$) node[right] {${\cal L}$};
	
	\draw[-{LaTeX[]}] ($(ddn.south) - (0,1)$) -- (ddn.south) node[pos=0, above right] {$\mU_{\text{init}}$, $\mU_{\text{prox}}$};
\end{tikzpicture}
\caption{Schematic of our proposed architecture for optimising towards the nearest simplex ETF. The classifier weights $\mW = \mU^\star \mM$ are an implicit function of the CNN features $\mH$. Note that the parameters of the CNN are updated via two gradient paths from the loss function ${\cal L}$, a direct path (top) and an indirect path through $\mU^\star$ (bottom).}
\label{fig:compgraph}
\vspace{-2ex}
\end{figure}

\subsection{Deep Declarative Layer}
We can backpropagate through the Riemannian optimisation problem to update the feature means using a declarative node~\cite{ddn-original}. Then, the features are updated from both the loss and the feature means through auto-differentiation. The motivation for developing the DDN layer lies in recognising that, despite the presence of a proximal term, abrupt and sudden changes to the classifier may occur as the features are updated. These changes can pose challenges for backpropagation, potentially disrupting the stability and convergence of the training process. Incorporating an additional stream of gradients through the feature means to account for such changes, as depicted in Figure~\ref{fig:compgraph}, assists in stabilising the feature updates during backpropagation. 

To efficiently backpropagate through the optimisation problem, we employ techniques described in~\citet{ddn-original} utilising the implicit function theorem to compute the gradients. In our case, we have a scalar objective function $f : \R^{d\times C} \to \R$, and a matrix constraint function $J : \R^{d\times C} \to \R^{C\times C}$. Since we have matrix variables, we use vectorisation techniques~\cite{magnus-matrix-calculus} to avoid numerically dealing with tensor gradients. More specifically, we have the following:

\begin{proposition}[Following directly from Proposition 4.5 in~\citet{ddn-original}]
\label{prop:ddn}
Consider the optimisation problem in Equation~\ref{eqn:closestetfprox}. Assume that the solution exists and that the objective function $f$ and the constraint function $J$ are twice differentiable in the neighbourhood of the solution. If the $\mathrm{rank}(\mA) = \frac{C(C+1)}{2}$ and $\mG$ is non-singular then:
\begin{equation*}
Dy(\mU) = \mG^{-1}\mA^\top (\mA\mG^{-1}\mA^\top)^{-1}(\mA\mG^{-1}\mB)-\mG^{-1}\mB\ ,
\label{eqn:dyu}
\end{equation*}
where,
\begin{equation*}
    \begin{aligned}
         \mA &= \rvech(D_\mU J(\tilde{\mH},\mU)) \in \R^{\frac{C(C+1)}{2}\times dC}\ ,\\[5pt]
         \mB &= \rvec(D_{\tilde{\mH}\mU}^2\, f(\tilde{\mH}, \mU)) \in \R^{dC\times dC}\ , \\[5pt]
         \mG &= \rvec(D_{\mU\mU}^2 \,f(\tilde{\mH}, \mU)) - \mLambda : \rvech(D_{\mU\mU}^2\,J(\tilde{\mH}, \mU)) \in \R^{dC\times dC}\ .\\[5pt]
    \end{aligned}
\end{equation*}
Here, the double dot product symbol $(:)$ denotes a tensor contraction on appropriate indices between the Lagrange multiplier matrix $\mLambda$ and a fourth-order tensor Hessian. Also, $\rvec(\cdot)$ and $\rvech(\cdot)$ refer to the row-major vectorisation and half-vectorisation operations, respectively. To find the Lagrange multiplier matrix $\mLambda\in \R^{C\times \frac{C+1}{2}}$, we solve the following equation where we have vectorised the matrix as $\vlambda\in\R^{\frac{C(C+1)}{2}}$,
\begin{equation*}
     \vlambda^\top \mA = D_\mU \,f(\tilde{\mH}, \mU)\ .
\end{equation*}

Alternatively, for a more efficient computation of the identity $\mG$, we can utilise the embedded gradient field method as defined in~\citet{Birtea2019-first-stiefel, BIRTEA-second-stiefel}. Therefore, we obtain:
\begin{equation*}
    \mG = \rvec(D_{\mU\mU}^2 \,f(\tilde{\mH}, \mU)) - \mI_d \otimes\Sigma(\mU)\in\R^{dC\times dC}\ ,
\end{equation*}
where $\Sigma(\mU) = \frac{1}{2}\!\left(D_\mU\,f(\tilde{\mH}, \mU)^\top \mU + \mU^\top D_\mU\,f(\tilde{\mH}, \mU)\right)$, and $\otimes$ here denotes Kronecker product.
\end{proposition}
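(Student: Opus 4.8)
The plan is to obtain the statement as a direct specialisation of the implicit function theorem for equality-constrained declarative nodes (Proposition 4.5 in~\citet{ddn-original}) to our particular problem in Equation~\ref{eqn:closestetfprox}, with the only real work being the careful bookkeeping of matrix-valued variables and a matrix-valued (here, symmetric-matrix-valued) constraint via vectorisation. First I would set up notation: write the solution map $\mU^\star = y(\tilde{\mH})$, recall that the Stiefel constraint $\mU^\top\mU = \mI_C$ is equivalent to the constraint function $J(\tilde{\mH},\mU) = \mU^\top\mU - \mI_C$ vanishing, and note that $J$ is symmetric-matrix-valued, so only its upper-triangular part carries independent information — this is precisely why the half-vectorisation $\rvech$ appears and why the effective number of scalar constraints is $\tfrac{C(C+1)}{2}$ rather than $C^2$. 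I would then form the Lagrangian $L(\mU,\mLambda) = f(\tilde{\mH},\mU) + \langle \mLambda, J(\tilde{\mH},\mU)\rangle$ with $\mLambda$ symmetric (hence parametrised by $\tfrac{C(C+1)}{2}$ numbers, which matches the stated shape $\mLambda\in\R^{C\times\frac{C+1}{2}}$ after folding), and write the first-order stationarity condition $D_\mU L = 0$, i.e.\ $D_\mU f(\tilde{\mH},\mU) = -\vlambda^\top\mA$ in vectorised form — this yields the displayed linear system for $\vlambda$, whose solvability is guaranteed by $\mathrm{rank}(\mA) = \tfrac{C(C+1)}{2}$.

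Next I would differentiate the coupled system (stationarity plus feasibility) with respect to the input $\tilde{\mH}$, using the implicit function theorem. Writing everything through row-major vectorisation $\rvec$, the Jacobian of the stationarity equation with respect to $(\rvec\mU, \vlambda)$ has the familiar KKT block structure $\begin{bmatrix} \mG & \mA^\top \\ \mA & \vzero\end{bmatrix}$, where $\mG$ is the vectorised Hessian of the Lagrangian in $\mU$, namely $\rvec(D^2_{\mU\mU}f) - \mLambda : \rvech(D^2_{\mU\mU}J)$ (the tensor contraction $:$ being the derivative of $\langle\mLambda, J\rangle$ twice in $\mU$), and the right-hand side carries the cross term $\mB = \rvec(D^2_{\tilde{\mH}\mU}f)$ from differentiating stationarity in $\tilde{\mH}$ (the constraint $J$ does not depend on $\tilde{\mH}$, so no analogous term appears there). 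Standard block elimination of this saddle system — valid precisely when $\mG$ is non-singular and $\mA$ has full row rank — produces the Schur-complement formula $Dy(\mU) = \mG^{-1}\mA^\top(\mA\mG^{-1}\mA^\top)^{-1}(\mA\mG^{-1}\mB) - \mG^{-1}\mB$, which is exactly the claimed expression. For the alternative form of $\mG$, I would invoke the embedded gradient field characterisation of the Riemannian Hessian on the Stiefel manifold from~\citet{Birtea2019-first-stiefel, BIRTEA-second-stiefel}: the Lagrange-multiplier term $\mLambda : \rvech(D^2_{\mU\mU}J)$ collapses, because $D^2_{\mU\mU}J$ is constant, to $\mI_d\otimes\Sigma(\mU)$ with $\Sigma(\mU) = \tfrac12(D_\mU f^\top\mU + \mU^\top D_\mU f)$ being the symmetrised multiplier block obtained by projecting the Euclidean gradient onto the normal space — so this is a short identification rather than a new computation.

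The main obstacle, and the only place genuine care is needed, is the vectorisation/index-bookkeeping: one must be consistent about row-major versus column-major $\rvec$, about how $\rvech$ relates to $\rvec$ through the duplication/elimination matrices (since the constraint is symmetric), and about how the fourth-order Hessian tensor $D^2_{\mU\mU}J$ contracts against the symmetric multiplier $\mLambda$ to land in $\R^{dC\times dC}$. Getting the shapes to line up — $\mA\in\R^{\frac{C(C+1)}{2}\times dC}$, $\mB,\mG\in\R^{dC\times dC}$ — is what forces the half-vectorisation and the $\tfrac{C(C+1)}{2}$ count, and it is easy to introduce spurious factors of $2$ on the off-diagonal entries if the duplication matrix is mishandled. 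Once the vectorisation conventions are pinned down, the result is a mechanical instantiation of~\cite[Proposition 4.5]{ddn-original}, so I would keep the proof brief: state the Lagrangian and KKT system, cite the implicit function theorem, perform the block elimination, and verify the dimension counts, relegating the duplication-matrix algebra and the embedded-gradient-field identity to references.
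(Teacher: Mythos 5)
Your proposal is correct and follows the same overall skeleton as the paper's derivation in Appendix~\ref{appendix-gradients}: the formula for $Dy(\mU)$ is not re-proved from scratch but obtained as an instantiation of Proposition~4.5 of \citet{ddn-original}, the identities $\mA$, $\mB$, $\mG$ come from matrix differentials plus row-major vectorisation, and the symmetry of $J(\tilde{\mH},\mU)=\mU^\top\mU-\mI_C$ is handled by half-vectorisation via an elimination matrix, which is exactly where the count $\tfrac{C(C+1)}{2}$ and the potential spurious factors of $2$ you flag come from. The one place you take a genuinely different route is the justification of $\Sigma(\mU)=\tfrac{1}{2}\bigl(D_\mU f(\tilde{\mH},\mU)^\top\mU+\mU^\top D_\mU f(\tilde{\mH},\mU)\bigr)$: the paper establishes this through the Lagrange-multiplier-function formalism of \citet{Birtea2019-first-stiefel, birtea2012geometrical}, explicitly evaluating the ratios $\det(\Gram_s(\vu))/\det(\Gram(\vu))$ and $\det(\Gram_{pq}(\vu))/\det(\Gram(\vu))$ for the constraint family $j_s, j_{pq}$ (its Proposition~\ref{prop:lagrangemulti-stiefel}), whereas you read the multipliers directly off the stationarity condition using that the normal space of $St^d_C$ at $\mU$ consists of matrices $\mU S$ with $S$ symmetric, so that requiring $\nabla f(\mU)-\mU\Sigma$ to be tangent forces the symmetrised expression. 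Your argument is shorter and entirely standard; the paper's Gram-determinant route buys the general embedded-gradient-field machinery it then reuses verbatim for the Hessian in Equation~\ref{eqn:hessoperator}. The only step you gloss that still requires an explicit (if routine) computation is that the contraction $\mLambda:\rvech(D^2_{\mU\mU}J)$ lands in the Kronecker form $\mI_d\otimes\Sigma(\mU)$ — this follows not merely from $D^2_{\mU\mU}J$ being constant but from the specific structure $\Hess j_s(\mU)=\mI_d\otimes(\ve_s\ve_s^\top)$ and $\Hess j_{pq}(\mU)=\mI_d\otimes(\ve_p\ve_q^\top+\ve_q\ve_p^\top)$, which the paper writes out and which you should verify when pinning down the row-major conventions.
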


A detailed derivation of each identity in the proposition can be found in Appendix~\ref{appendix-gradients}.

\section{Experiments}

In our experiments, we perform feature normalisation onto a hypersphere, a common practice in training neural networks, which improves representation and enhances model performance~\cite{yu2020learning, graf2021dissecting, ranjan2017l2, liu2017sphereface, wang2018cosface, wang2020understanding, chan2020deep, deng2019arcface, khosla2020supervised}. We find that combining classifier weight normalisation with feature normalisation accelerates convergence~\cite{yaras-nc-riemannian}. Given that simplex ETFs are inherently normalised, we include classifier weight normalisation in our standard training procedure to ensure fair method comparisons.

\paragraph{Experimental Setup.}
In this study, we conduct experiments on three model variants. First, the standard method involves training a model with learnable classifier weights, following conventional practice. Second, in the fixed ETF method, we set the classifier to a predefined simplex ETF. In all experiments, we choose the simplex ETF with canonical direction. In Appendix~\ref{appendix-results}, we also include additional experiments for fixed simplex ETFs with random directions generated from a Haar measure~\cite{mezzadri2006generate}. Last, our implicit ETF method, where we set the classifier weights on-the-fly as the simplex ETF closest to the current feature means.

We repeat experiments on each method five times with distinct random seeds and report the median values alongside their respective ranges. For reproducibility and to streamline hyperparameter tuning, we employed Automatic Gradient Descent (AGD)~\cite{agd-2023}. Following the authors' recommendation, we set the gain/momentum parameter to 10 to expedite convergence, aligning it with other widely used optimisers like Adam~\cite{adam} and SGD. Our experiments on real datasets run for 200 epochs with batch size 256; for the UFM analysis, we run 2000 iterations. 

 Our method underwent rigorous evaluation across various UFM sizes and real model architectures trained on actual datasets, including CIFAR10~\cite{krizhevsky2009learning}, CIFAR100~\cite{krizhevsky2009learning}, STL10~\cite{pmlr-v15-coates11a}, and ImageNet-1000~\cite{DenDon09Imagenet}, implemented on ResNet~\cite{he2016residual} and VGG~\cite{vgg} architectures. More specifically, we trained CIFAR10 on ResNet18 and VGG13, CIFAR100 and STL10 on ResNet50 and VGG13, and ImageNet-1000 on ResNet50. The input images were preprocessed pixel-wise by subtracting the mean and dividing by the standard deviation. Additionally, standard data augmentation techniques were applied, including random horizontal flips, rotations, and crops. All experiments were conducted using Nvidia RTX3090 and A100 GPUs.

\paragraph{Hyperparameter Selection and Riemannian Initialisation Schemes.}  We solve the Riemannian optimisation problem defined in Equation~\ref{eqn:closestetfprox} using a Riemannian Trust-Region method~\cite{genrtr} from pyManopt~\cite{pymanopt}. We maintain a proximal coefficient $\delta$ set to $10^{-3}$ consistently across all experiments. It is worth mentioning that algorithm convergence is robust to the precise value of $\delta$. In our problem, determining values for $\mU_{\text{init}}$ and $\mU_{\text{prox}}$ is crucial. We explored several methods to initialise these parameters. One approach involved setting both towards the canonical simplex ETF direction. This means initialising them as a partial orthogonal matrix where the first $C$ rows and columns form an identity matrix while the remaining $d-C$ rows are filled with zeros. Another approach is to initialise both of them as random orthogonal matrices from classical compact groups, selected according to a Haar measure~\cite{mezzadri2006generate}. In the end, the approach that yielded the most stable results at initialisation was to employ either of the aforementioned initialisation methods to solve the original problem without the proximal term in Equation~\ref{eqn:closestetf}. We then used the obtained $\mU^\star$ to initialise both $\mU_{\text{init}}$ and $\mU_{\text{prox}}$ for the problem in Equation~\ref{eqn:closestetfprox}. This process was carried out only for the first gradient update of the first epoch. In subsequent iterations, we update these parameters to the $\mU^\star$ obtained from the previous time step. Importantly, the proximal term is held fixed during each Riemannian optimisation.

Regarding the calculation of the exponential moving average of the feature means, we have found that employing a decay policy on the smoothing factor $\alpha$ yields optimal results. Specifically, we set $\alpha=2/(T+1)$, where $T$ represents the number of iterations. Additionally, we include a thresholding value of $10^{-4}$, such that if $\alpha$ falls below this threshold, we fix $\alpha$ to be equal to the threshold. This precaution ensures that $\alpha$ does not diminish throughout the iterations, thereby guaranteeing that the newly calculated feature means contribute sufficient statistics to the exponential moving average.

Finally, in our experiments, we set the temperature parameter $\tau$ to five. This choice aligns with the findings discussed by \citet{yaras-nc-riemannian}, highlighting the influence of the temperature parameter value on the extent of neural collapse statistics with normalised features.

\paragraph{Unconstrained Feature Models (UFMs).} Our experiments on UFMs, which provide a controlled setting for evaluating the effectiveness of our method, are done using the following configurations:
\begin{itemize}
    \denselist
    \item UFM-10: a 10-class UFM containing 1000 features with a dimension of 512.
    \item UFM-100: a 100-class UFM containing 5000 features with a dimension of 1024.
    \item UFM-200: a 200-class UFM containing 5000 features, with a dimension of 1024.
    \item UFM-1000: a 1000-class UFM containing 10000 features, with a dimension of 1024.
\end{itemize}

\begin{figure}
    \centering
    \begin{subfigure}{0.3\textwidth}
        \centering
        \includegraphics[width=\textwidth, height=0.8\textheight, keepaspectratio]{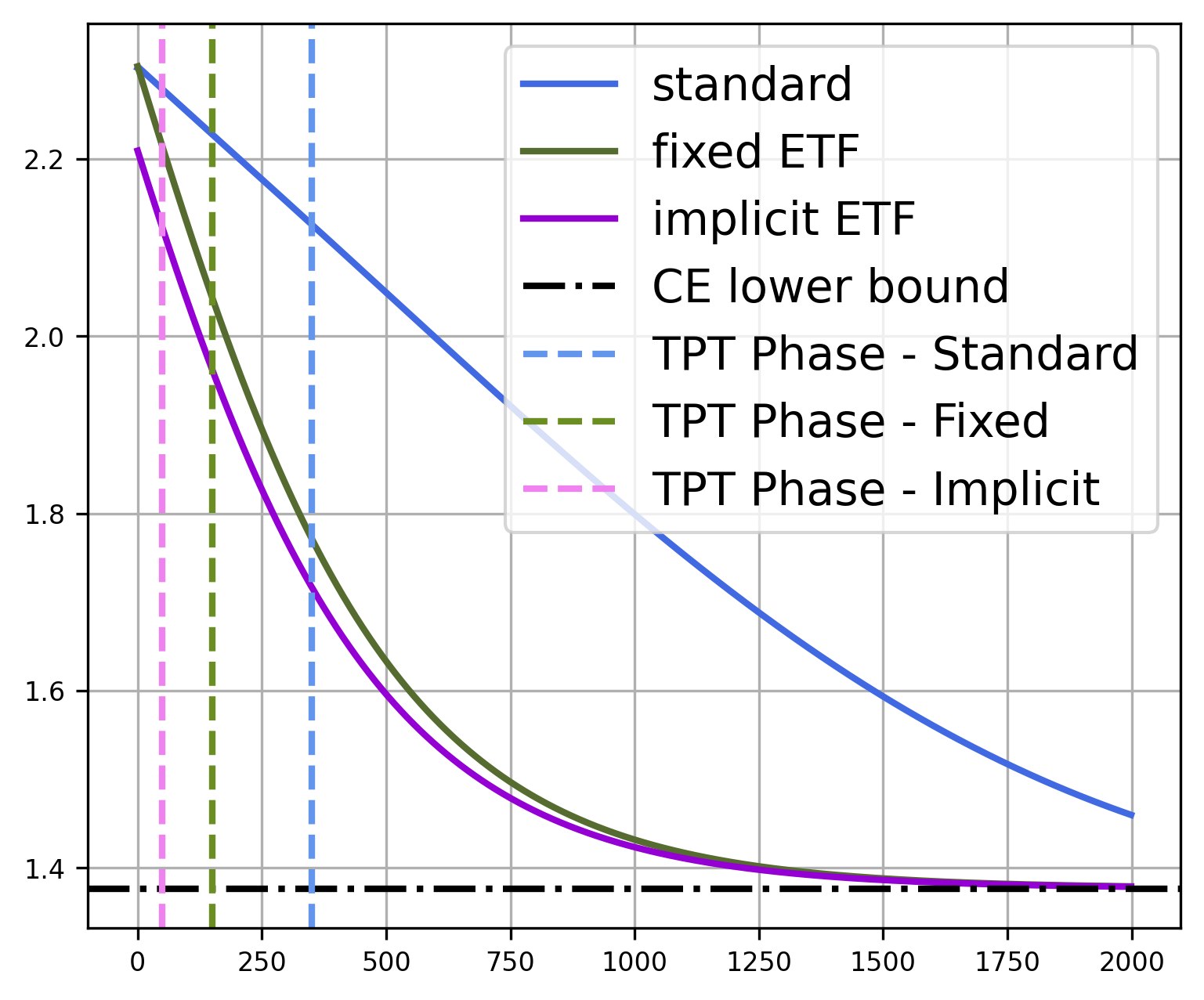}
        \caption{CE loss}
    \end{subfigure}
    \begin{subfigure}{0.3\textwidth}
        \centering
        \includegraphics[width=\textwidth, height=0.8\textheight, keepaspectratio]{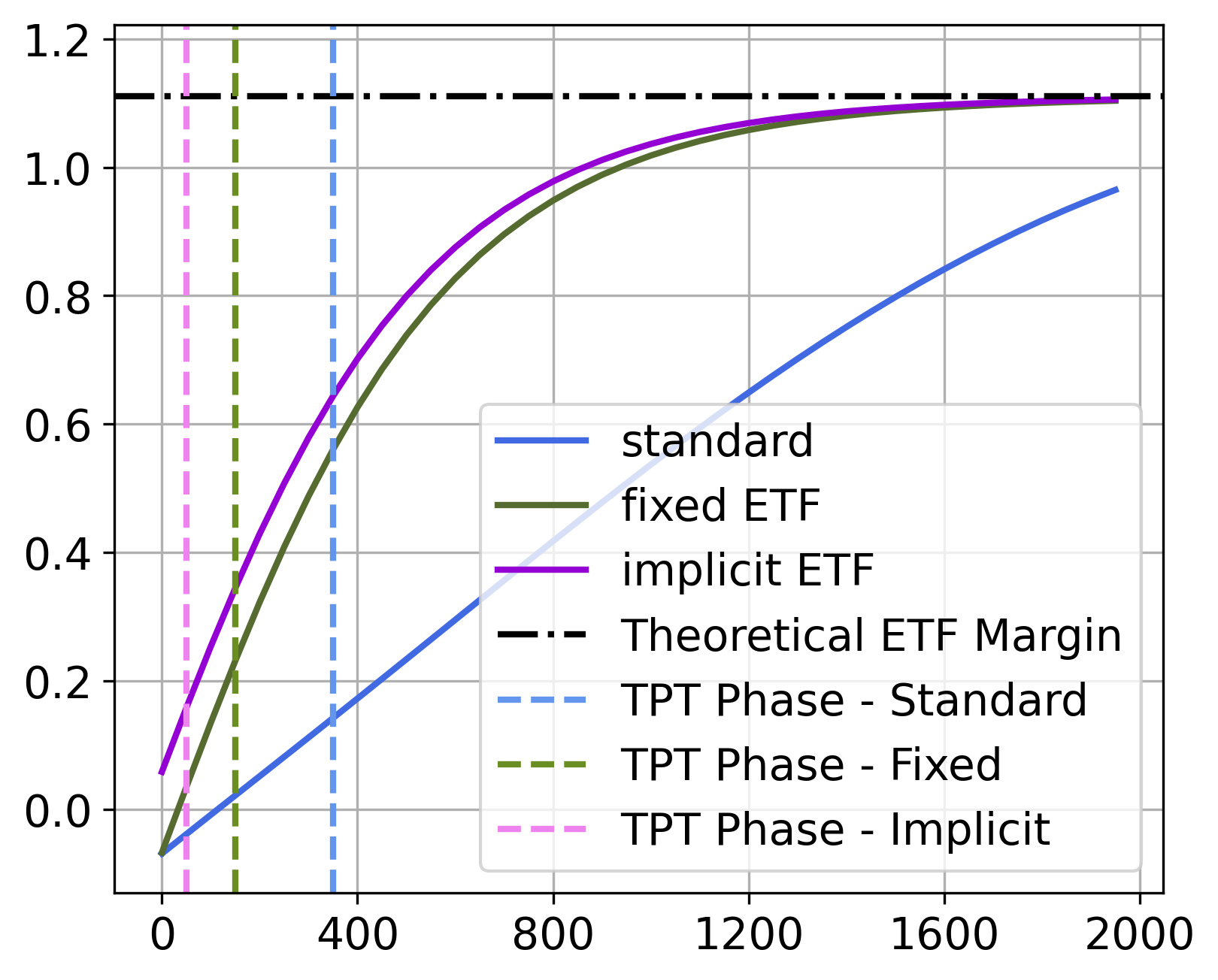}
        \caption{Avg Cos. Margin}
    \end{subfigure}
    \begin{subfigure}{0.3\textwidth}
        \centering
        \includegraphics[width=\textwidth, height=0.8\textheight, keepaspectratio]{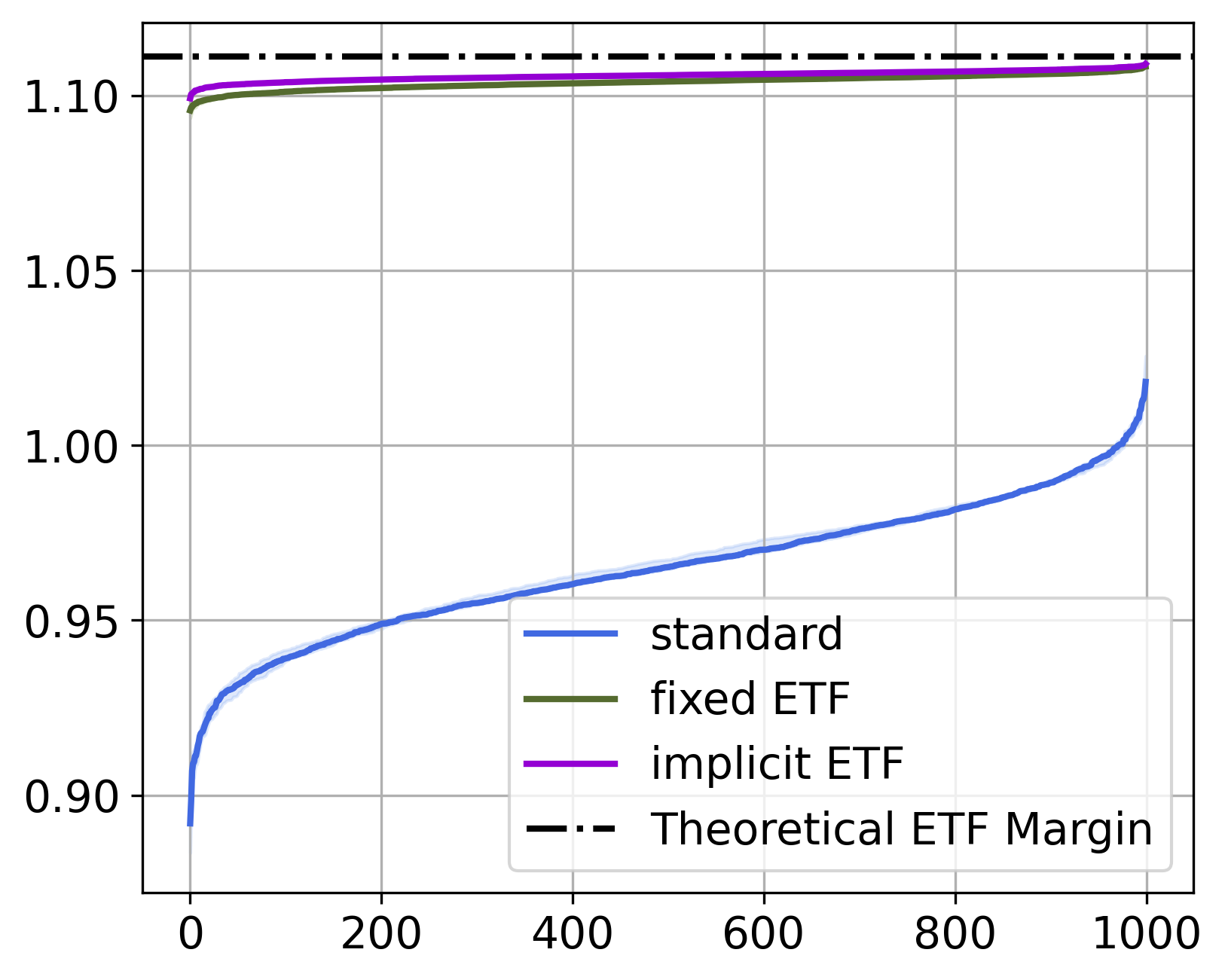}
        \caption{$\mathcal{P}_{CM}$ at EoT}
    \end{subfigure}
    \vskip0.3\baselineskip
    \begin{subfigure}{0.3\textwidth}
        \centering
        \includegraphics[width=\textwidth, height=0.8\textheight, keepaspectratio]{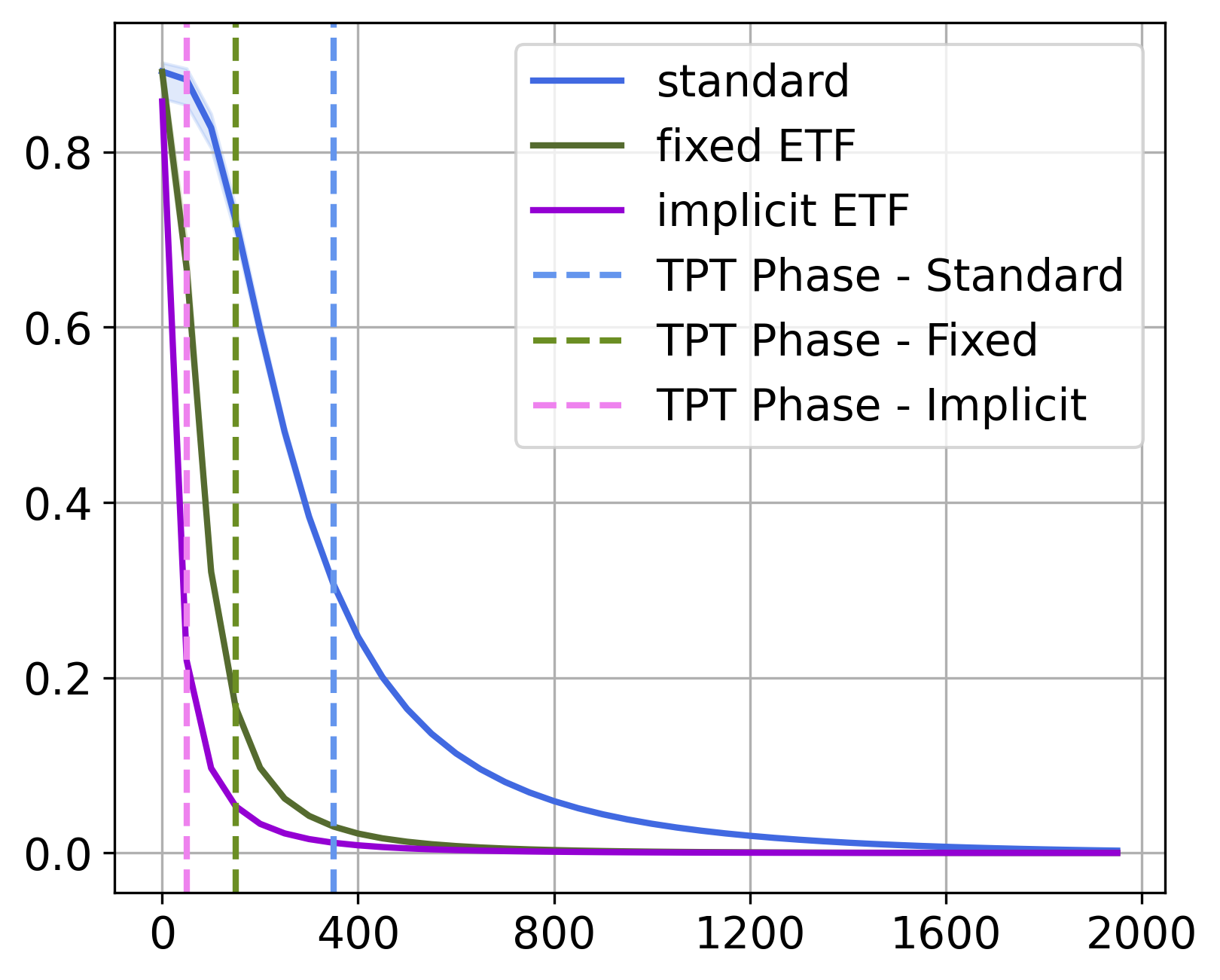}
        \caption{$NC1$}
    \end{subfigure}
    \begin{subfigure}{0.3\textwidth}
        \centering
        \includegraphics[width=\textwidth, height=0.8\textheight, keepaspectratio]{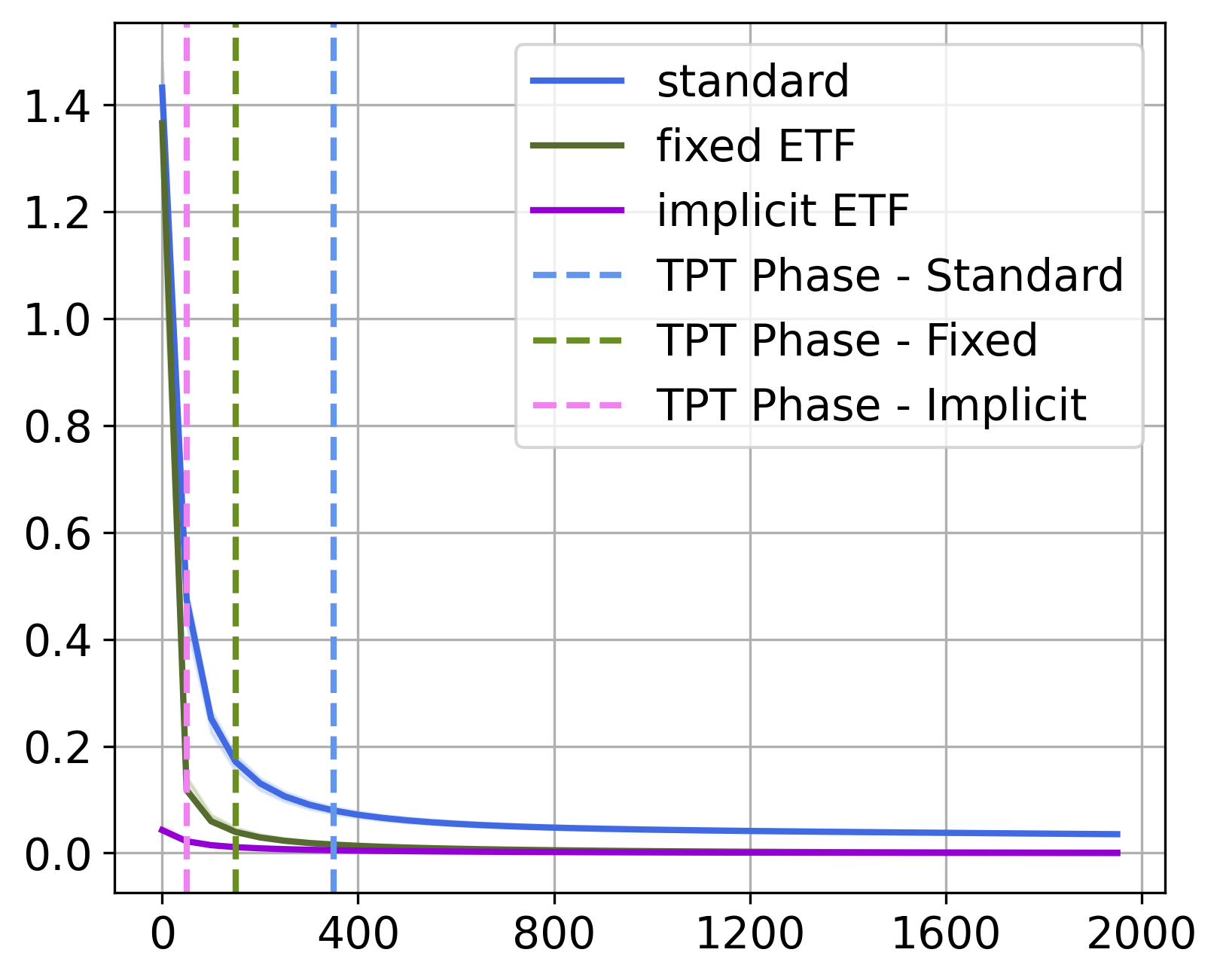}
        \caption{$NC3$}
    \end{subfigure}
    \begin{subfigure}{0.3\textwidth}
        \centering
        \includegraphics[width=\textwidth, height=0.8\textheight, keepaspectratio]{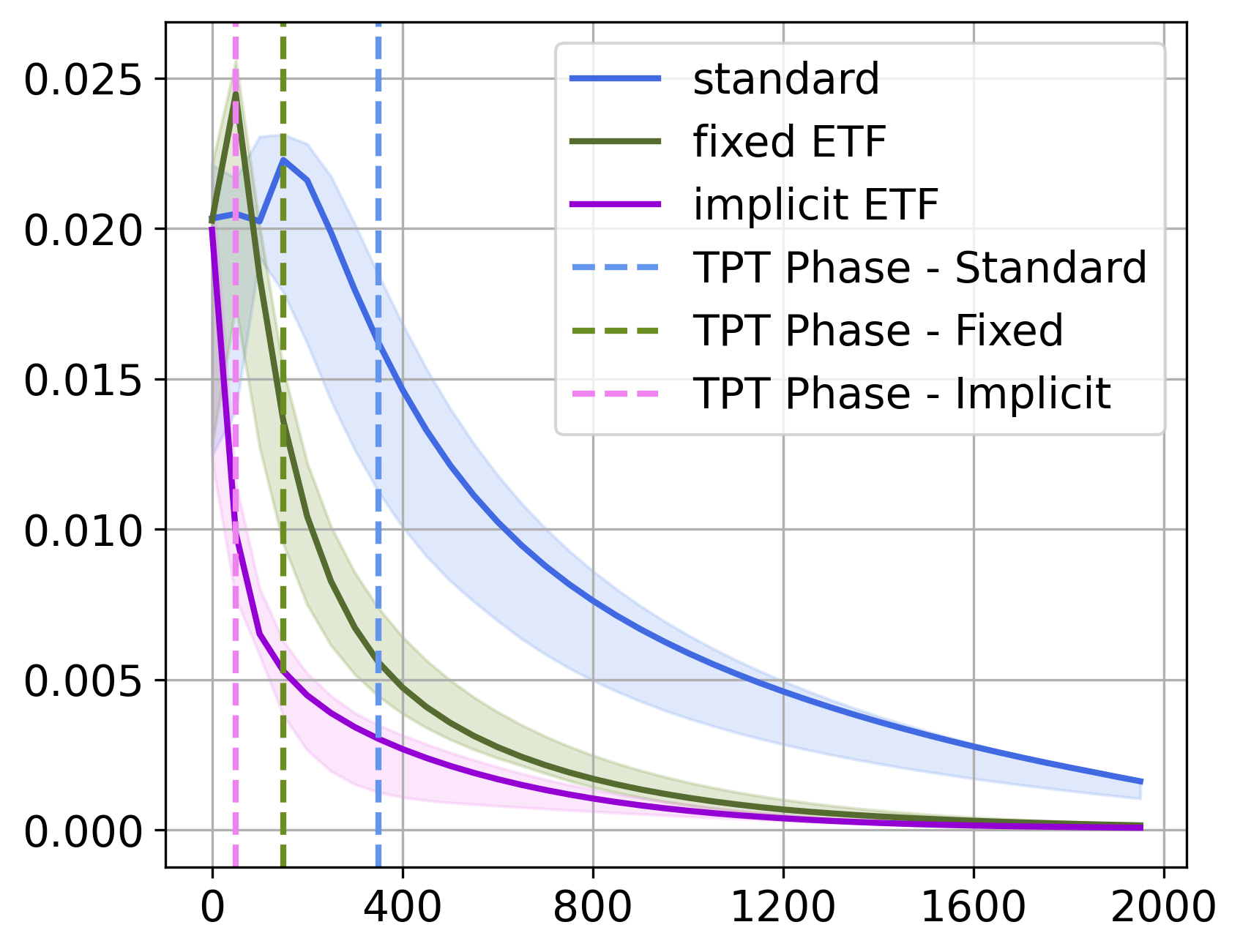}
        \caption{$\mW\Bar{\mH}$-Equinorm}
    \end{subfigure}
    \caption{UFM-10 results. In all plots, the x-axis represents the number of epochs, except for plot (c), where the x-axis denotes the number of training examples.}
    \label{fig:ufm10}
    \vspace{-2ex}
\end{figure}

\begin{figure}
    \centering
    \begin{subfigure}{0.24\textwidth}
        \centering
        \includegraphics[width=\textwidth]{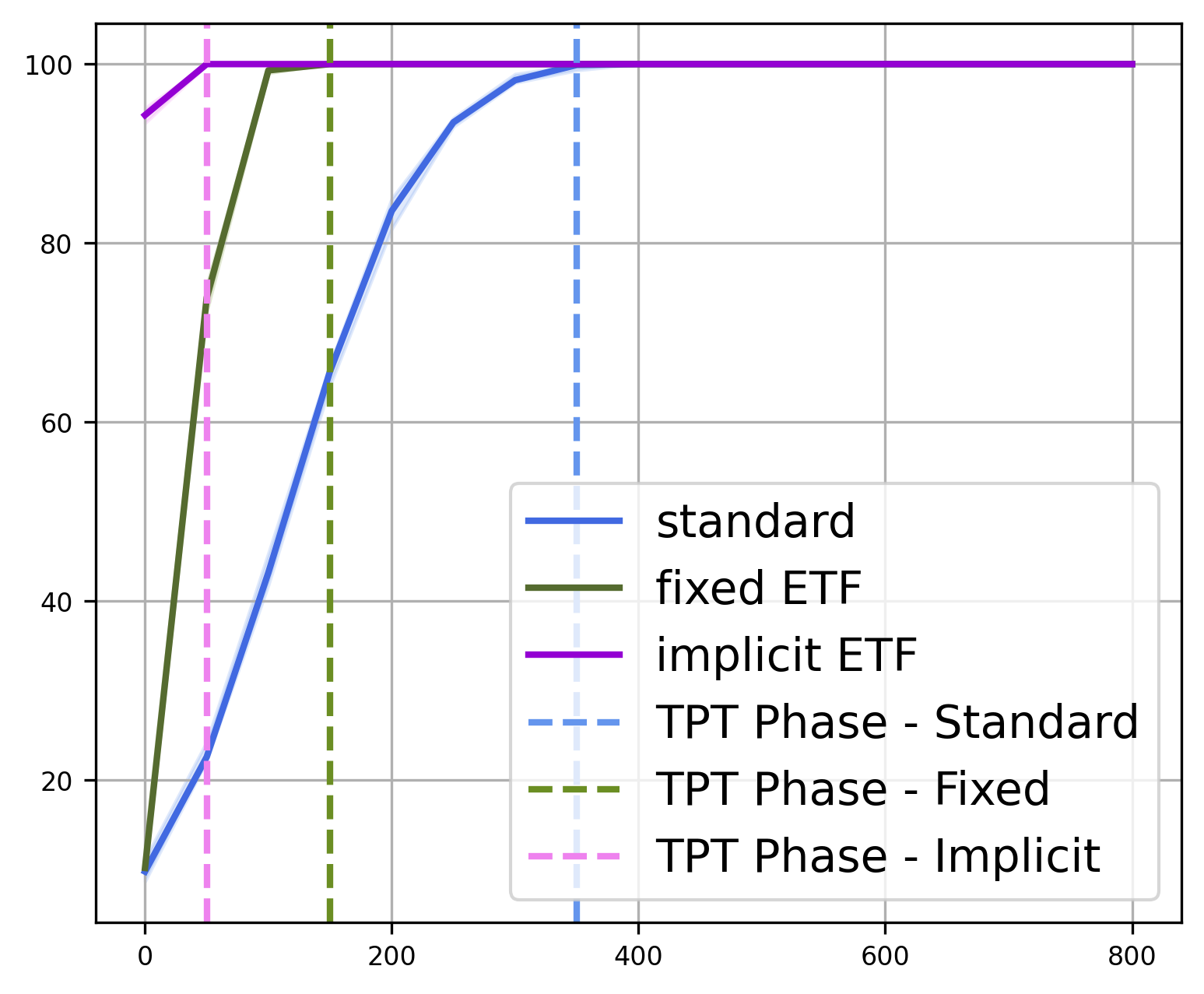}
        \caption{UFM-10}
    \end{subfigure}
    \hfill
    \begin{subfigure}{0.24\textwidth}
        \centering
        \includegraphics[width=\textwidth]{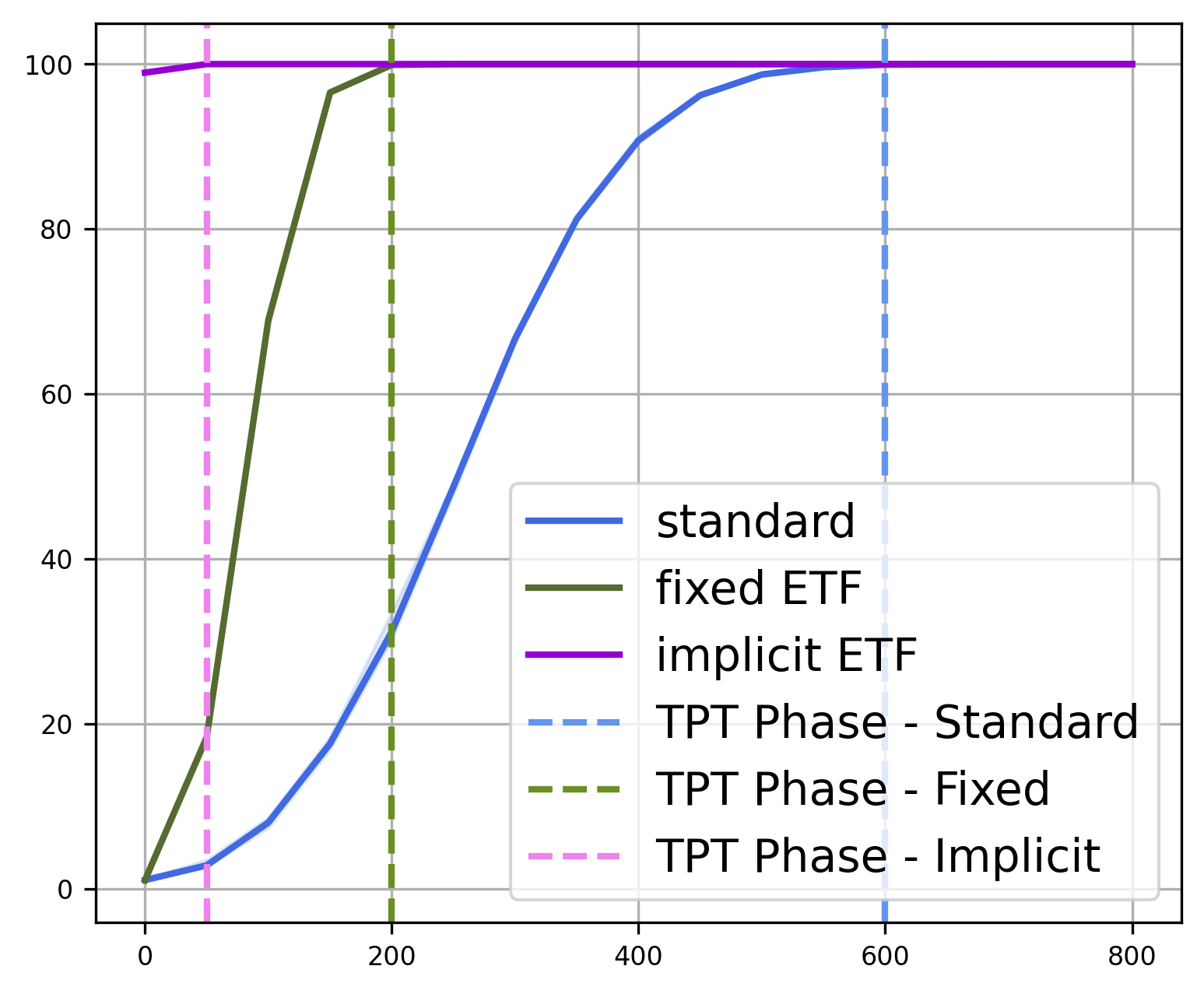}
        \caption{UFM-100}
    \end{subfigure}
    \hfill
    \begin{subfigure}{0.24\textwidth}
        \centering
        \includegraphics[width=\textwidth]{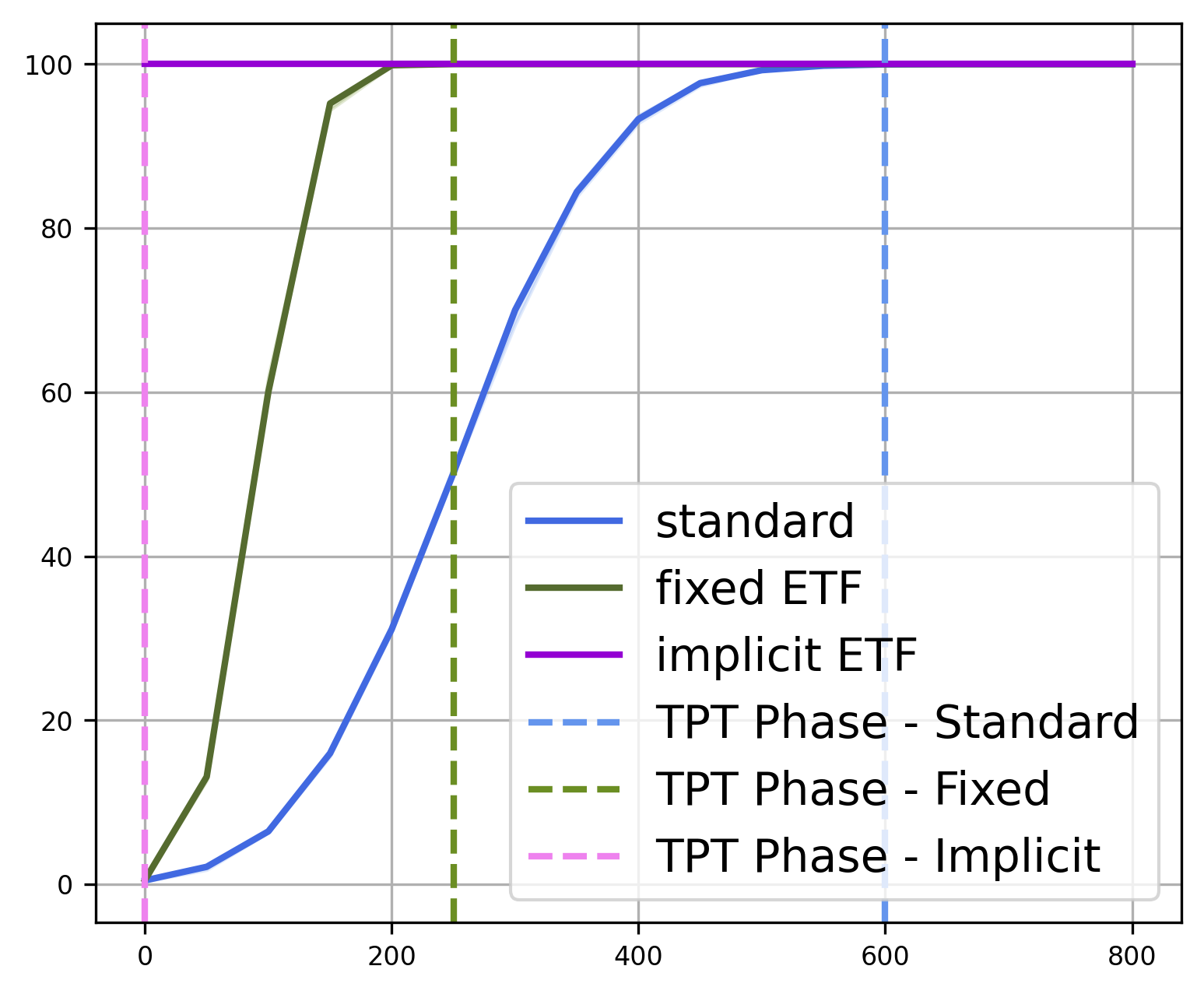}
        \caption{UFM-200}
    \end{subfigure}
    \hfill
    \begin{subfigure}{0.24\textwidth}
        \centering
        \includegraphics[width=\textwidth]{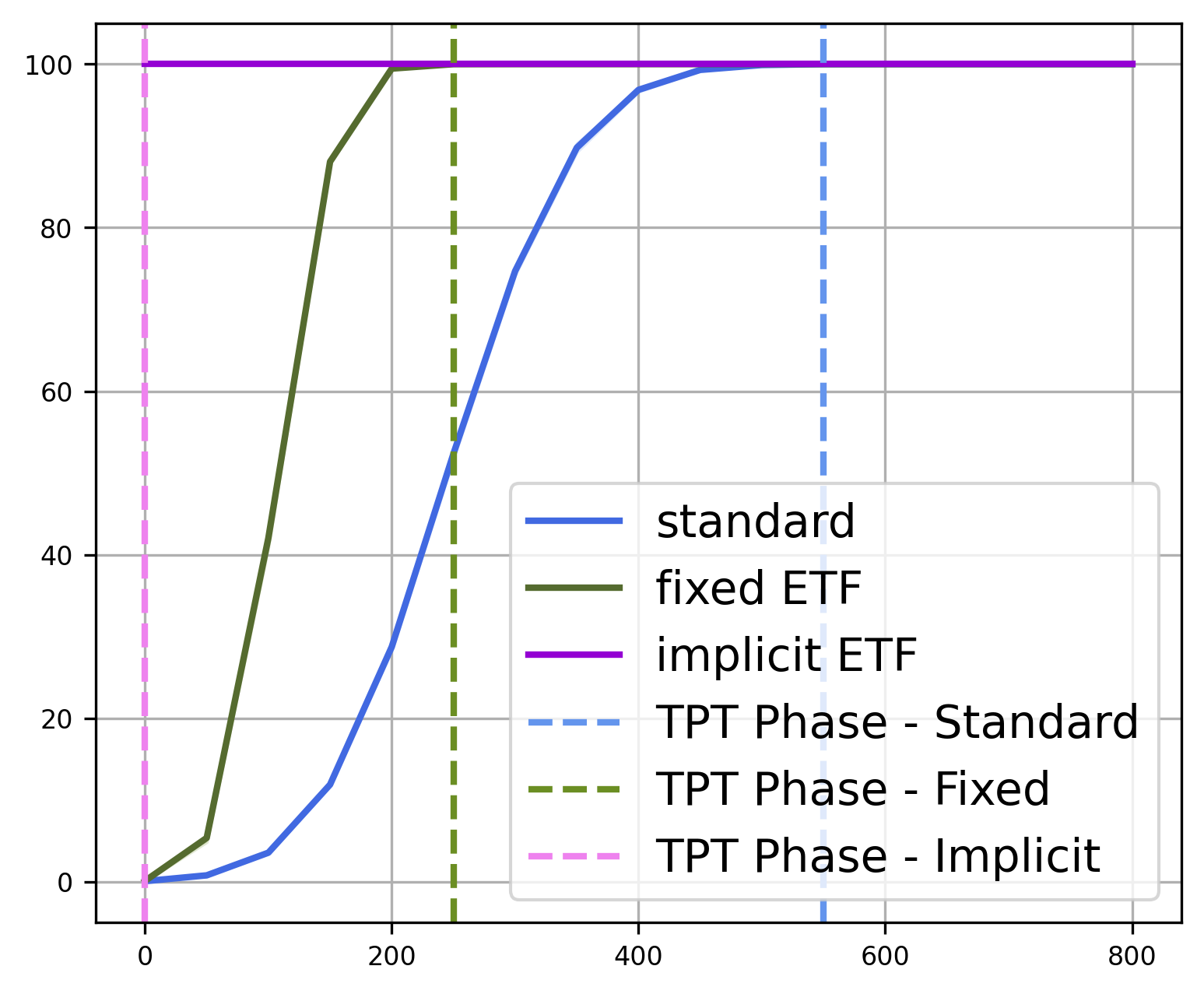}
        \caption{UFM-1000}
    \end{subfigure}
    \caption{The evolution of convergence measured in top-1 accuracy of the UFM as we increase the number of classes, plotted for the first 800 epochs. We omit the rest of the epochs as all methods have converged and have identical results.}
    \label{fig:ufmtrajectory}
    \vspace{-2ex}
\end{figure}

\paragraph{Results.}
We present the results for the synthetic UFM-10 case in Figure~\ref{fig:ufm10}. The CE loss plot demonstrates that fixing the classifier weights to a simplex ETF achieves the theoretical lower bound of \citet[Thm.~1]{yaras-nc-riemannian}, indicating the attainment of a globally optimal solution. We also visualise the average cosine margin per epoch and the cosine margin distributions of each example at the end of training, defined in \citet{zhou2022optimization}. The neural collapse metrics, $NC1$ and $NC3$, which measure the features' within-class variability, and the self-duality alignment between the feature means and the classifier weights \cite{zhu2021a}, are also plotted. Last, we depict the absolute difference of the classifier and feature means norms to illustrate their convergence towards equinorms, as described in \citet{nc-original}. A comprehensive description of the metrics can be found in Appendix~\ref{appendix-background}. Collectively, the plots indicate the superior performance of our method in achieving a neural collapse (NC) solution faster than other approaches. In Figure~\ref{fig:ufmtrajectory}, we demonstrate under the UFM setting that as we increase the number of classes, our method maintains constant performance and converges at the same rate, while the fixed ETF and the standard approach require more time to reach the interpolation threshold.

Numerical results for the top-1 train and test accuracy are reported in Tables~\ref{tab:train-results} and \ref{tab:test-results}, respectively. The results are provided for snapshots taken at epoch 50 and epoch 200. It is evident that our method achieves a faster convergence speed compared to the competitive methods while ultimately converging to the same performance level. Additionally, it is noteworthy that our method exhibits the smallest degree of variability across different runs, as indicated by the range values provided. Finally, in Figure~\ref{fig:imagenet}, we present qualitative results that confirm our solution's ability to converge much faster and reach peak performance earlier than the standard and fixed ETF methods on ImageNet. It's important to note that the standard method with AGD is reported to converge to the same testing accuracy ($65.5\%$) at epoch 350, as shown in \citet[Figure~4]{agd-2023}. At epoch 200, the authors exhibit a testing accuracy of approximately $51\%$. Since we have increased the gain parameter on AGD compared to the results reported in the original paper, we report a final $60.67\%$ testing accuracy for the standard method, whereas our method reaches peak convergence at approximately epoch 80. We note that the ImageNet results reported in Tables~\ref{tab:train-results} and \ref{tab:test-results}, as well as Figure~\ref{fig:imagenet}, are generated solely by solving the Riemannian optimisation problem without considering its gradient stream on the feature updates, due to computational constraints. We discuss the computational requirements of our method in Section~\ref{sec:discussion}. We also present qualitative results for all the other datasets and architectures in Appendix~\ref{appendix-results}.

\begin{table}
  \caption{Train top-1 accuracy results presented as a median with indices indicating the range of values from five random seeds. Best results are bolded.}
  \label{tab:train-results}
  \centering
  \setlength{\tabcolsep}{3pt}{
  \begin{tabular}{l@{\hspace{0.5em}}lcccccc}
    \toprule
    & & \multicolumn{3}{c}{Train accuracy at epoch 50} & \multicolumn{3}{c}{Train accuracy at epoch 200} \\
    \cmidrule(lr){3-5} \cmidrule(lr){6-8}
    Dataset & Network & Standard & Fixed ETF & Implicit ETF & Standard & Fixed ETF & Implicit ETF \\
    \midrule
    \multirow{2}{*}{CIFAR10} & 
    ResNet18 & $87.42\,^{89.7}_{86.1}$ & $86.89\,^{88.6}_{84.7}$ & $\bm{88.71}\,_{88.5}^{89.6}$ & $96.69\,^{98.6}_{96.5}$ & $97.18\,^{97.9}_{95.6}$ & $\bm{98.09}\,^{98.6}_{97.9}$ \\[3pt]

    & VGG13 & $93.59\,^{97.0}_{90.7}$ & $76.66\,^{85.8}_{53.9}$ & $\bm{95.69}\,^{96.2}_{95.2}$ & $99.15\,^{99.8}_{97.9}$ & $79.36\,^{99.1}_{59.6}$ & $\bm{99.56}\,^{99.7}_{99.4}$ \\[3pt]

    \multirow{2}{*}{CIFAR100} &
    ResNet50 & $58.47\,^{59.6}_{53.9}$ & $63.93\,^{65.2}_{61.1}$ & $\bm{72.15}\,_{70.1}^{74.1}$ & $95.87\,^{98.6}_{94.7}$ & $91.34\,^{92.1}_{90.4}$ & $\bm{96.96}\,^{97.3}_{96.2}$ \\[3pt]

    & VGG13 & $82.00\,^{84.0}_{80.5}$ & $81.14\,^{81.9}_{76.0}$ & $\bm{88.39}\,^{89.4}_{86.9}$ & $\bm{99.34}\,^{99.6}_{99.2}$ & $94.55\,^{95.3}_{92.5}$ & $98.92\,^{99.0}_{98.8}$ \\[3pt]

    \multirow{2}{*}{STL10} &
    ResNet50 & $83.86\,^{90.7}_{84.5}$ & $86.76\,^{86.8}_{77.8}$ & $\bm{93.54}\,^{95.3}_{91.3}$ & $99.42\,^{99.9}_{99.0}$ & $98.38\,^{99.3}_{98.1}$ & $\bm{99.72}\,^{99.9}_{98.0}$ \\[3pt]
    
    & VGG13 & $82.66\,^{90.7}_{73.7}$ & $83.60\,^{85.1}_{65.6}$ & $\bm{90.14}\,^{93.5}_{69.2}$ & $100.0\,^{100}_{100}$ & $99.92\,^{99.9}_{99.8}$ & $\bm{100.0}\,^{100}_{100}$ \\[3pt]

    \multirow{1}{*}{ImageNet} &
    ResNet50 & $58.35\,^{59.1}_{58.0}$ & $70.44\,^{70.7}_{69.5}$ & $\bm{74.09}\,^{74.5}_{73.8}$ & $77.20\,^{77.3}_{76.5}$ & $83.09\,^{83.6}_{83.1}$ & $\bm{88.01}\,^{88.5}_{87.5}$ \\
    \bottomrule
  \end{tabular}}
  \vspace{-2ex}
\end{table}

\begin{table}
  \caption{Test top-1 accuracy results presented as a median with indices indicating the range of values from five random seeds. Best results are bolded.}
  \label{tab:test-results}
  \centering
  \setlength{\tabcolsep}{3pt}{
  \begin{tabular}{l@{\hspace{0.5em}}lcccccc}
    \toprule
    & & \multicolumn{3}{c}{Test accuracy at epoch 50} & \multicolumn{3}{c}{Test accuracy at epoch 200} \\
    \cmidrule(lr){3-5} \cmidrule(lr){6-8}
    Dataset & Network & Standard & Fixed ETF & Implicit ETF & Standard & Fixed ETF & Implicit ETF \\
    \midrule
    \multirow{2}{*}{CIFAR10} &
    ResNet18 & $80.47\,^{82.6}_{79.4}$ & $80.63\,^{81.8}_{79.4}$ & $\bm{81.76}\,^{82.0}_{81.4}$ & $83.97\,^{84.8}_{83.2}$ & $84.53\,^{84.9}_{83.7}$ & $\bm{84.78}\,^{85.1}_{84.3}$ \\[3pt]

    & VGG13 & $86.70\,^{89.4}_{83.7}$ & $70.99\,^{80.7}_{50.7}$ & $\bm{88.30}\,^{88.7}_{87.4}$ & $90.34\,^{91.5}_{89.1}$ & $72.48\,^{90.5}_{56.9}$ & $\bm{90.98}\,^{91.5}_{90.6}$ \\[3pt]
    \multirow{2}{*}{CIFAR100} &
    ResNet50 & $45.91\,^{46.3}_{42.1}$ & $45.37\,^{45.6}_{43.2}$ & $\bm{48.38}\,^{49.3}_{48.0}$ & $\bm{51.29}\,^{51.9}_{50.4}$ & $48.03\,^{49.3}_{47.7}$ & $50.52\,^{51.2}_{50.2}$ \\[3pt]

    & VGG13 & $60.82\,^{61.2}_{59.3}$ & $60.10\,^{61.1}_{57.1}$ & $\bm{62.39}\,^{63.6}_{61.8}$ & $\bm{67.54}\,^{68.1}_{66.4}$ & $62.78\,^{63.4}_{61.6}$ & $67.14\,^{67.6}_{66.8}$ \\[3pt]
    \multirow{2}{*}{STL10} &
    ResNet50 &  $55.41\,^{58.3}_{54.8}$ & $\bm{58.11}\,^{60.0}_{57.1}$ & $57.56\,^{59.2}_{56.4}$ & $63.60\,^{65.2}_{61.8}$ & $\bm{64.33}\,^{66.5}_{63.9}$ & $62.75\,^{63.0}_{60.7}$ \\[3pt]
   
    &VGG13 & $66.09\,^{72.3}_{60.0}$ & $66.15\,^{67.7}_{52.2}$ & $\bm{68.78}\,^{71.3}_{56.2}$ & $\bm{81.61}\,^{81.9}_{81.3}$ & $79.53\,^{80.3}_{78.6}$ & $79.94\,^{80.9}_{79.5}$ \\[3pt]
    \multirow{1}{*}{ImageNet} &
    ResNet50 & $52.64\,^{53.3}_{52.3}$ & $58.85\,^{59.1}_{58.3}$ & $\bm{63.40}\,^{63.8}_{63.2}$ & $60.02\,^{60.7}_{59.8}$ & $61.47\,^{62.0}_{61.4}$ & $\bm{65.36}\,^{65.7}_{65.0}$ \\
    \bottomrule
  \end{tabular}}
  \vspace{-1ex}
\end{table}

\begin{figure}[!h]
    \centering
    \begin{minipage}{0.75\textwidth}
        \centering
        \begin{subfigure}{0.32\textwidth}
            \centering
            \includegraphics[width=\textwidth, height=\textheight, keepaspectratio]{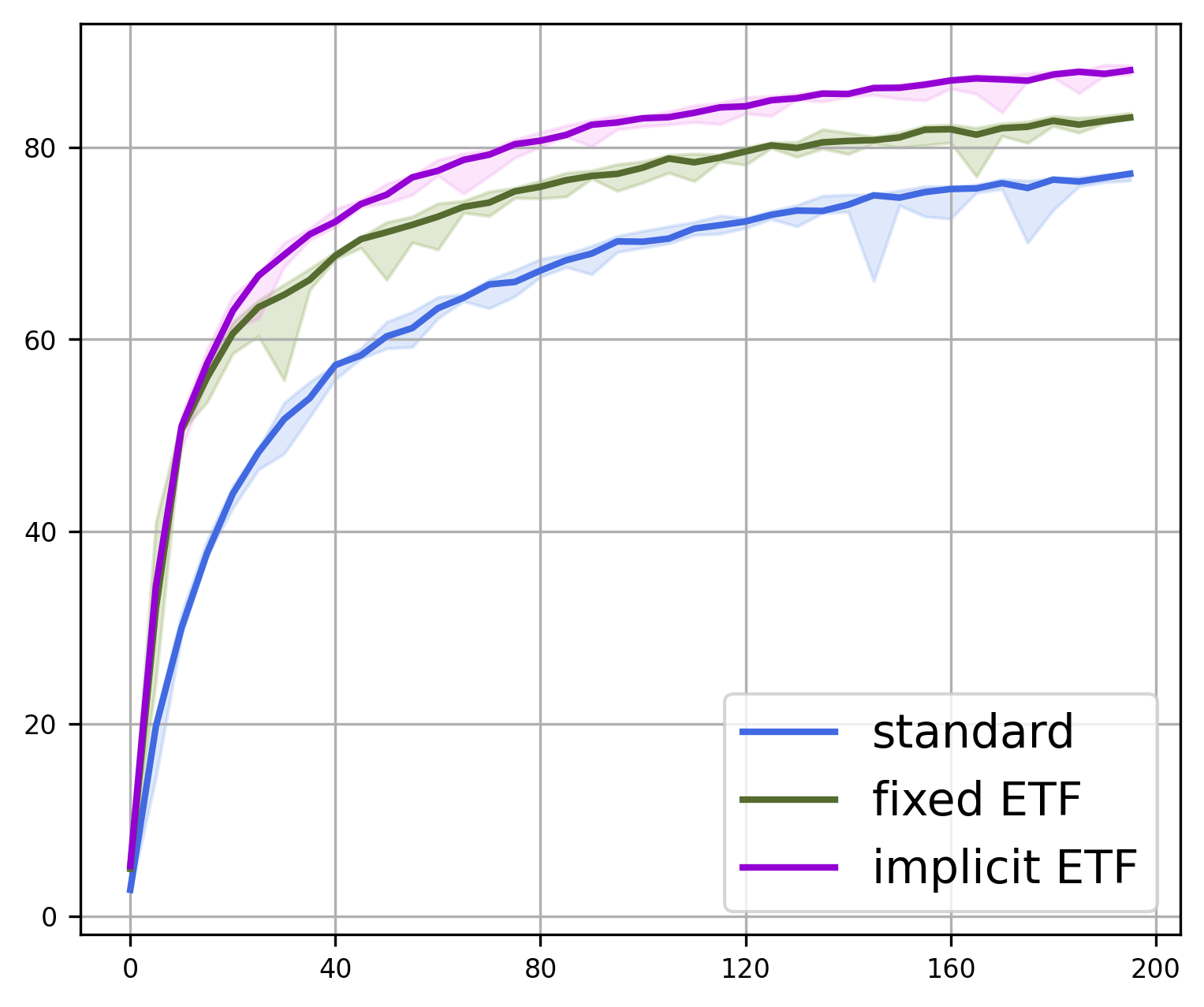}
            \caption{Train Top-1 Acc.}
        \end{subfigure}
        \hfill
        \begin{subfigure}{0.32\textwidth}
            \centering
            \includegraphics[width=\textwidth]{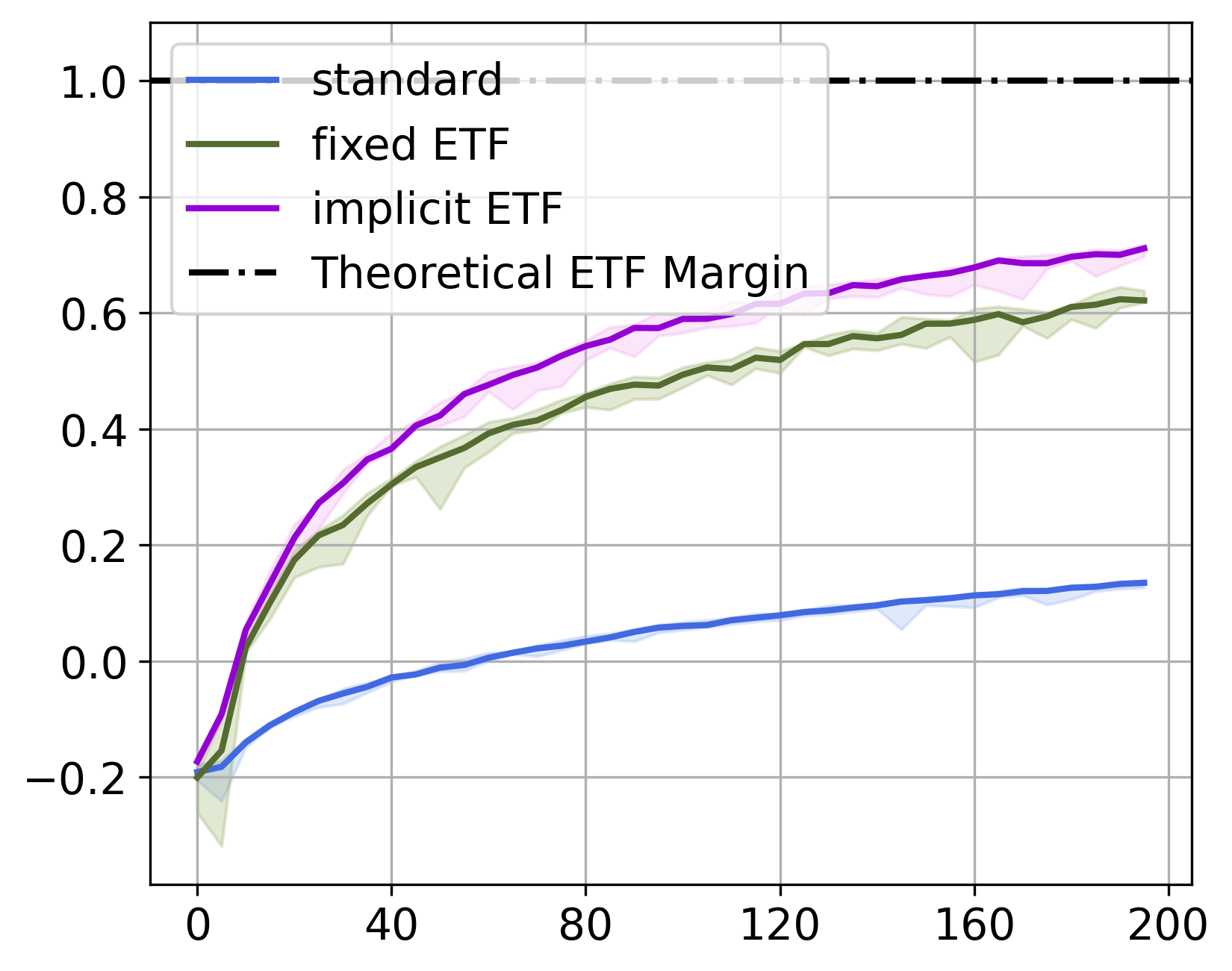}
            \caption{Avg Cos. Margin}
        \end{subfigure}
        \hfill
        \begin{subfigure}{0.32\textwidth}
            \centering
            \includegraphics[width=\textwidth]{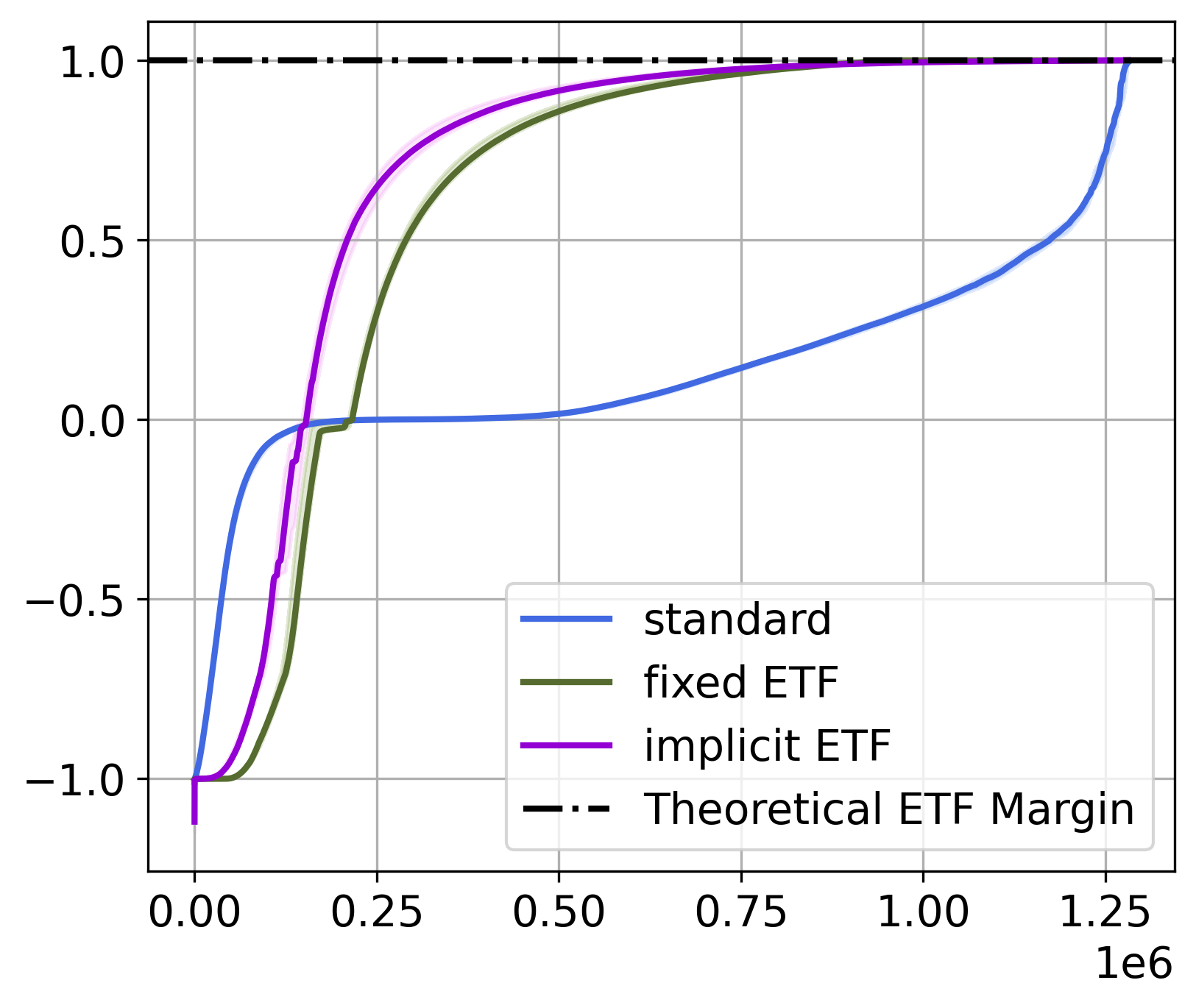}
            \caption{$\mathcal{P}_{CM}$ at EoT}
        \end{subfigure}
        \vskip0.3\baselineskip
        \begin{subfigure}{0.32\textwidth}
            \centering
            \includegraphics[width=\textwidth]{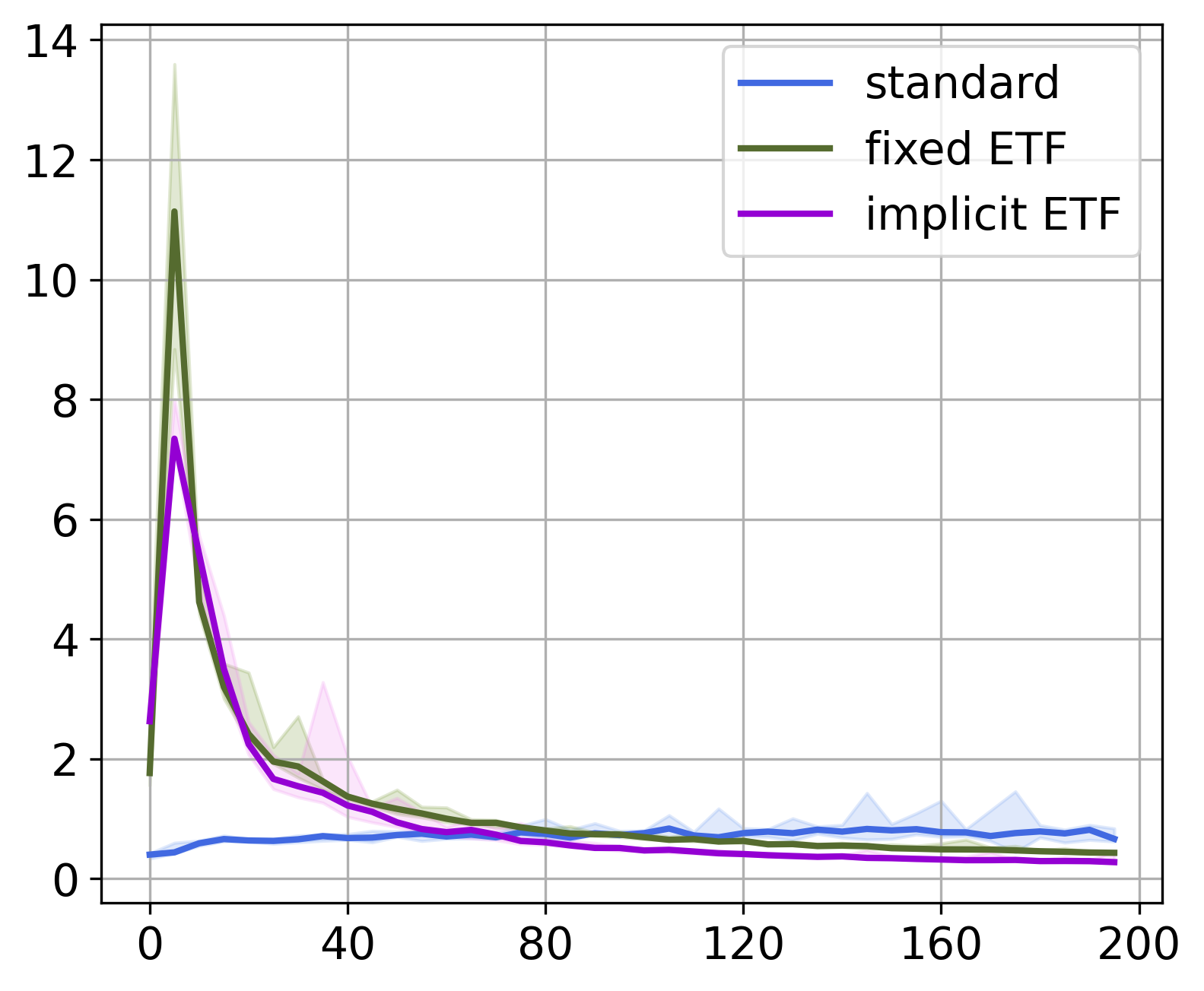}
            \caption{$NC1$}
        \end{subfigure}
        \hfill
        \begin{subfigure}{0.32\textwidth}
            \centering
            \includegraphics[width=\textwidth]{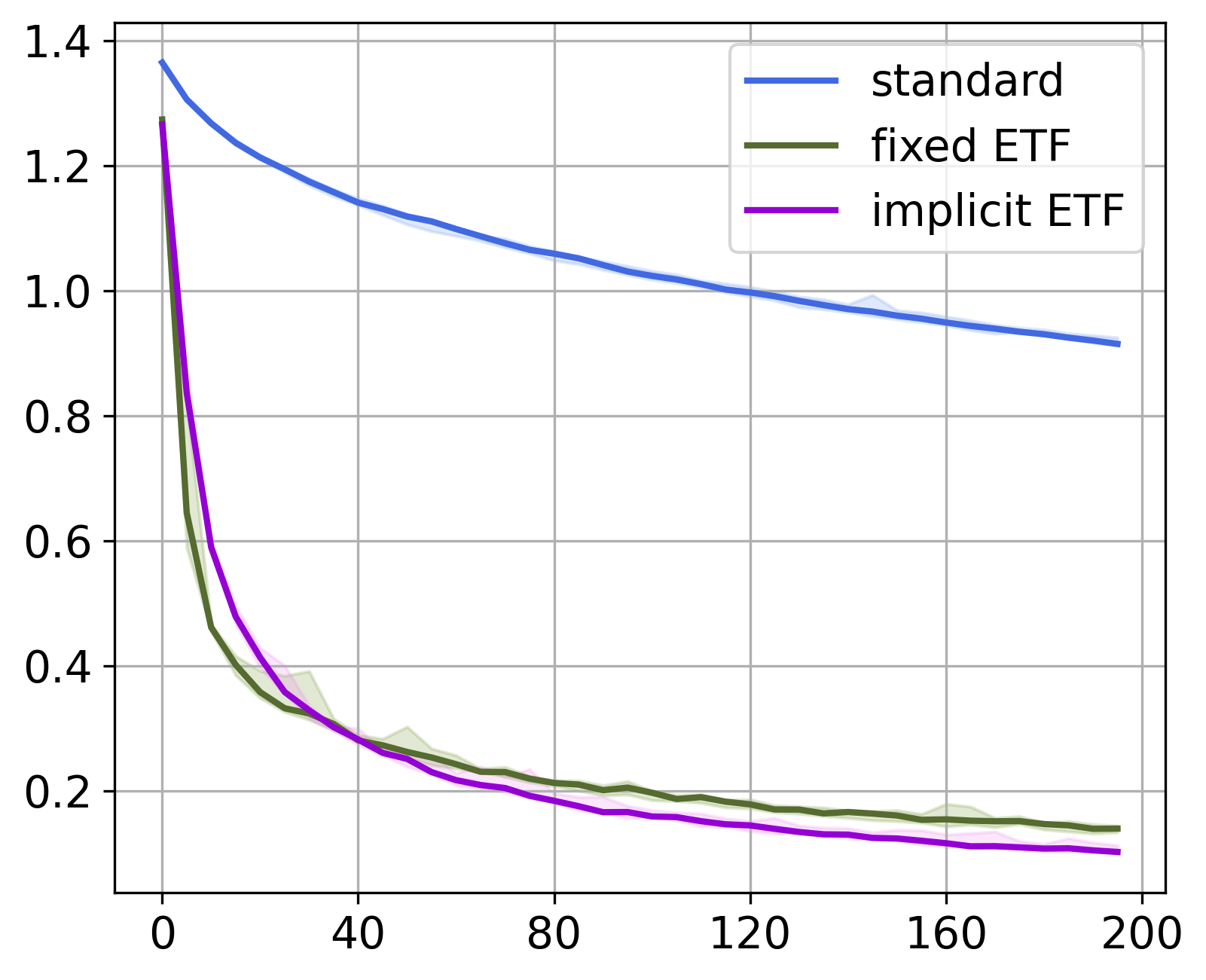}
            \caption{$NC3$}
        \end{subfigure}
        \hfill
        \begin{subfigure}{0.32\textwidth}
            \centering
            \includegraphics[width=\textwidth]{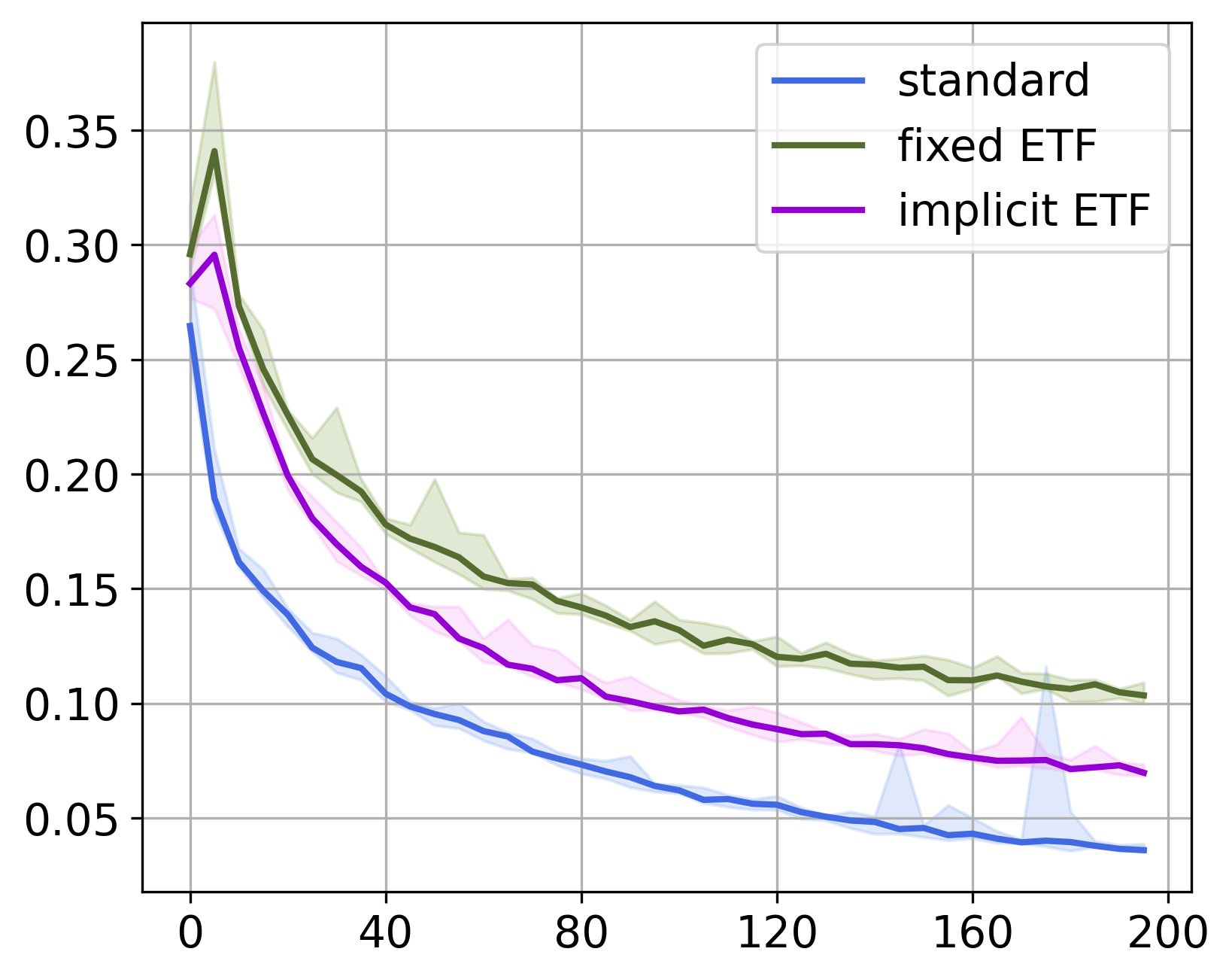}
            \caption{$\mW\Bar{\mH}$-Equinorm}
        \end{subfigure}
    \end{minipage}
    \hfill
    \begin{minipage}{0.24\textwidth}
        \centering
        \ContinuedFloat
        \begin{subfigure}{\textwidth}
            \centering
            \includegraphics[width=\textwidth]{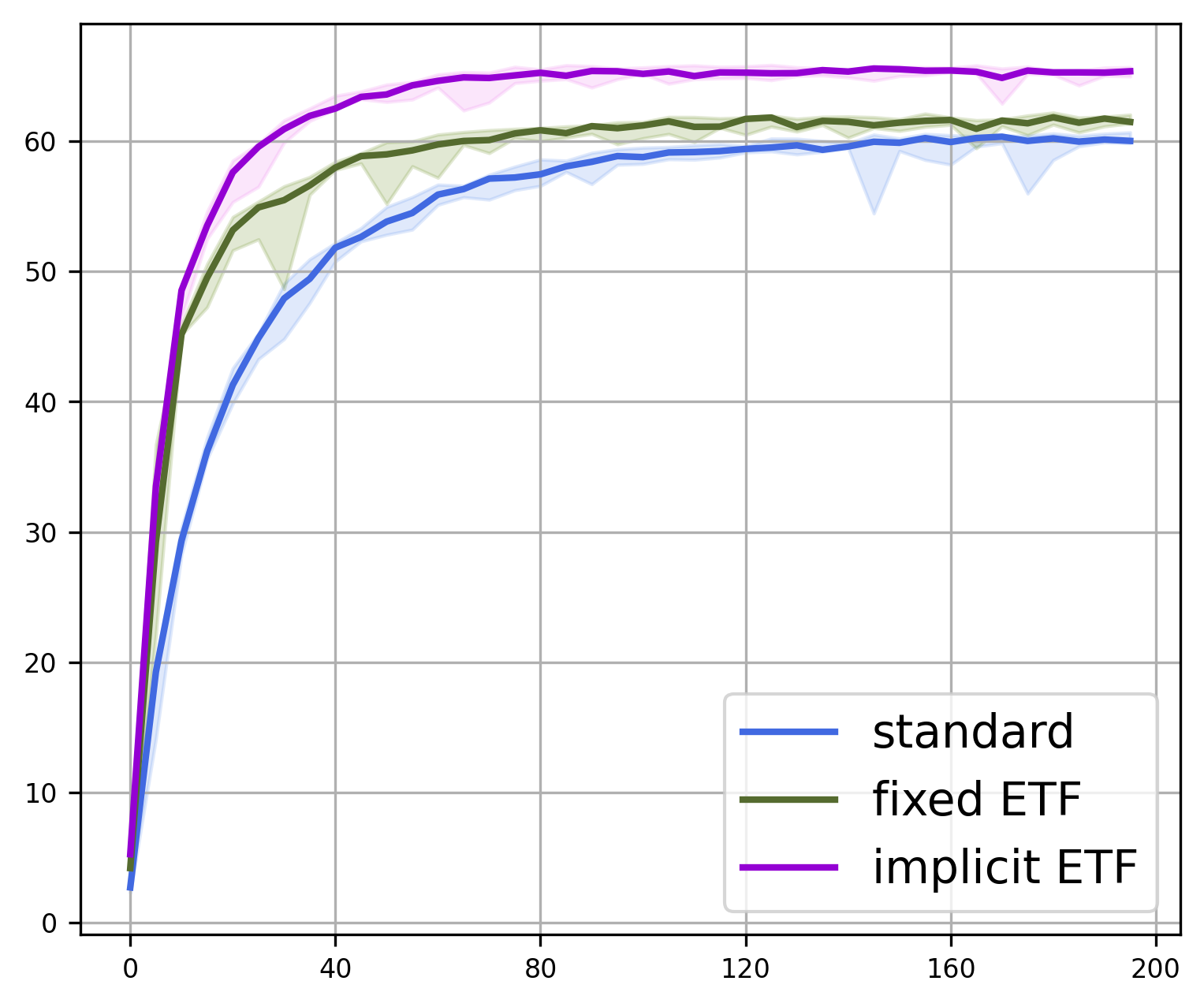}
            \caption{Test Top-1 Acc.}
        \end{subfigure}
    \end{minipage}
    \caption{ImageNet results on ResNet-50. In all plots, the x-axis represents the number of epochs, except for plot (c), where the x-axis denotes the number of training examples.}
    \label{fig:imagenet}
    \vspace{-2ex}
\end{figure}

\section{Discussion: Limitations and Future Directions}
\label{sec:discussion}

Our method involves two gradient streams updating the features, as depicted in Figure~\ref{fig:compgraph}. Interestingly, empirical observations on small-scale datasets (see Figure~\ref{fig:cifar10-no-decl}) indicate that even without the backpropagation through the DDN layer, the performance remains comparable, rendering the gradient calculation of the DDN layer optional. In Figure~\ref{fig:cifar10-no-decl-pcm}, we observe a strong impact of the DDN layer gradient on the atomic feature level, with more features reaching the theoretical simplex ETF margin by the end of training. To reach a consensus on the exact effect of the DDN gradient on the learning process, further experiments on large-scale datasets are needed. However, on large-scale datasets with large $d$ and $C$, such as ImageNet, computing the backward pass of the Riemannian optimisation is challenging due to the memory inefficiency of the current implementation of DDN gradients. This limitation is an area we aim to address in future work. Note that in all other experiments, we use the full gradient computations, including both direct and indirect components, through the DDN layer. We summarise the GPU memory requirements for each method across various datasets in Table~\ref{tab:memory-results}.

Our discussion so far has focused on convergence speed in terms of the number of epochs required for the network to converge. However, it is also important to consider the time required per epoch. In our case, as training progresses, the time taken by the Riemannian optimization quickly becomes almost negligible compared to the network's total forward pass time, while it approaches the standard and fixed ETF training forward times, as shown in Figure~\ref{fig:cifar100-computation-cost-forward}. However, DDN gradient computation increases considerably when the feature dimension $d$ and the number of classes $C$ increase and starts to dominate the runtime for large datasets such as ImageNet. Nevertheless, for ImageNet, we do not compute the DDN gradients and still outperform other methods. We plan to explore ways to expedite the DDN forward and backward pass in future work.

\begin{table}
  \caption{GPU memory (in Gigabytes) during training.}
  \label{tab:memory-results}
  \centering
  \setlength{\tabcolsep}{5pt}{
  \begin{tabular}{l@{\hspace{0.5em}}lccccc}
    \toprule
    & Standard & Fixed ETF & \multicolumn{2}{c}{Implicit ETF} \\
     \cmidrule(lr){4-5}
    Model & & & w/o DDN Bwd & w/ DDN Bwd \\
    \midrule
    UFM-10 & $1.5$ & $1.5$ & $1.5$ & $1.6$  \\
    UFM-100 & $1.7$ & $1.7$ & $1.7$ & $10.7$ \\
    UFM-200 & $1.7$ & $1.7$ & $1.8$ & $70.3$  \\
    UFM-1000 & $1.9$ & $1.9$ & $2.9$ & N/A \\
    ResNet18 - CIFAR10 & $2.2$ & $2.2$ & $2.2$ & $2.3$  \\
    ResNet50 - CIFAR10 & $2.6$ & $2.6$ & $2.7$ & $2.8$ \\
    ResNet50 - CIFAR100 & $2.6$ & $2.6$ & $2.7$ & $18.9$ \\
    ResNet50 - ImageNet & $27.5$ & $27.2$ & $27.8$ & N/A \\
    \bottomrule
  \end{tabular}}
\end{table}
 
\begin{figure}[!h]
    \centering
    \begin{minipage}{\textwidth}
        \centering
        \begin{subfigure}{0.45\textwidth}
            \centering
            \includegraphics[width=\textwidth, height=\textheight, keepaspectratio]{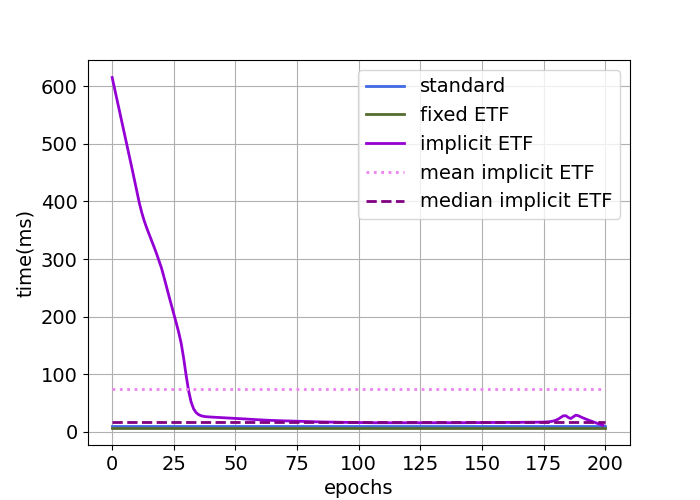}
            \caption{Forward pass times in milliseconds.}
            \label{fig:cifar100-computation-cost-forward}
        \end{subfigure}
        \hfill
        \begin{subfigure}{0.45\textwidth}
            \centering
            \includegraphics[width=\textwidth]{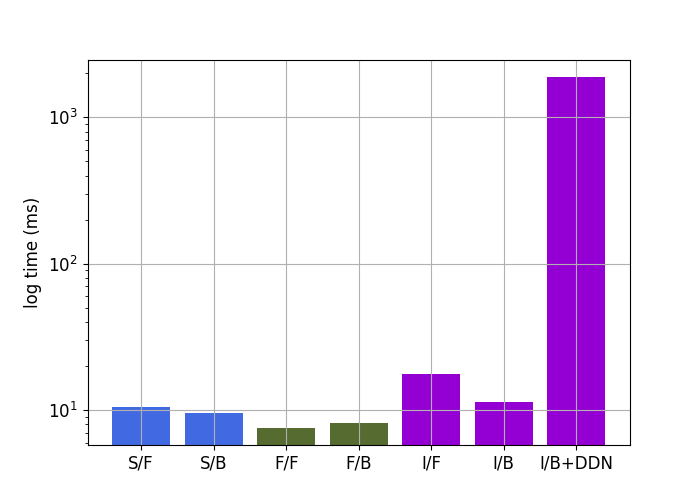}
            \caption{Forward and backward times in (log) millisecs.}
            \label{fig:cifar100-computation-cost-both}
        \end{subfigure}
    \end{minipage}
    \caption{CIFAR100 computational cost results on ResNet-50. In (a), we plot the forward pass time for each method. For the implicit ETF method, which has dynamic computation times, we also include the mean and median time values. In (b), we plot the computational cost for each forward and backward pass across methods. For the implicit ETF forward pass, we have taken its median time. The notation is as follows: S/F = Standard Forward Pass, S/B = Standard Backward Pass, F/F = Fixed ETF Forward Pass, F/B = Fixed ETF Backward Pass, I/F = Implicit ETF Forward Pass, and I/B = Implicit ETF Backward Pass.}
    \label{fig:cifar100-computation-cost}
\end{figure}

\section{Conclusion}

In this paper, we introduced a novel method for determining the nearest simplex ETF to the penultimate features of a neural network and utilising it as our target classifier at each iteration. This contrasts with previous approaches, which either fix to a specific simplex ETF or allow the network to learn it through gradient descent. Our method involves solving a Riemannian optimisation problem facilitated by a deep declarative node, enabling backpropagation through this process.

We demonstrated that our approach enhances convergence speed across various datasets and architectures while also reducing variability stemming from different random initialisations. By defining the optimal structure of the classifier and efficiently leveraging its rotation invariance property to find the one closest to the backbone features, we anticipate that our method will facilitate the creation of new architectures and the utilisation of new datasets without necessitating specific learning or tuning of the classifier's structure.

\bibliography{neurips_2024}
\bibliographystyle{neurips_2024}

\newpage
\appendix
\section*{Appendices}

\section{Background}
\label{appendix-background}

Following the definitions of the global mean and class mean of the penultimate-layer features $\{\vh_{c,i}\}$ in Equation~\ref{eqn:globalclassmean}, here we introduce the within-class and between-class covariances,
\begin{equation}
    \begin{aligned}
        \mSigma_{\mW} &\triangleq \frac{1}{N}\sum_{c=1}^{C}\sum_{i=1}^{n_c}\left(\vh_{c,i} - \Bar{\vh}_c\right)\left(\vh_{c,i} - \Bar{\vh}_c\right)^\top\ ,\\[5pt]
        \mSigma_{\mB} &\triangleq \frac{1}{C}\sum_{c=1}^{C}\left(\Bar{\vh}_c-\vh_G\right)\left(\Bar{\vh}_c -\vh_G\right)^\top\ .
    \end{aligned}
\end{equation}

We proceed by expanding on the four key properties of the last-layer activations and classifiers, as empirically observed by~\citet{nc-original} at the terminal phase of training (TPT), where we have achieved zero classification error and continue towards zero loss.
\begin{itemize}
    \item[{\bf NC1}] Variability Collapse: Throughout training, feature activation variability diminishes as they converge towards their respective class means.
    
    \begin{equation}
        NC1 \triangleq \frac{1}{C}\Tr\left(\mSigma_{\mW}\mSigma_{\mB}^\dagger\right)\ ,
    \end{equation}
    
    where $\dagger$ denotes the Moore–Penrose inverse.
    \item[{\bf NC2}] Convergence to Simplex ETF: The class-mean activation vectors, centred around their global mean, converge to uniform norms while simultaneously maintaining equal-sized and maximally separable angles between them\footnote{Mathematically, we have defined NC2 as the collapse of the linear classifiers to a simplex ETF.}.
    
    \begin{equation}
        NC2 \triangleq \bigg\|\frac{\mW\mW^\top}{\|\mW\mW^\top\|_F} - \frac{1}{\sqrt{C-1}}\left(\mI_C - \frac{1}{C}\vone_C \vone_C^\top\right)\bigg\|_F\ .
    \end{equation}
    
    \item[{\bf NC3}] Convergence to Self-duality: The feature class-means and linear classifiers eventually align in a dual vector space up to some scaling. 
    \begin{equation}
        NC3 \triangleq \bigg\|\frac{\mW\Bar{\mH}}{\|\mW\Bar{\mH}\|_F} - \frac{1}{\sqrt{C-1}}\left(\mI_C - \frac{1}{C}\vone_C \vone_C^\top\right)\bigg\|_F\ .
    \end{equation}
    \item[{\bf NC4}] Simplification to Nearest Class-Center (NCC): The network classifier tends to select the class whose mean is closest (in Euclidean distance) to a given deepnet activation.
    \begin{equation}
        NC4 \triangleq \argmax_{c'}\langle\vw_{c'}, \vh\rangle + b_{c'} \to \argmin_{c'} \|\vh - \Bar{\vh}_{c'}\|_2\ .
    \end{equation}
\end{itemize}

Since the NC2 and NC3 metrics involve normalised matrices, it is not immediately evident whether the linear classifier and the class-mean activations are equinorm. Consequently, we introduce the following definition:
\begin{equation}
    \mW \Bar{\mH}\text{-Equinorm} = |W_{\text{equinorm}} - \Bar{H}_{\text{equinorm}}|\ ,
\end{equation}
where $\Bar{H}_{\text{equinorm}} = \frac{\text{std}_c(\|\Bar{\vh}_c - \vh_G\|_2)}{\text{avg}_c(\|\Bar{\vh}_c - \vh_G\|_2)}$ and $W_{\text{equinorm}} = \frac{\text{std}_c(\|\vw_c\|_2)}{\text{avg}_c(\|\vw_c\|_2)}$.

Finally, to measure the extent of the variability collapse of each feature separately, we define the cosine margin metric as follows:
\begin{equation}
    CM_{c,i} = \cos \theta_{c,i;j} - \max_{j\neq c}\cos \theta_{c,i;j}, \quad \text{where $\cos\theta_{c,i;j} = \frac{\langle \vw_j - \vw_G, \vh_{c,i} - \vh_G\rangle}{\|\vw_j - \vw_G\|_2 \|\vh_{c,i} - \vh_G\|_2}$}\ .
\end{equation}

Note that the maximal angle for a simplex ETF vector collection $\{v_c\}_{c=1}^C$ are defined as such: 
\begin{equation}
    \frac{\langle v_c, v_{c'}\rangle}{\|v_c\|_2\|v_{c'}\|_2} = \begin{cases}
        1,&\quad \text{for}\, c=c'\\
        -\frac{1}{C-1},&\quad \text{for}\, c\neq c'
    \end{cases}\ .
\end{equation}

Therefore, we can calculate the theoretical simplex ETF cosine margin as $\frac{C}{C-1}$.

\section{DDN Gradients}
\label{appendix-gradients}

The original problem we are trying to solve, as defined in Section~\ref{subsec:nearestetf}, is the following:
\begin{equation}
    \mathop{\text{minimize}}_{\mU \in St^d_C}\, \left\|\Tilde{\mH} - \mU\Tilde{\mM}\right\|^2_F\ ,
\end{equation}

or equivalently expanded as such:
\begin{equation}
    \begin{aligned}
        y(\mU) \in &\argmin_{\mU\in \R^{d\times C}} && \Tilde{\mH} : \Tilde{\mH} - \frac{2}{\sqrt{C-1}}\Tilde{\mH} : \mU\Tilde{\mM} + \frac{1}{C-1} \mU: \mU \Tilde{\mM}\ ,\\[5pt]
        &\text{subject to}\quad &&\mU^\top \mU = \mI_C\ .
    \end{aligned}
\end{equation}
Here, we denote the double-dot operator $:$ as the Frobenius inner product, \ie, $\mA:\mB = \sum_{i=1}^m \sum_{j=1}^n \emA_{ij} \emB_{ij} = \Tr(\mA^\top \mB)$. 

To compute the first and second-order gradients of the objective function and constraints, respectively, we need to consider matrices as variables. Utilising matrix differentials and vectorised derivatives, as defined in~\cite{magnus-matrix-calculus}, simplifies the computation of second-order objective derivatives and all constraint derivatives. For the objective function, we have:

\begin{equation}
\begin{aligned}
    d\,f(\Tilde{\mH}, \mU) &= - \frac{2}{\sqrt{C-1}}\Tilde{\mH}:d\,\mU\Tilde{\mM} + \frac{1}{C-1}\bigg(d\,\mU:\mU\Tilde{\mM} + \mU:d\,\mU\Tilde{\mM}\bigg)\\[5pt]
    &= \frac{2}{\sqrt{C-1}}\Tilde{\mH}\Tilde{\mM}:d\,\mU + \frac{2}{C-1}\mU\Tilde{\mM}:d\,\mU\\[5pt]
    &= \left(\frac{2}{\sqrt{C-1}}\Tilde{\mH}\Tilde{\mM} + \frac{2}{C-1}\mU\Tilde{\mM}\right):d\,\mU \\[5pt]
    \implies D_{\mU}f(\Tilde{\mH}, \mU) &= \frac{2}{\sqrt{C-1}}\Tilde{\mH}\Tilde{\mM} + \frac{2}{C-1}\mU\Tilde{\mM}\in\R^{d\times C}\ .
\end{aligned}
\label{eqn:obj1stgrad}
\end{equation}

\begin{equation}
    \begin{aligned}
        d\,D_{\mU}(\Tilde{\mH}, \mU) &= \frac{2}{C-1}d\,\mU\Tilde{\mM}\\[5pt]
        \implies d\,\rvec\,D_{\mU}(\Tilde{\mH}, \mU) &= \frac{2}{C-1}\rvec(d\,\mU\Tilde{\mM}) = \frac{2}{C-1}\bigg(\mI_d \otimes \Tilde{\mM}\bigg)d\,\rvec\,\mU\\[5pt]
        \implies \rvec(D_{\mU\mU}^2 f(\Tilde{\mH}, \mU)) &= \frac{2}{C-1}\bigg(\mI_d \otimes \Tilde{\mM}\bigg) \in\R^{dC\times dC}\ ,
    \end{aligned}
    \label{eqn:obj2ndgradu}
\end{equation}
where we have defined here the row-major vectorisation method, \ie, $\rvec(\mA) = \mathrm{vec}(\mA^\top)$, and we have used the property:
\begin{equation}
    \rvec(\mA\mB\mC) = (\mA\otimes \mC^\top)\rvec(\mB)\ .
\end{equation}
A similar proof follows for the second-order partial gradient of the objective. We omit the proof here and just declare the result:
\begin{equation}
    \mB \triangleq\rvec(D_{\Tilde{\mH}\mU}^2 f(\Tilde{\mH},\mU)) = -\frac{2}{\sqrt{C-1}}\bigg(\mI_d \otimes \Tilde{\mM}\bigg)\in\R^{dC\times dC}\ .
\end{equation}

Regarding the gradients of the constraint function, we have:

\begin{equation}
    \begin{aligned}
        dJ(\Tilde{\mH}, \mU) &= d(\mU^\top \mU - \mI_C) = (d\,\mU)^\top \mU + \mU^\top d\,\mU\\[5pt]
        \implies d\,\rvec J(\Tilde{\mH}, \mU) &= \rvec((d\,\mU)^\top \mU) + \rvec(\mU^\top d\,\mU)\\[5pt]
        &= (\mI_C \otimes \mU^\top)\,d\, \rvec\,\mU^\top + (\mU^\top \otimes \mI_C)\,d\,\rvec\,\mU\\[5pt]
        &= (\mI_C \otimes \mU^\top) \mK_{dC} \,d\, \rvec\,\mU +  (\mU^\top \otimes \mI_C)\,d\,\rvec\,\mU\\[5pt]
        &= \mK_{CC}(\mU^\top \otimes \mI_C) \,d\, \rvec\,\mU +  (\mU^\top \otimes \mI_C)\,d\,\rvec\,\mU\\[5pt]
        &= (\mK_{CC} + \mI_{C^2})(\mU^\top \otimes \mI_C) \,d\,\rvec\,\mU\\[5pt]
        &= (\mK_{CC} + \mI_{C^2})(\mU \otimes \mI_C)^\top d\,\rvec\,\mU\\[5pt]
        \implies \rvec(D_{\mU} J(\Tilde{\mH}, \mU)) &= (\mK_{CC} + \mI_{C^2})(\mU \otimes \mI_C)^\top \in\R^{CC\times dC}\ ,
    \end{aligned}
    \label{eqn:constr1stgrad}
\end{equation}

where we have defined the commutation matrix~\cite{magnus-matrix-calculus, commutation-matrix} as $\mK_{mn}$, as a matrix which satisfies the two following identities for any given $\mA\in\R^{m\times n}$ and $\mB\in\R^{r\times q}$:
\begin{equation}
\begin{aligned}
    \mK_{mn}\mathrm{vec}(\mA) &= \mathrm{vec}(\mA^\top)\ ,\\
    \mK_{rm}(\mA\otimes \mB)\mK_{nq} &= \mB\otimes \mA\ .
\end{aligned}
\end{equation}

The gradient in Equation~\ref{eqn:constr1stgrad} contains redundant constraints because of the symmetrical nature of the orthogonality constraints. To retain only the non-redundant constraints, we must undertake a half-vectorisation procedure. Given that we already possess the fully vectorised gradients (which are simpler to compute in this scenario), we require an elimination matrix, $\mL_{C}\in\R^{\frac{C(C+1)}{2}\times C^2}$, such that 
\begin{equation}
    \mL_{C} \,\rvec(\mA) = \rvech(\mA)\ .
\end{equation}

We can define an elimination matrix explicitly as follows~\cite{elimination-magnus}:
\begin{equation}
    \mL_n = \sum_{i\geq j} \vu_{ij} \mathrm{vec}(\mE_{ij})^\top =  \sum_{i\geq j} (\vu_{ij} \otimes \ve_j^\top \otimes \ve_i^\top)\ ,
\end{equation}
where
\begin{equation}
    \vu_{ij} = \begin{cases}
        1 & \text{in position}\, (j-1)n + i -\frac{1}{2}j(j-1) \\
        0 & \text{elsewhere}
    \end{cases}\ .
\end{equation}

Hence,
\begin{equation}
    \mA\triangleq\rvech(D_{\mU}J(\Tilde{\mH},\mU)) = \mL_{C} (\mK_{CC} + \mI_{C^2}) (\mU \otimes \mI_C)^\top \in \R^{\frac{C(C+1)}{2}\times dC}\ .
    \label{eqn:constr1sthalfgrad}
\end{equation}

We have the following useful Corollary for the second-order gradient of the constraint function.

\begin{corollary}[\citet{magnus-matrix-calculus}]
    Let $\phi$ be a twice differentiable real-valued function of an $n\times q$ matrix $\mX$. Then, the following two relationships hold between the second differential and the Hessian matrix of $\phi$ at $\mX$:
    \begin{equation*}
        d^2\phi(\mX) = \Tr(\mA(d\,\mX)^\top \mB d\,\mX)\iff H\phi(\mX) = \frac{1}{2}(\mA^\top \otimes \mB + \mA \otimes \mB^\top)\ .
    \end{equation*}
    \label{corol:second-id-theorem}
\end{corollary}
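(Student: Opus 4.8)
The plan is to recognise that this is a direct consequence of the \emph{Second Identification Theorem} of \citet{magnus-matrix-calculus}: for a twice-differentiable scalar $\phi$, the Hessian $H\phi(\mX)$ is by definition the unique symmetric matrix satisfying $d^2\phi(\mX) = (\mathrm{vec}\, d\mX)^\top H\phi(\mX)\,(\mathrm{vec}\, d\mX)$ for all perturbations $d\mX$ (here $\mathrm{vec}$ is the column-major vectorisation with respect to which $H\phi$ is defined; the row-major $\rvec$ variant used elsewhere in this appendix follows verbatim after inserting commutation matrices $\mK_{mn}$, which cancel). Granting that theorem, the whole content of the corollary is the algebraic fact that the trace form $\Tr(\mA(d\mX)^\top\mB\,d\mX)$ \emph{is} such a quadratic form, with a middle matrix one can read off explicitly.

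First I would convert the trace form into a quadratic form in $\mathrm{vec}\, d\mX$. Writing $\mY = d\mX$ and using cyclic invariance, $\Tr(\mA\mY^\top\mB\mY) = \Tr\!\big(\mY^\top(\mB\mY\mA)\big)$. Then the two standard identities $\Tr(\mP^\top\mQ) = (\mathrm{vec}\,\mP)^\top(\mathrm{vec}\,\mQ)$ and $\mathrm{vec}(\mP\mQ\mR) = (\mR^\top\otimes\mP)\mathrm{vec}\,\mQ$ give
\[
\Tr(\mA\mY^\top\mB\mY) = (\mathrm{vec}\,\mY)^\top\mathrm{vec}(\mB\mY\mA) = (\mathrm{vec}\,\mY)^\top(\mA^\top\otimes\mB)(\mathrm{vec}\,\mY)\ ,
\]
so that $d^2\phi(\mX) = (\mathrm{vec}\, d\mX)^\top(\mA^\top\otimes\mB)(\mathrm{vec}\, d\mX)$.

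Next, since a quadratic form depends on its middle matrix only through the symmetric part, and $H\phi(\mX)$ is symmetric, the Second Identification Theorem forces $H\phi(\mX) = \tfrac12\big((\mA^\top\otimes\mB) + (\mA^\top\otimes\mB)^\top\big)$; applying $(\mP\otimes\mQ)^\top = \mP^\top\otimes\mQ^\top$ turns this into $\tfrac12(\mA^\top\otimes\mB + \mA\otimes\mB^\top)$, which is the forward implication. For the converse I would just run the chain backwards: if $H\phi(\mX)$ has the stated form then $(\mathrm{vec}\, d\mX)^\top H\phi(\mX)(\mathrm{vec}\, d\mX) = (\mathrm{vec}\, d\mX)^\top(\mA^\top\otimes\mB)(\mathrm{vec}\, d\mX)$ (the $\mA\otimes\mB^\top$ term is the transpose, which the quadratic form re-averages in), and reversing the two $\mathrm{vec}$/trace identities recovers $\Tr(\mA(d\mX)^\top\mB\,d\mX)$.

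I do not anticipate a real obstacle; the only subtlety worth stating explicitly is that $\mA$ and $\mB$ are not uniquely determined by $H\phi$, so the ``$\iff$'' should be read as an equivalence between the two \emph{parametrised} expressions for a fixed pair $(\mA,\mB)$ rather than a one-to-one correspondence — this is precisely where the symmetrisation step enters, and it is the point I would be careful to phrase correctly. The remaining work is routine Kronecker/vec bookkeeping, which I would relegate to a one-line computation.
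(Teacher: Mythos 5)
Your proposal is correct: the paper does not prove this corollary but imports it verbatim from \citet{magnus-matrix-calculus}, and your argument — rewriting $\Tr(\mA(d\mX)^\top\mB\,d\mX)$ as the quadratic form $(\mathrm{vec}\,d\mX)^\top(\mA^\top\otimes\mB)(\mathrm{vec}\,d\mX)$ and then invoking the Second Identification Theorem with symmetrisation — is precisely the standard derivation that the citation points to. Your closing caveat about the non-uniqueness of $(\mA,\mB)$ and the column- versus row-major convention (the paper's appendix uses $\rvec$ with a footnote flagging the row-major Kronecker ordering) is apt and does not affect correctness.
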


With the above corollary established, we can have,
\begin{equation}
    \begin{aligned}
        J(\Tilde{\mH}, \mU) &= \mU^\top \mU - \mI_C\\
        dJ(\Tilde{\mH}, \mU) &= (d\,\mU)^\top \mU + \mU^\top d\,\mU\\
        d^2J(\Tilde{\mH}, \mU) &= (d\,\mU)^\top d\,\mU + (d\,\mU)^\top d\,\mU = 2 (d\,\mU)^\top d\,\mU\\
        d^2J(\Tilde{\mH}, \mU)_{ij} &= 2\ve_i^\top (d\,\mU)^\top (d\,\mU)\ve_j = 2\Tr(\ve_j \ve_i^\top(d\,\mU)^\top d\,\mU)\ ,
    \end{aligned}
\end{equation}
and hence, we get:
\begin{equation}
    \rvec\ ( D_{\mU\mU}^2 J(\Tilde{\mH}, \mU)_{ij}) = \mI_d \otimes (\ve_i \ve_j^\top) + \mI_d \otimes(\ve_j \ve_i^\top) \in \R^{dC\times dC}\ .
\end{equation}

Then, we repeat the process with the elimination matrix to eliminate the redundant constraints. However, while this is one way to compute the derivative, a more efficient approach exists, namely the \emph{embedded gradient vector field} method, which we outline in the following subsection. For a complete overview of the method, we recommend readers to follow through the works of \citet{Birtea2019-first-stiefel, BIRTEA-second-stiefel, birtea2015hessian, birtea2012geometrical}.

Finally, the gradients for the proximal problem in Equation~\ref{eqn:closestetfprox} are straightforward to compute, with the only change in gradients being the added proximal terms in:
\begin{equation}
    \begin{aligned}
        D_{\mU} f(\Tilde{\mH}, \mU) &= \frac{2}{K-1}\mU\Tilde{\mM} - \frac{2}{\sqrt{K-1}}\Tilde{\mH}\Tilde{\mM} + \delta(\mU - \mU_{\text{prox}}) \in \R^{d\times C}\ ,\\[5pt]
        \rvec(D_{\mU\mU}^2 f(\Tilde{\mH}, \mU)) &= \frac{2}{K-1} \bigg(\mI_d \otimes \Tilde{\mM}\bigg) + \delta \mI_{dC} \in \R^{dC\times dC}\ .
    \end{aligned}
\end{equation}

\subsection{Implicit Formulation of Lagrange Multipliers on Differentiable Manifolds}
An issue arises in Proposition~\ref{prop:ddn} with the expression for $\mG$,
\begin{equation}
     \mG = \rvec(D_{\mU\mU}^2 \,f(\tilde{\mH}, \mU)) - \mLambda : \rvech(D_{\mU\mU}^2\,J(\tilde{\mH}, \mU)) \in \R^{dC\times dC}\ ,
     \label{eqn:G}
\end{equation}

where both the calculation of the Lagrange multiplier matrix $\mLambda$ (solved via a linear system) and the construction of the fourth-order tensor representing the second-order derivatives of the constraint function are complex and challenging. However, by recognising the manifold structure of the problem, we can reformulate Equation~\ref{eqn:G} in a simpler and more computationally efficient way. The embedded gradient vector field method offers such a solution~\cite{birtea2015hessian}. 

For a general Riemannian manifold $(\mathcal{M}, g)$, we can define the Gram matrix for the smooth functions $f_1, \ldots, f_s, h_1, \ldots, h_r : (\mathcal{M}, g) \to \R$ as follows,

\begin{equation}
    \Gram^{(f_1, \ldots, f_s)}_{(h_1, \ldots, h_r)} \triangleq \left[\begin{array}{ccc}
\langle\nabla h_1, \nabla f_1\rangle & \ldots & \langle\nabla h_r, \nabla f_1\rangle \\
\vdots & \ddots & \vdots \\
\langle\nabla h_1, \nabla f_s\rangle & \ldots & \langle\nabla h_r, \nabla f_s\rangle
\end{array}\right]\in\R^{s\times r}\ .
\end{equation}

In our problem, we are working with a compact Stiefel manifold, which is an embedded sub-manifold of $\R^{d\times C}$, and we identify the isomorphism (via vec) between $\R^{d\times C}$ and $\R^{dC}$. A Stiefel manifold $St^d_C = \{\mU \in \R^{d\times C} \mid \mU^\top \mU = \mI_C\}$ can be characterised by a set of constraint functions, $j_s, j_{pq} : \R^{dC} \to \R$, as follows:

\begin{equation}
    \begin{aligned}
        j_s (\vu) &= \frac{1}{2}\|\vu_s\|^2, \quad &&1\leq s\leq C\ ,\\[5pt]
        j_{pq}(\vu) &= \langle \vu_p, \vu_q\rangle,\quad &&1 \leq p < q \leq C\ .
    \end{aligned}
    \label{eqn:constrfunctions}
\end{equation}

We consider a smooth cost function $\tilde{f}: St^d_C \to \R$ and define $f: \R^{d\times C} \to \R$ as a smooth extension (or prolongation) of $\tilde{f}$. The embedded gradient vector field (after applying the isomorphism) is defined on the open set $\R^{dC}_{reg} \subset \R^{dC}$ formed with the regular leaves of the constrained function $\vj: \R^{dC} \to \R^{\frac{C(C+1)}{2}}$ and is tangent to the foliation of this function. The vector field takes the form~\cite{Birtea2019-first-stiefel}:

\begin{equation}
    \partial f(\vu) = \nabla f(\vu) - \sum_{1\leq s \leq C} \sigma_s(\vu)\nabla j_s(\vu) - \sum_{1\leq p < q \leq C} \sigma_{pq}(\vu)\nabla j_{pq}(\vu)\ ,
    \label{eqn:embeddedgradfield}
\end{equation}

where $\sigma_s, \sigma_{pq}$ are the Lagrange multiplier functions defined as such~\cite{birtea2012geometrical}:

\begin{equation}
\begin{aligned}
    \sigma_{s}(\vu) &= \frac{\det\left(\Gram_{\left(j_1, \ldots, j_{s-1}, f, j_{s+1}, \ldots, j_C, j_{12}, \ldots, j_{C-1,C}\right)}^{\left(j_1, \ldots, j_{s-1}, j_s, j_{s+1}, \ldots, j_C, j_{12}, \ldots, j_{C-1,C}\right)}(\vu)\right)}{\det\left(\Gram_{\left(j_1, \ldots, j_C, j_{12}, \ldots, j_{C-1,C}\right)}^{\left(j_1, \ldots, j_C, j_{12}, \ldots, j_{C-1,C}\right)}(\vu)\right)} \triangleq \frac{\det \left(\Gram_s(\vu)\right)}{\det \left(\Gram(\vu)\right)}\ , \\[5pt]
    \sigma_{pq}(\vu) &= \frac{\det\left(\Gram_{\left(j_1, \ldots, j_C, j_{12}, \ldots, j_{pq-1}, f, j_{pq+1}, \ldots, j_{C-1,C}\right)}^{\left(j_1, \ldots, j_C, j_{12},  \ldots, j_{pq-1}, j_{pq}, j_{pq+1}, \ldots, j_{C-1,C}\right)}(\vu)\right)}{\det\left(\Gram_{\left(j_1, \ldots, j_C, j_{12}, \ldots, j_{C-1,C}\right)}^{\left(j_1, \ldots, j_C, j_{12}, \ldots, j_{C-1,C}\right)}(\vu)\right)} \triangleq \frac{\det \left(\Gram_{pq}(\vu)\right)}{\det \left(\Gram(\vu)\right)}\ .
\end{aligned}
    \label{eqn:lagrangefunction}
\end{equation}

The Gram matrix in the denominator of the Lagrange multiplier functions can be defined as a block matrix as follows,

\begin{equation}
    \Gram(\vu) = \left[\begin{array}{cc}\mA & \mC^\top \\ \mC & \mB\end{array}\right]\ ,
\end{equation}

where

\begin{equation*}
\begin{aligned}
& \mA=\left[\begin{array}{ccc}
\langle\nabla j_1, \nabla j_1\rangle & \ldots & \langle\nabla j_C, \nabla j_1\rangle \\
\vdots & \ddots & \vdots \\
\langle\nabla j_1, \nabla j_C\rangle & \ldots & \langle\nabla j_C, \nabla j_C\rangle
\end{array}\right], \\
& \mB=\left[\begin{array}{ccc}
\langle\nabla j_{12}, \nabla j_{12}\rangle & \ldots & \langle\nabla j_{C-1, C}, \nabla j_{12}\rangle \\
\vdots & \ddots & \vdots \\
\langle\nabla j_{12}, \nabla j_{C-1, C}\rangle & \ldots & \langle\nabla j_{C-1, C}, \nabla j_{C-1, C}\rangle
\end{array}\right] \text {, } \\
& \mC=\left[\begin{array}{ccc}
\langle\nabla j_1, \nabla j_{12}\rangle & \ldots & \langle\nabla j_C, \nabla j_{12}\rangle \\
\vdots & \ddots & \vdots \\
\langle\nabla j_1, \nabla j_{C-1, C}\rangle & \ldots & \langle\nabla j_C, \nabla j_{C-1, C}\rangle
\end{array}\right] .
\end{aligned}
\end{equation*}

For $\Gram_s(\vu)$ and $\Gram_{pq}(\vu)$, we can similarly define block decompositions by replacing the appropriate column of the $\Gram(\vu)$ based on the index. For instance, to define $\Gram_s(\vu)$, we replace the column $s$ with $[\langle\nabla f(\vu), \nabla j_i(\vu)\rangle]_{i=1}^{C-1,C}$.

It has been proved~\cite{Birtea2019-first-stiefel} that if $\mU \in St^d_C$ is a critical point of the function $\tilde{f}$, \ie, $\partial f(\vu) = 0$, then $\sigma_s(\vu), \sigma_{pq}(\vu)$ become the classical Lagrange multipliers.

\begin{proposition}[Lagrange multiplier functions for Stiefel Manifolds]
\label{prop:lagrangemulti-stiefel}
The Lagrange multiplier functions are described as a function of the constraint functions in Equation~\ref{eqn:constrfunctions} in the following way:
\begin{equation}
    \begin{aligned}
        \sigma_s(\vu) &= \langle \nabla f(\vu), \nabla j_s(\vu)\rangle = \left\langle \frac{\partial f}{\partial \vu_s}(\vu), \vu_s\right\rangle\ ,\\[5pt]
        \sigma_{pq}(\vu) &= \frac{1}{2}\langle \nabla f(\vu), \nabla j_{pq}(\vu)\rangle = \frac{1}{2}\left(\left\langle \frac{\partial f}{\partial \vu_q}(\vu), \vu_p\right\rangle\ + \left\langle \frac{\partial f}{\partial \vu_p}(\vu), \vu_q\right\rangle\right)\ .
    \end{aligned}
    \label{eqn:lagrangestiefel}
\end{equation}
\end{proposition}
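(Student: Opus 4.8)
The plan is to avoid the determinant ratios of Equation~\ref{eqn:lagrangefunction} entirely by reading them, through Cramer's rule, as the solution of the linear system characterising the embedded gradient field. By construction, $\partial f(\vu)$ in Equation~\ref{eqn:embeddedgradfield} is tangent to the foliation of $\vj$, hence Frobenius-orthogonal to every constraint gradient $\nabla j_k(\vu)$; equating $\langle \partial f(\vu), \nabla j_l(\vu)\rangle = 0$ for all $l$ shows that the coefficients $\sigma_s, \sigma_{pq}$ jointly solve $\Gram(\vu)\,\vec\sigma(\vu) = \big(\langle\nabla f(\vu),\nabla j_k(\vu)\rangle\big)_k$, where $\Gram(\vu)$ is the Gram block matrix displayed after Equation~\ref{eqn:lagrangefunction}. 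It therefore suffices to show that, on $St^d_C$, this Gram matrix is diagonal and then to invert it.

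First I would compute the constraint gradients with matrix differentials, exactly as in Appendix~\ref{appendix-gradients}. From $d j_s = \langle \vu_s, d\vu_s\rangle$ we get $\nabla j_s(\vu) = \vu_s \ve_s^\top$, the $d\times C$ matrix whose $s$-th column is $\vu_s$ and whose other columns vanish; from $d j_{pq} = \langle \vu_q, d\vu_p\rangle + \langle \vu_p, d\vu_q\rangle$ we get $\nabla j_{pq}(\vu) = \vu_q \ve_p^\top + \vu_p \ve_q^\top$. Taking Frobenius inner products and using the column orthonormality $\langle \vu_p, \vu_q\rangle = \delta_{pq}$ valid on $St^d_C$, one obtains $\langle \nabla j_s, \nabla j_{s'}\rangle = \delta_{ss'}$, $\langle \nabla j_s, \nabla j_{pq}\rangle = 0$, and $\langle \nabla j_{pq}, \nabla j_{p'q'}\rangle = 2\,\delta_{(p,q),(p',q')}$; that is $\mA = \mI_C$, $\mC = \vzero$, and $\mB = 2\,\mI_{C(C-1)/2}$ in the block decomposition. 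The factor $2$ is an artefact of $j_{pq}$ being an unnormalised bilinear form whereas $j_s$ carries the $\tfrac12\|\cdot\|^2$ normalisation, and it is precisely this asymmetry that yields the $\tfrac12$ in front of $\sigma_{pq}$.

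With $\Gram(\vu) = \mathrm{diag}\big(\mI_C,\, 2\,\mI_{C(C-1)/2}\big)$ the system decouples immediately: $\sigma_s(\vu) = \langle \nabla f(\vu), \nabla j_s(\vu)\rangle$ and $\sigma_{pq}(\vu) = \tfrac12\langle \nabla f(\vu), \nabla j_{pq}(\vu)\rangle$. Expanding these Frobenius inner products column by column with the explicit gradients above gives $\langle \nabla f, \nabla j_s\rangle = \langle \tfrac{\partial f}{\partial \vu_s}, \vu_s\rangle$ and $\langle \nabla f, \nabla j_{pq}\rangle = \langle \tfrac{\partial f}{\partial \vu_p}, \vu_q\rangle + \langle \tfrac{\partial f}{\partial \vu_q}, \vu_p\rangle$, which is the claimed identity. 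One can also reach the same conclusion directly from Equation~\ref{eqn:lagrangefunction} by multilinearity of the determinant: replacing a column of the diagonal matrix $\Gram$ by $\big(\langle\nabla f,\nabla j_k\rangle\big)_k$ and expanding, every basis component except the matching one produces a repeated (proportional) column and drops out, leaving the single surviving term.

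The one genuinely delicate step is the index bookkeeping in establishing the block structure: one must keep the two families apart — $j_s$ touching a single column of $\mU$, $j_{pq}$ touching two — and verify the cross-block vanishing $\mC = \vzero$ with care, since it relies on both $p < q$ and the off-diagonal relations $\langle\vu_p,\vu_q\rangle = 0$, i.e.\ on $\vu$ actually lying on $St^d_C$ rather than on a generic regular leaf of $\vj$. Once the diagonal Gram structure is in hand, everything else is routine.
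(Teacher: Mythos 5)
Your proposal is correct and essentially coincides with the paper's proof: both arguments reduce to computing the constraint gradients $\nabla j_s = \vu_s\ve_s^\top$, $\nabla j_{pq} = \vu_q\ve_p^\top + \vu_p\ve_q^\top$ and showing the Gram matrix of constraint gradients equals $\mathrm{diag}\big(\mI_C,\,2\,\mI_{C(C-1)/2}\big)$ on $St^d_C$, after which the coefficients follow immediately. Your linear-system/Cramer's-rule framing and the paper's direct evaluation of the determinant ratios are the same computation presented two ways (your framing arguably explains \emph{why} the determinant formula is the right one, but adds no new mathematical content).
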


\begin{proof}
    We take the definition of the Lagrange multiplier functions in Equation~\ref{eqn:lagrangefunction}. Starting with the denominator, we observe that the Gram matrix is of the form:
    \begin{equation}
        \Gram(\vu) = \left[\begin{array}{cc}\mI_C & \mathbf{0} \\ \mathbf{0} & 2\mI_{\frac{C(C-1)}{2}}\end{array}\right]\ ,
    \end{equation}
    since for each block matrix ($\mA,\mB,\mC$) of the Gramian, we get for each of their elements:
    \begin{itemize}
        \item $\forall\emA_{s,r}$:\, $\langle \nabla j_s(\vu), \nabla j_r(\vu)\rangle = \delta_{sr}$\\
        \item $\forall\emB_{\gamma\tau,\alpha\beta}$:\, $\langle \nabla j_{\gamma\tau}(\vu), \nabla j_{\alpha\beta}(\vu)\rangle = 2\delta_{\gamma\alpha}\delta_{\tau\beta} + 2\delta_{\tau\alpha}\delta_{\gamma\beta}$ \\
        \item $\forall\emC_{s,\alpha\beta}$:\, $\langle\nabla j_s(\vu),\nabla j_{\alpha\beta}(\vu)\rangle = 2\delta_{s\alpha}\delta_{s\beta} = 0$
    \end{itemize}

    where $\delta_{ij}$ is defined as the Kronecker delta symbol, \ie, $\delta_{ij} = [i=j]$.

    Thus, the determinant of this Gram matrix is:
    \begin{equation}
        \det(\Gram(\vu)) = \det \left[\begin{array}{cc}\mI_C & \mathbf{0} \\ \mathbf{0} & 2\mI_{\frac{C(C-1)}{2}}\end{array}\right] = \det (\mI_C)\det\left(2\mI_{\frac{C(C-1)}{2}}\right) = 2^{\frac{C(C-1)}{2}}\ .
        \label{eqn:detgramdenom}
    \end{equation}

    Now, let us turn our focus to the numerator of the Lagrange multiplier function $\sigma_s(\vu)$. 
    \begin{equation}
        \begin{aligned}
            \det (\Gram_s(\vu)) &= \det\left[\begin{array}{ccccc|ccc}
1 & \ldots & \langle\nabla f(\vu), \nabla j_1(\vu)\rangle & \ldots & 0 & 0 & \ldots & 0 \\
\vdots & \ddots & \vdots & \ddots & \vdots & \vdots & \ddots & \vdots \\
0 & \ldots & \langle\nabla f(\vu), \nabla j_s(\vu)\rangle & \ldots & 0 & 0 & \ldots & 0 \\
\vdots & \ddots & \vdots & \ddots & \vdots & \vdots & \ddots & \vdots \\
0 & \ldots & \langle\nabla f(\vu), \nabla j_C(\vu)\rangle & \ldots & 1 & 0 & \ldots & 0 \\
\noalign{\vskip 2.5pt}
\hline
\noalign{\vskip 2.5pt}
0 & \ldots & \langle\nabla f(\vu), \nabla j_{12}(\vu)\rangle & \ldots & 0 & 2 & \ldots & 0 \\
\vdots & \ddots & \vdots & \ddots & \vdots & \vdots & \ddots & \vdots \\
0 & \ldots & \langle\nabla f(\vu), \nabla j_{C-1, C}(\vu)\rangle & \ldots & 0 & 0 & \ldots & 2
\end{array}\right] \\[5pt]
& =\langle\nabla f(\vu), \nabla j_s(\vu)\rangle\cdot\det\left[\begin{array}{cc}
\mI_{C-1} & \mathbf{0} \\
\mathbf{0} & 2 \mI_{\frac{C(C-1)}{2}}
\end{array}\right] \\
& =2^{\frac{C(C-1)}{2}}\langle\nabla f(\vu), \nabla j_s(\vu)\rangle\ .
        \end{aligned}
    \end{equation}

Similarly, for $\sigma_{pq}(\vu)$, we get:
\begin{equation}
    \begin{aligned}
        \det(\Gram_{pq}(\vu)) &= \langle\nabla f(\vu), \nabla j_{pq}(\vu)\rangle\cdot\det\left[\begin{array}{cc}
\mI_{C} & \mathbf{0} \\
\mathbf{0} & 2 \mI_{\frac{C(C-1)}{2} - 1}
\end{array}\right] \\
& =2^{\frac{C(C-1)}{2} - 1}\langle\nabla f(\vu), \nabla j_{pq}(\vu)\rangle\ .
    \end{aligned}
\end{equation}

Finally, we get the result by combining the determinants' results for each Lagrange multiplier function and computing the gradients of the constraint functions.

\end{proof}

Similarly to the gradient vector field, we can show that the Hessian for a constraint manifold (\eg, Stiefel manifold, $\Hess\tilde{f}(\vu):T_\vu St^d_C \times T_\vu St^d_C \to \R$) can be given as such~\cite{BIRTEA-second-stiefel}:

\begin{equation}
\begin{aligned}
   \Hess \tilde{f} (\vu) = &\left(\Hess f(\vu) - \sum_{1\leq s \leq C} \sigma_s(\vu)\Hess j_s(\vu)\right. \\[5pt] 
    &\left. - \sum_{1\leq p < q \leq C} \sigma_{pq}(\vu)\Hess j_{pq}(\vu) \right)_{|T_\vu St^d_C \times T_\vu St^d_C}\ .
\end{aligned}
\label{eqn:hessoperator}
\end{equation}

We can transform the results into a matrix form in the following way. First, we denote,

\begin{equation}
    \nabla f(\mU) \triangleq \operatorname{vec}^{-1}(\nabla f(\vu))\in \R^{d\times C};\quad \partial f(\mU) \triangleq \operatorname{vec}^{-1}(\partial f(\vu))\in \R^{d\times C}\ .
\end{equation}

We then introduce a symmetric matrix $\Sigma(\mU) \triangleq [\sigma_{pq}(\vu)] \in \R^{C\times C}$, where we also include the Lagrange multiplier function's symmetrical component. For the Stiefel manifold, the matrix form of the embedded gradient vector field (after some simple computation) is given by,

\begin{equation}
    \partial f(\mU) = \nabla f(\mU) - \mU\Sigma(\mU)\ ,
\end{equation}

where $\Sigma(\mU) = \frac{1}{2} \left(\nabla f(\mU)^\top \mU + \mU^\top \nabla f(\mU)\right)$.

Regarding the Hessian in Equation~\ref{eqn:hessoperator}, to write it in matrix form, we first need to compute the Hessian matrices of the constraint functions as follows\footnote{Here the resulting Kronecker product is expressed in a row-major way.}:

\begin{equation}
\begin{aligned} 
\left[\Hess j_s(\mU)\right]&=\left[\begin{array}{ccccc}
\mathbf{0}_d & \ldots & \mathbf{0}_d & \ldots & \mathbf{0}_d \\
\vdots & \ddots & \vdots & \ddots & \vdots \\
\mathbf{0}_d & \ldots & \mI_d & \ldots & \mathbf{0}_d \\
\vdots & \ddots & \vdots & \ddots & \vdots \\
\mathbf{0}_d & \ldots & \mathbf{0}_d & \ldots & \mathbf{0}_d
\end{array}\right]  = \mI_d \otimes\left(\ve_s \otimes \ve_s^\top\right);\\[5pt]
{\left[\Hess j_{pq}(\mU)\right] } & =\left[\begin{array}{ccccccc}
\mathbf{0}_d & \ldots & \mathbf{0}_d & \ldots & \mathbf{0}_d & \ldots & \mathbf{0}_d \\
\vdots & \ddots & \vdots & \ddots & \vdots & \ddots & \vdots \\
\mathbf{0}_d & \ldots & \mathbf{0}_d & \ldots & \mI_d & \ldots & \mathbf{0}_d \\
\vdots & \ddots & \vdots & \ddots & \vdots & \ddots & \vdots \\
\mathbf{0}_d & \ldots & \mI_d & \ldots & \mathbf{0}_d & \ldots & \mathbf{0}_d \\
\vdots & \ddots & \vdots & \ddots & \vdots & \ddots & \vdots \\
\mathbf{0}_d & \ldots & \mathbf{0}_d & \ldots & \mathbf{0}_d & \ldots & \mathbf{0}_d
\end{array}\right] =\mI_d \otimes\left(\ve_p \otimes \ve_q^\top+\ve_q \otimes \ve_p^\top\right).
\end{aligned}
\end{equation}

Finally, the matrix form of the Hessian of the cost function $\tilde{f} : St^d_C \to \R$ is given by,

\begin{equation}
    \Hess \tilde{f}(\mU) = (\Hess f(\mU) - \mI_d \otimes \Sigma(\mU))_{|T_\mU St^d_C \times T_\mU St^d_C}\ ,
\end{equation}

where from there, we can re-define the expression $\mG$ in Equation~\ref{eqn:G} as follows,

\begin{equation}
    \mG = \rvec(D_{\mU\mU}^2 \,f(\tilde{\mH}, \mU)) - \mI_d \otimes\Sigma(\mU)\in\R^{dC\times dC}\ .
\end{equation}

\section{Additional Experimental Results}
\label{appendix-results}


\begin{figure}[!h]
    \centering
    \begin{minipage}{0.75\textwidth}
        \centering
        \begin{subfigure}{0.32\textwidth}
            \centering
            \includegraphics[width=\textwidth, height=\textheight, keepaspectratio]{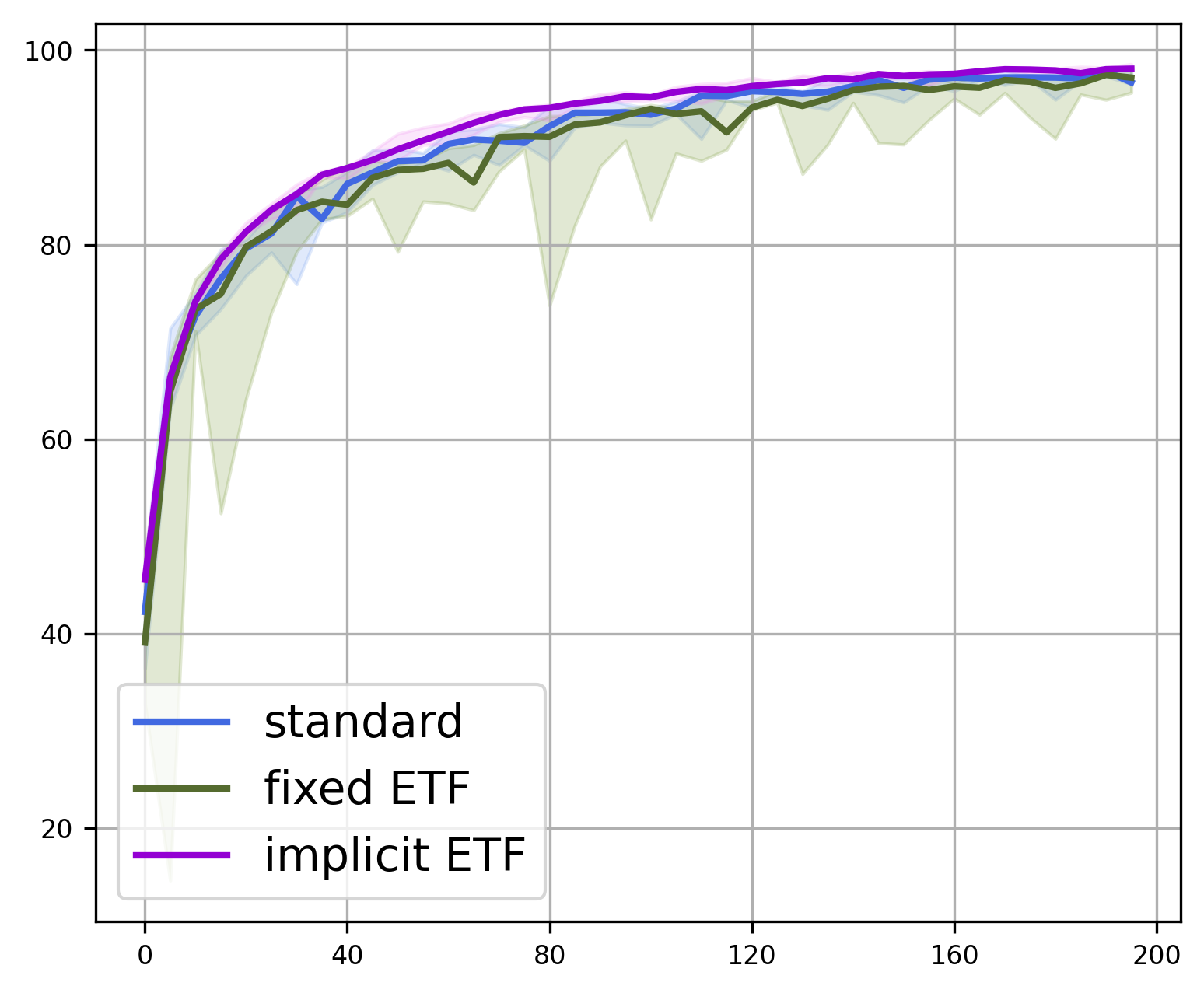}
            \caption{Train Top-1 Acc.}
        \end{subfigure}
        \hfill
        \begin{subfigure}{0.32\textwidth}
            \centering
            \includegraphics[width=\textwidth]{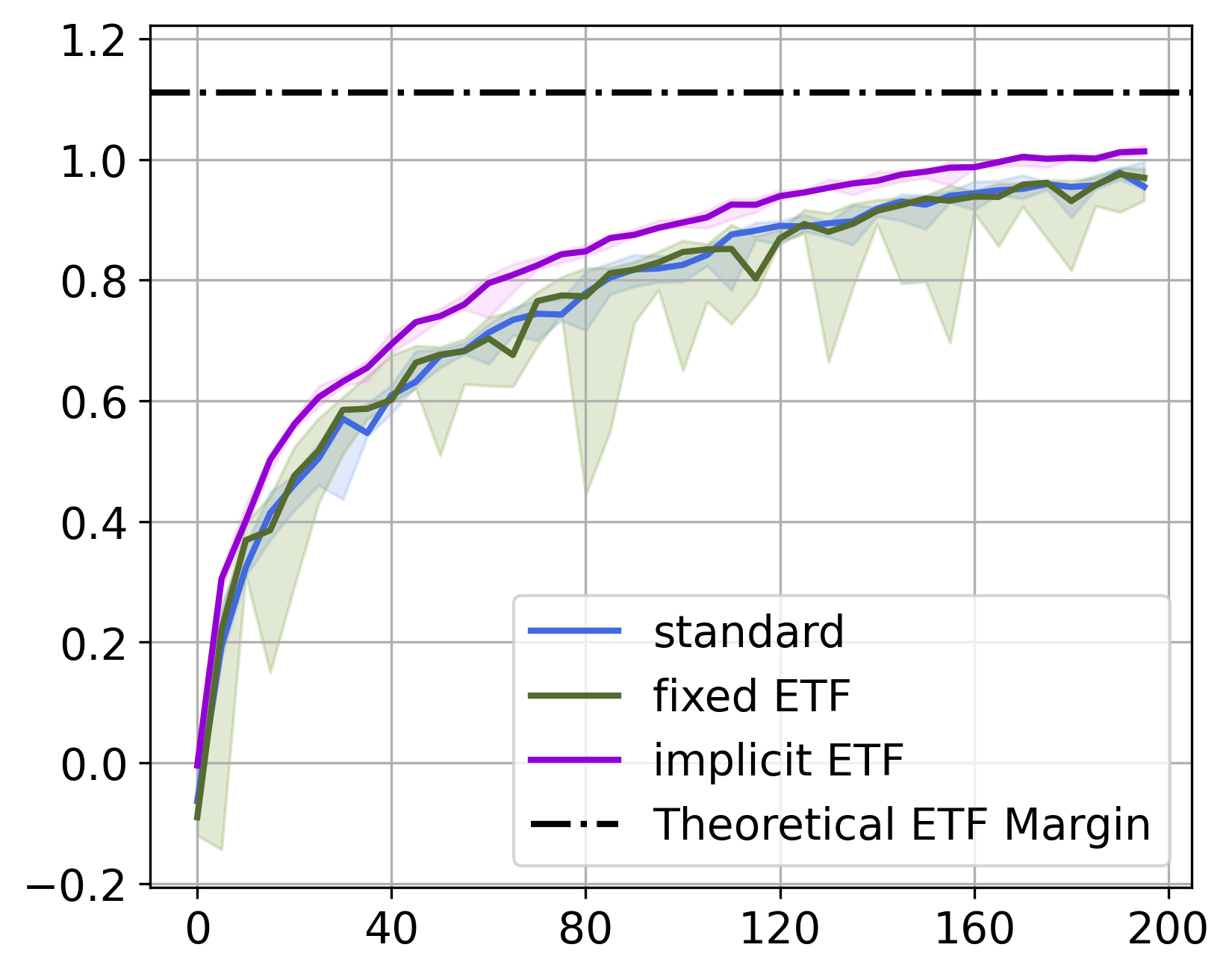}
            \caption{Avg Cos. Margin}
        \end{subfigure}
        \hfill
        \begin{subfigure}{0.32\textwidth}
            \centering
            \includegraphics[width=\textwidth]{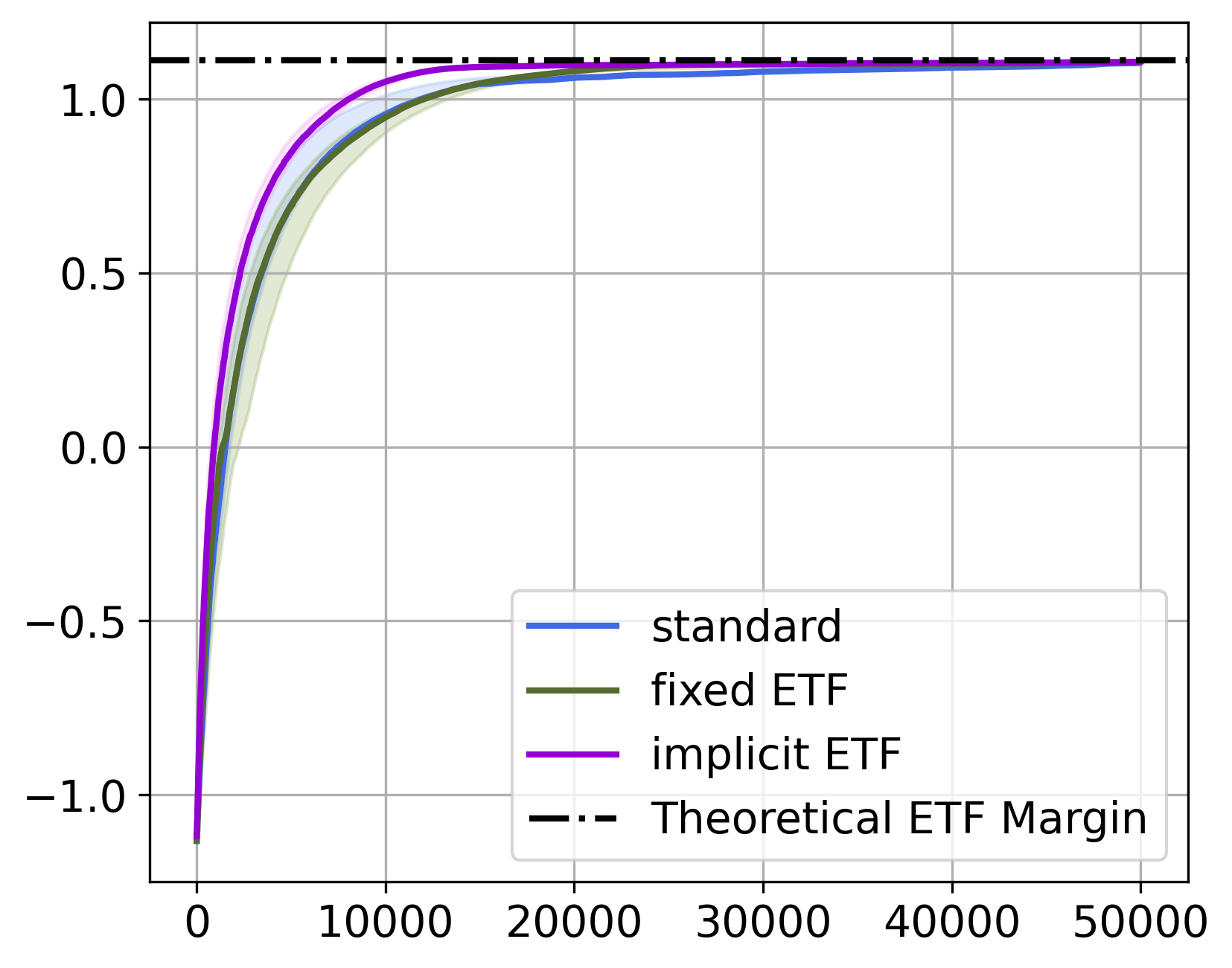}
            \caption{$\mathcal{P}_{CM}$ at EoT}
        \end{subfigure}
        \vskip0.3\baselineskip
        \begin{subfigure}{0.32\textwidth}
            \centering
            \includegraphics[width=\textwidth]{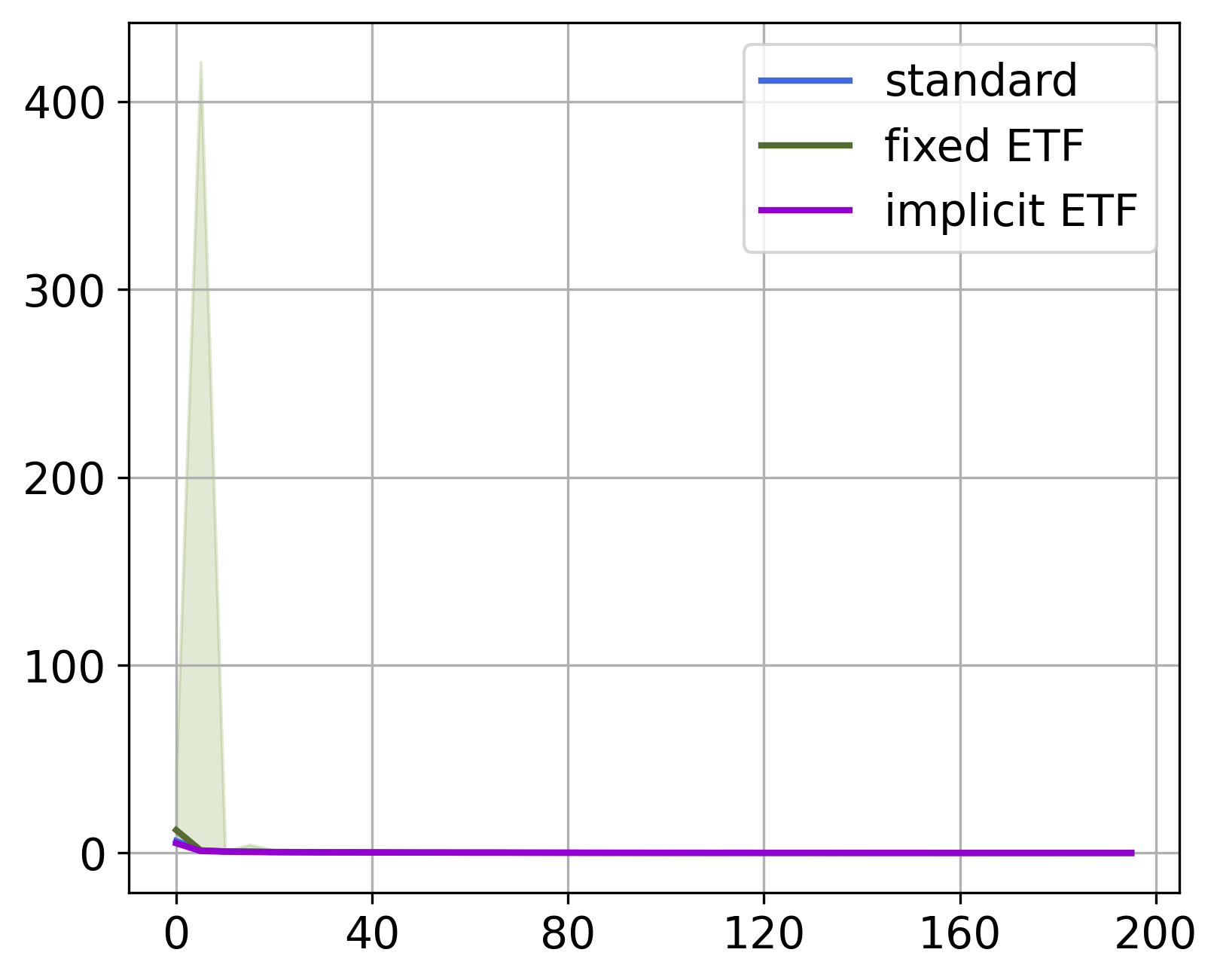}
            \caption{$NC1$}
        \end{subfigure}
        \hfill
        \begin{subfigure}{0.32\textwidth}
            \centering
            \includegraphics[width=\textwidth]{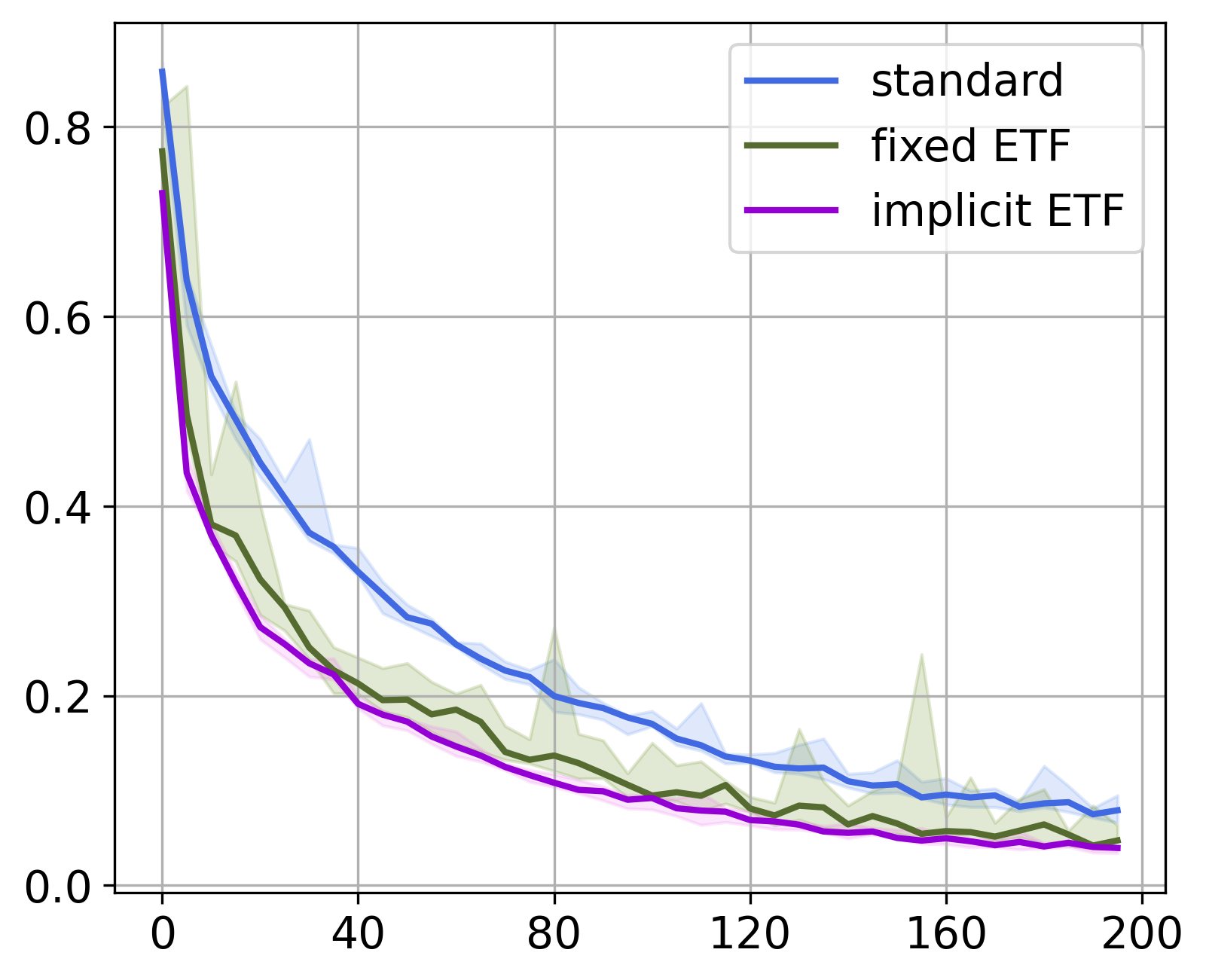}
            \caption{$NC3$}
        \end{subfigure}
        \hfill
        \begin{subfigure}{0.32\textwidth}
            \centering
            \includegraphics[width=\textwidth]{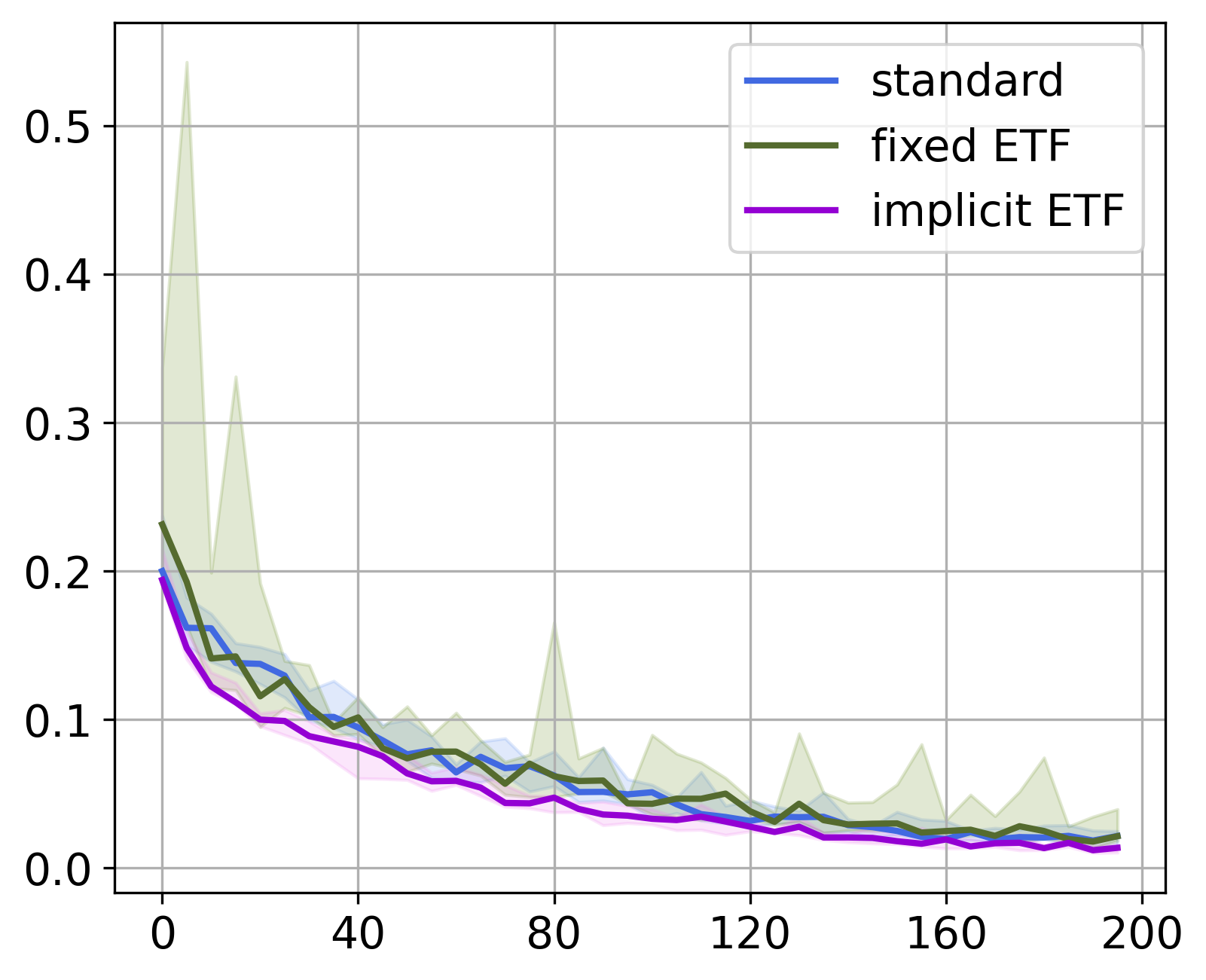}
            \caption{$\mW\Bar{\mH}$-Equinorm}
        \end{subfigure}
    \end{minipage}
    \hfill
    \begin{minipage}{0.24\textwidth}
        \centering
        \ContinuedFloat
        \begin{subfigure}{\textwidth}
            \centering
            \includegraphics[width=\textwidth]{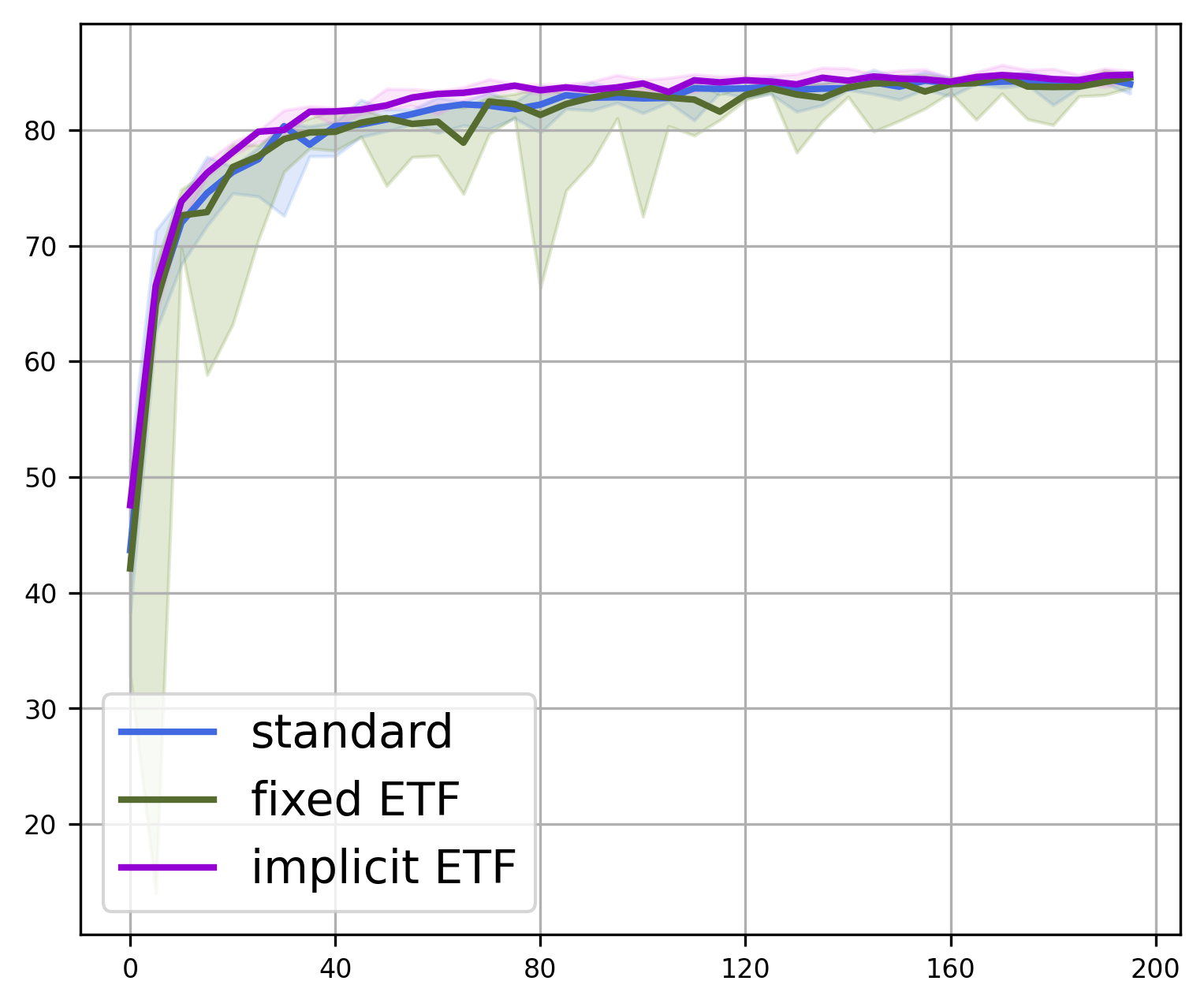}
            \caption{Test Top-1 Acc.}
        \end{subfigure}
    \end{minipage}
    \caption{CIFAR10 results on ResNet-18. In all plots, the x-axis represents the number of epochs, except for plot (c), where the x-axis denotes the number of training examples.}
    \label{fig:cifar10-resnet}
\end{figure}


\begin{figure}[!h]
    \centering
    \begin{minipage}{0.75\textwidth}
        \centering
        \begin{subfigure}{0.32\textwidth}
            \centering
            \includegraphics[width=\textwidth, height=\textheight, keepaspectratio]{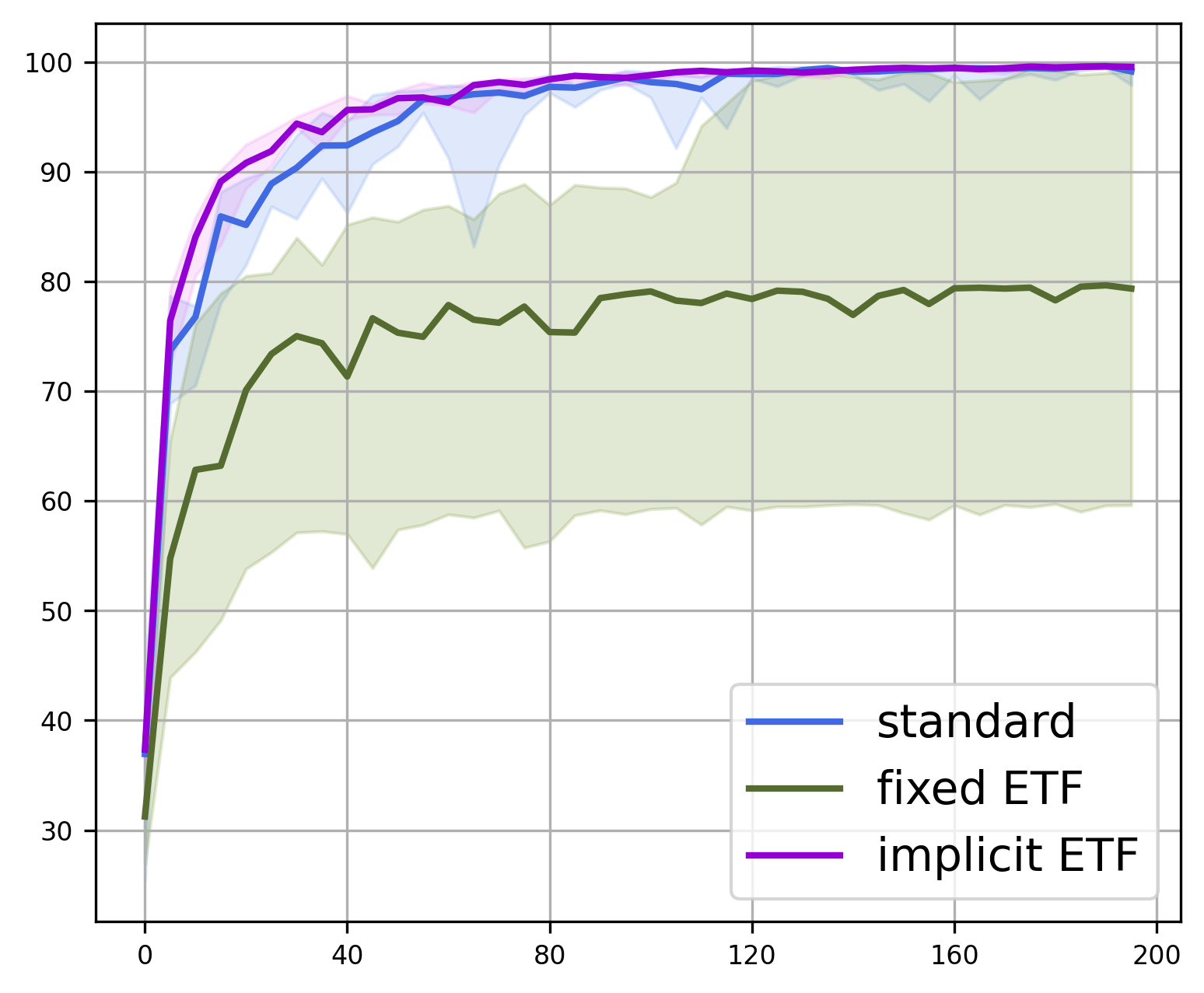}
            \caption{Train Top-1 Acc.}
        \end{subfigure}
        \hfill
        \begin{subfigure}{0.32\textwidth}
            \centering
            \includegraphics[width=\textwidth]{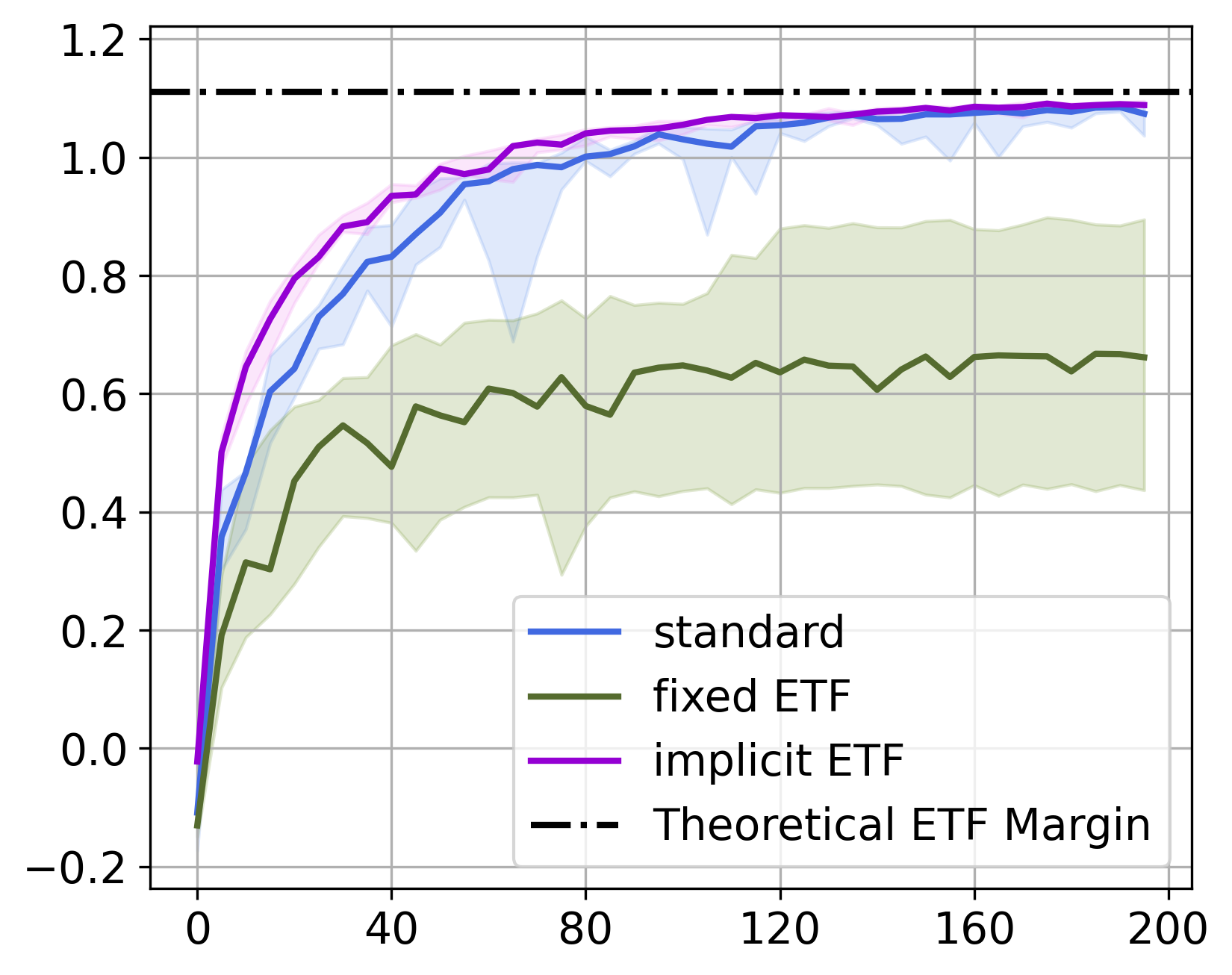}
            \caption{Avg Cos. Margin}
        \end{subfigure}
        \hfill
        \begin{subfigure}{0.32\textwidth}
            \centering
            \includegraphics[width=\textwidth]{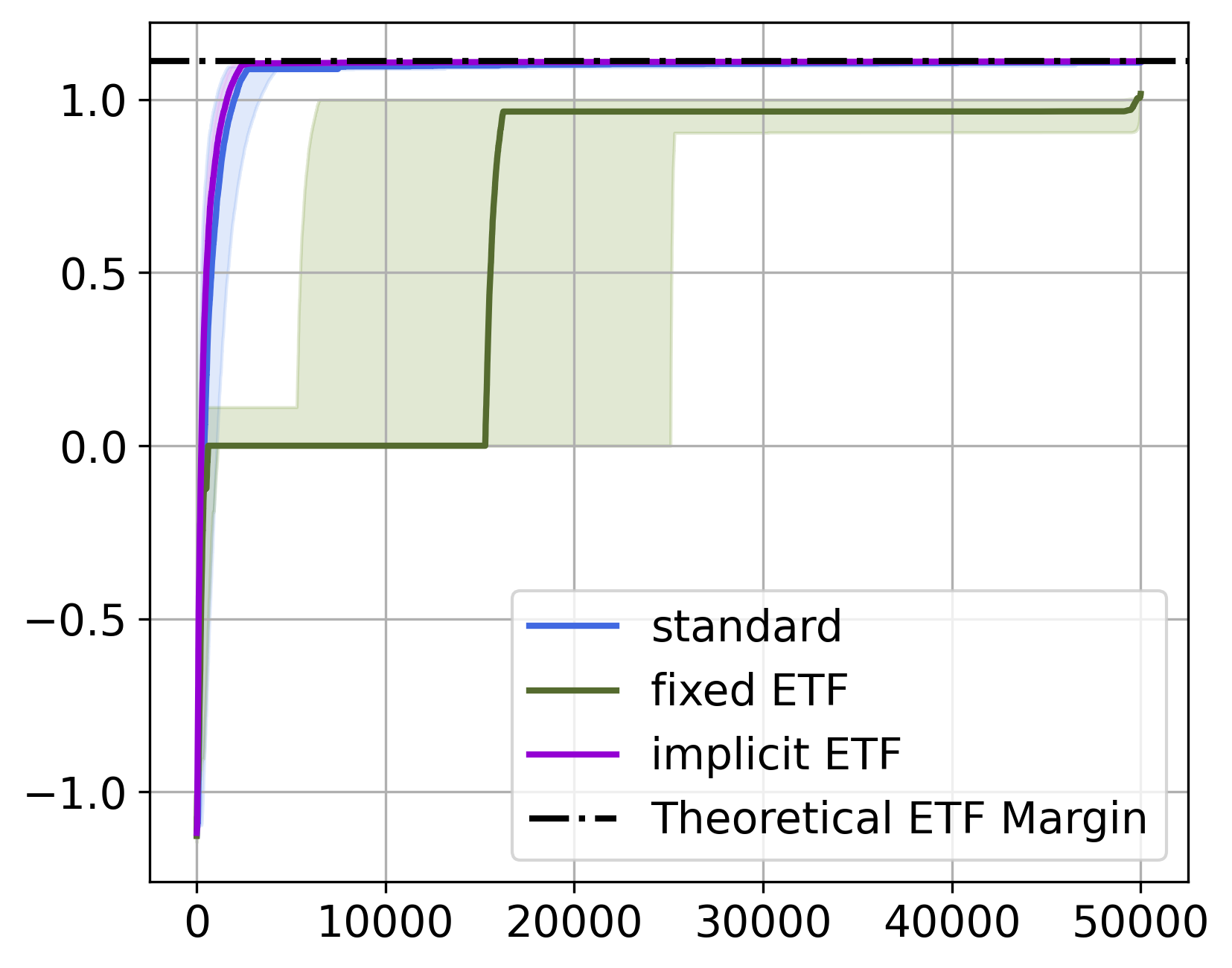}
            \caption{$\mathcal{P}_{CM}$ at EoT}
        \end{subfigure}
        \vskip0.3\baselineskip
        \begin{subfigure}{0.32\textwidth}
            \centering
            \includegraphics[width=\textwidth]{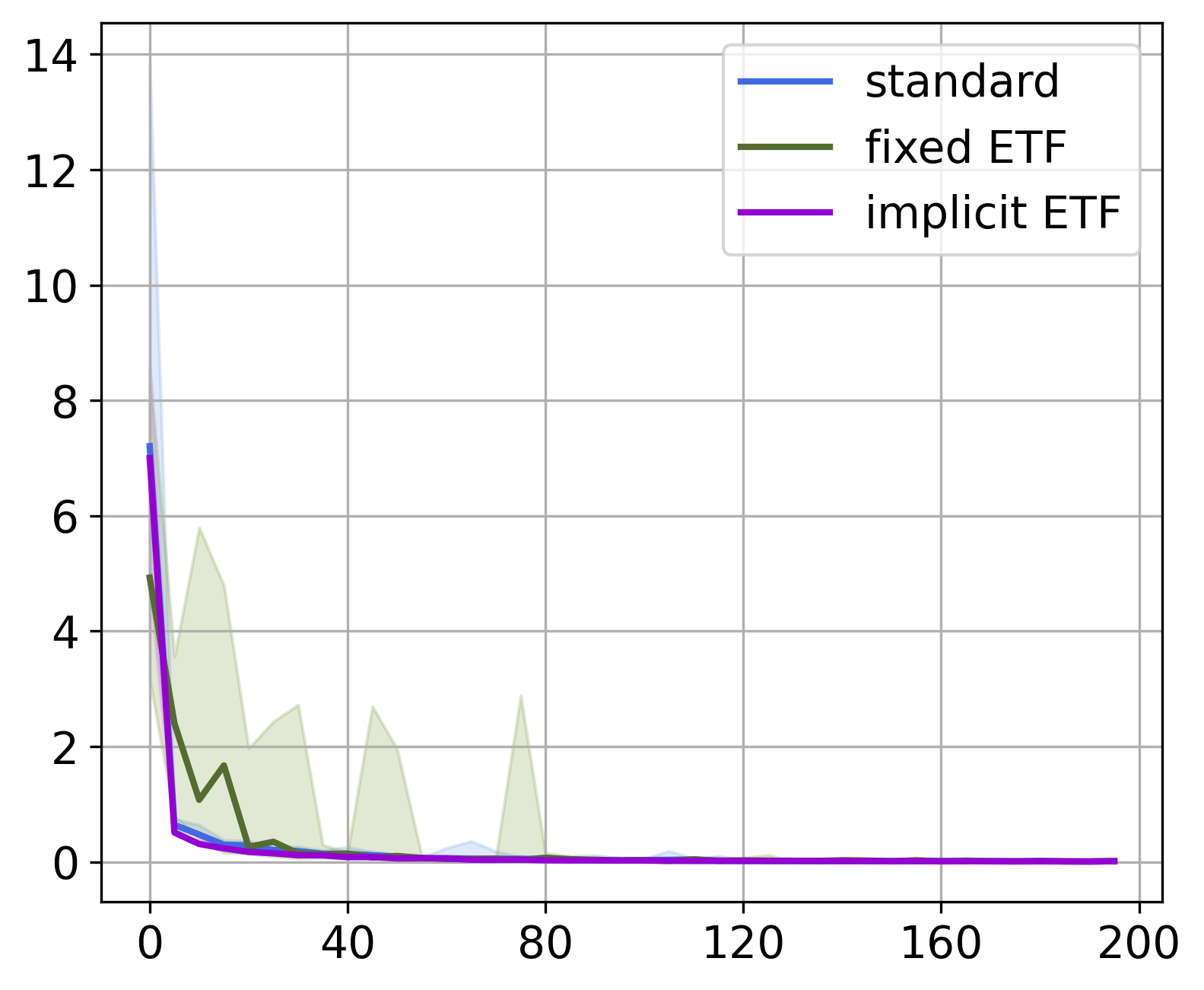}
            \caption{$NC1$}
        \end{subfigure}
        \hfill
        \begin{subfigure}{0.32\textwidth}
            \centering
            \includegraphics[width=\textwidth]{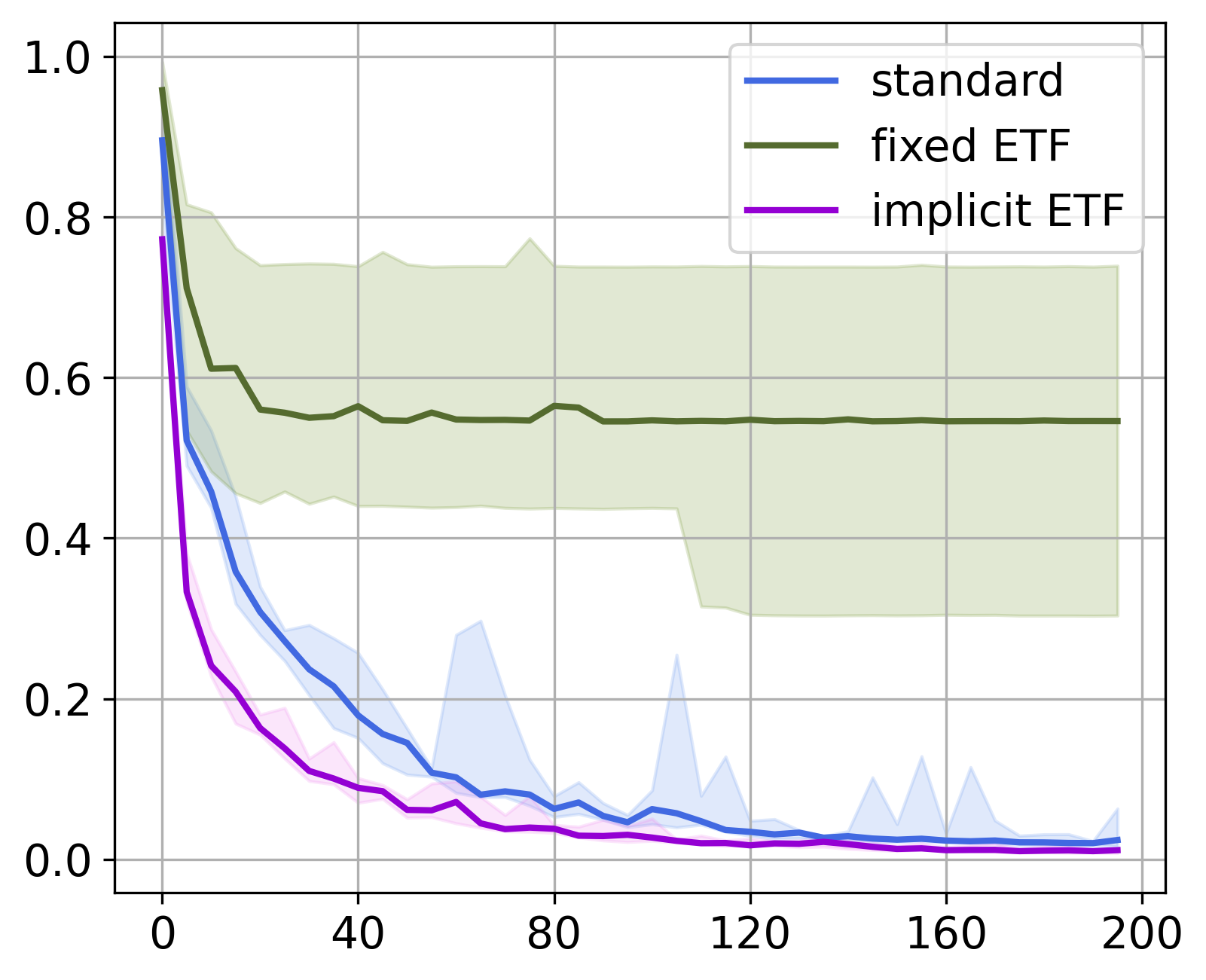}
            \caption{$NC3$}
        \end{subfigure}
        \hfill
        \begin{subfigure}{0.32\textwidth}
            \centering
            \includegraphics[width=\textwidth]{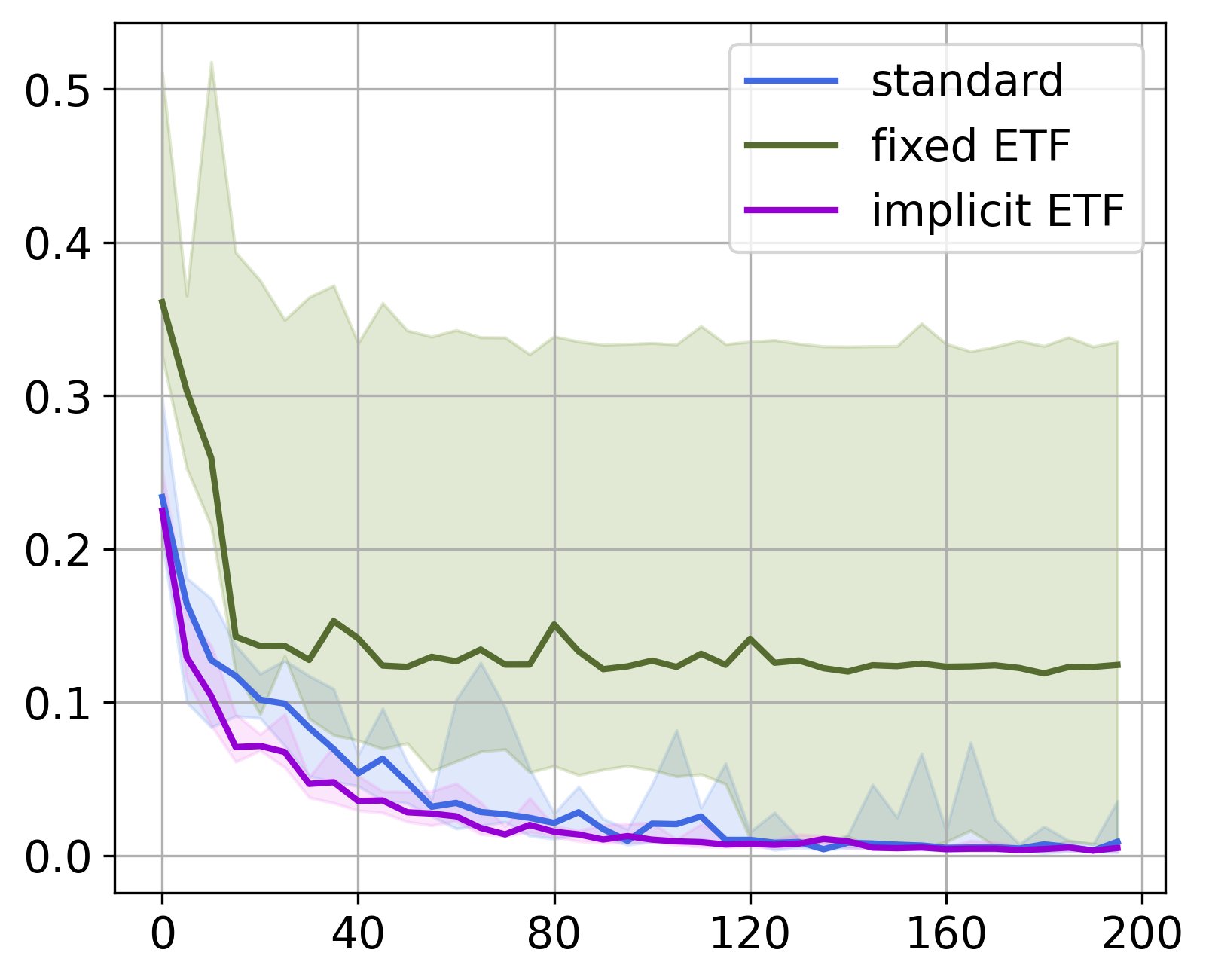}
            \caption{$\mW\Bar{\mH}$-Equinorm}
        \end{subfigure}
    \end{minipage}
    \hfill
    \begin{minipage}{0.24\textwidth}
        \centering
        \ContinuedFloat
        \begin{subfigure}{\textwidth}
            \centering
            \includegraphics[width=\textwidth]{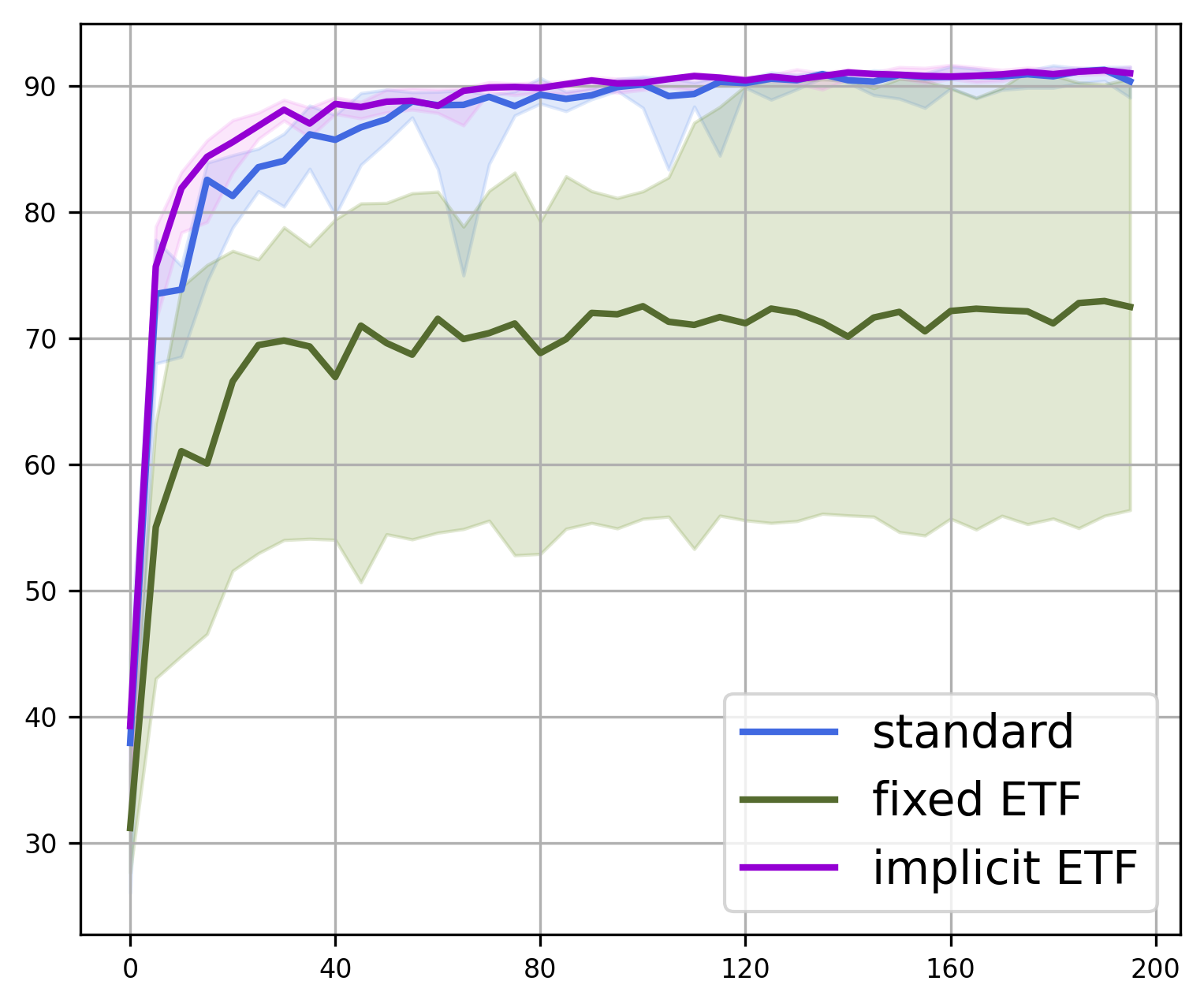}
            \caption{Test Top-1 Acc.}
        \end{subfigure}
    \end{minipage}
    \caption{CIFAR10 results on VGG-13. In all plots, the x-axis represents the number of epochs, except for plot (c), where the x-axis denotes the number of training examples.}
    \label{fig:cifar10-vgg}
\end{figure}


\begin{figure}[!h]
    \centering
    \begin{minipage}{0.75\textwidth}
        \centering
        \begin{subfigure}{0.32\textwidth}
            \centering
            \includegraphics[width=\textwidth, height=\textheight, keepaspectratio]{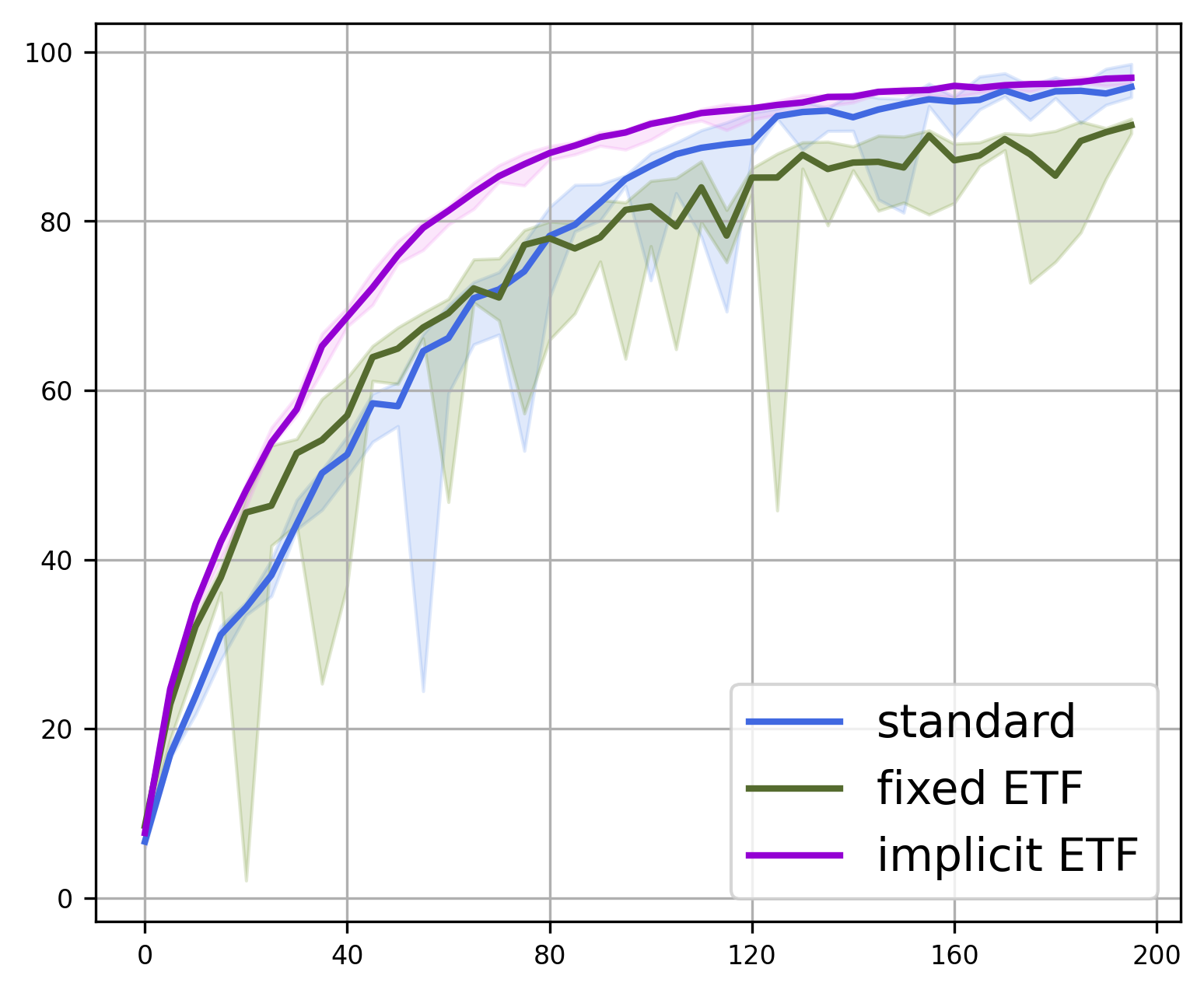}
            \caption{Train Top-1 Acc.}
        \end{subfigure}
        \hfill
        \begin{subfigure}{0.32\textwidth}
            \centering
            \includegraphics[width=\textwidth]{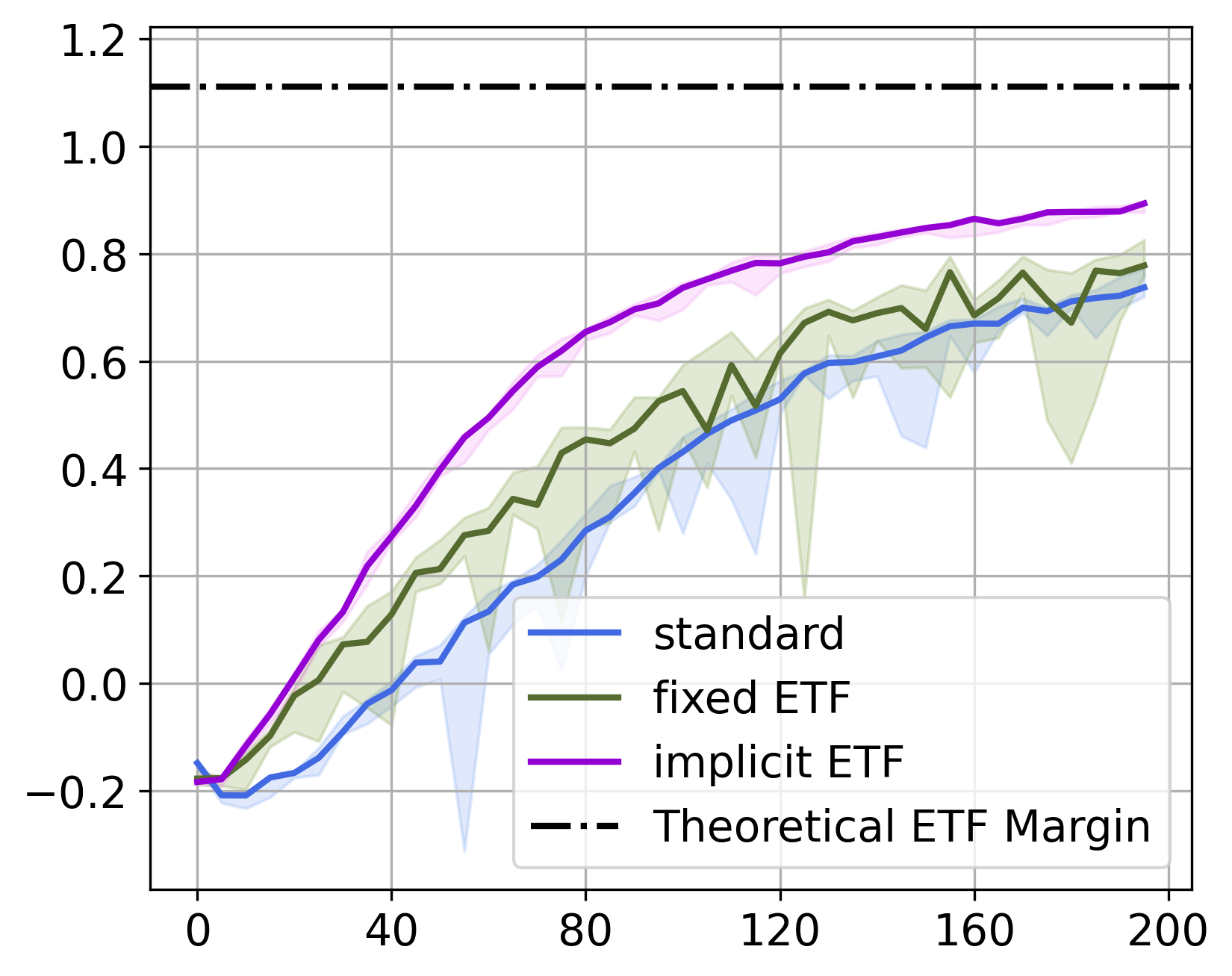}
            \caption{Avg Cos. Margin}
        \end{subfigure}
        \hfill
        \begin{subfigure}{0.32\textwidth}
            \centering
            \includegraphics[width=\textwidth]{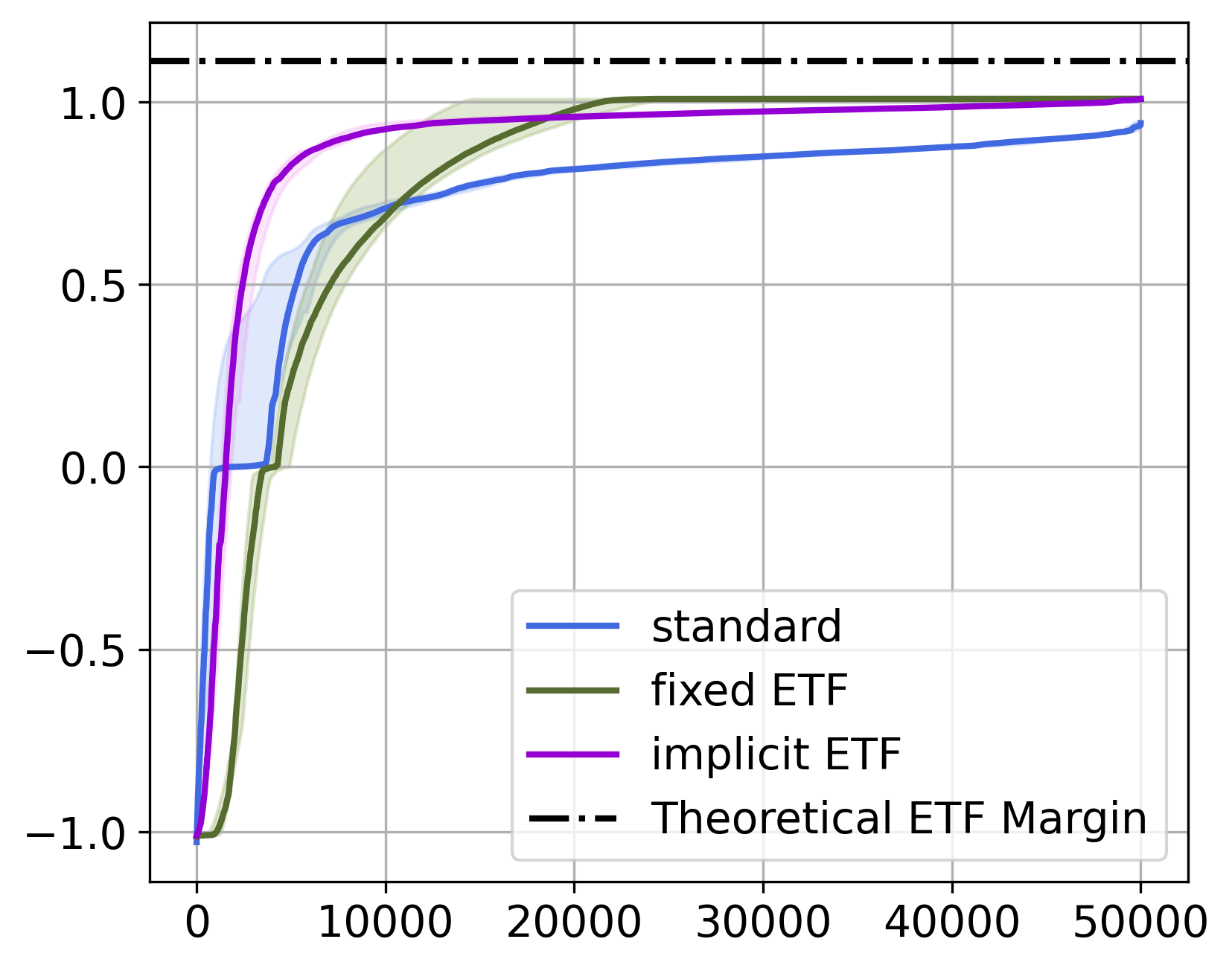}
            \caption{$\mathcal{P}_{CM}$ at EoT}
        \end{subfigure}
        \vskip0.3\baselineskip
        \begin{subfigure}{0.32\textwidth}
            \centering
            \includegraphics[width=\textwidth]{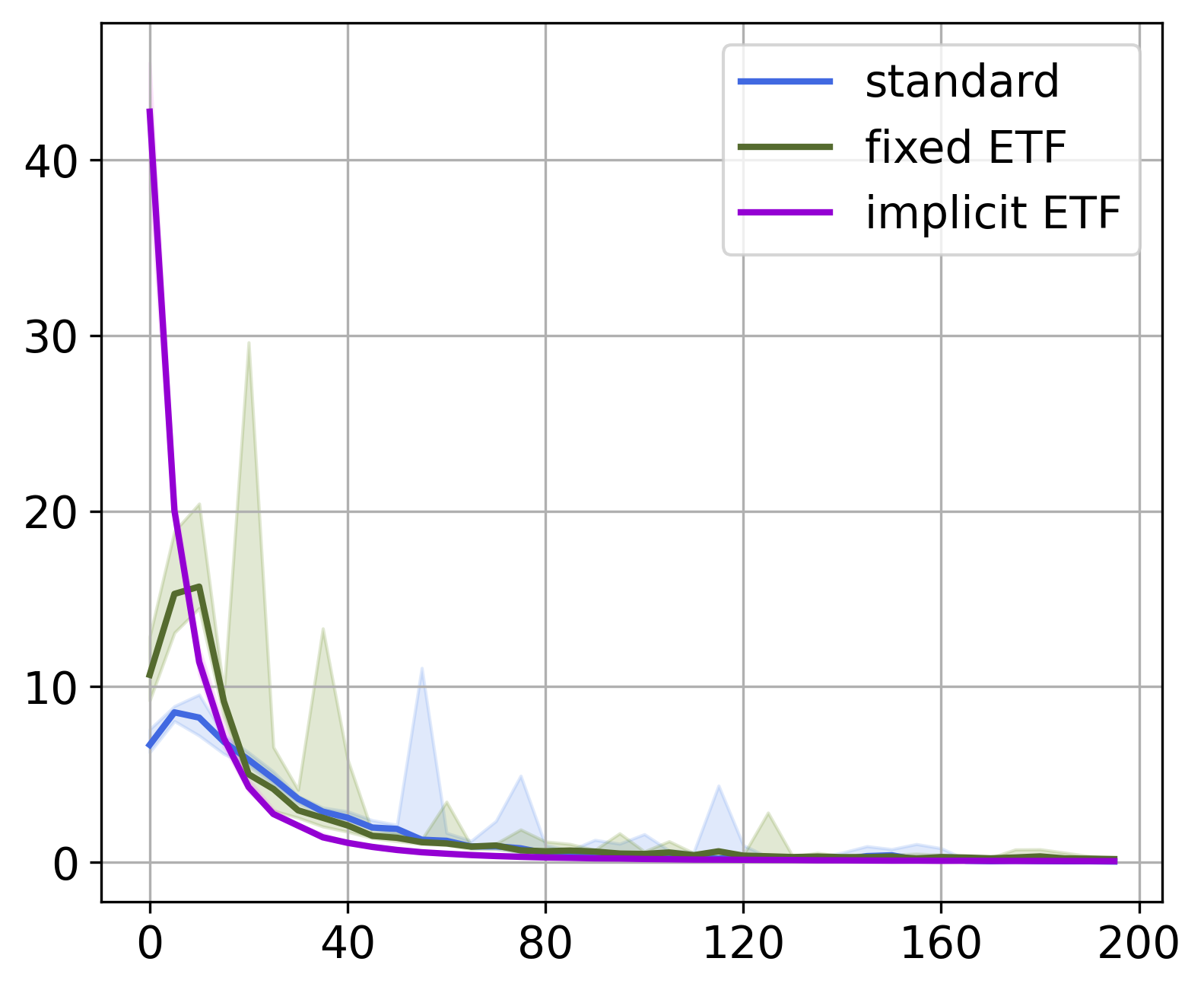}
            \caption{$NC1$}
        \end{subfigure}
        \hfill
        \begin{subfigure}{0.32\textwidth}
            \centering
            \includegraphics[width=\textwidth]{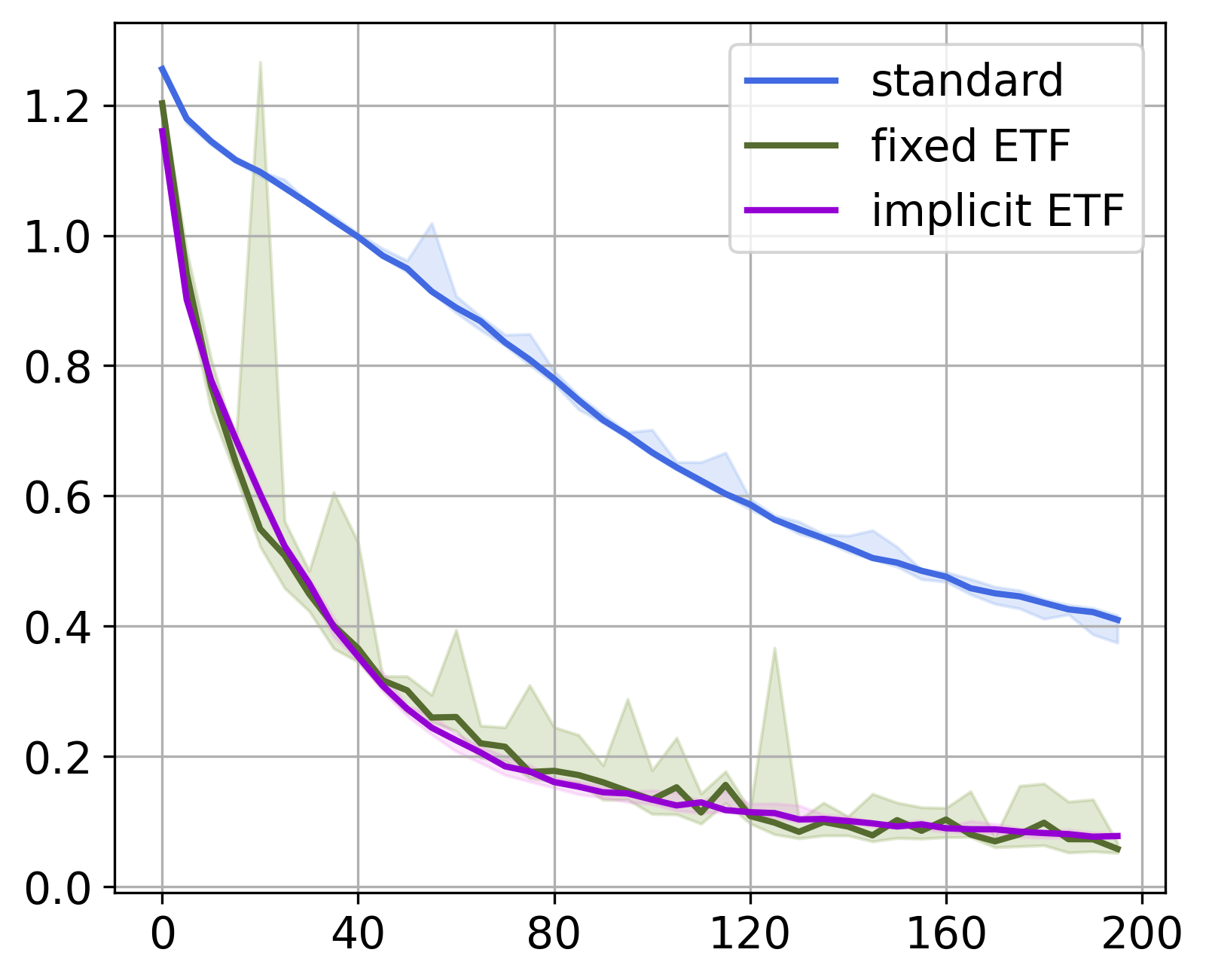}
            \caption{$NC3$}
        \end{subfigure}
        \hfill
        \begin{subfigure}{0.32\textwidth}
            \centering
            \includegraphics[width=\textwidth]{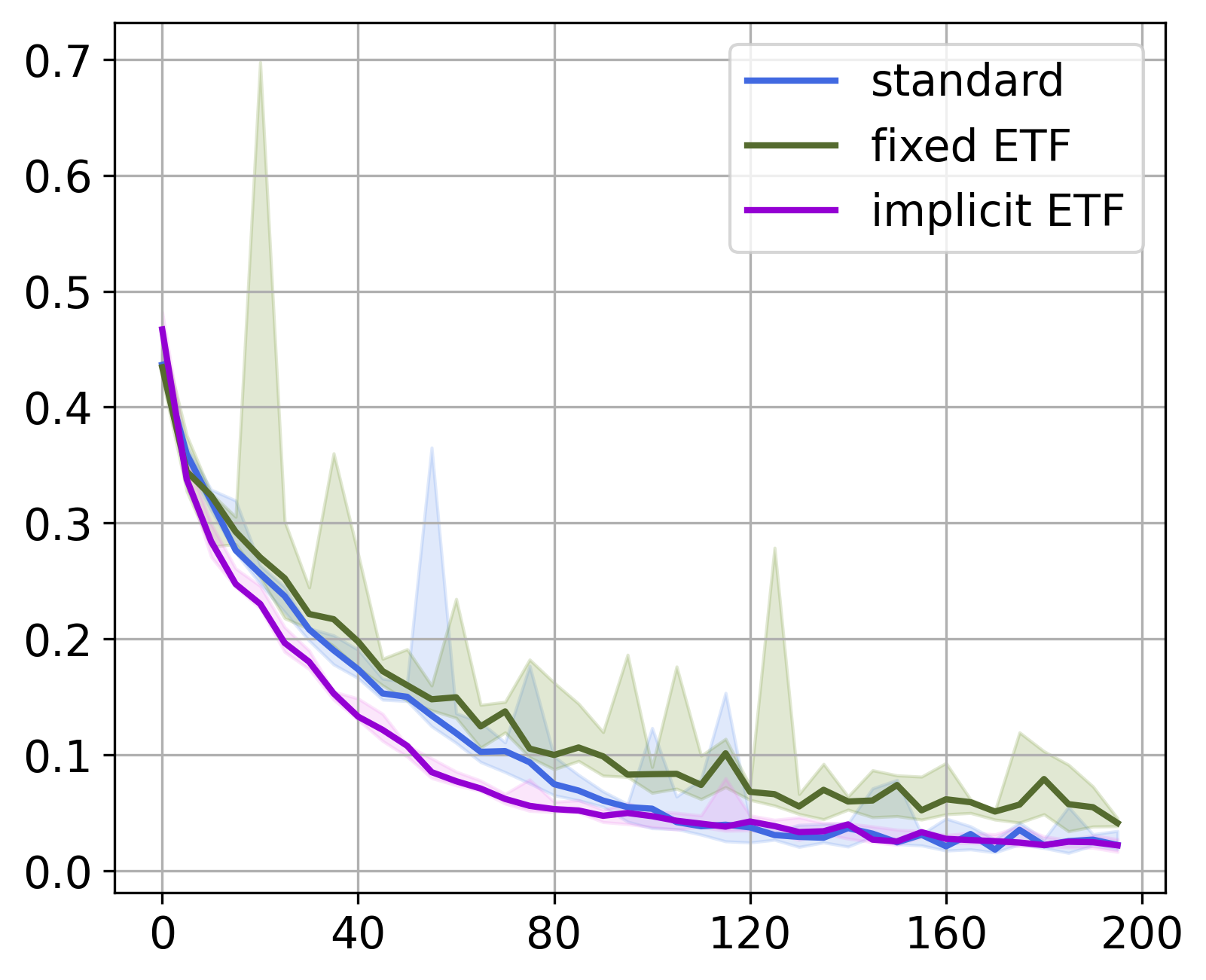}
            \caption{$\mW\Bar{\mH}$-Equinorm}
        \end{subfigure}
    \end{minipage}
    \hfill
    \begin{minipage}{0.24\textwidth}
        \centering
        \ContinuedFloat
        \begin{subfigure}{\textwidth}
            \centering
            \includegraphics[width=\textwidth]{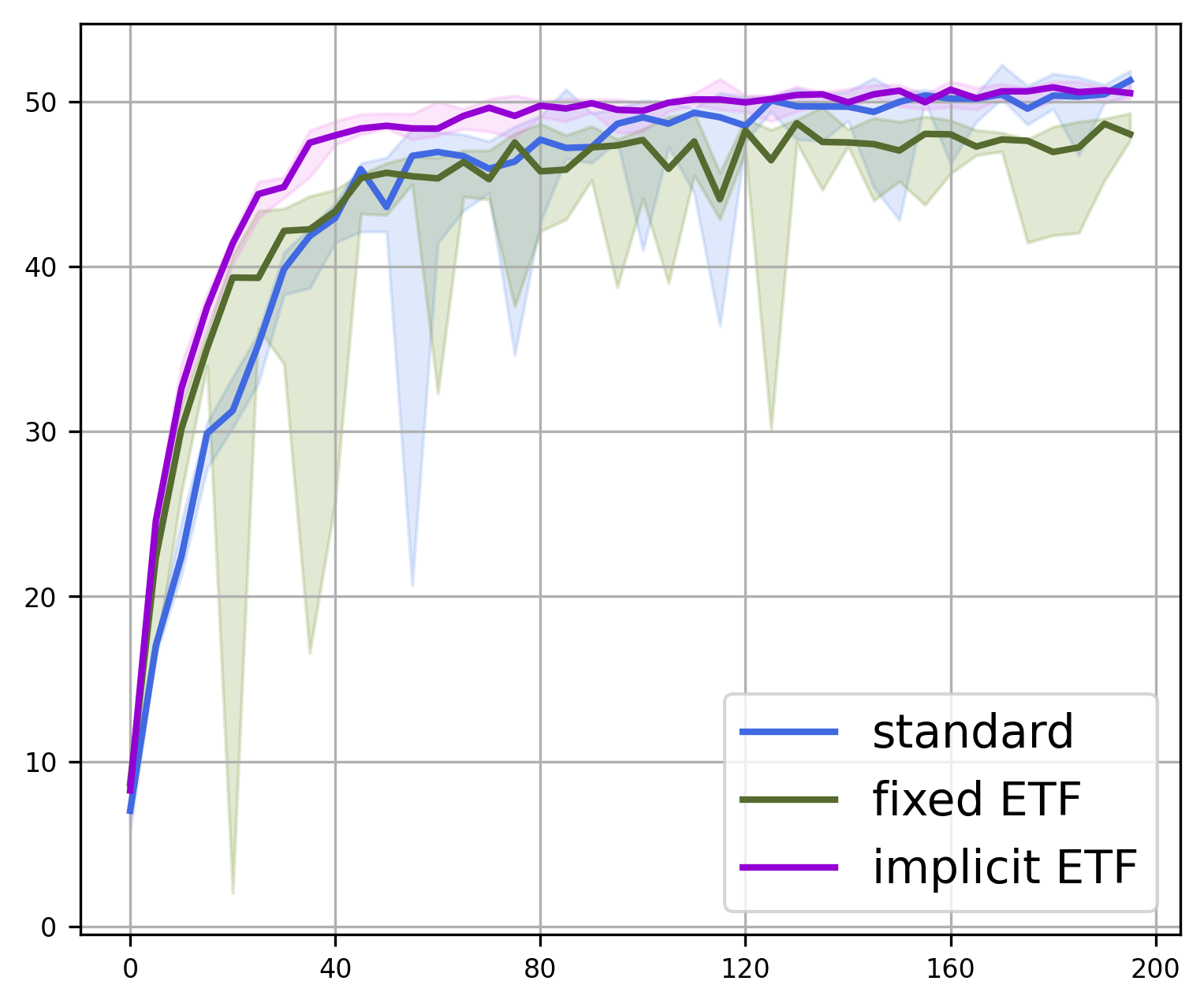}
            \caption{Test Top-1 Acc.}
        \end{subfigure}
    \end{minipage}
    \caption{CIFAR100 results on ResNet-50. In all plots, the x-axis represents the number of epochs, except for plot (c), where the x-axis denotes the number of training examples.}
    \label{fig:cifar100-resnet}
\end{figure}


\begin{figure}[!h]
    \centering
    \begin{minipage}{0.75\textwidth}
        \centering
        \begin{subfigure}{0.32\textwidth}
            \centering
            \includegraphics[width=\textwidth, height=\textheight, keepaspectratio]{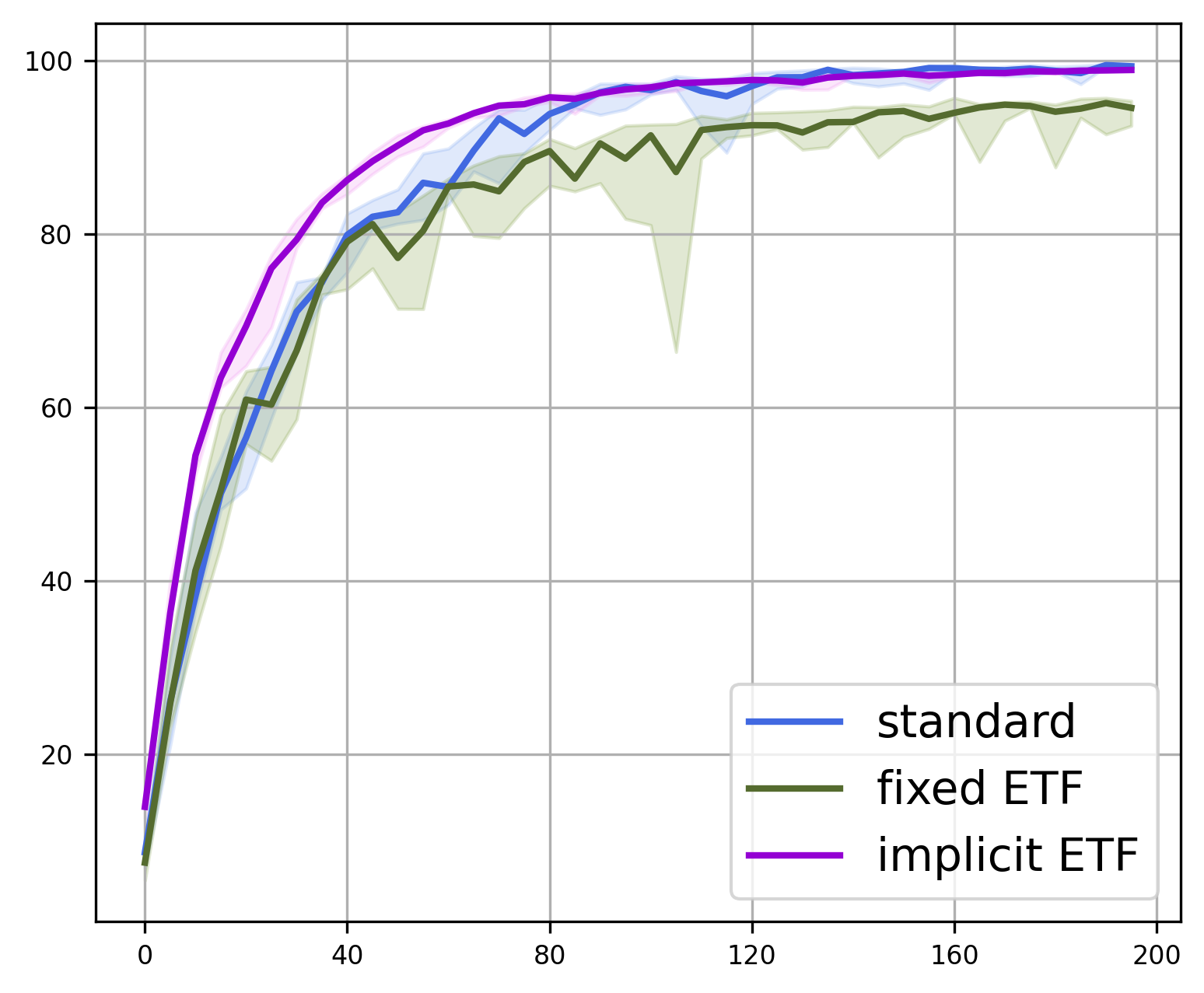}
            \caption{Train Top-1 Acc.}
        \end{subfigure}
        \hfill
        \begin{subfigure}{0.32\textwidth}
            \centering
            \includegraphics[width=\textwidth]{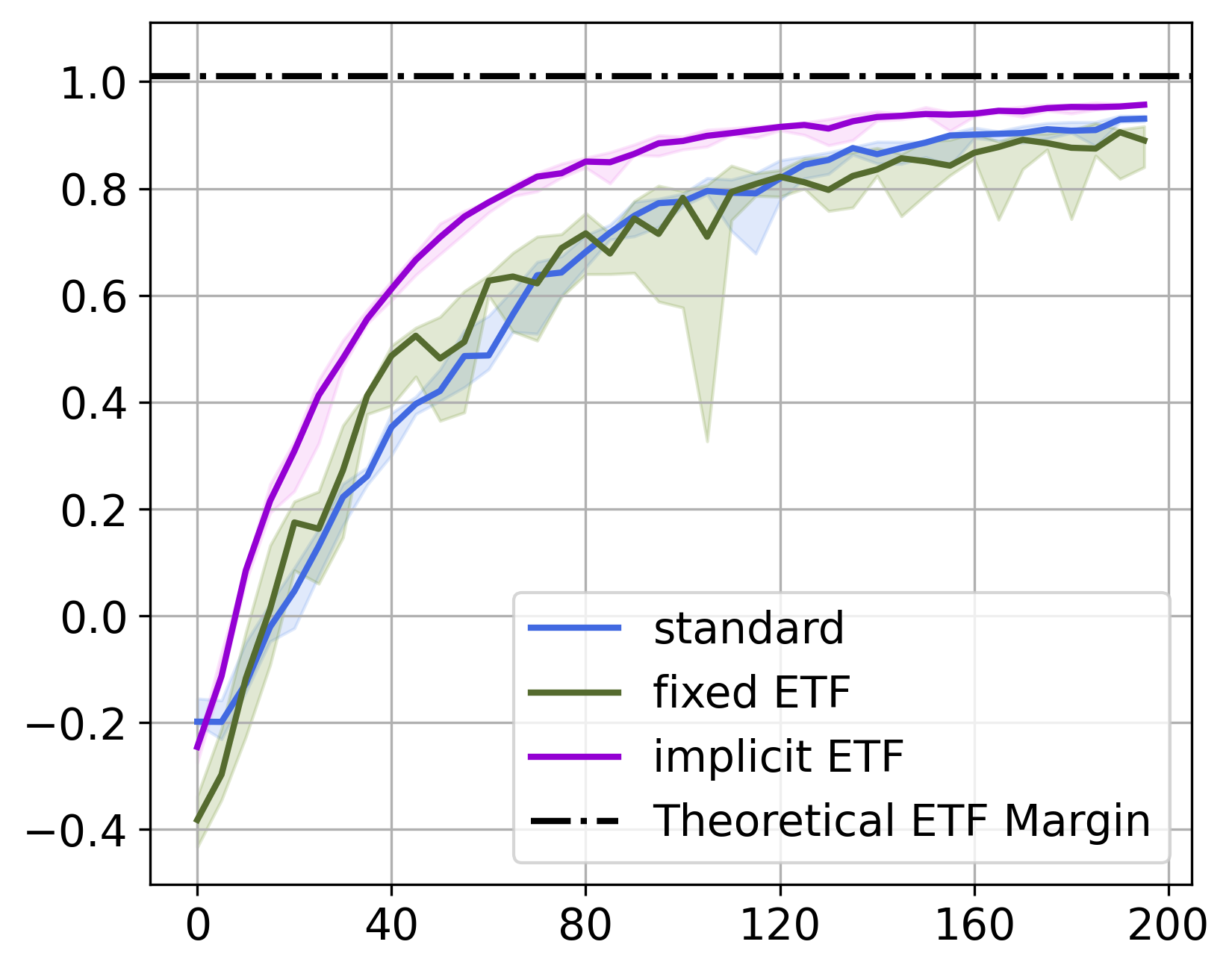}
            \caption{Avg Cos. Margin}
        \end{subfigure}
        \hfill
        \begin{subfigure}{0.32\textwidth}
            \centering
            \includegraphics[width=\textwidth]{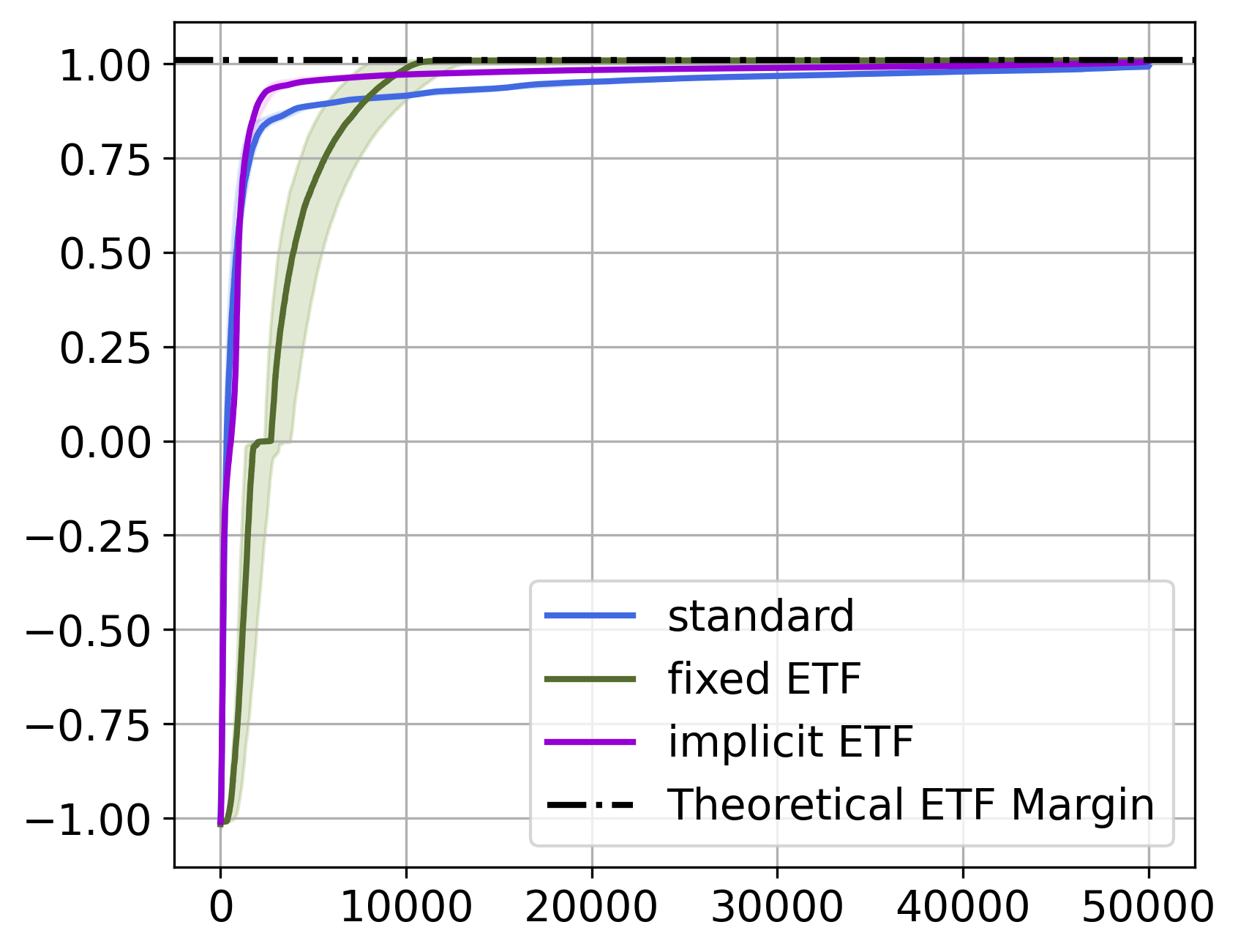}
            \caption{$\mathcal{P}_{CM}$ at EoT}
        \end{subfigure}
        \vskip0.3\baselineskip
        \begin{subfigure}{0.32\textwidth}
            \centering
            \includegraphics[width=\textwidth]{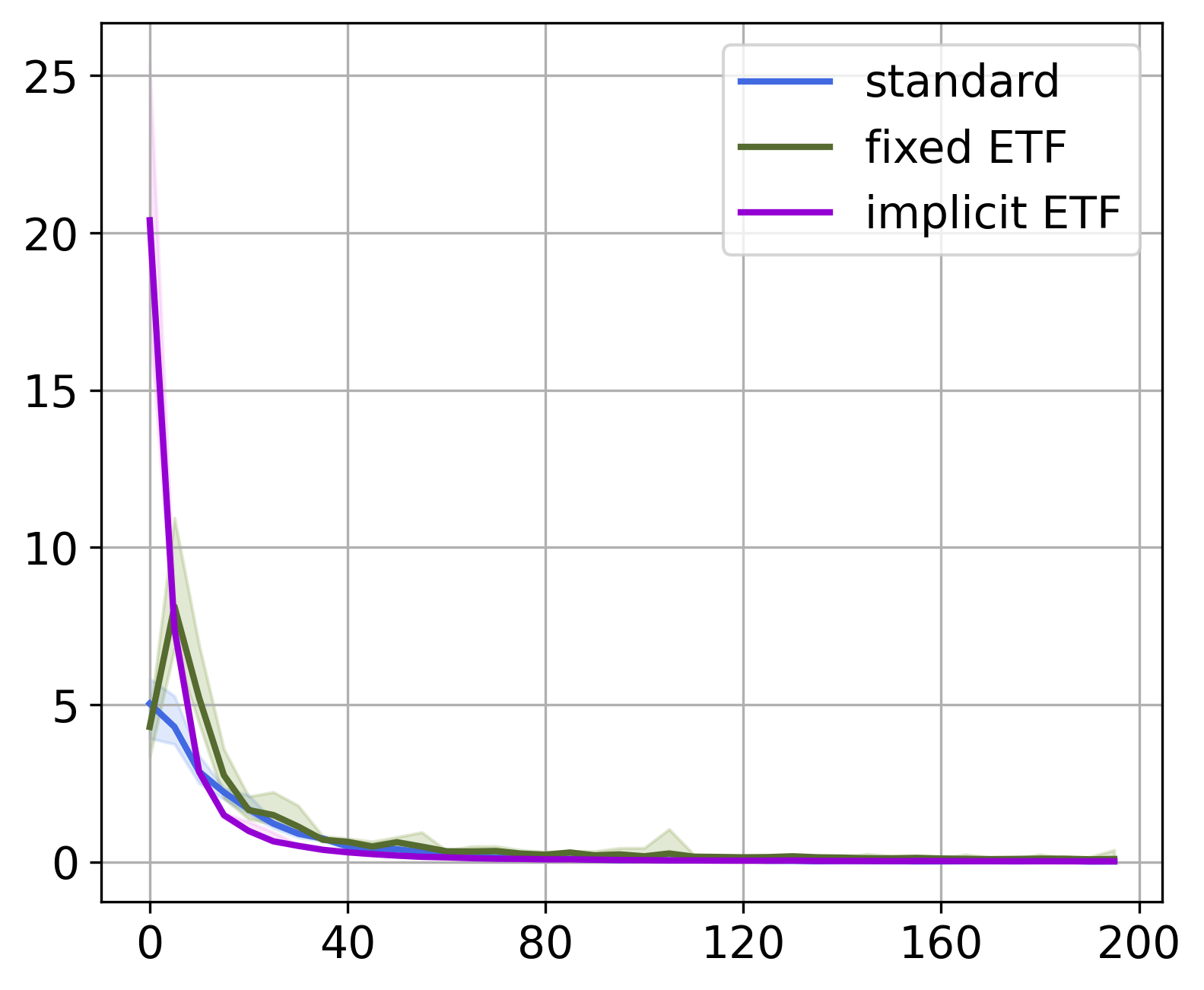}
            \caption{$NC1$}
        \end{subfigure}
        \hfill
        \begin{subfigure}{0.32\textwidth}
            \centering
            \includegraphics[width=\textwidth]{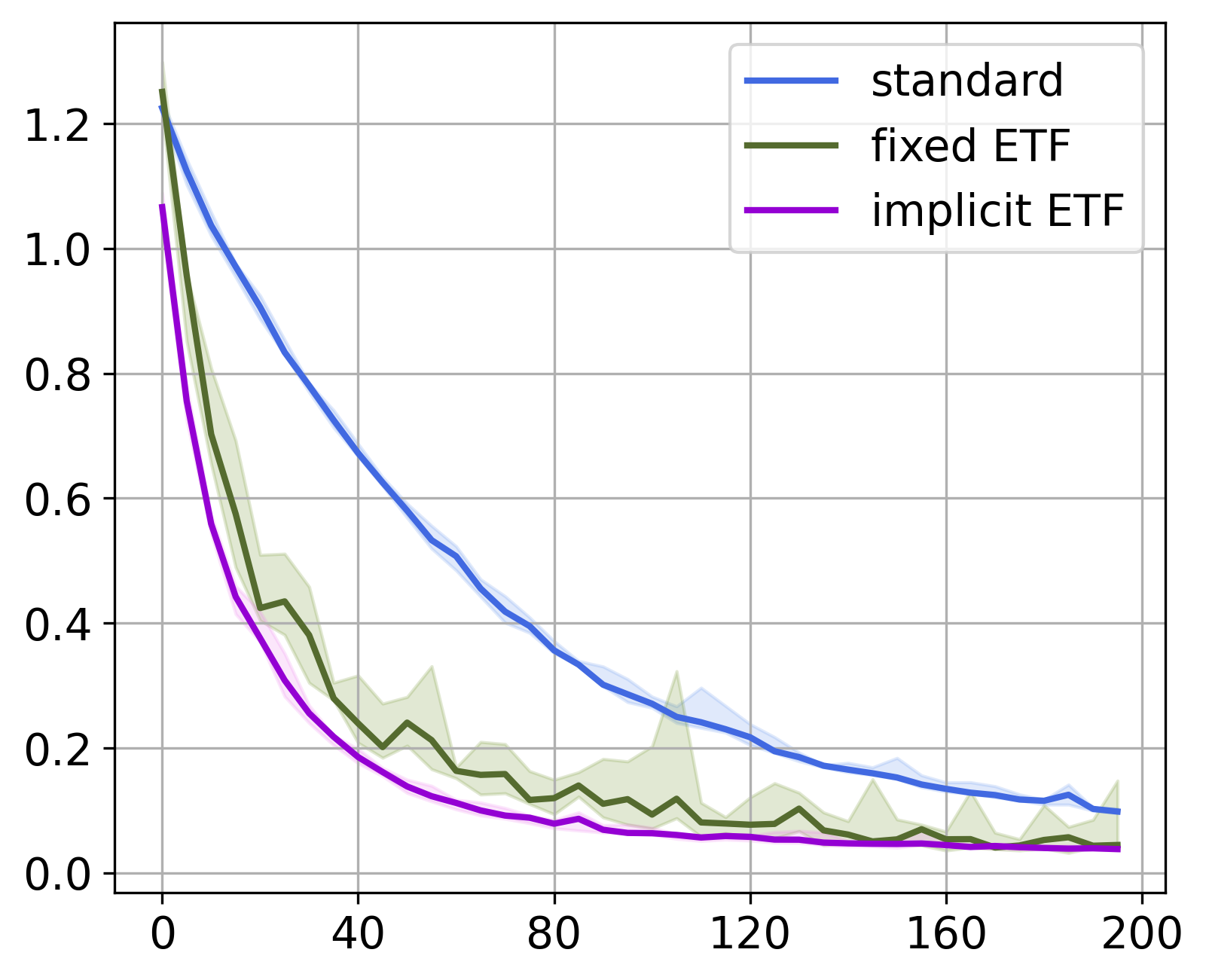}
            \caption{$NC3$}
        \end{subfigure}
        \hfill
        \begin{subfigure}{0.32\textwidth}
            \centering
            \includegraphics[width=\textwidth]{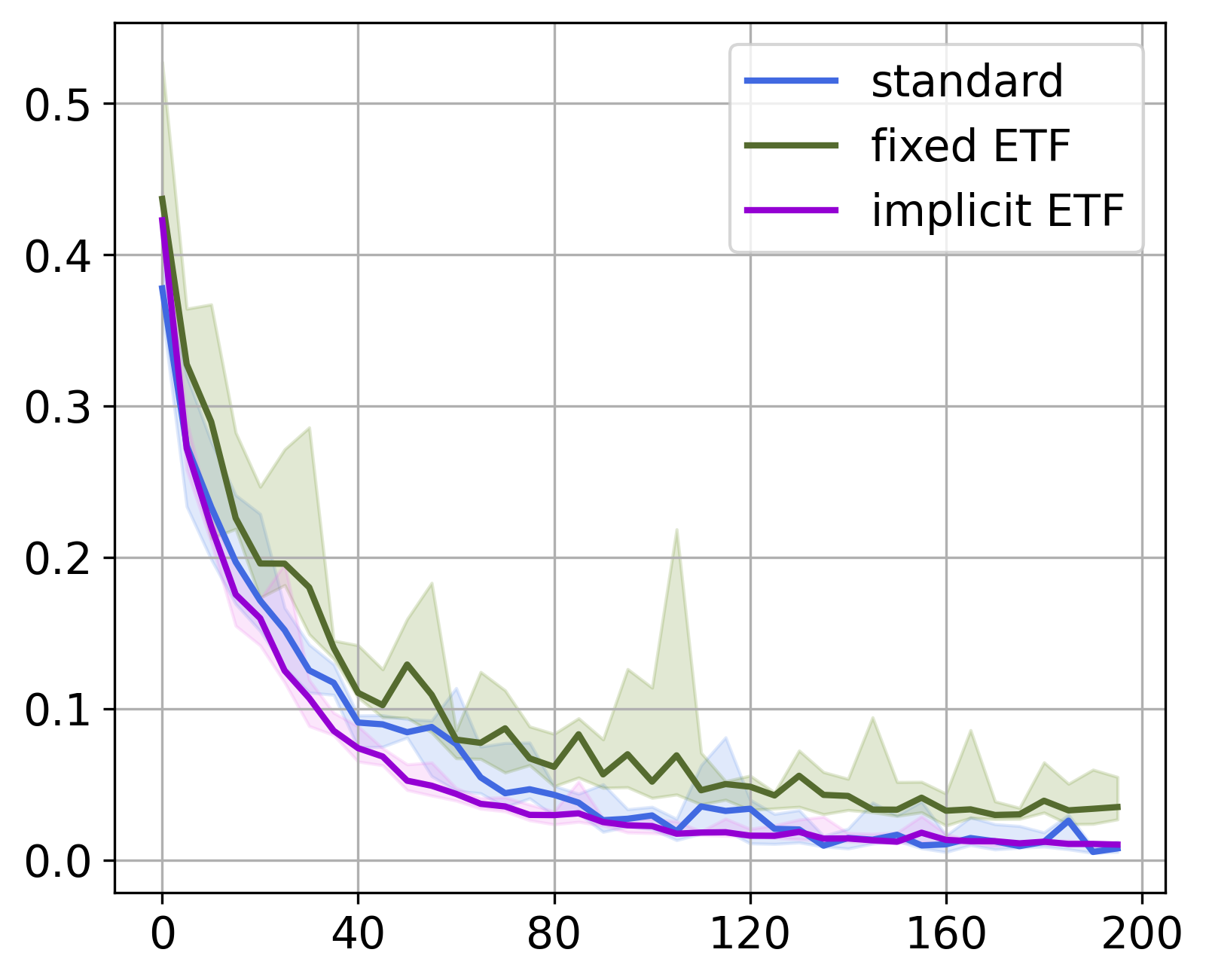}
            \caption{$\mW\Bar{\mH}$-Equinorm}
        \end{subfigure}
    \end{minipage}
    \hfill
    \begin{minipage}{0.24\textwidth}
        \centering
        \ContinuedFloat
        \begin{subfigure}{\textwidth}
            \centering
            \includegraphics[width=\textwidth]{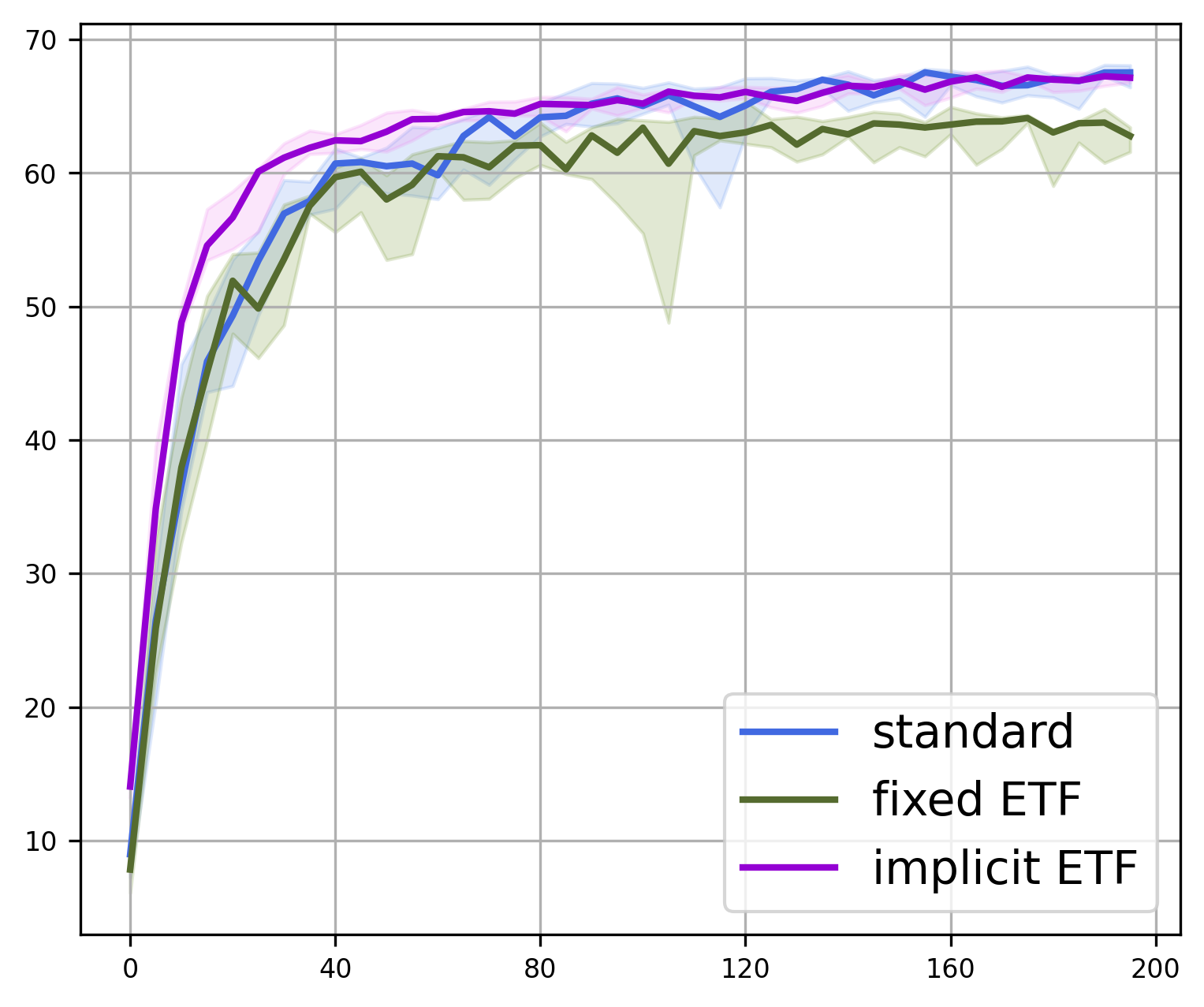}
            \caption{Test Top-1 Acc.}
        \end{subfigure}
    \end{minipage}
    \caption{CIFAR100 results on VGG-13. In all plots, the x-axis represents the number of epochs, except for plot (c), where the x-axis denotes the number of training examples.}
    \label{fig:cifar100-vgg}
\end{figure}


\begin{figure}[!h]
    \centering
    \begin{minipage}{0.75\textwidth}
        \centering
        \begin{subfigure}{0.32\textwidth}
            \centering
            \includegraphics[width=\textwidth, height=\textheight, keepaspectratio]{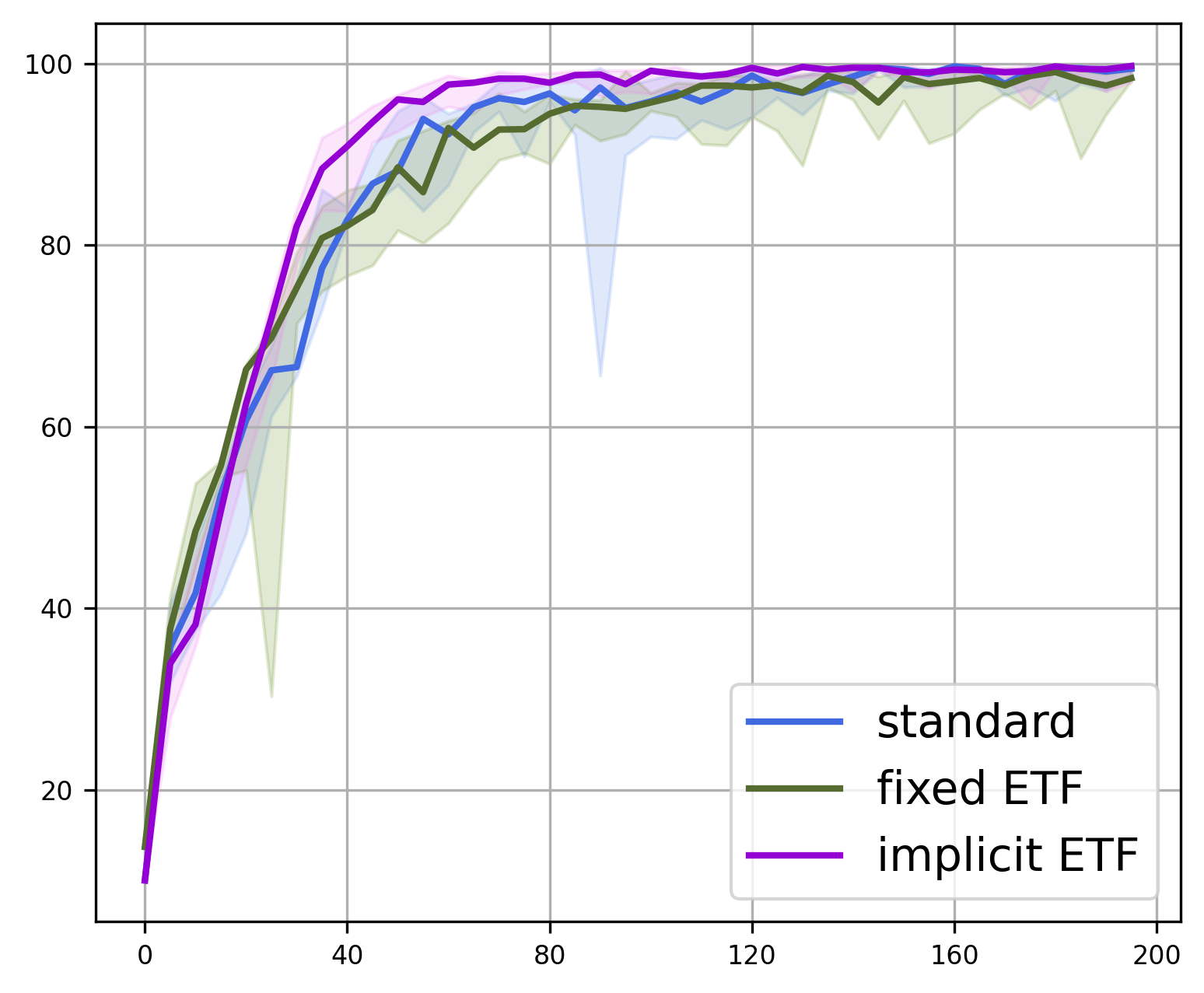}
            \caption{Train Top-1 Acc.}
        \end{subfigure}
        \hfill
        \begin{subfigure}{0.32\textwidth}
            \centering
            \includegraphics[width=\textwidth]{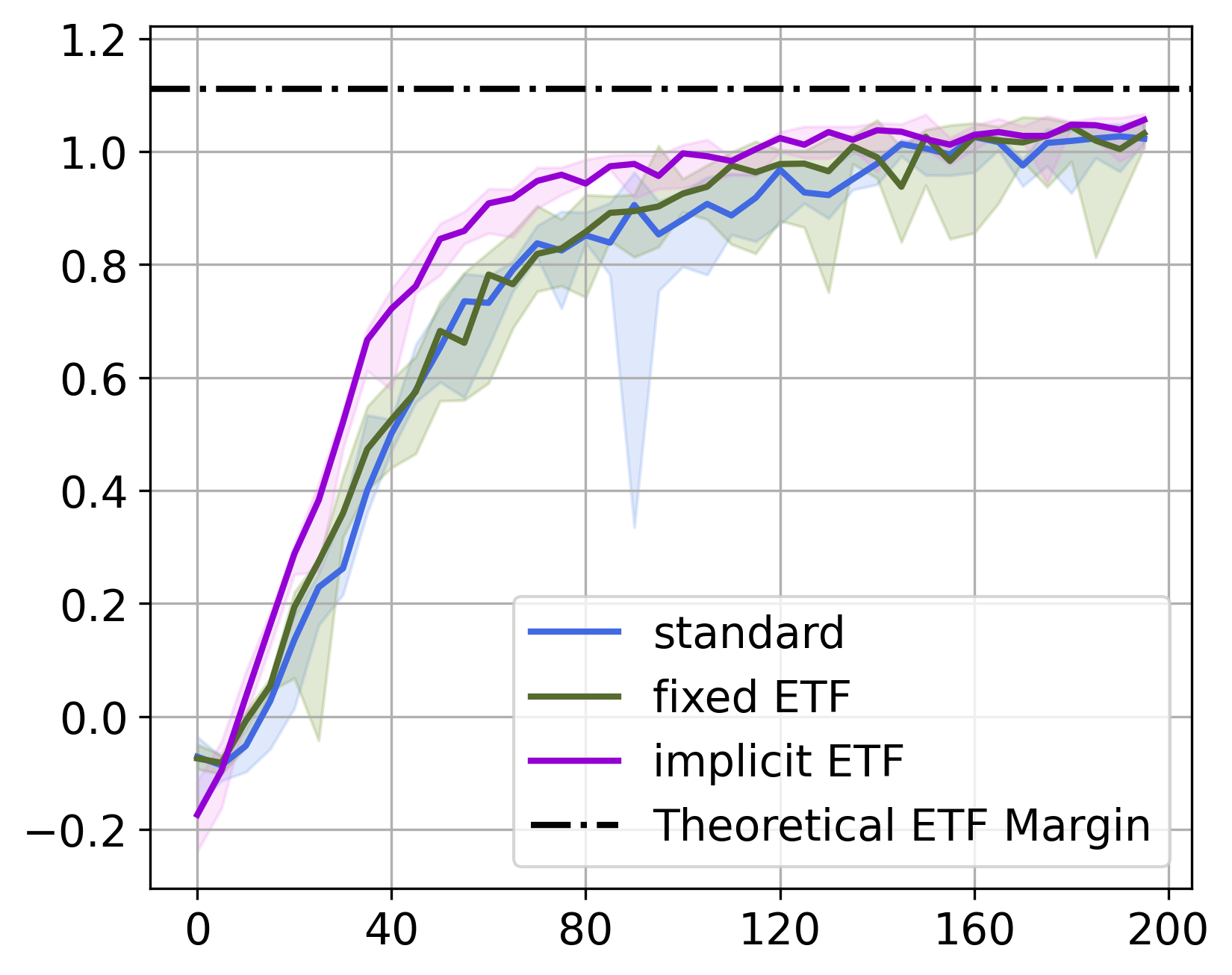}
            \caption{Avg Cos. Margin}
        \end{subfigure}
        \hfill
        \begin{subfigure}{0.32\textwidth}
            \centering
            \includegraphics[width=\textwidth]{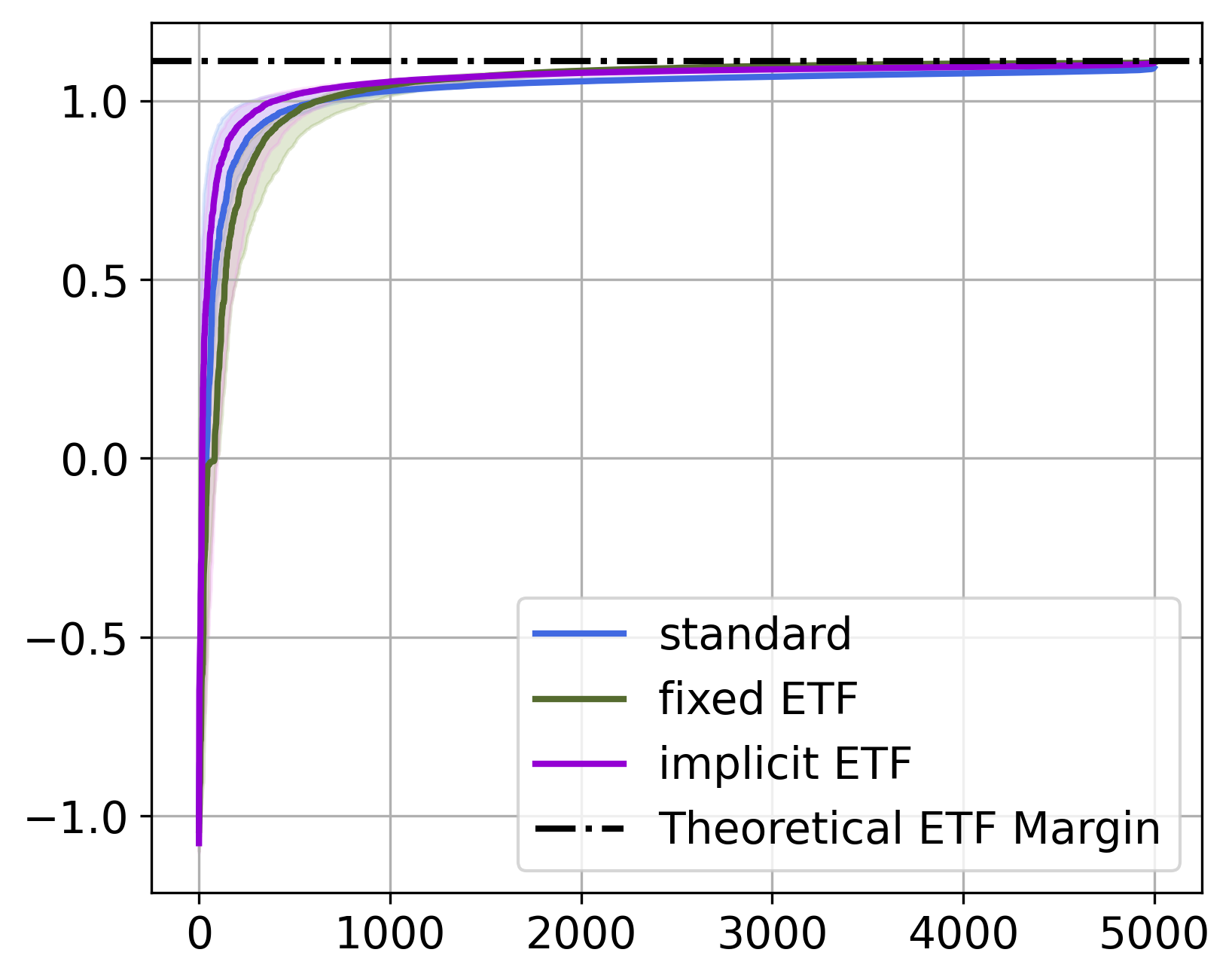}
            \caption{$\mathcal{P}_{CM}$ at EoT}
        \end{subfigure}
        \vskip0.3\baselineskip
        \begin{subfigure}{0.32\textwidth}
            \centering
            \includegraphics[width=\textwidth]{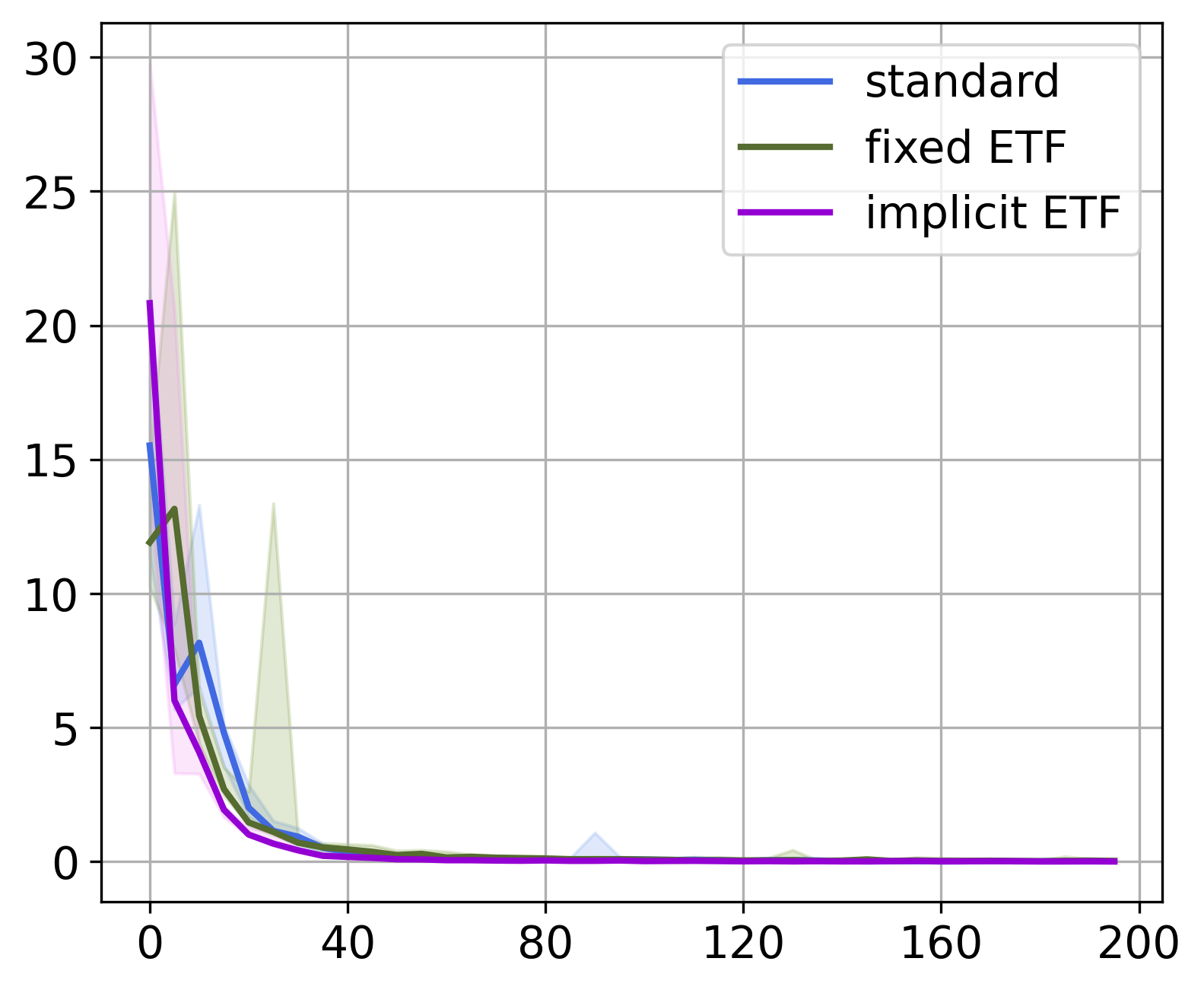}
            \caption{$NC1$}
        \end{subfigure}
        \hfill
        \begin{subfigure}{0.32\textwidth}
            \centering
            \includegraphics[width=\textwidth]{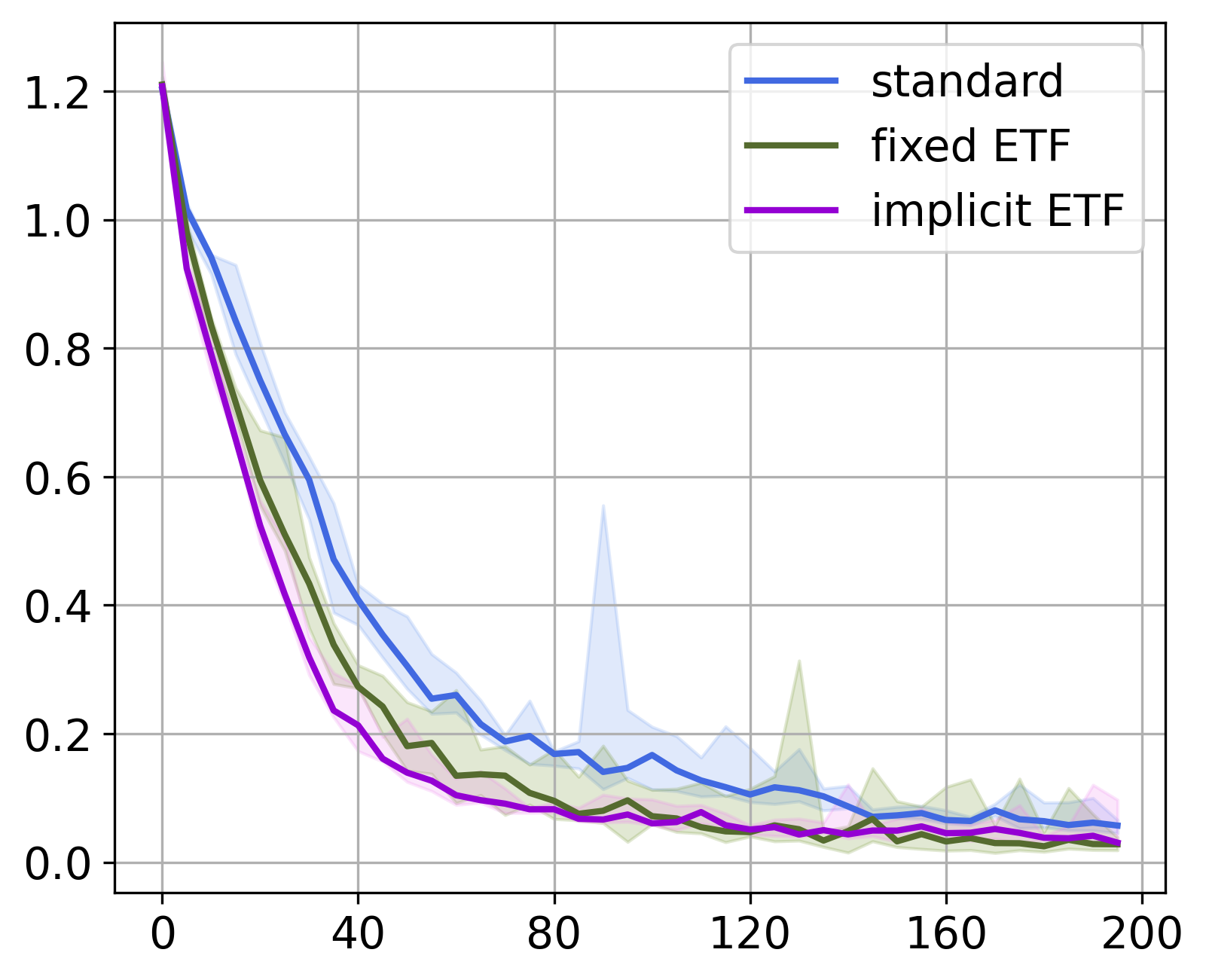}
            \caption{$NC3$}
        \end{subfigure}
        \hfill
        \begin{subfigure}{0.32\textwidth}
            \centering
            \includegraphics[width=\textwidth]{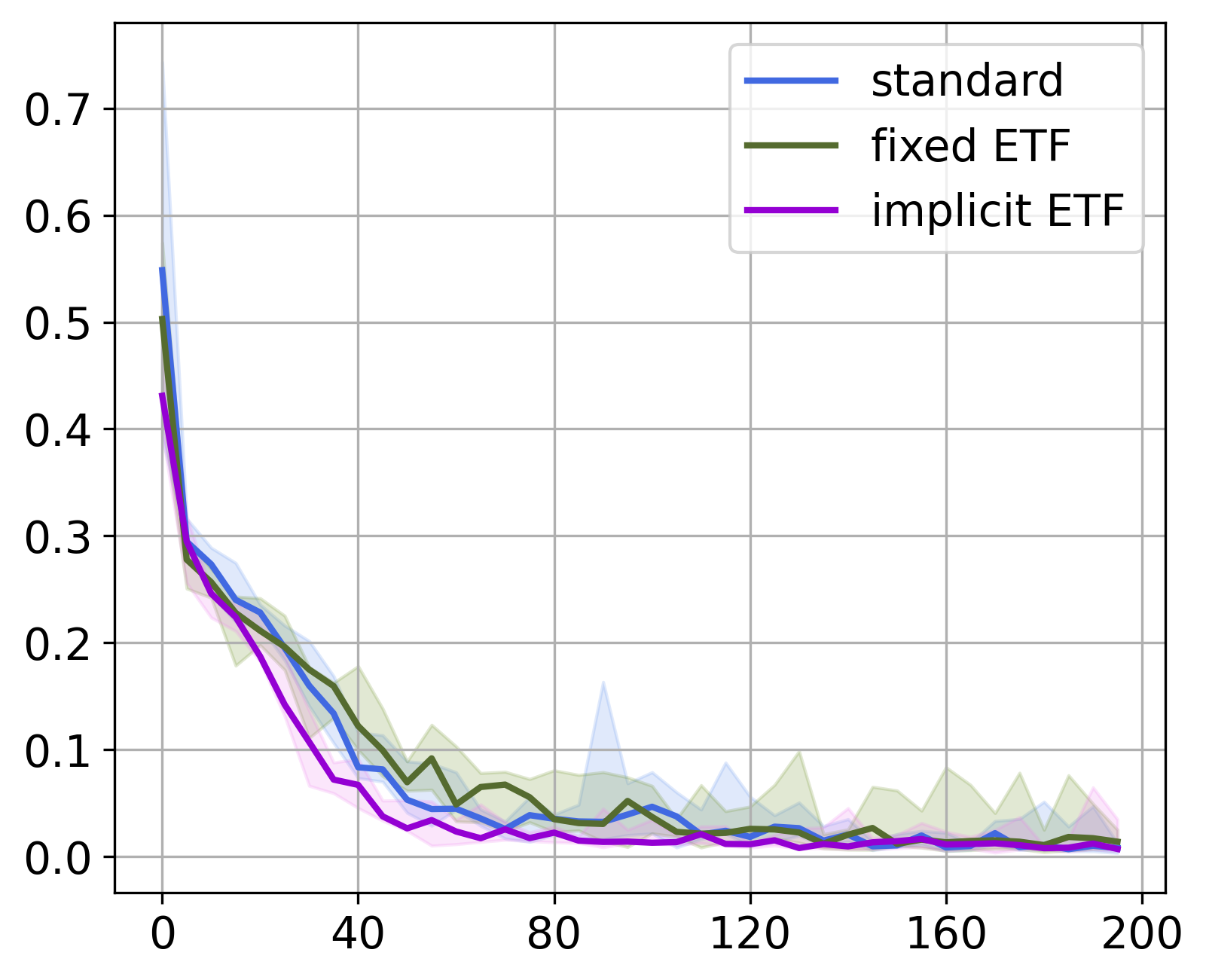}
            \caption{$\mW\Bar{\mH}$-Equinorm}
        \end{subfigure}
    \end{minipage}
    \hfill
    \begin{minipage}{0.24\textwidth}
        \centering
        \ContinuedFloat
        \begin{subfigure}{\textwidth}
            \centering
            \includegraphics[width=\textwidth]{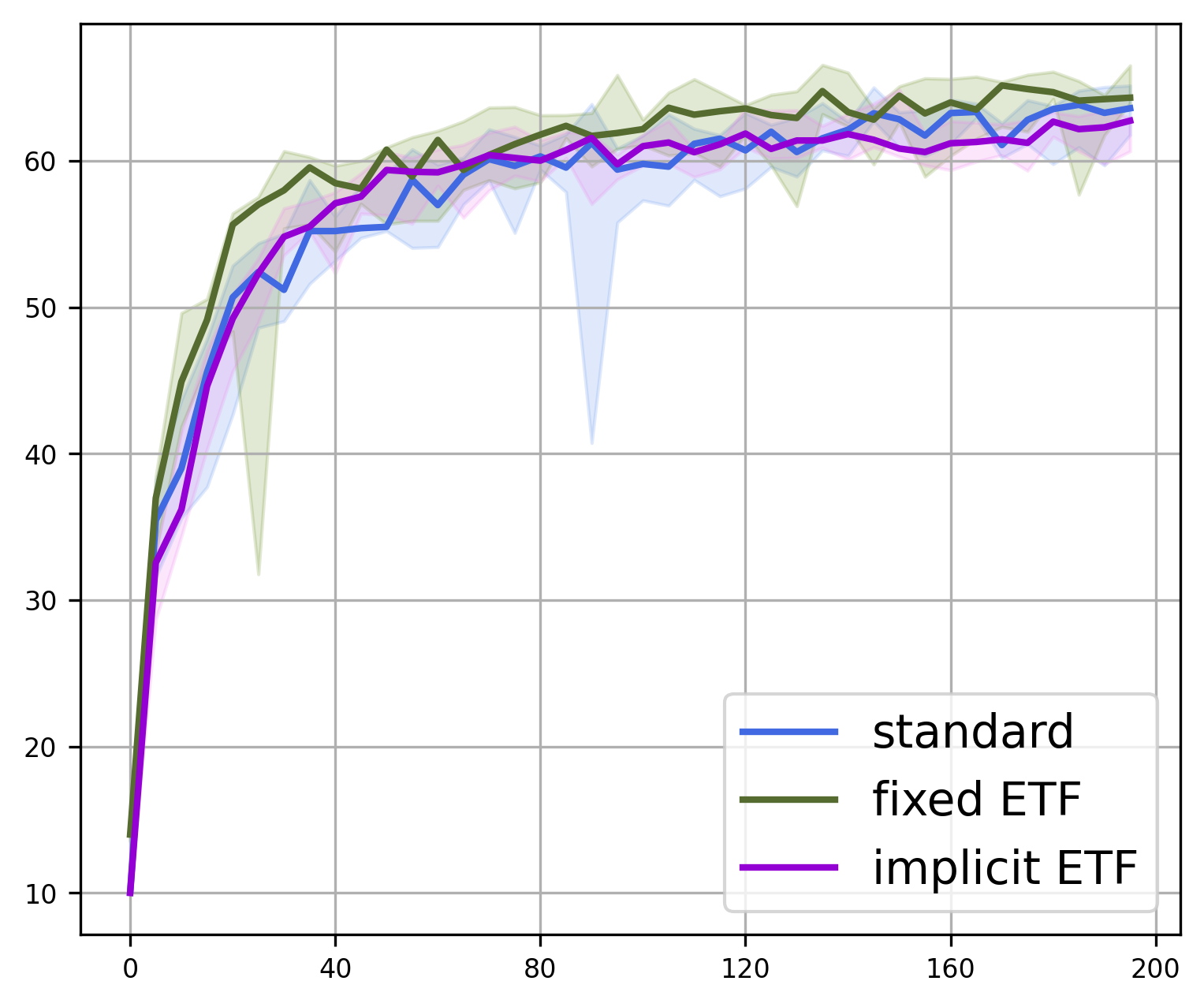}
            \caption{Test Top-1 Acc.}
        \end{subfigure}
    \end{minipage}
    \caption{STL10 results on ResNet-50. In all plots, the x-axis represents the number of epochs, except for plot (c), where the x-axis denotes the number of training examples.}
    \label{fig:stl10-resnet}
\end{figure}


\begin{figure}[!h]
    \centering
    \begin{minipage}{0.75\textwidth}
        \centering
        \begin{subfigure}{0.32\textwidth}
            \centering
            \includegraphics[width=\textwidth, height=\textheight, keepaspectratio]{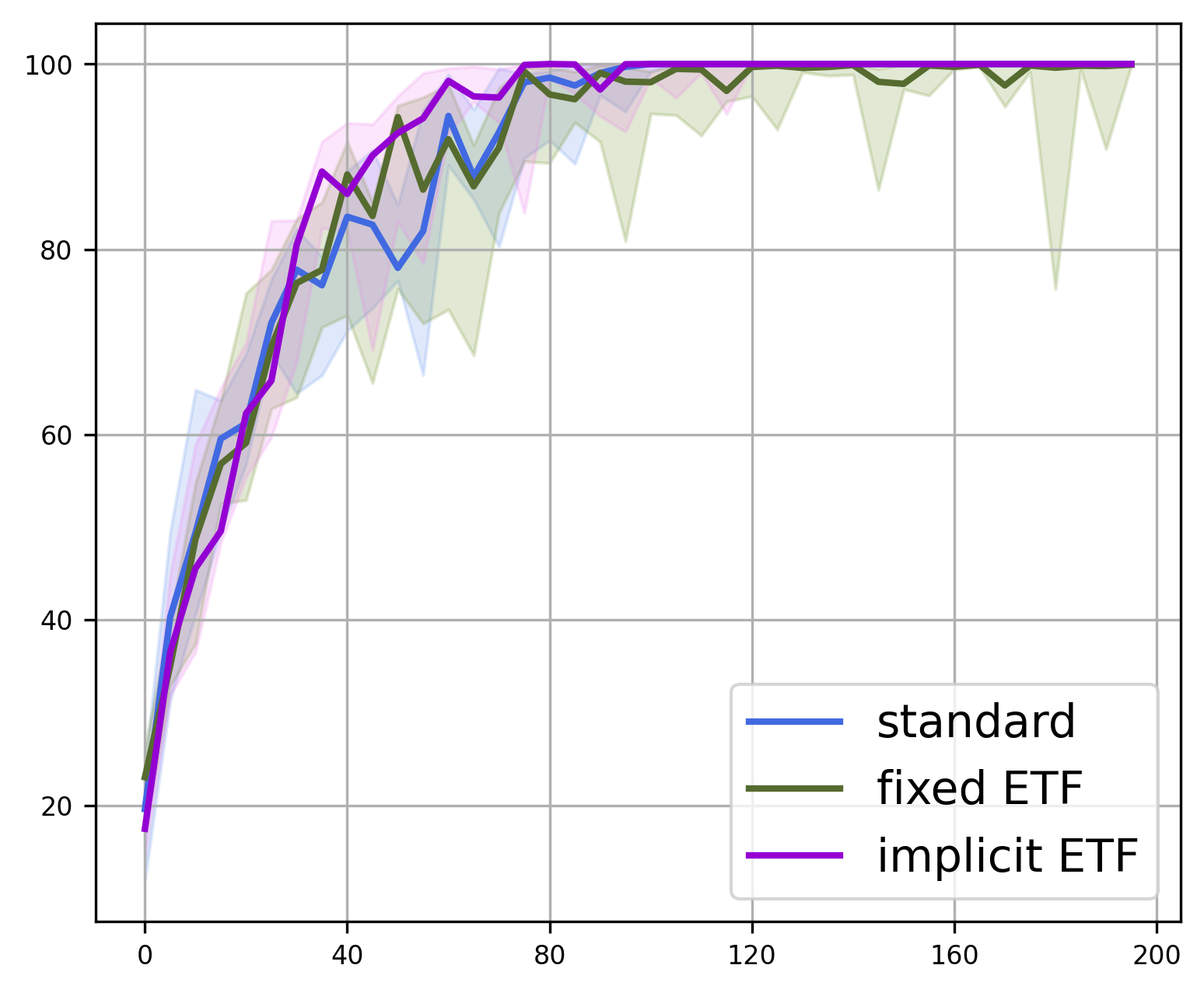}
            \caption{Train Top-1 Acc.}
        \end{subfigure}
        \hfill
        \begin{subfigure}{0.32\textwidth}
            \centering
            \includegraphics[width=\textwidth]{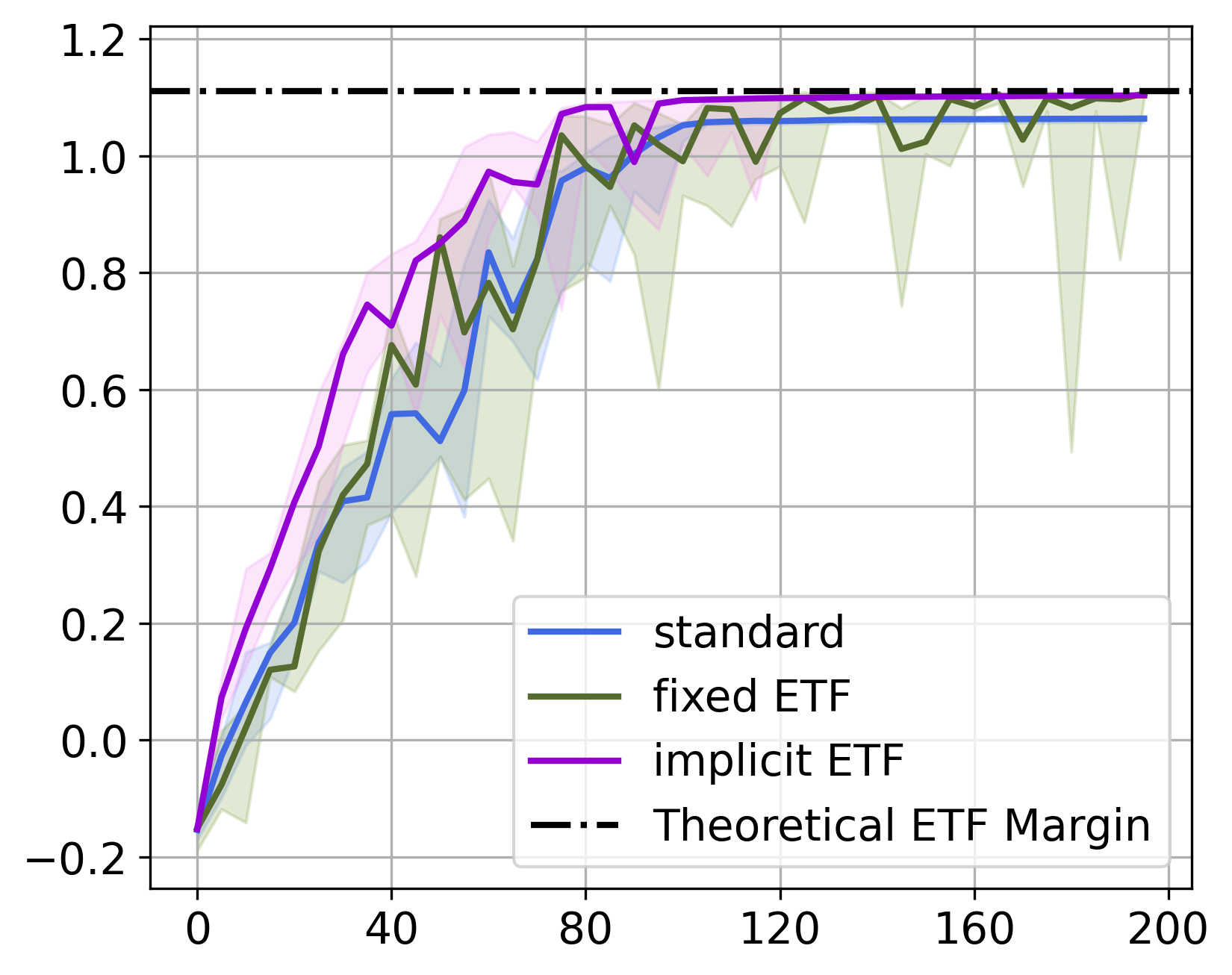}
            \caption{Avg Cos. Margin}
        \end{subfigure}
        \hfill
        \begin{subfigure}{0.32\textwidth}
            \centering
            \includegraphics[width=\textwidth]{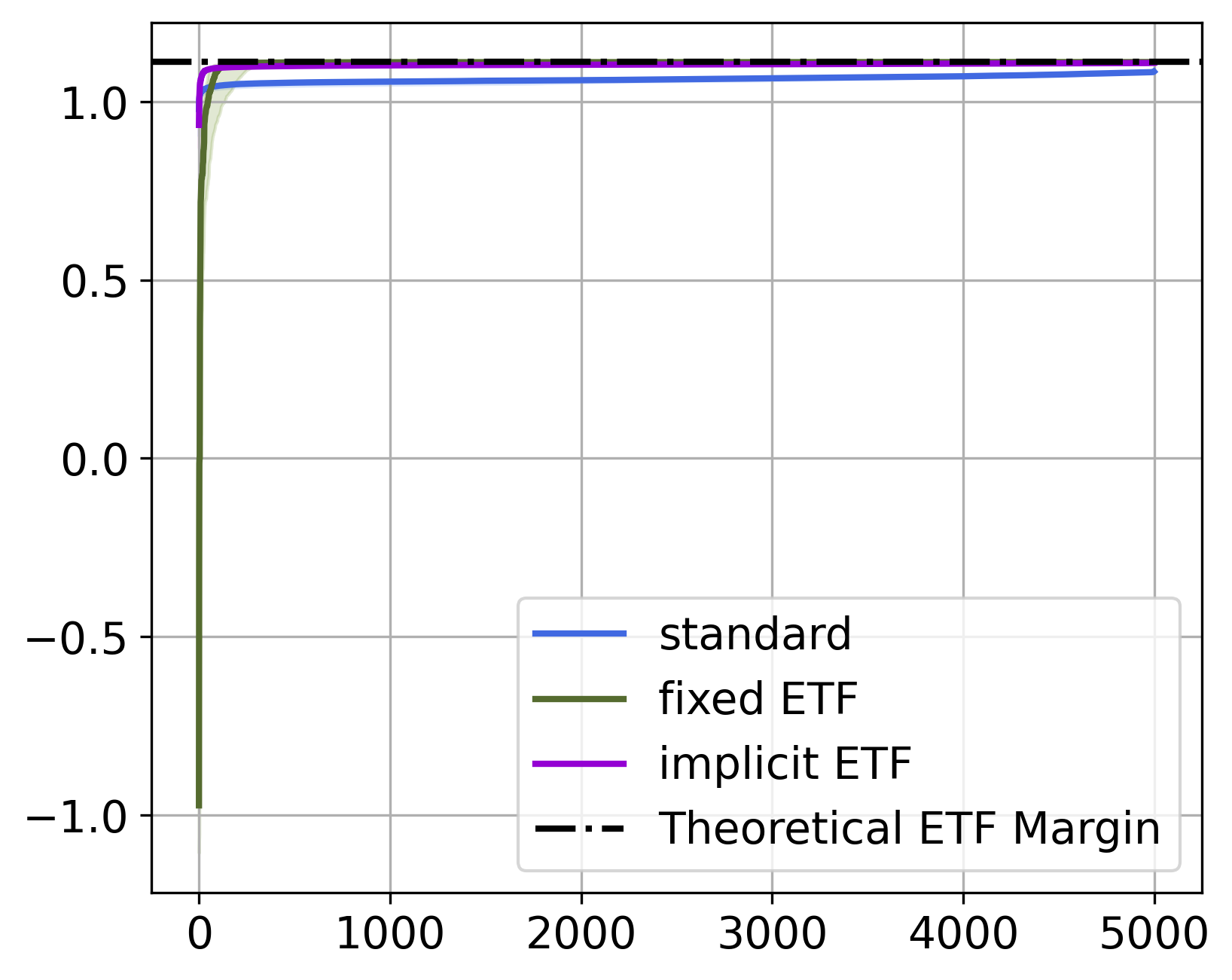}
            \caption{$\mathcal{P}_{CM}$ at EoT}
        \end{subfigure}
        \vskip0.3\baselineskip
        \begin{subfigure}{0.32\textwidth}
            \centering
            \includegraphics[width=\textwidth]{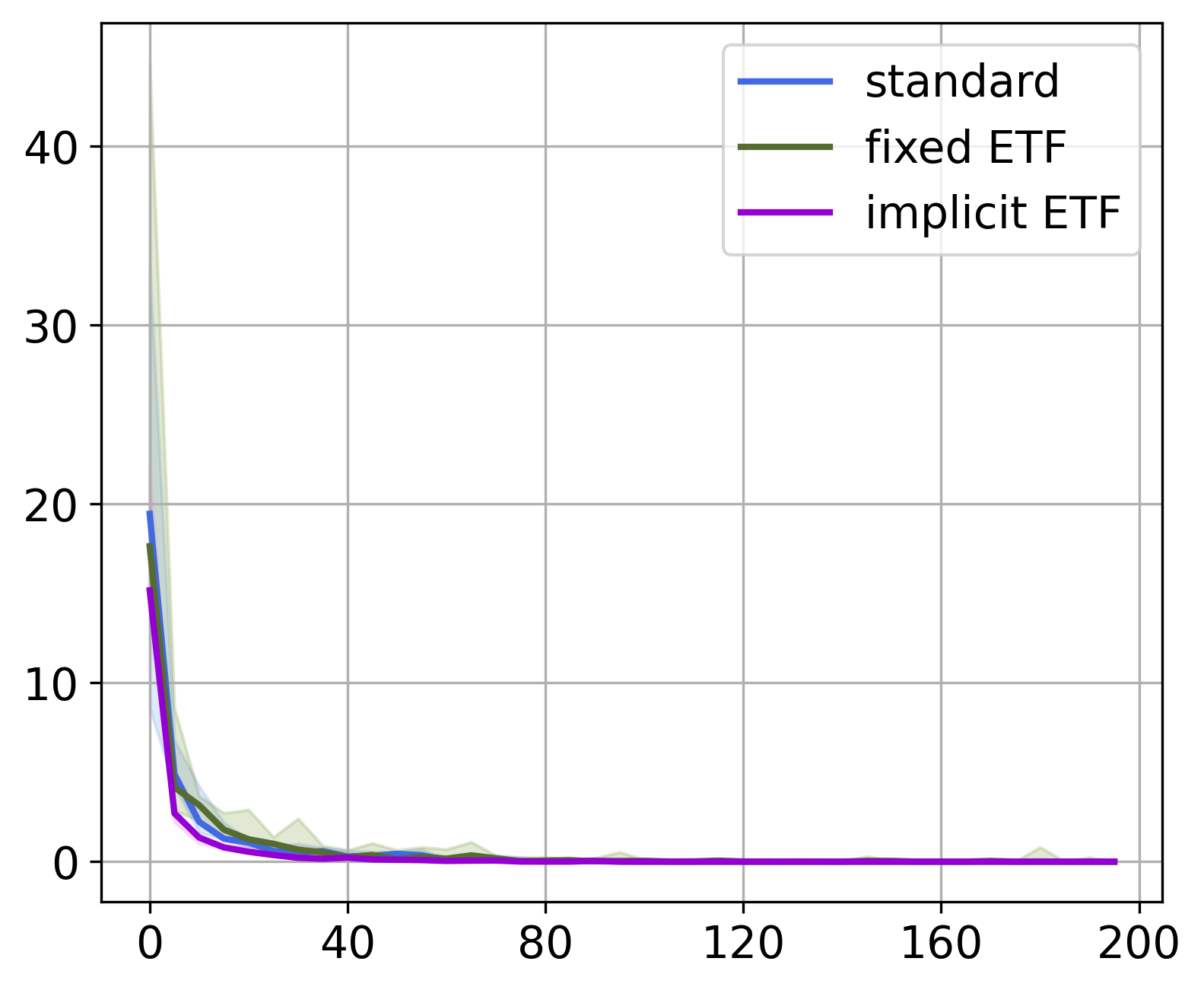}
            \caption{$NC1$}
        \end{subfigure}
        \hfill
        \begin{subfigure}{0.32\textwidth}
            \centering
            \includegraphics[width=\textwidth]{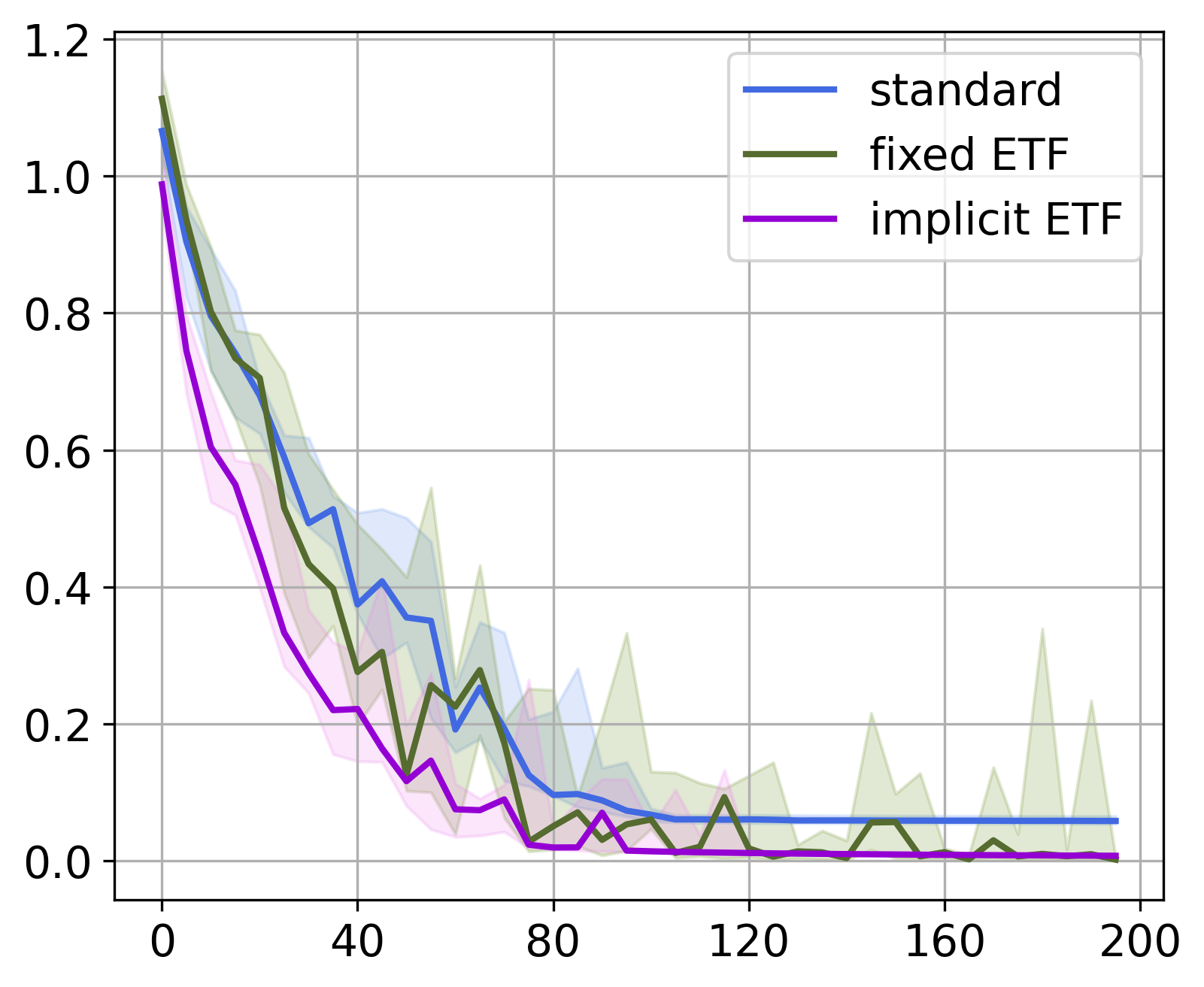}
            \caption{$NC3$}
        \end{subfigure}
        \hfill
        \begin{subfigure}{0.32\textwidth}
            \centering
            \includegraphics[width=\textwidth]{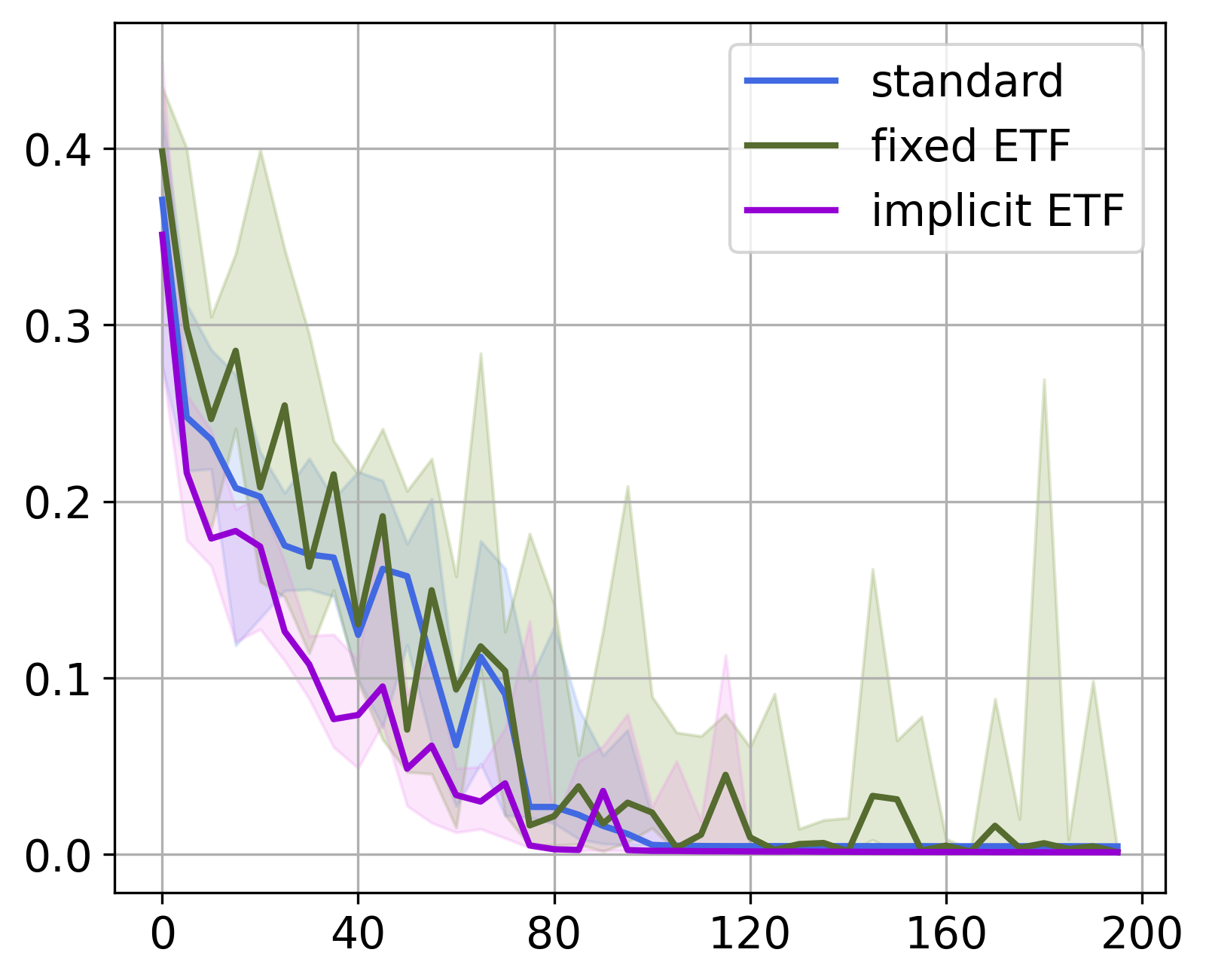}
            \caption{$\mW\Bar{\mH}$-Equinorm}
        \end{subfigure}
    \end{minipage}
    \hfill
    \begin{minipage}{0.24\textwidth}
        \centering
        \ContinuedFloat
        \begin{subfigure}{\textwidth}
            \centering
            \includegraphics[width=\textwidth]{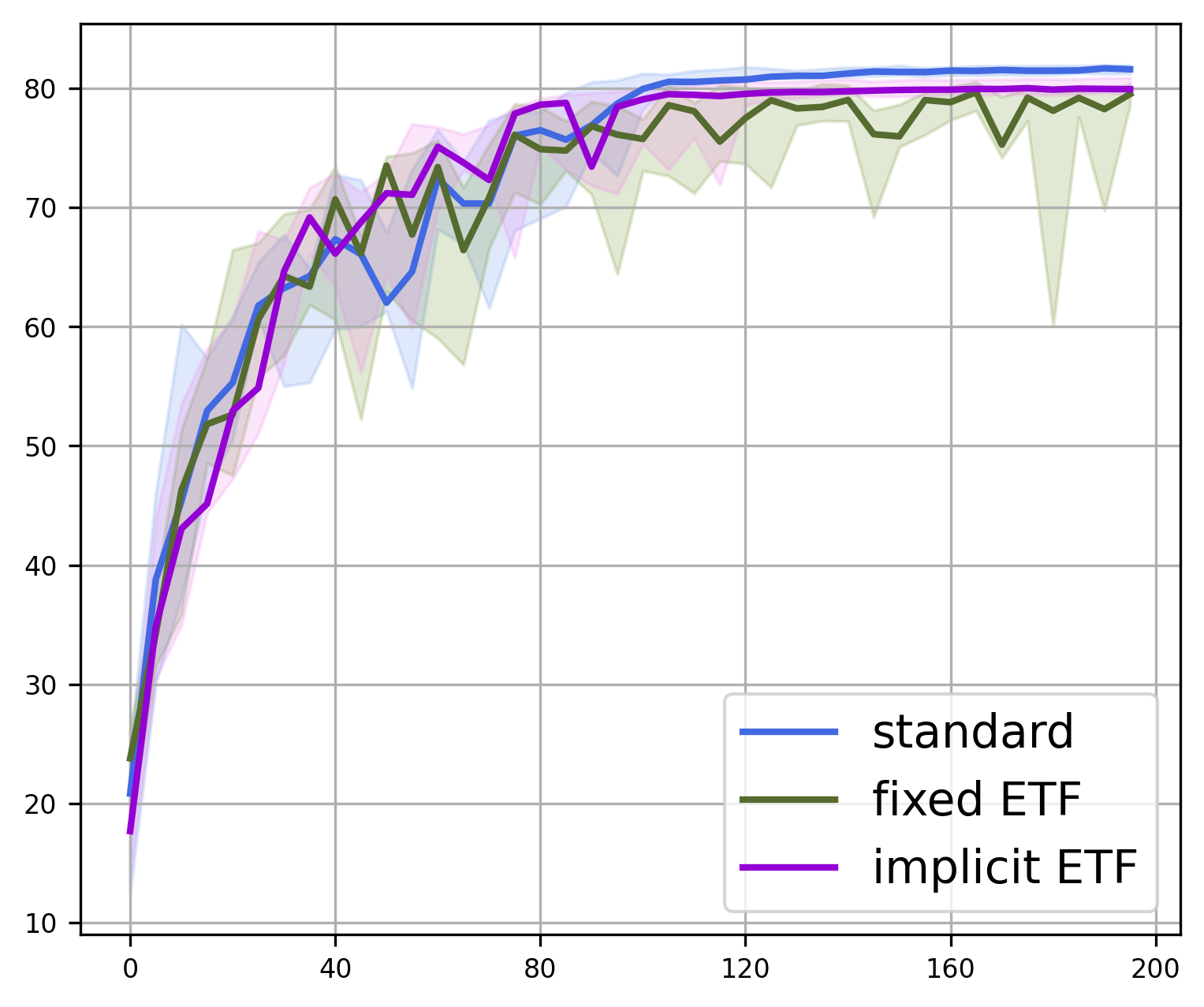}
            \caption{Test Top-1 Acc.}
        \end{subfigure}
    \end{minipage}
    \caption{STL10 results on VGG-13. In all plots, the x-axis represents the number of epochs, except for plot (c), where the x-axis denotes the number of training examples.}
    \label{fig:stl10-vgg}
\end{figure}


\begin{figure}
    \centering
    \begin{subfigure}{0.3\textwidth}
        \centering
        \includegraphics[width=\textwidth, height=0.8\textheight, keepaspectratio]{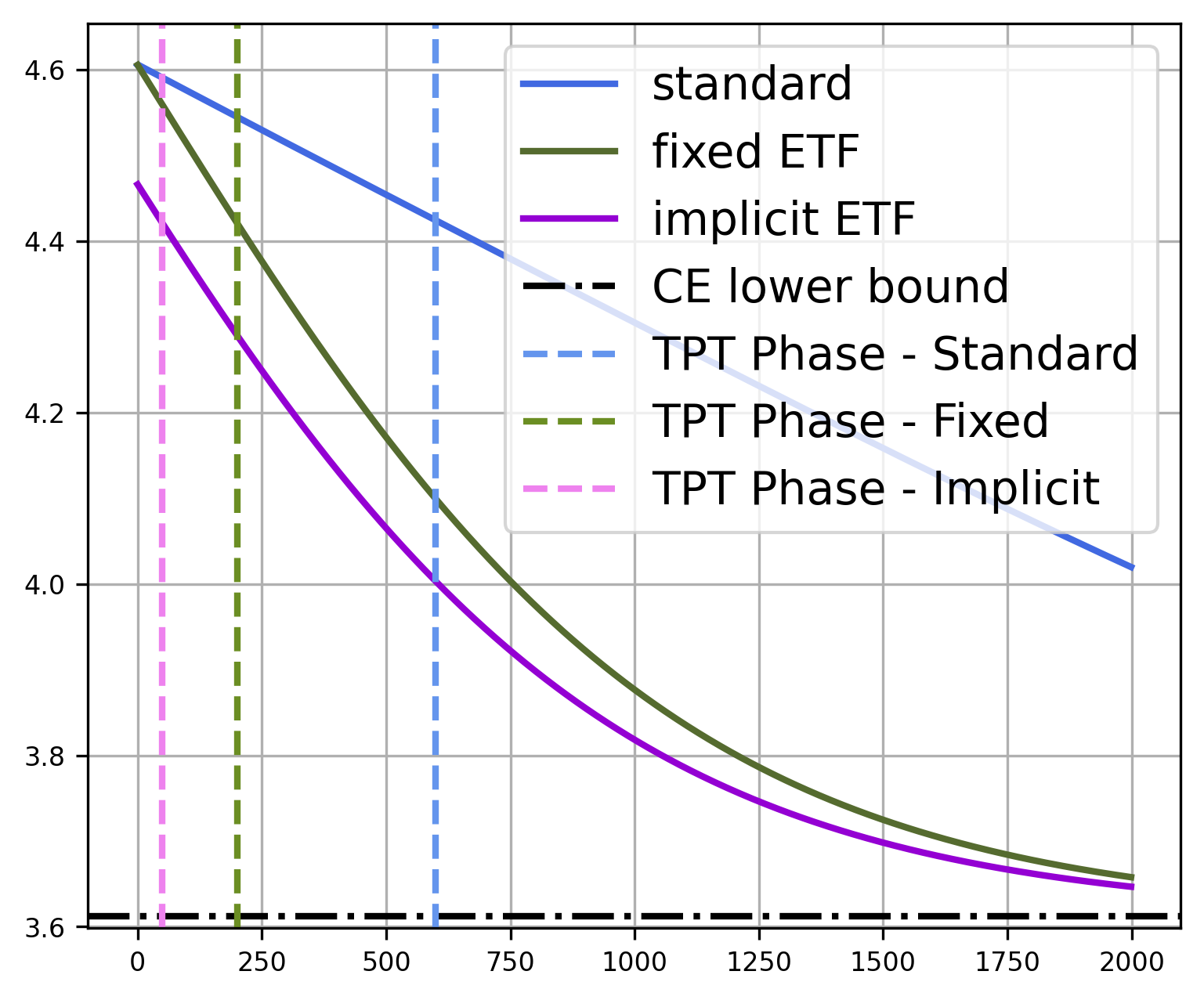}
        \caption{CE loss}
    \end{subfigure}
    \begin{subfigure}{0.3\textwidth}
        \centering
        \includegraphics[width=\textwidth, height=0.8\textheight, keepaspectratio]{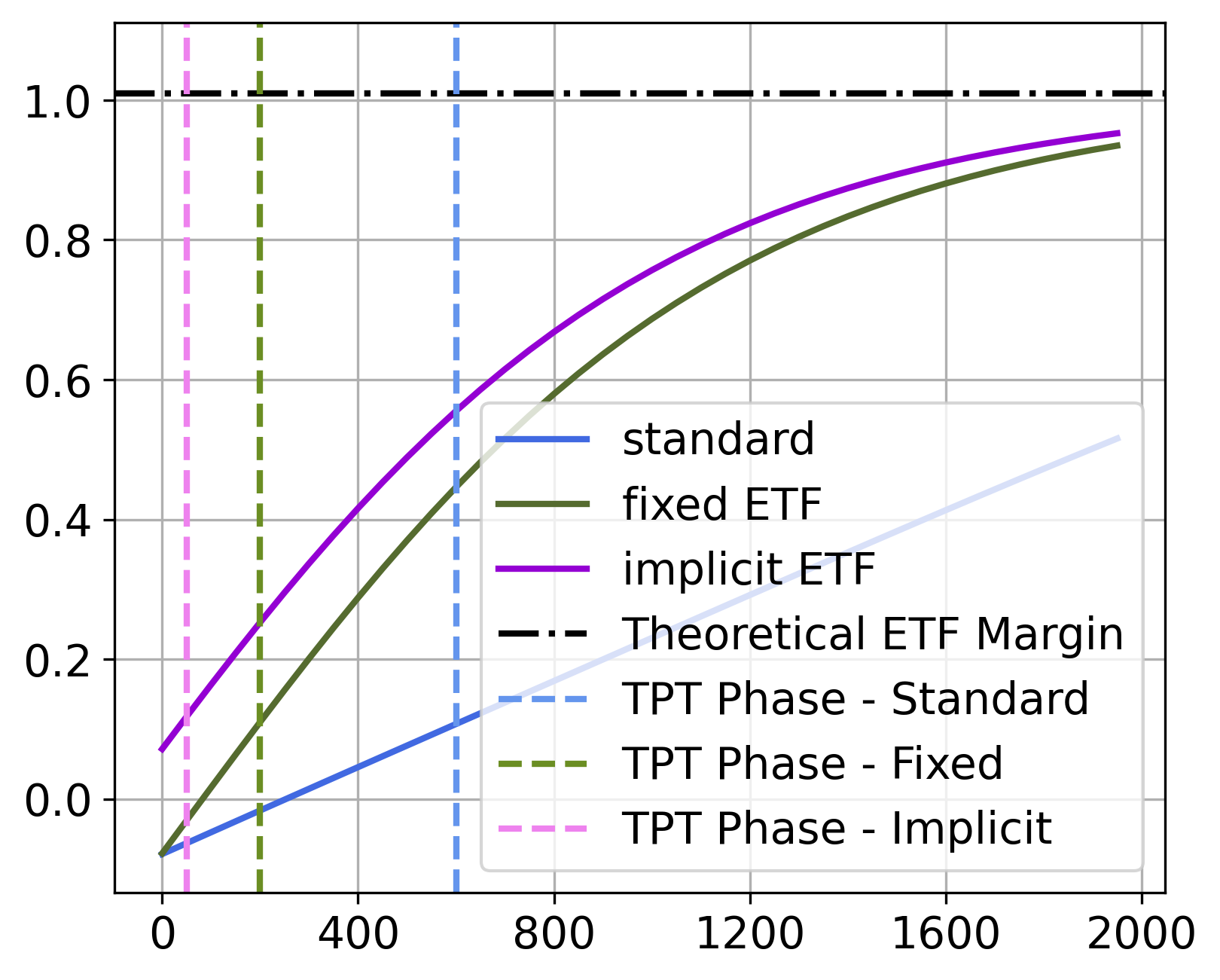}
        \caption{Avg Cos. Margin}
    \end{subfigure}
    \begin{subfigure}{0.3\textwidth}
        \centering
        \includegraphics[width=\textwidth, height=0.8\textheight, keepaspectratio]{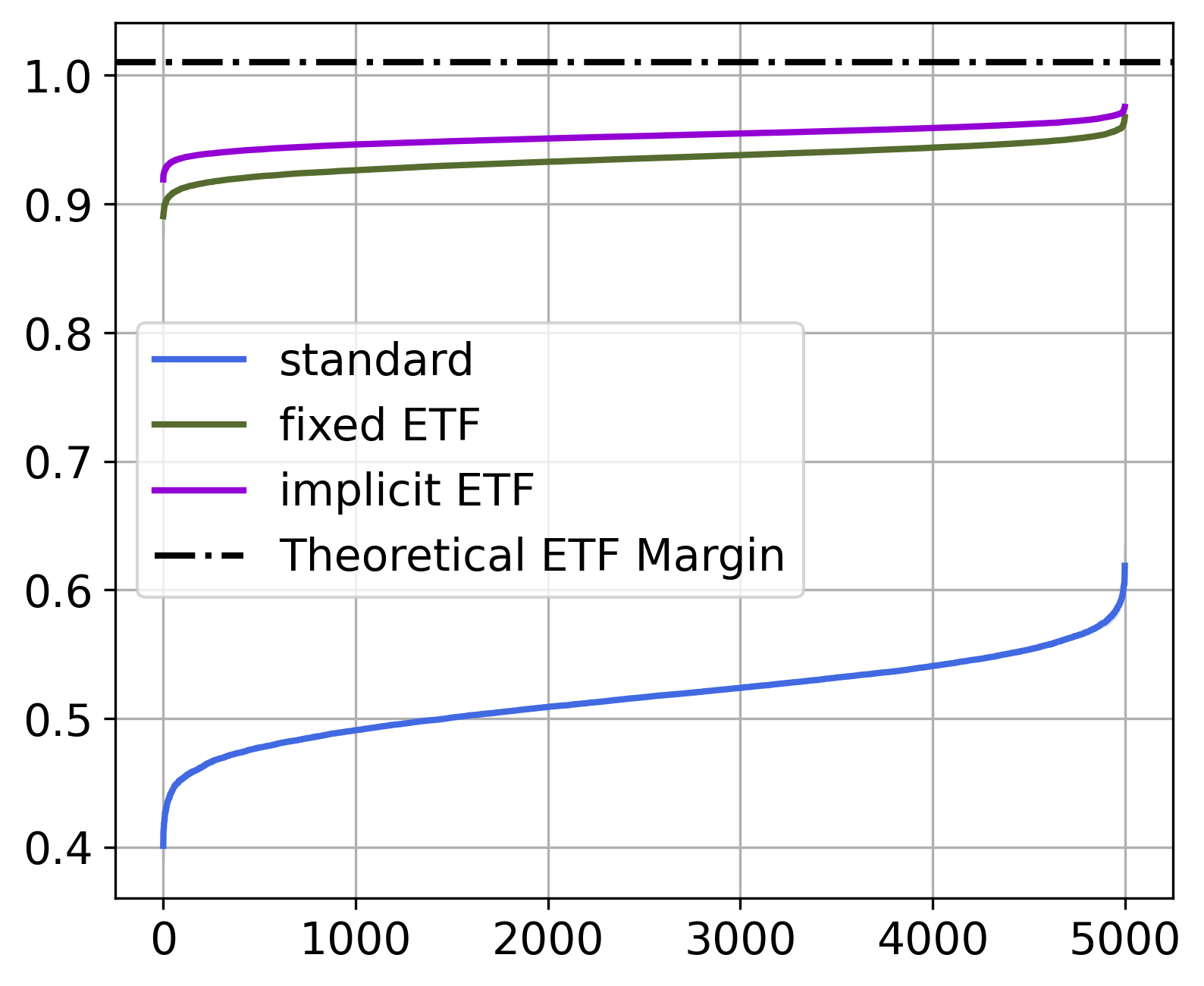}
        \caption{$\mathcal{P}_{CM}$ at EoT}
    \end{subfigure}
    \vskip0.3\baselineskip
    \begin{subfigure}{0.3\textwidth}
        \centering
        \includegraphics[width=\textwidth, height=0.8\textheight, keepaspectratio]{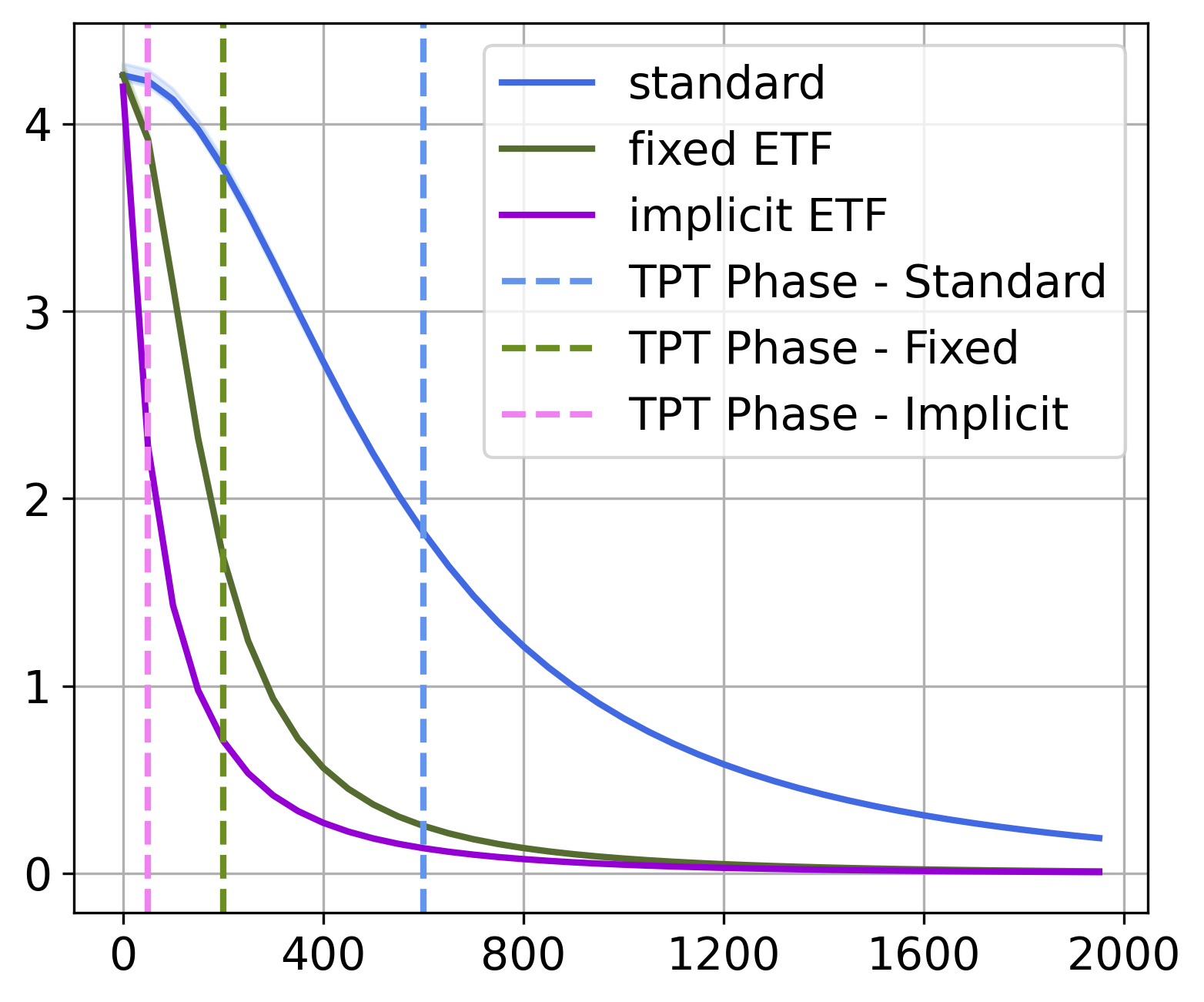}
        \caption{$NC1$}
    \end{subfigure}
    \begin{subfigure}{0.3\textwidth}
        \centering
        \includegraphics[width=\textwidth, height=0.8\textheight, keepaspectratio]{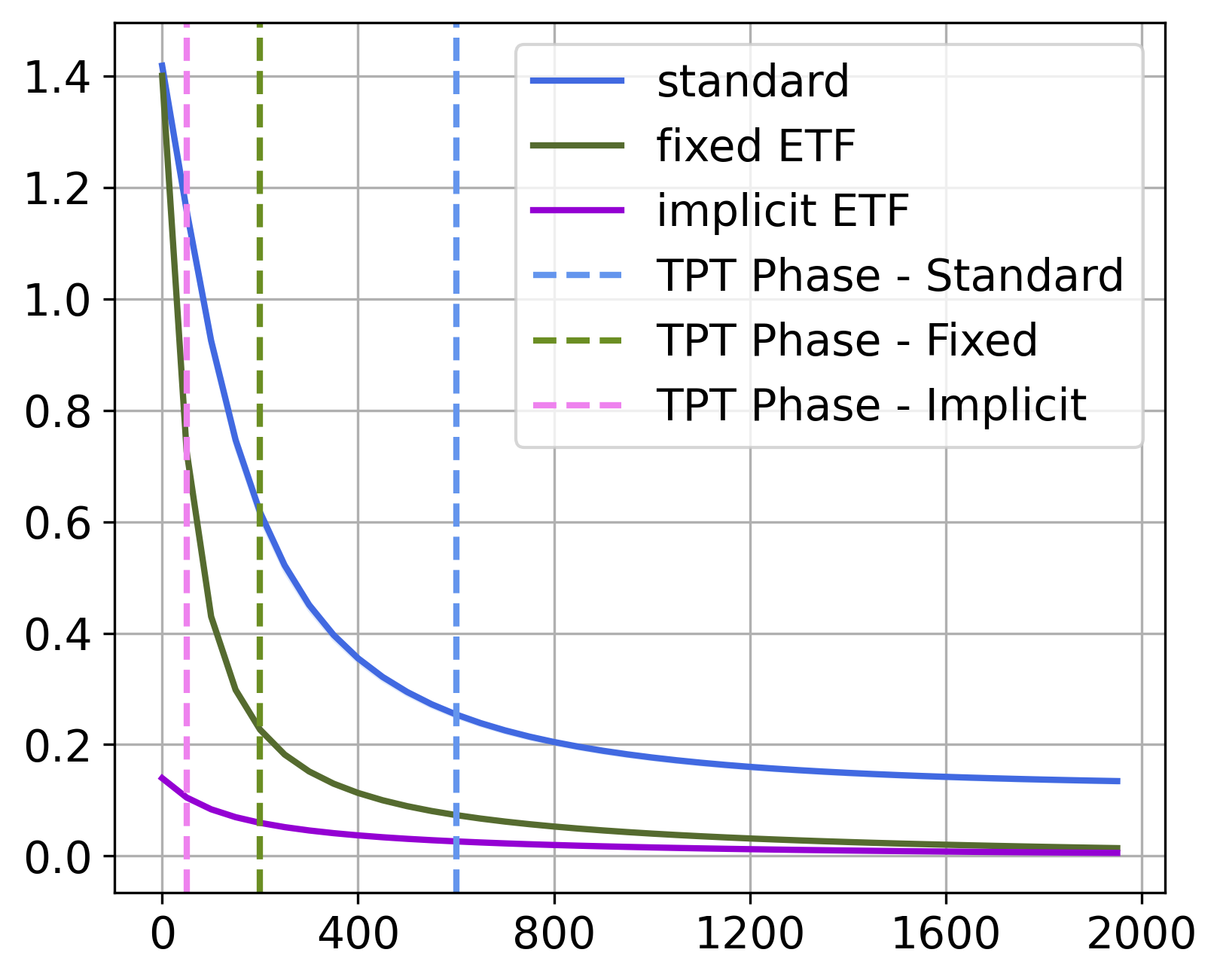}
        \caption{$NC3$}
    \end{subfigure}
    \begin{subfigure}{0.3\textwidth}
        \centering
        \includegraphics[width=\textwidth, height=0.8\textheight, keepaspectratio]{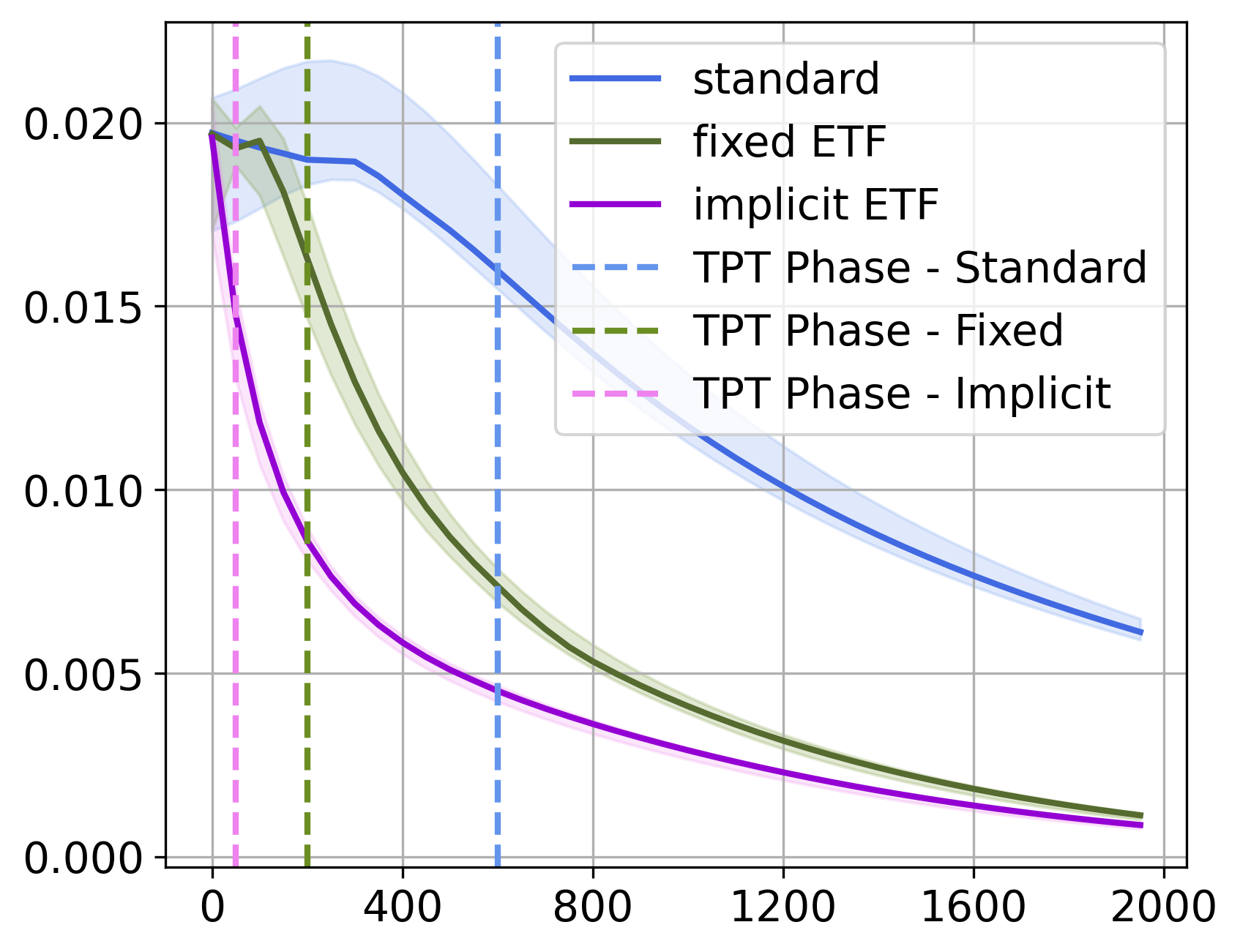}
        \caption{$\mW\Bar{\mH}$-Equinorm}
    \end{subfigure}
    \caption{UFM-100 results. In all plots, the x-axis represents the number of epochs, except for plot (c), where the x-axis denotes the number of training examples.}
    \label{fig:ufm100}
\end{figure}


\begin{figure}
    \centering
    \begin{subfigure}{0.3\textwidth}
        \centering
        \includegraphics[width=\textwidth, height=0.8\textheight, keepaspectratio]{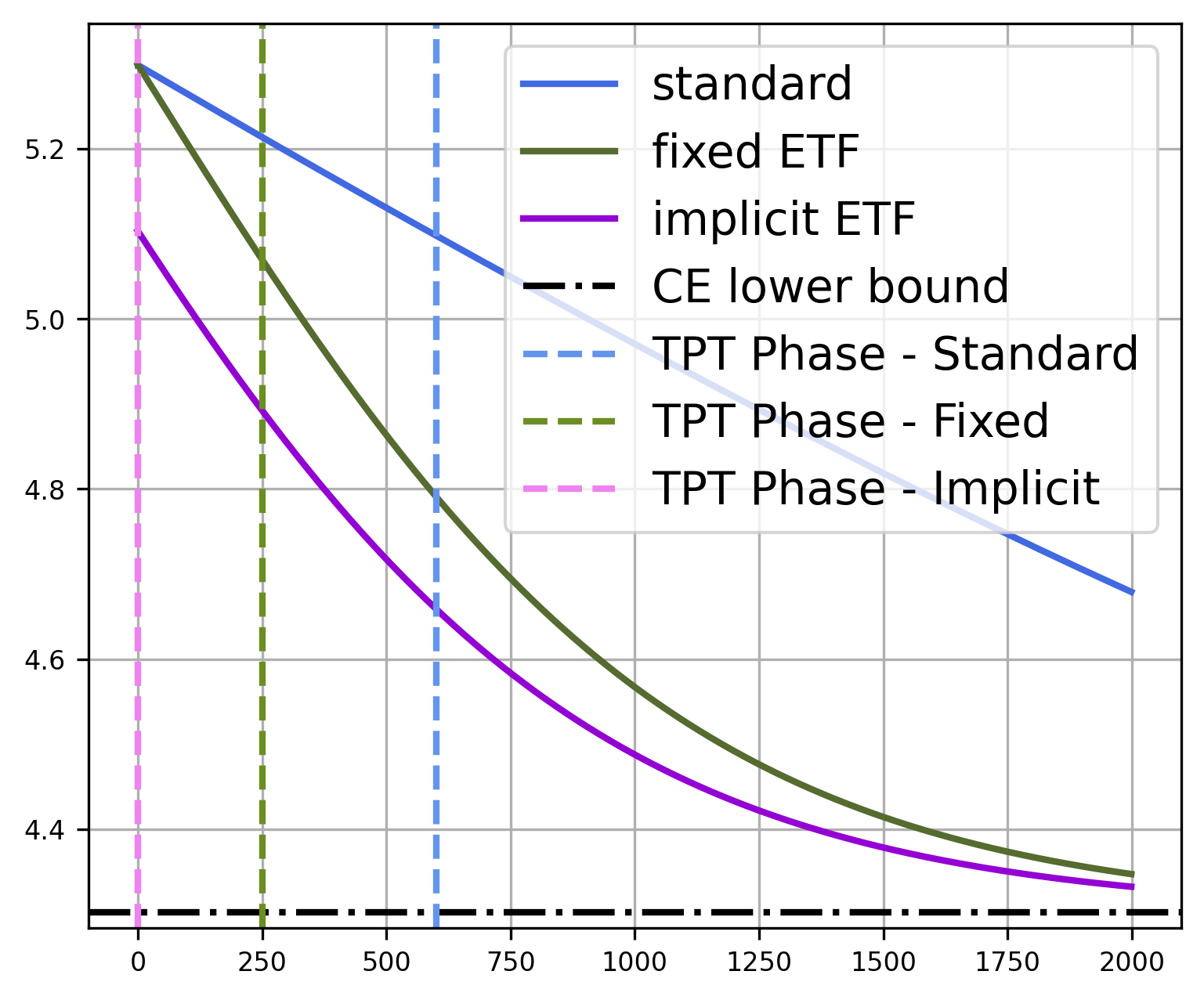}
        \caption{CE loss}
    \end{subfigure}
    \begin{subfigure}{0.3\textwidth}
        \centering
        \includegraphics[width=\textwidth, height=0.8\textheight, keepaspectratio]{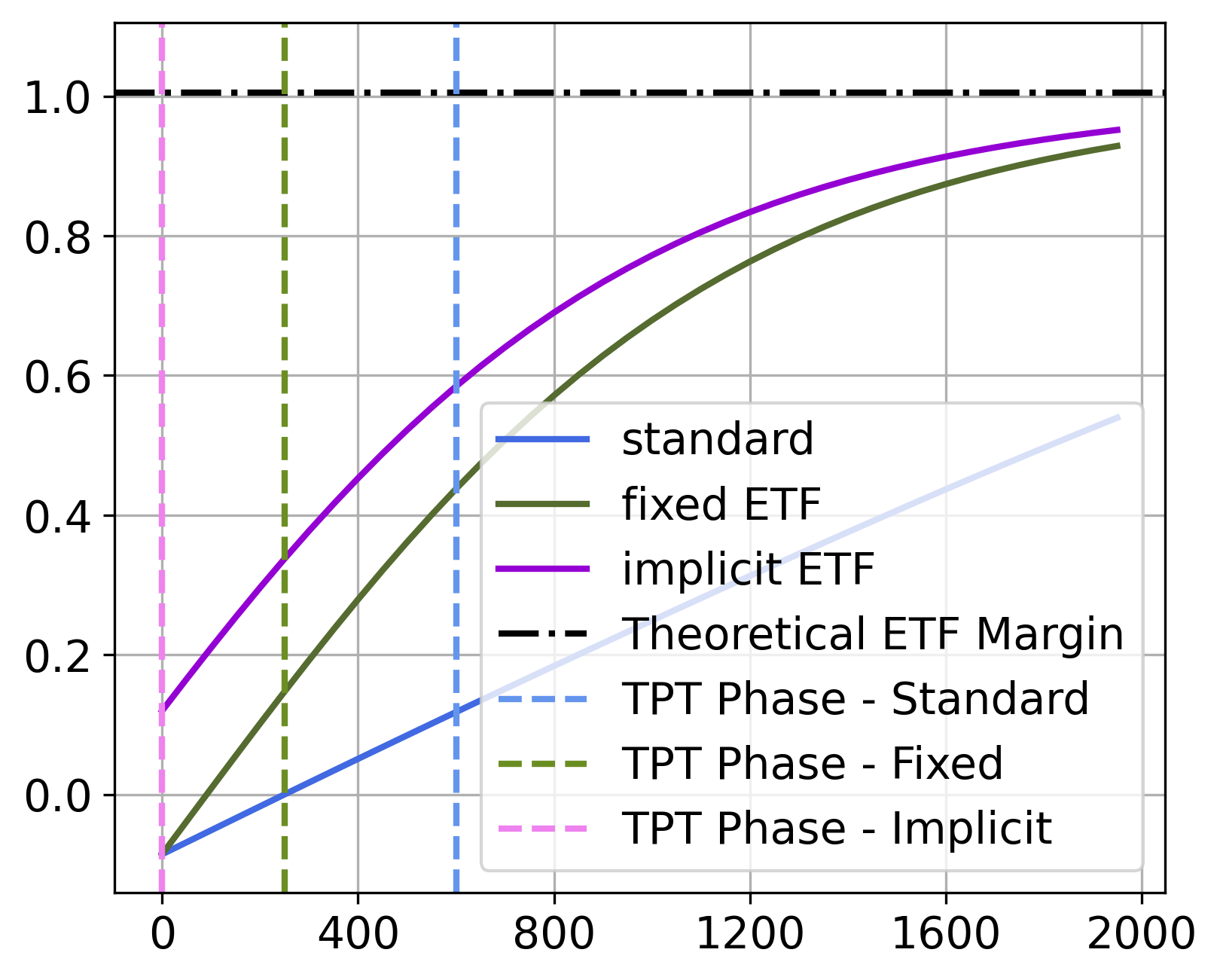}
        \caption{Avg Cos. Margin}
    \end{subfigure}
    \begin{subfigure}{0.3\textwidth}
        \centering
        \includegraphics[width=\textwidth, height=0.8\textheight, keepaspectratio]{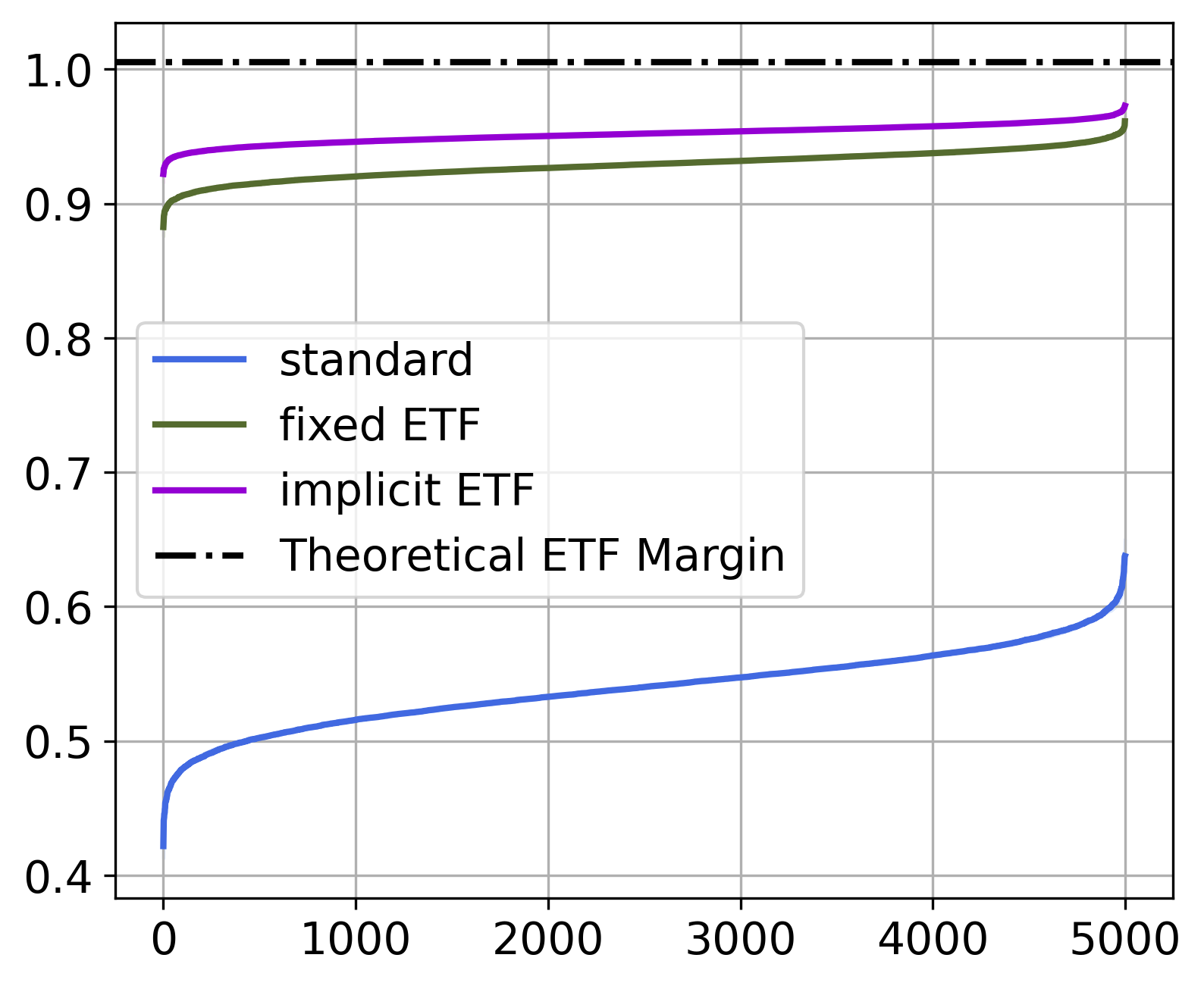}
        \caption{$\mathcal{P}_{CM}$ at EoT}
    \end{subfigure}
    \vskip0.3\baselineskip
    \begin{subfigure}{0.3\textwidth}
        \centering
        \includegraphics[width=\textwidth, height=0.8\textheight, keepaspectratio]{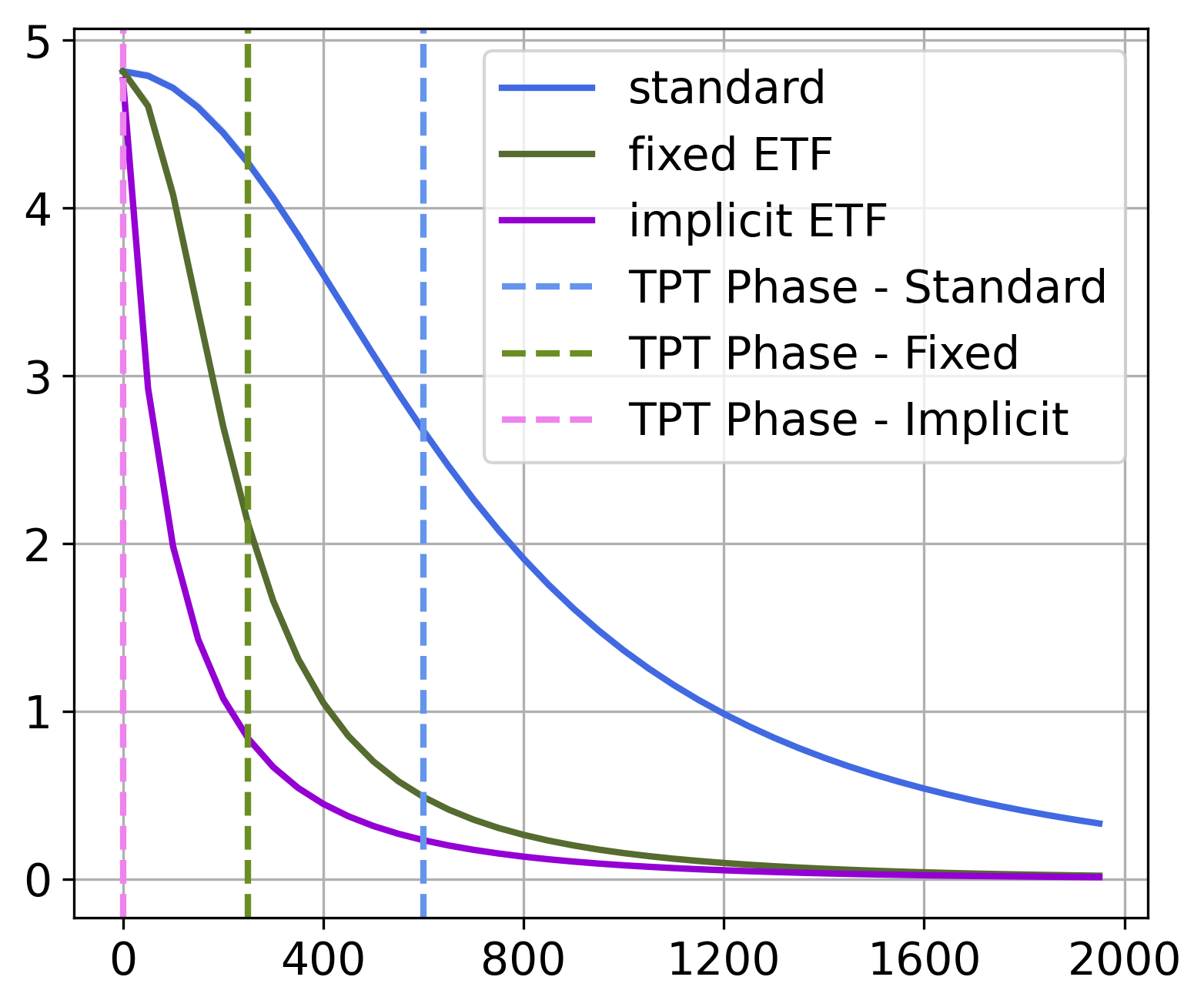}
        \caption{$NC1$}
    \end{subfigure}
    \begin{subfigure}{0.3\textwidth}
        \centering
        \includegraphics[width=\textwidth, height=0.8\textheight, keepaspectratio]{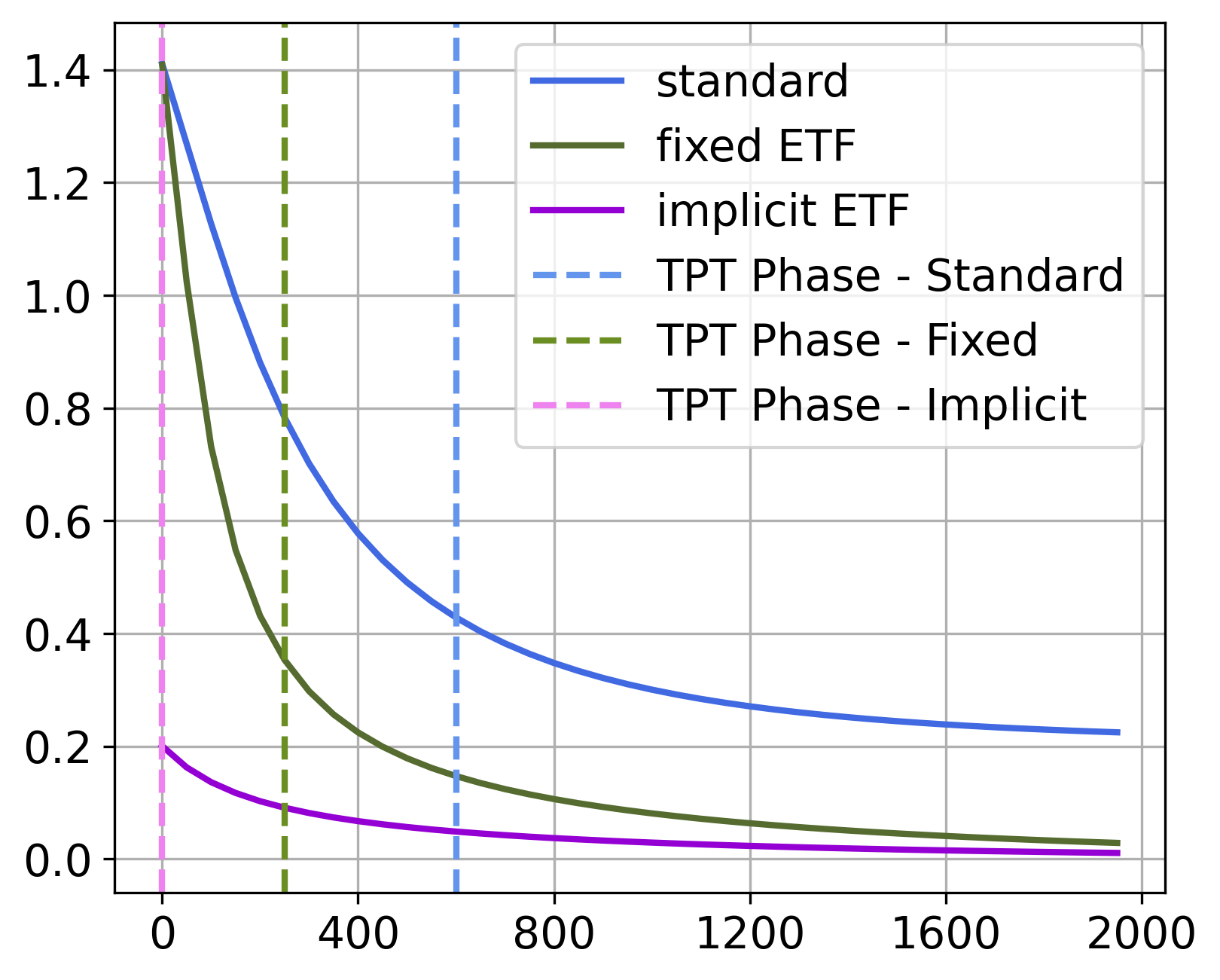}
        \caption{$NC3$}
    \end{subfigure}
    \begin{subfigure}{0.3\textwidth}
        \centering
        \includegraphics[width=\textwidth, height=0.8\textheight, keepaspectratio]{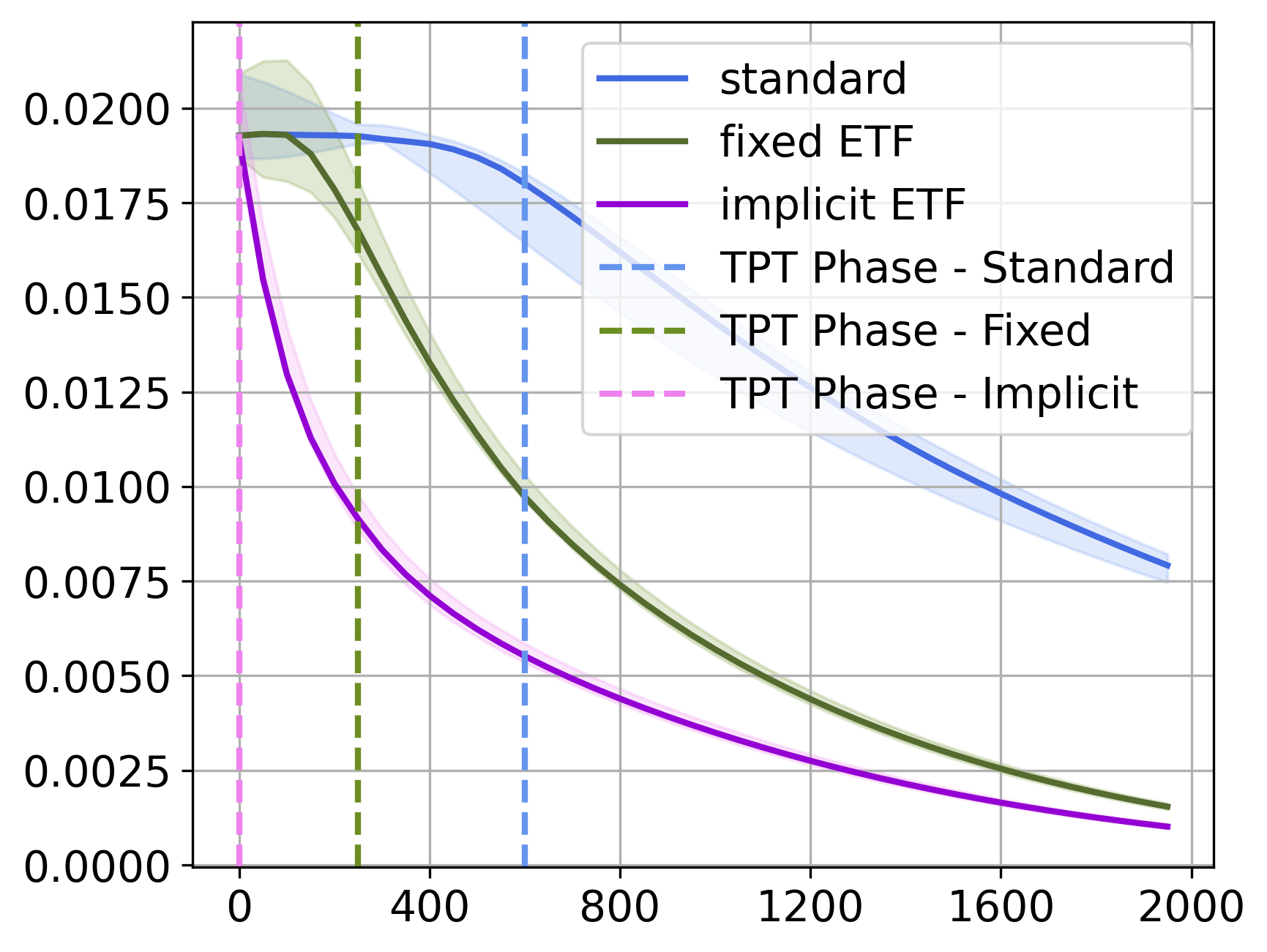}
        \caption{$\mW\Bar{\mH}$-Equinorm}
    \end{subfigure}
    \caption{UFM-200 results. In all plots, the x-axis represents the number of epochs, except for plot (c), where the x-axis denotes the number of training examples.}
    \label{fig:ufm200}
\end{figure}


\begin{figure}
    \centering
    \begin{subfigure}{0.3\textwidth}
        \centering
        \includegraphics[width=\textwidth, height=0.8\textheight, keepaspectratio]{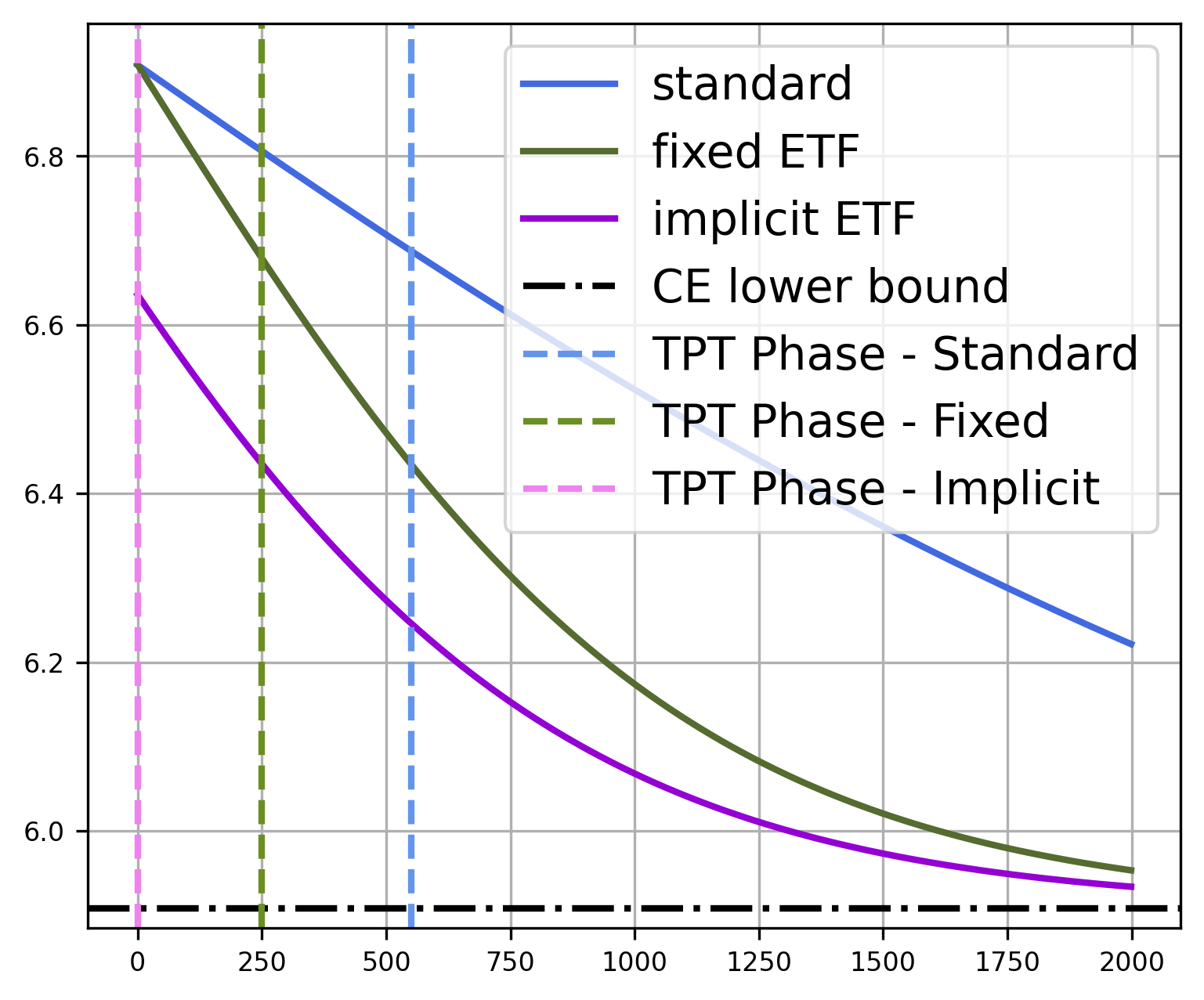}
        \caption{CE loss}
    \end{subfigure}
    \begin{subfigure}{0.3\textwidth}
        \centering
        \includegraphics[width=\textwidth, height=0.8\textheight, keepaspectratio]{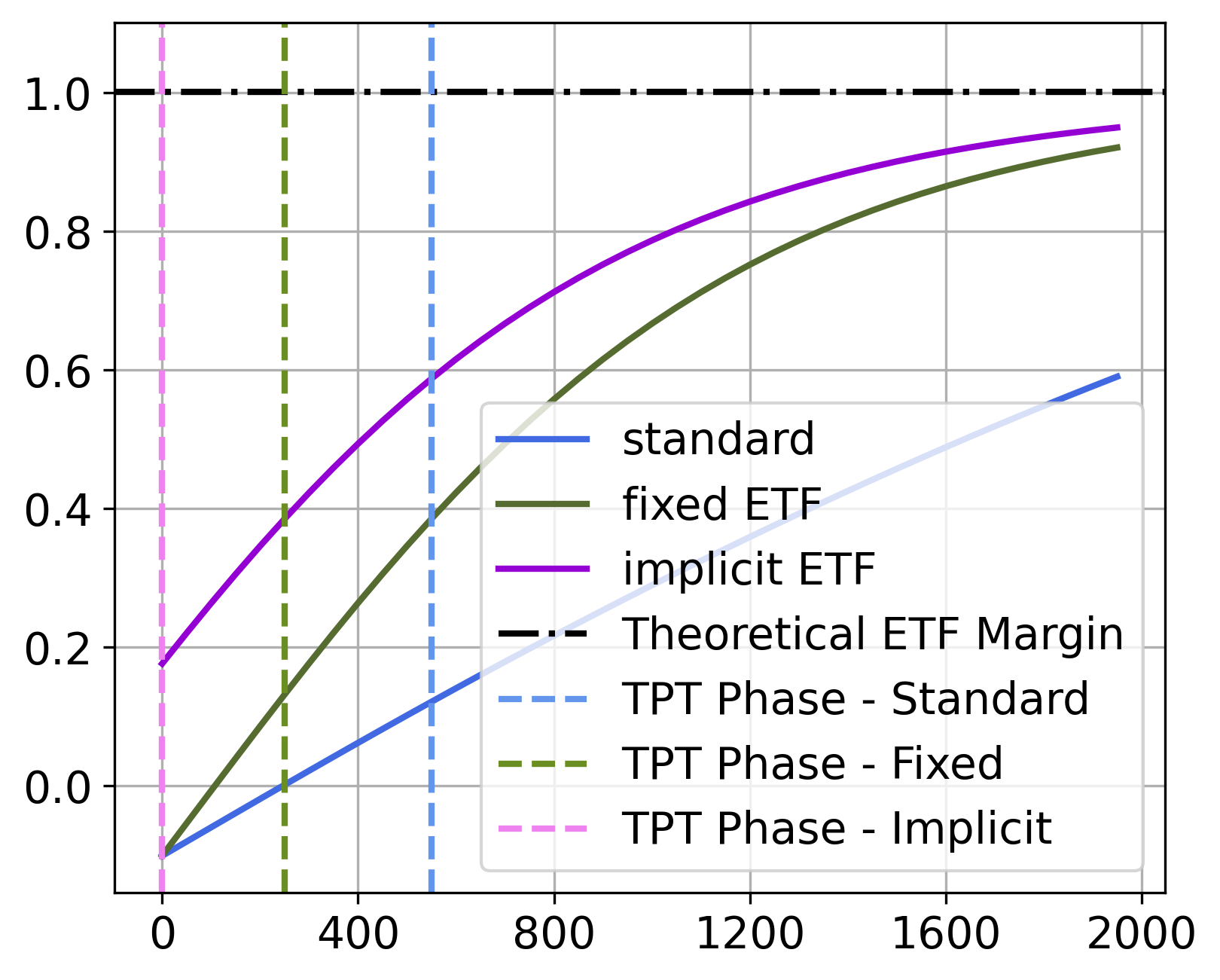}
        \caption{Avg Cos. Margin}
    \end{subfigure}
    \begin{subfigure}{0.3\textwidth}
        \centering
        \includegraphics[width=\textwidth, height=0.8\textheight, keepaspectratio]{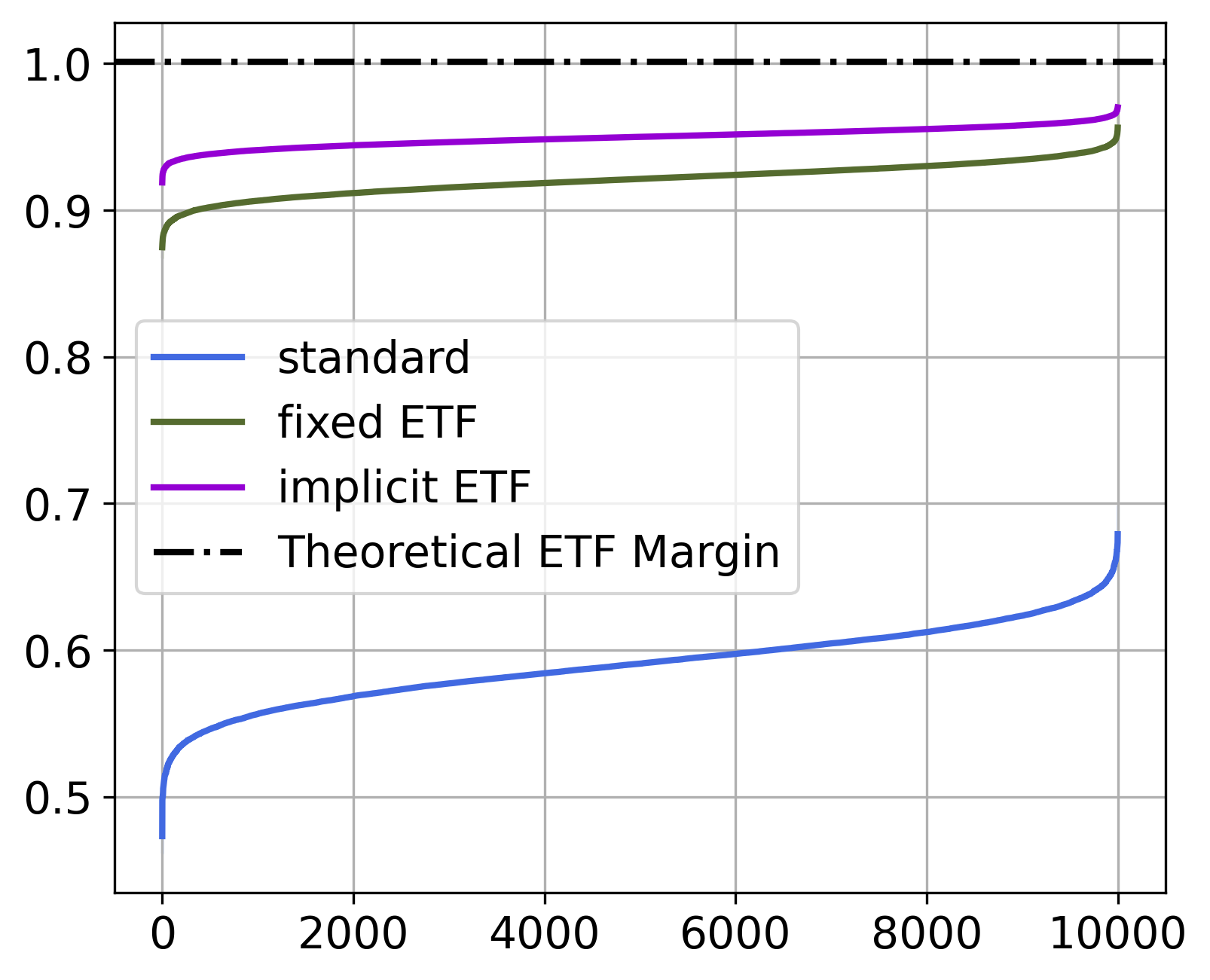}
        \caption{$\mathcal{P}_{CM}$ at EoT}
    \end{subfigure}
    \vskip0.3\baselineskip
    \begin{subfigure}{0.3\textwidth}
        \centering
        \includegraphics[width=\textwidth, height=0.8\textheight, keepaspectratio]{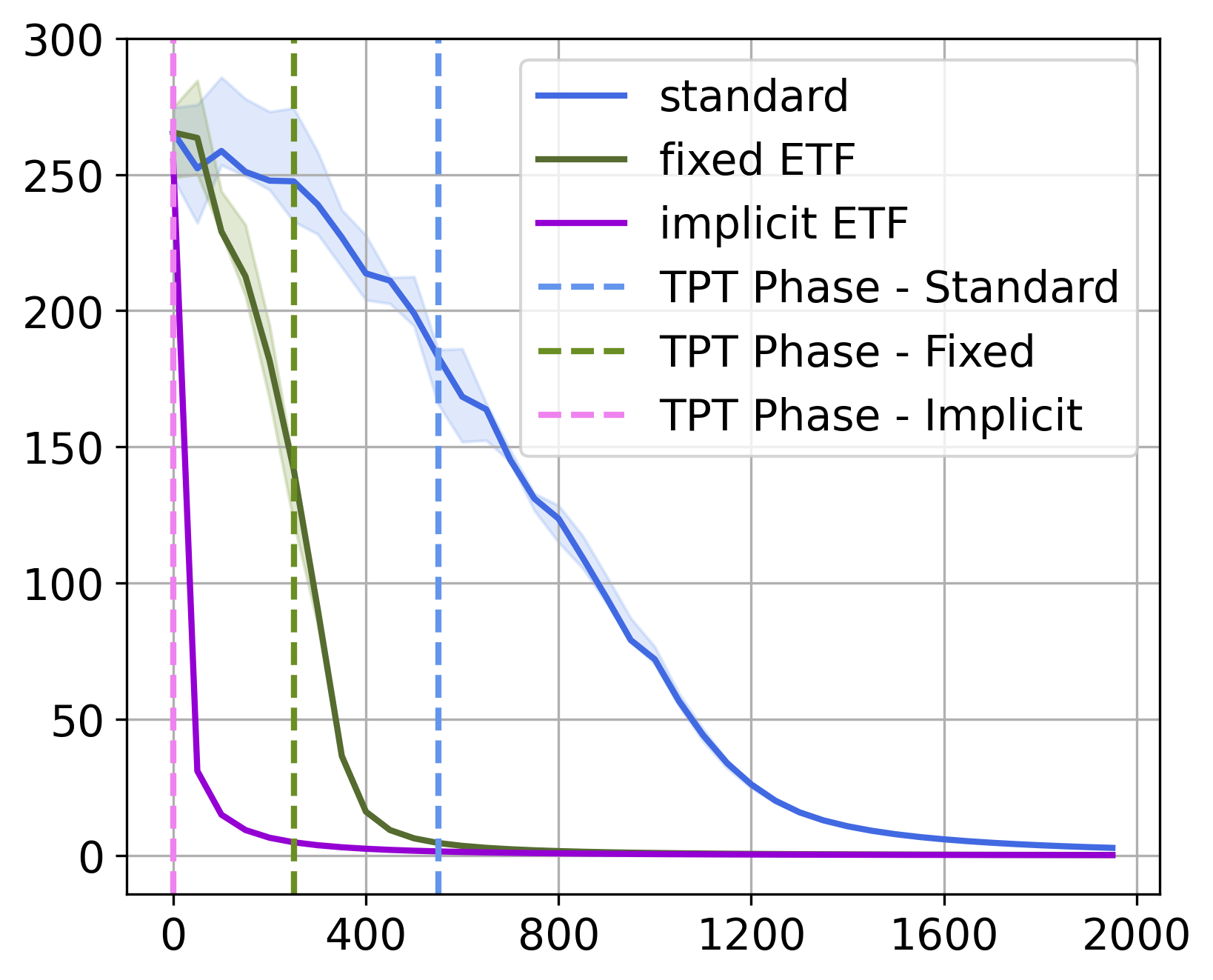}
        \caption{$NC1$}
    \end{subfigure}
    \begin{subfigure}{0.3\textwidth}
        \centering
        \includegraphics[width=\textwidth, height=0.8\textheight, keepaspectratio]{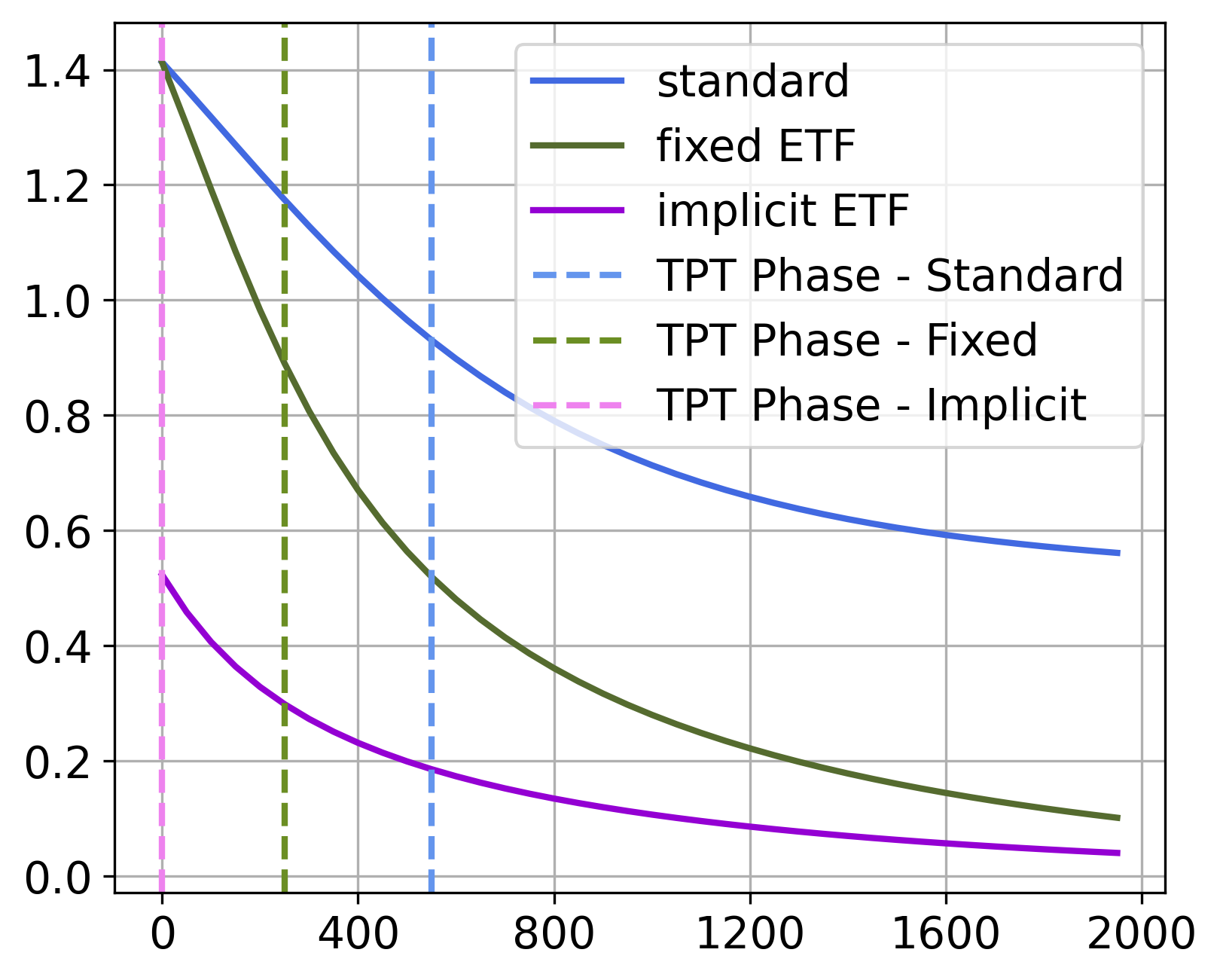}
        \caption{$NC3$}
    \end{subfigure}
    \begin{subfigure}{0.3\textwidth}
        \centering
        \includegraphics[width=\textwidth, height=0.8\textheight, keepaspectratio]{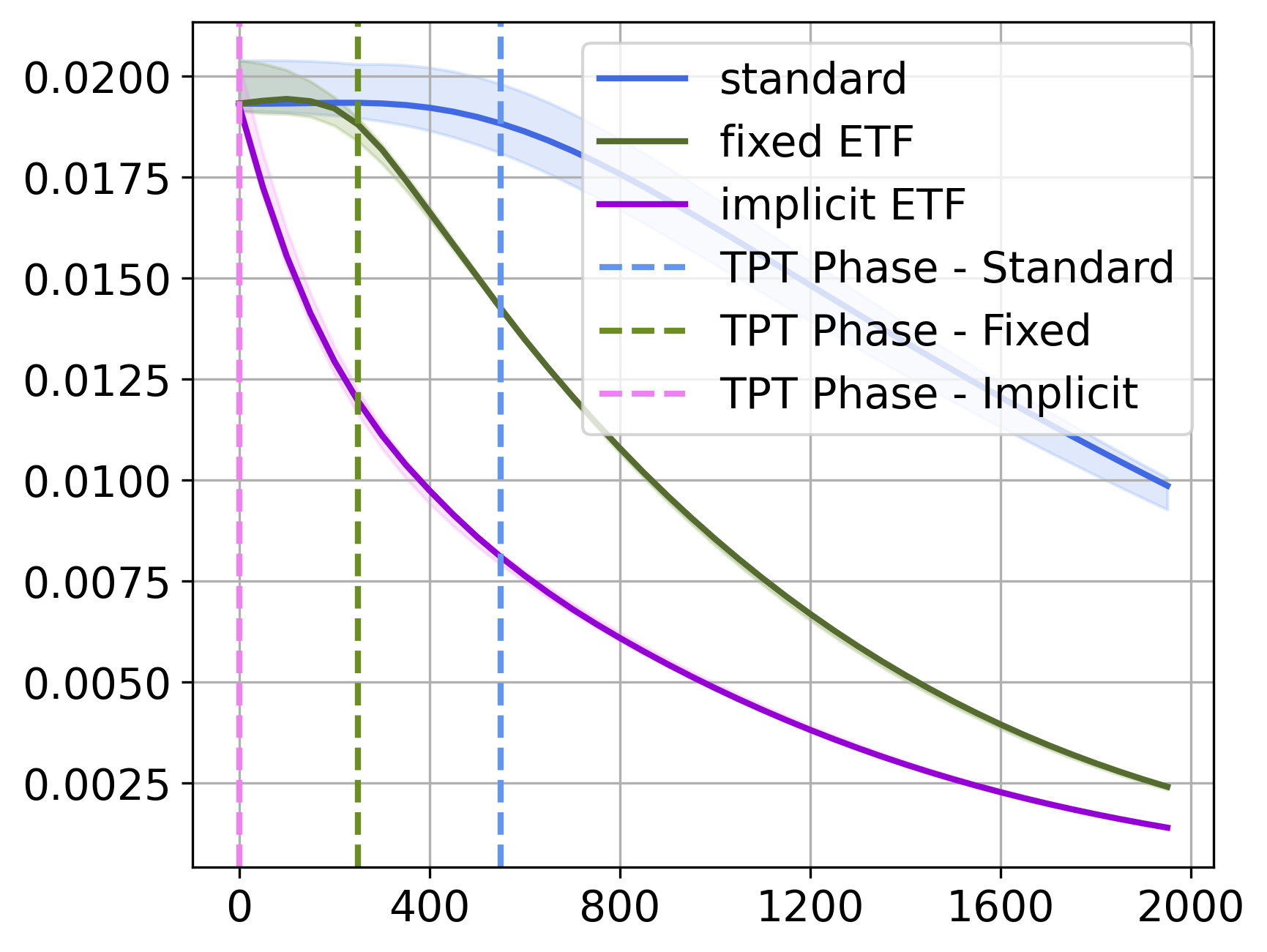}
        \caption{$\mW\Bar{\mH}$-Equinorm}
    \end{subfigure}
    \caption{UFM-1000 results. In all plots, the x-axis represents the number of epochs, except for plot (c), where the x-axis denotes the number of training examples.}
    \label{fig:ufm1000}
\end{figure}


\begin{figure}[!h]
    \centering
    \begin{minipage}{0.75\textwidth}
        \centering
        \begin{subfigure}{0.32\textwidth}
            \centering
            \includegraphics[width=\textwidth, height=\textheight, keepaspectratio]{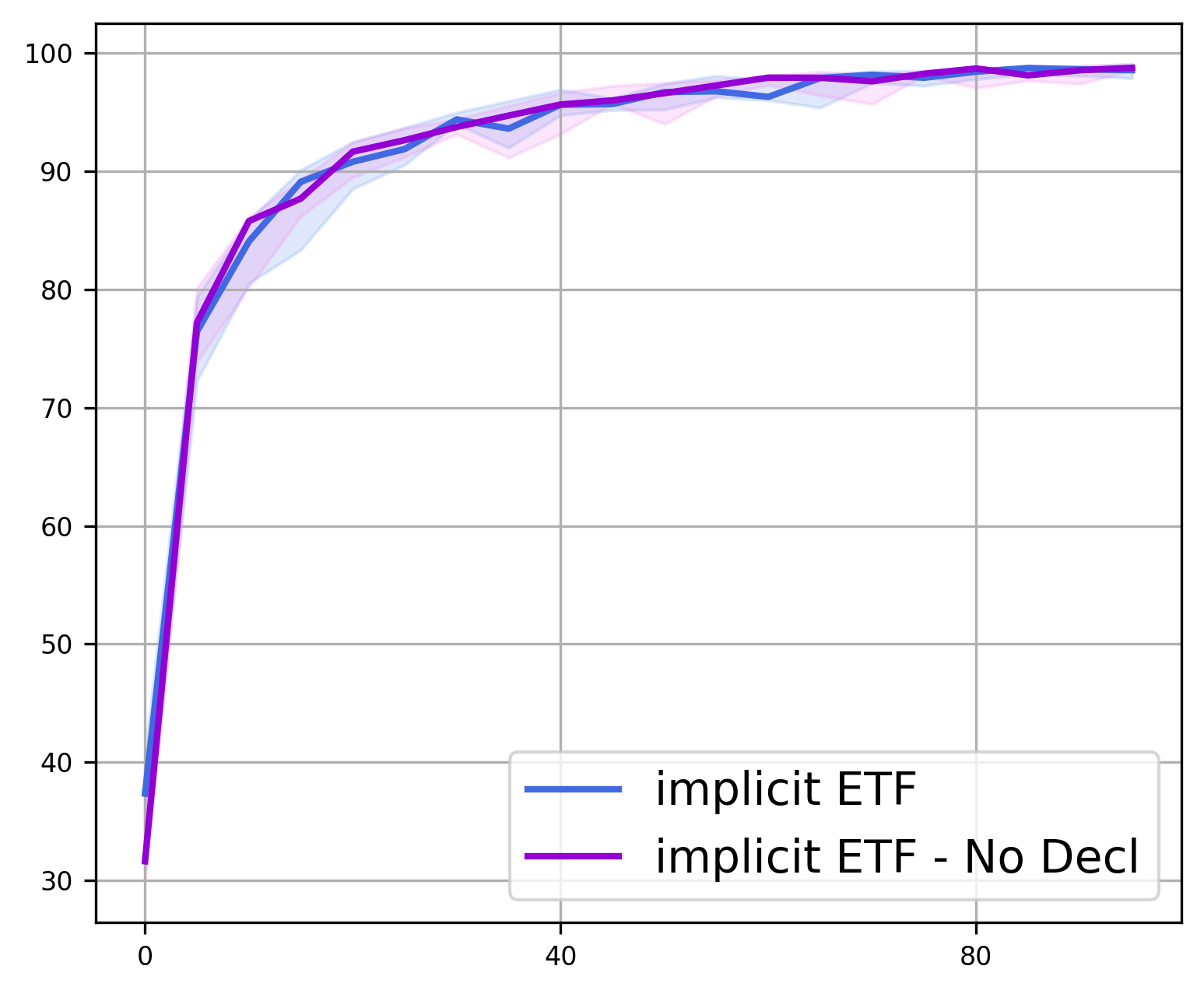}
            \caption{Train Top-1 Acc.}
        \end{subfigure}
        \hfill
        \begin{subfigure}{0.32\textwidth}
            \centering
            \includegraphics[width=\textwidth]{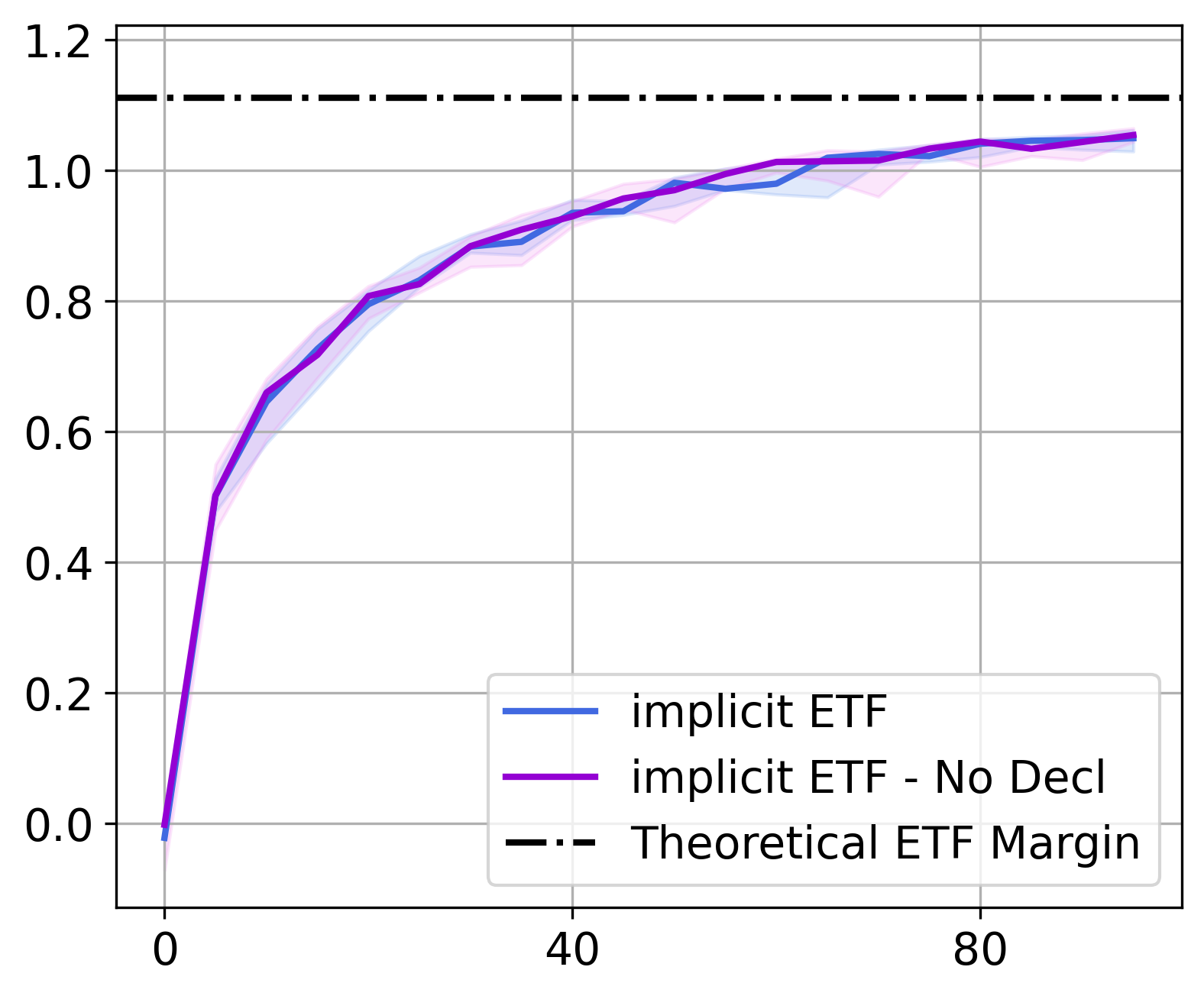}
            \caption{Avg Cos. Margin}
        \end{subfigure}
        \hfill
        \begin{subfigure}{0.32\textwidth}
            \centering
            \includegraphics[width=\textwidth]{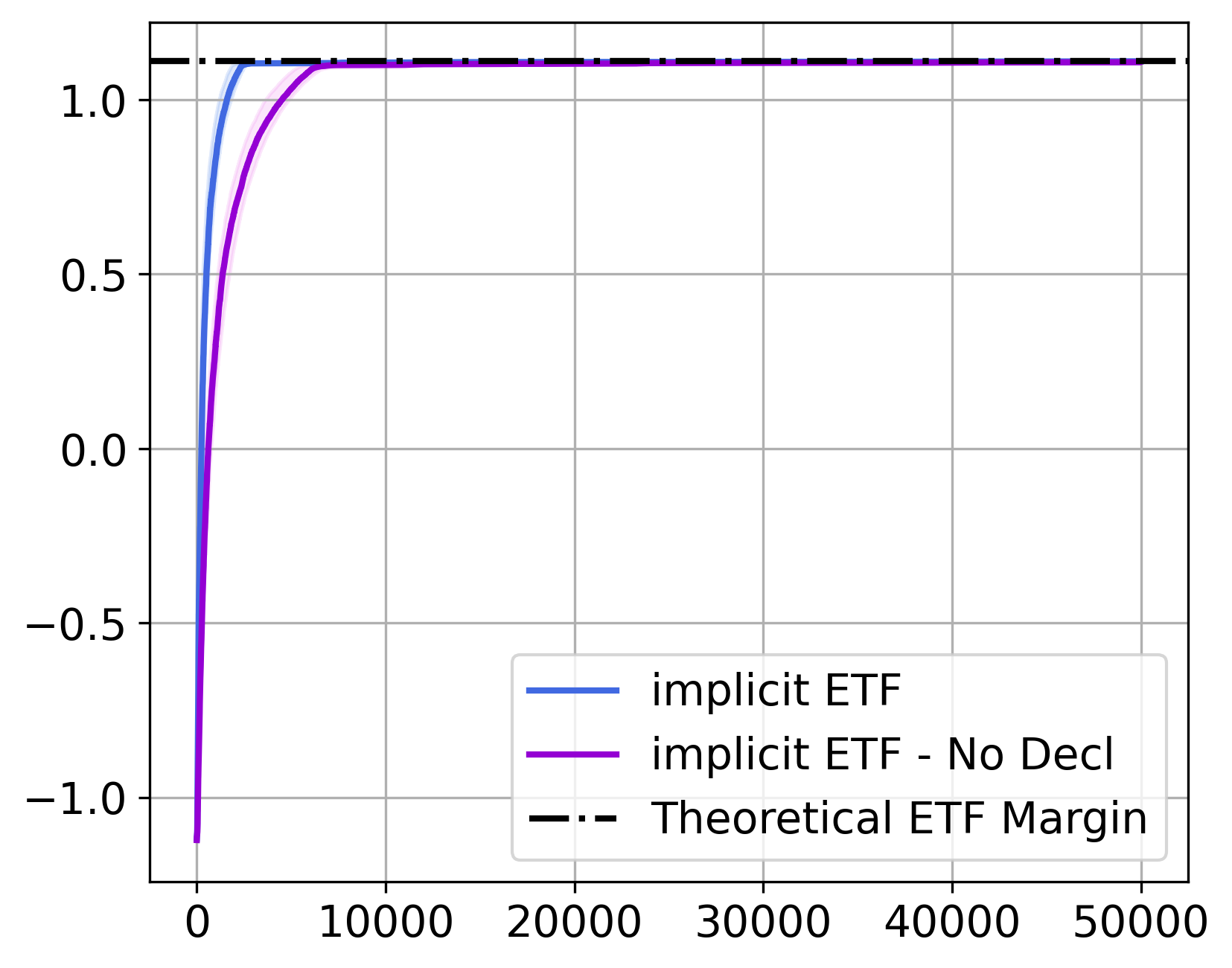}
            \caption{$\mathcal{P}_{CM}$ at EoT}
            \label{fig:cifar10-no-decl-pcm}
        \end{subfigure}
        \vskip0.3\baselineskip
        \begin{subfigure}{0.32\textwidth}
            \centering
            \includegraphics[width=\textwidth]{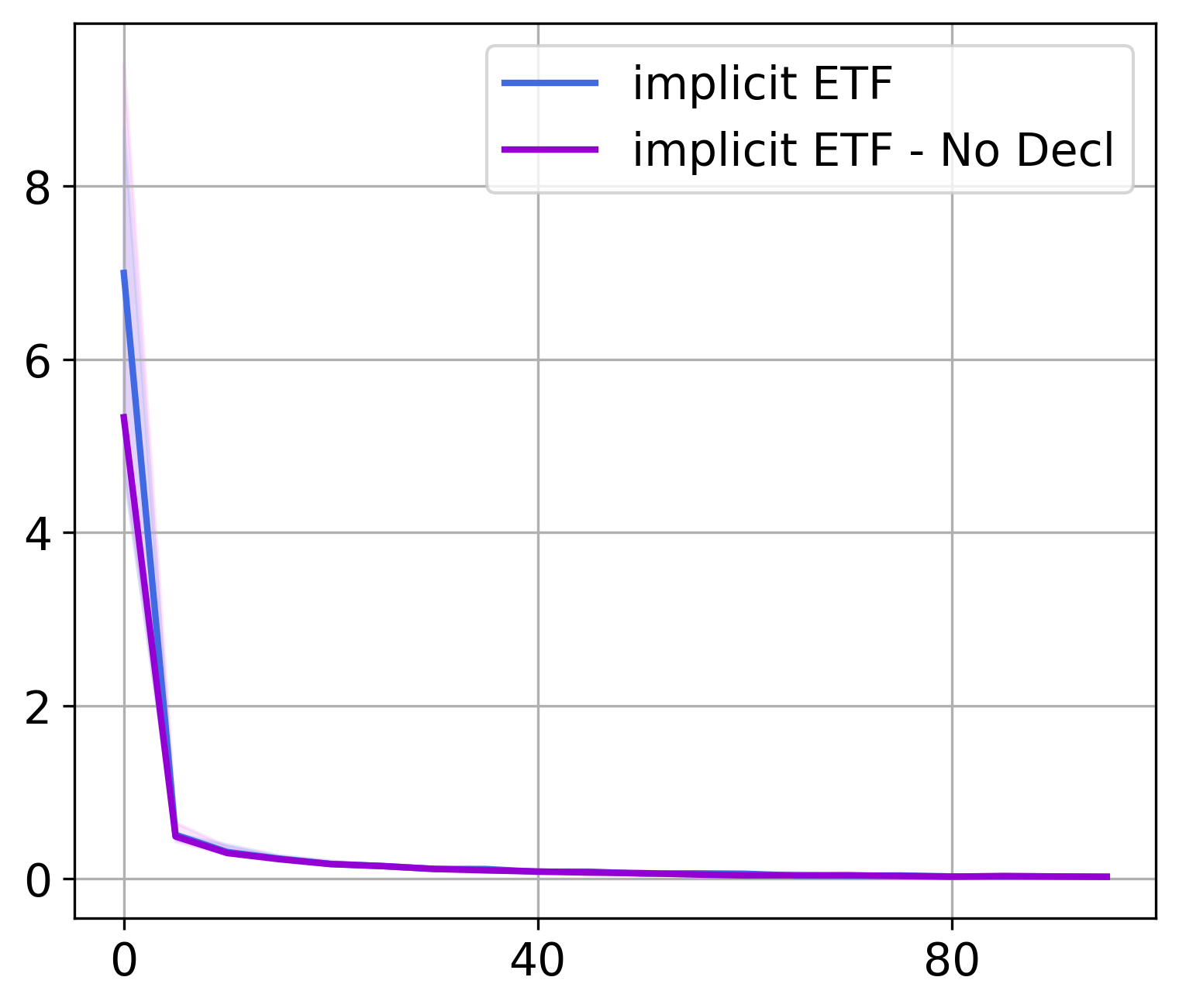}
            \caption{$NC1$}
        \end{subfigure}
        \hfill
        \begin{subfigure}{0.32\textwidth}
            \centering
            \includegraphics[width=\textwidth]{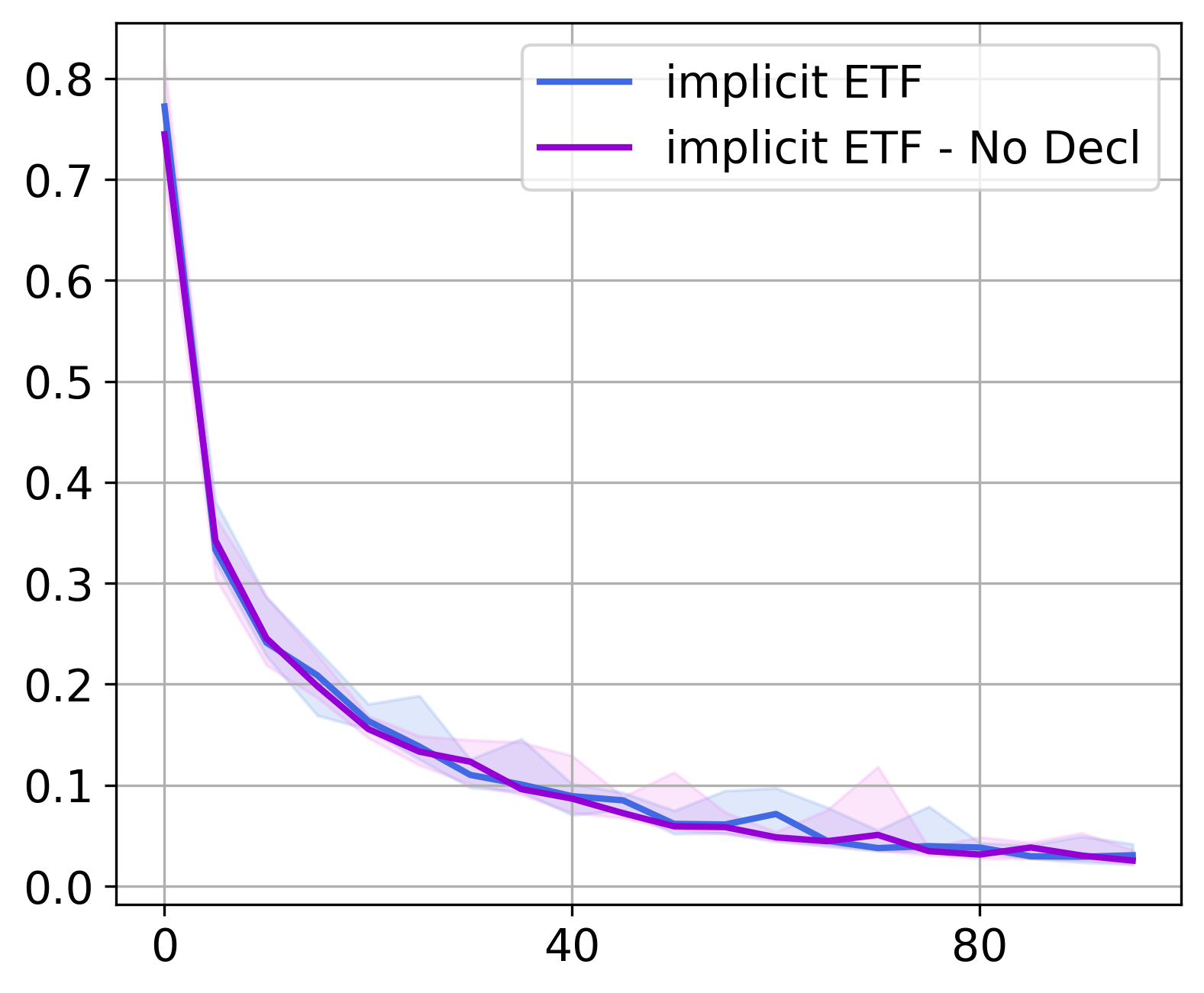}
            \caption{$NC3$}
        \end{subfigure}
        \hfill
        \begin{subfigure}{0.32\textwidth}
            \centering
            \includegraphics[width=\textwidth]{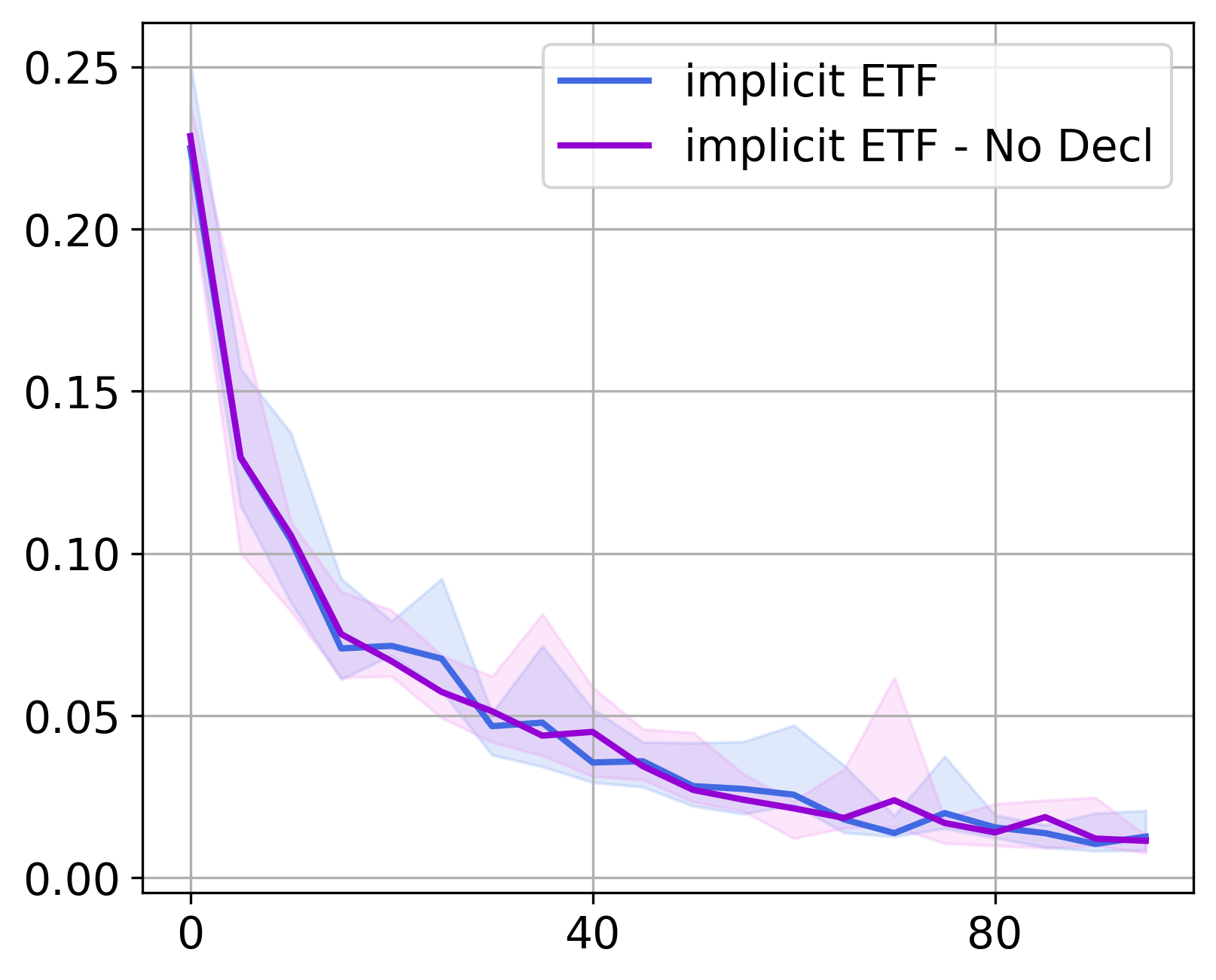}
            \caption{$\mW\Bar{\mH}$-Equinorm}
        \end{subfigure}
    \end{minipage}
    \hfill
    \begin{minipage}{0.24\textwidth}
        \centering
        \ContinuedFloat
        \begin{subfigure}{\textwidth}
            \centering
            \includegraphics[width=\textwidth]{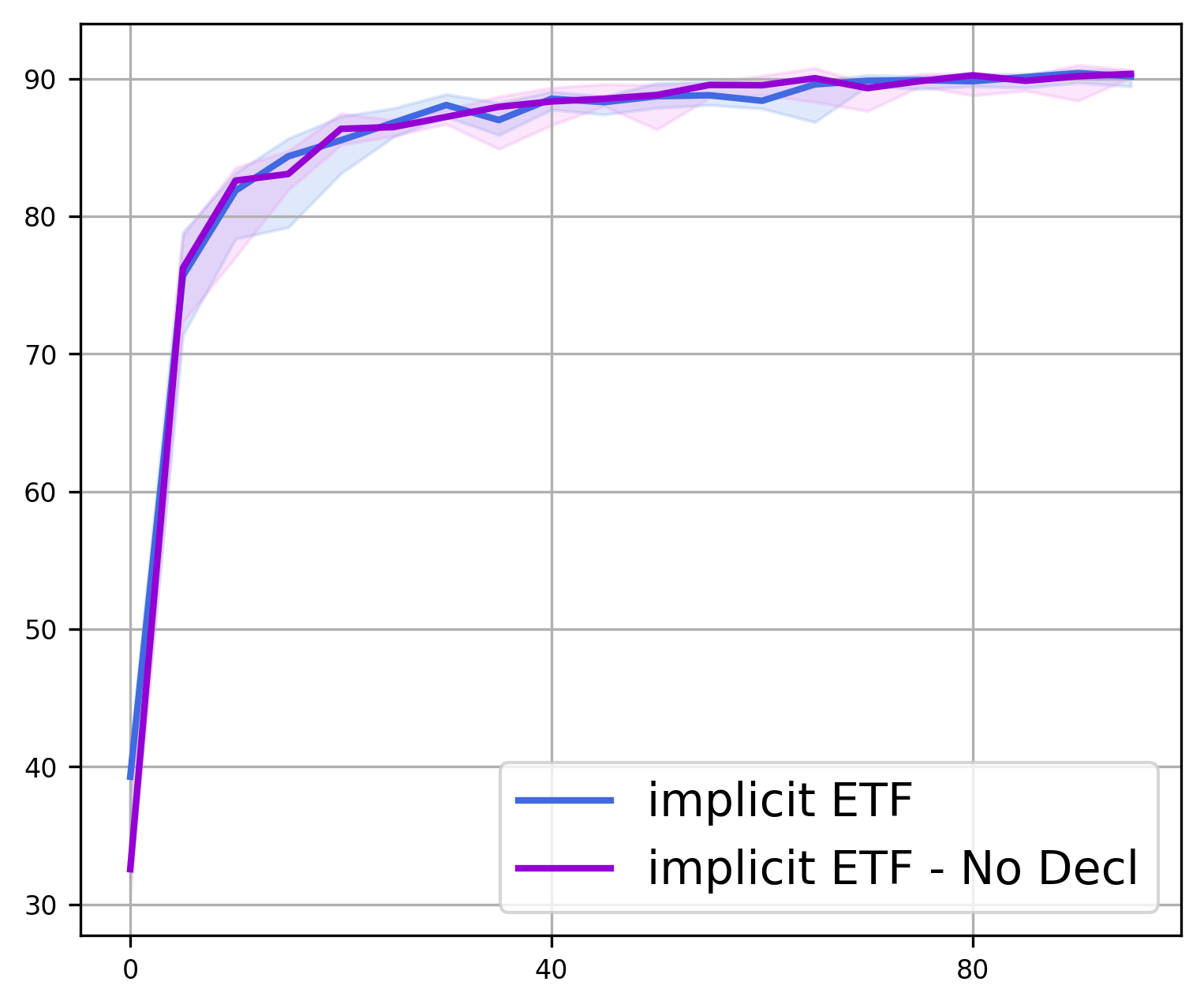}
            \caption{Test Top-1 Acc.}
        \end{subfigure}
    \end{minipage}
    \caption{CIFAR10 results on VGG-13, comparing the implicit ETF method in two scenarios: one where the DDN gradient is computed and included in the SGD update, and another where the DDN gradient computation is omitted from the update. In all plots, the x-axis represents the number of epochs, except for plot (c), where the x-axis denotes the number of training examples.}
    \label{fig:cifar10-no-decl}
\end{figure}

\begin{figure}[!h]
    \centering
    \begin{minipage}{\textwidth}
        \centering
        \begin{subfigure}{0.45\textwidth}
            \centering
            \includegraphics[width=\textwidth, height=\textheight, keepaspectratio]{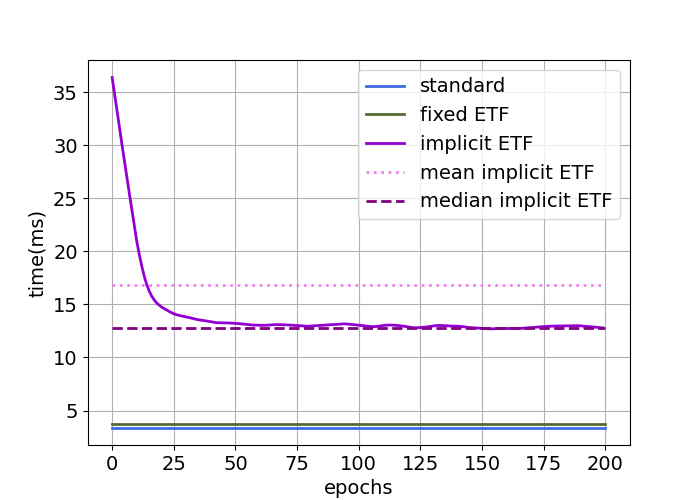}
            \caption{Forward pass times in milliseconds.}
            \label{fig:cifar10-computation-cost-forward}
        \end{subfigure}
        \hfill
        \begin{subfigure}{0.45\textwidth}
            \centering
            \includegraphics[width=\textwidth]{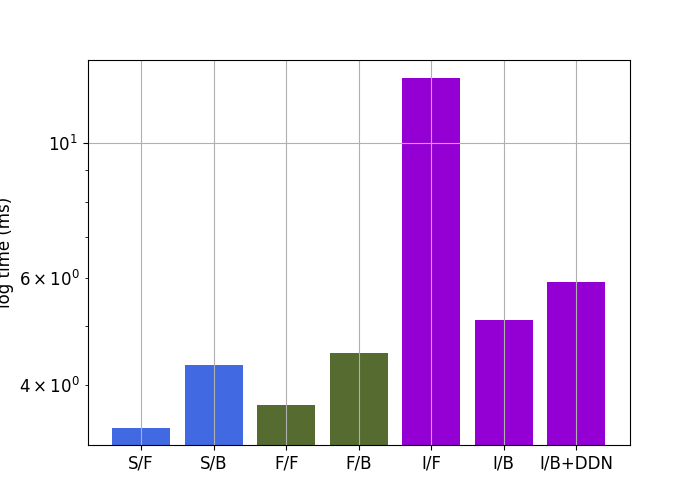}
            \caption{Forward and backward times in (log) millisecs.}
            \label{fig:cifar10-computation-cost-both}
        \end{subfigure}
    \end{minipage}
    \caption{CIFAR10 computational cost results on ResNet-18. In (a), we plot the forward pass time for each method. For the implicit ETF method, which has dynamic computation times, we also include the mean and median time values. In (b), we plot the computational cost for each forward and backward pass across methods. For the implicit ETF forward pass, we have taken its median time. The notation is as follows: S/F = Standard Forward Pass, S/B = Standard Backward Pass, F/F = Fixed ETF Forward Pass, F/B = Fixed ETF Backward Pass, I/F = Implicit ETF Forward Pass, and I/B = Implicit ETF Backward Pass.}
    \label{fig:cifar10-computation-cost}
\end{figure}

\begin{figure}[!h]
    \centering
    \begin{minipage}{0.75\textwidth}
        \centering
        \begin{subfigure}{0.32\textwidth}
            \centering
            \includegraphics[width=\textwidth, height=\textheight, keepaspectratio]{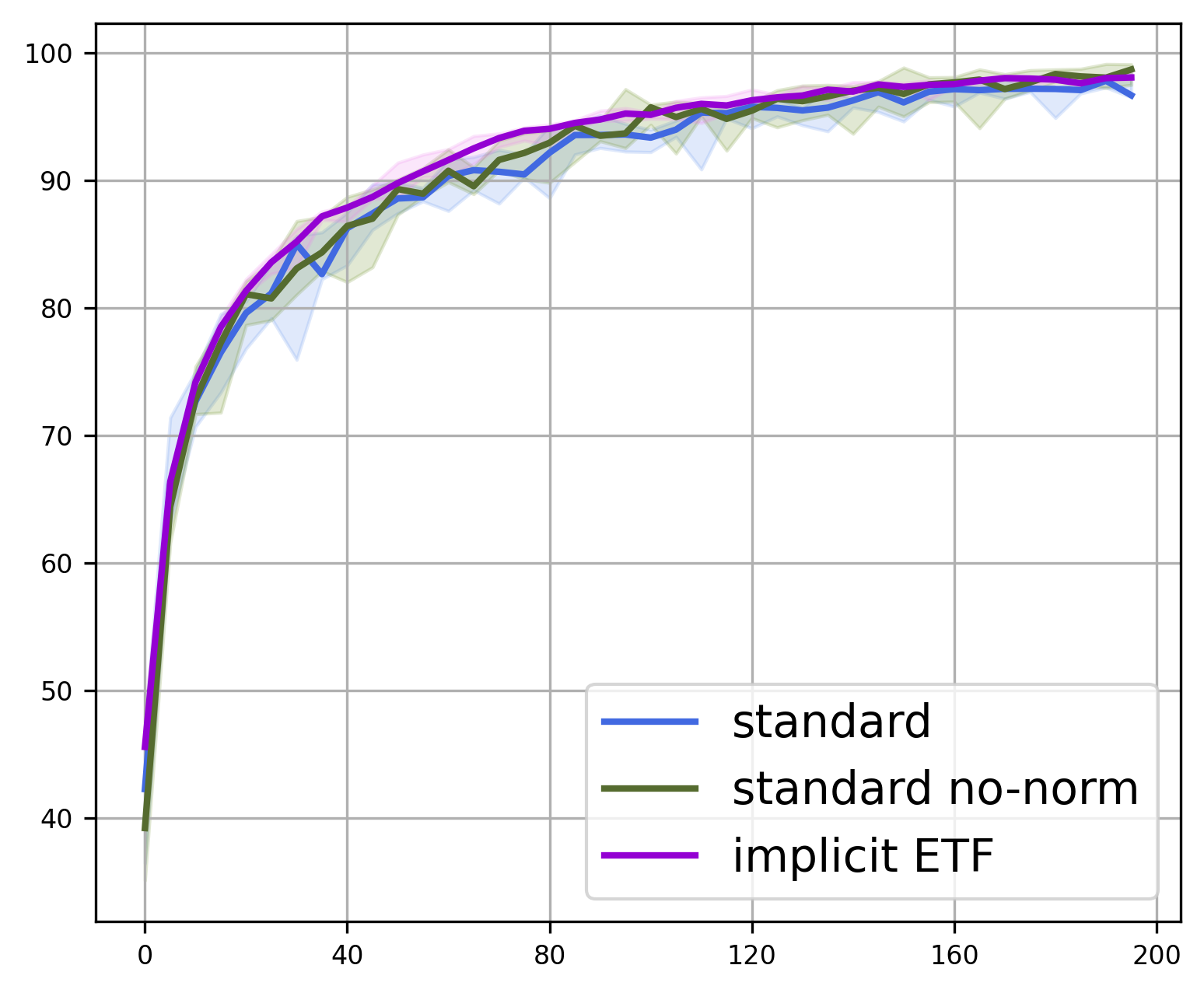}
            \caption{Train Top-1 Acc.}
        \end{subfigure}
        \hfill
        \begin{subfigure}{0.32\textwidth}
            \centering
            \includegraphics[width=\textwidth]{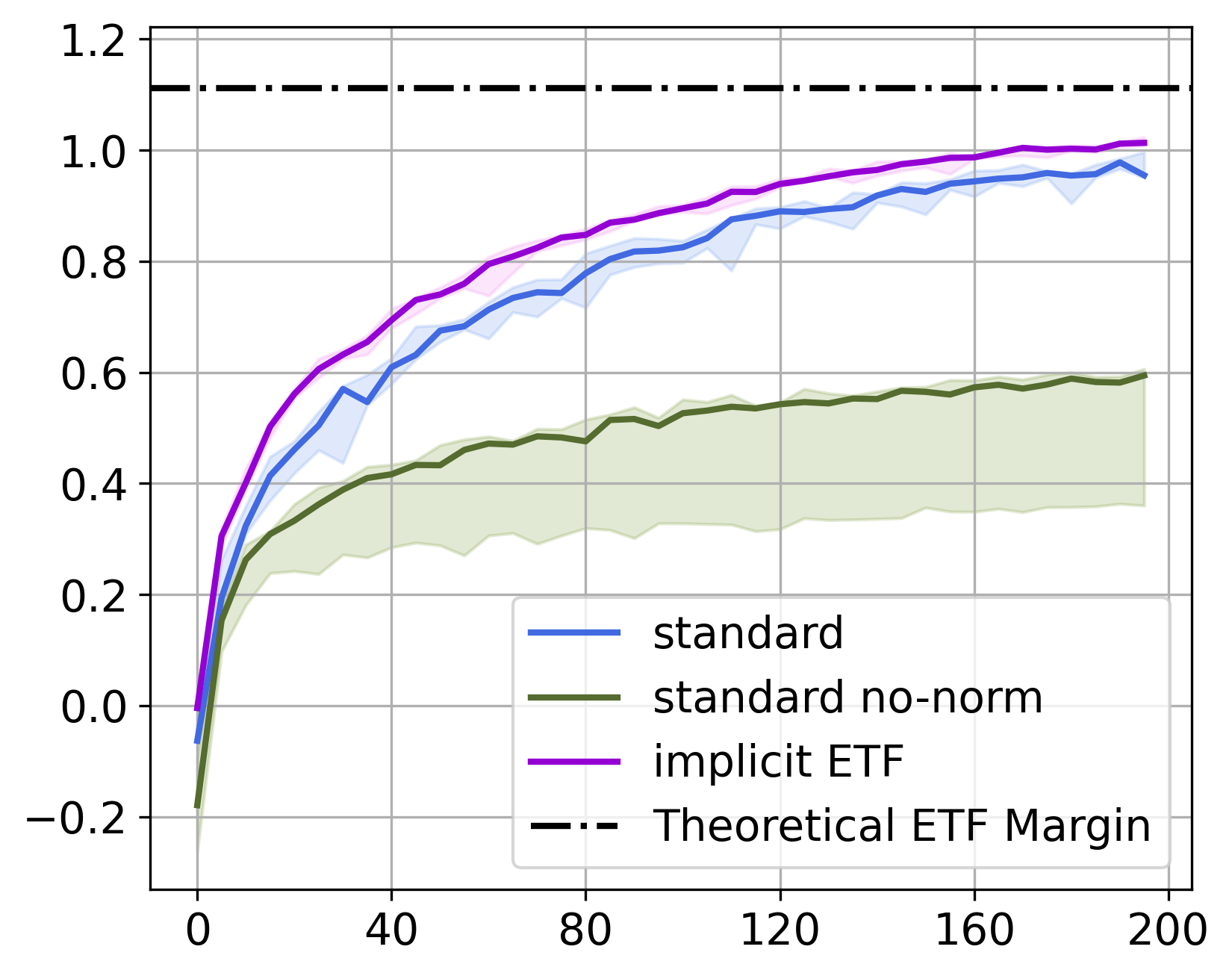}
            \caption{Avg Cos. Margin}
        \end{subfigure}
        \hfill
        \begin{subfigure}{0.32\textwidth}
            \centering
            \includegraphics[width=\textwidth]{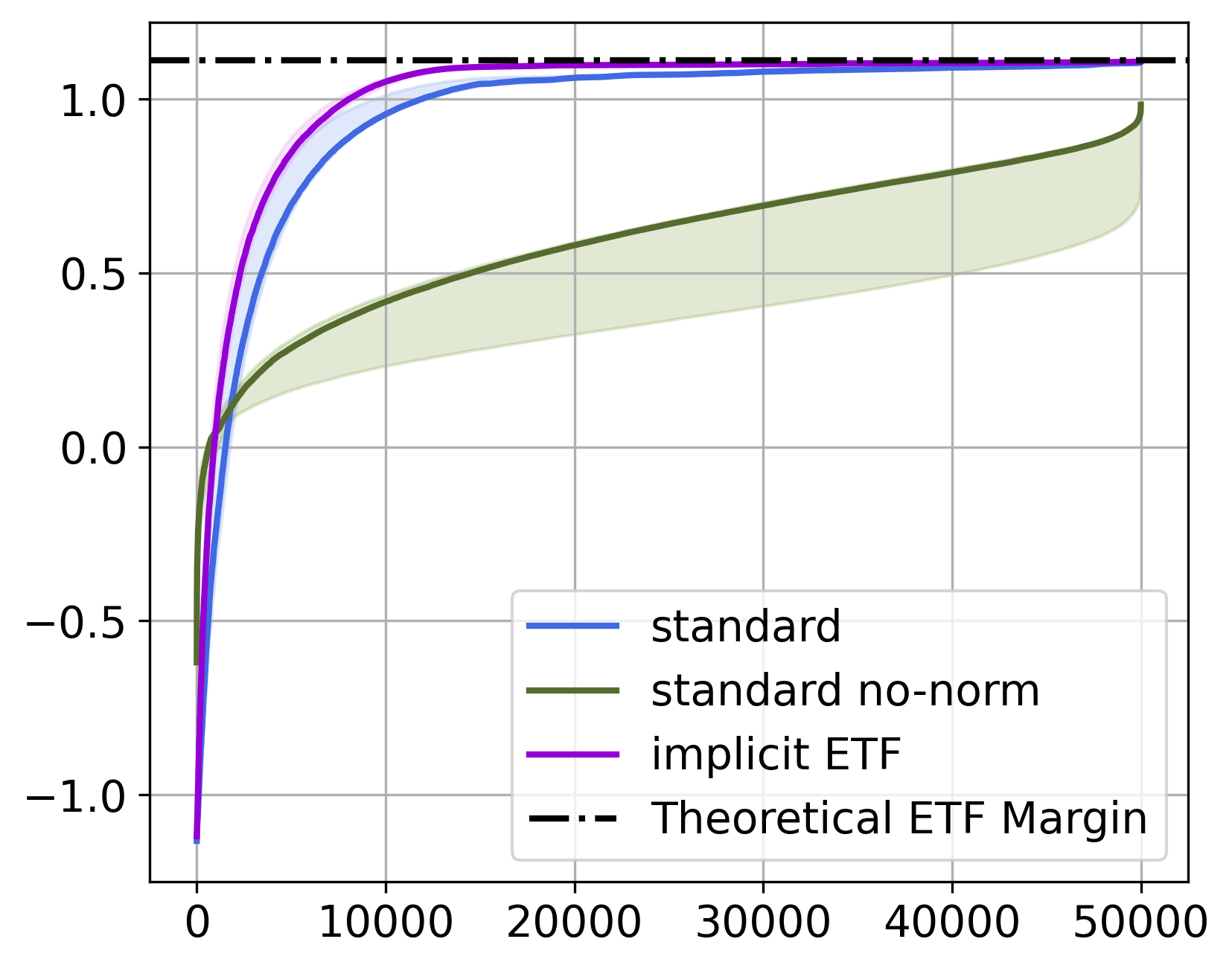}
            \caption{$\mathcal{P}_{CM}$ at EoT}
        \end{subfigure}
        \vskip0.3\baselineskip
        \begin{subfigure}{0.32\textwidth}
            \centering
            \includegraphics[width=\textwidth]{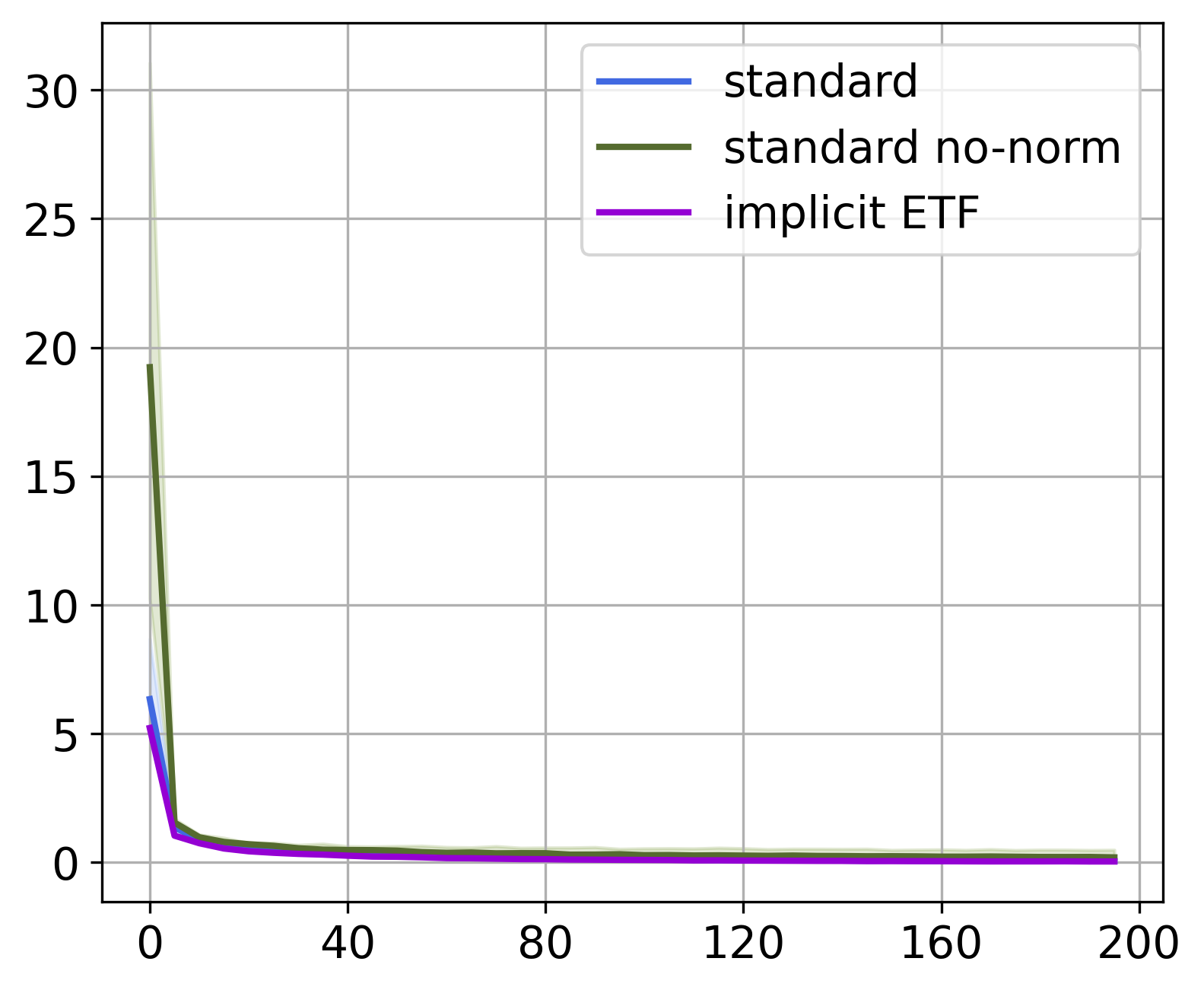}
            \caption{$NC1$}
        \end{subfigure}
        \hfill
        \begin{subfigure}{0.32\textwidth}
            \centering
            \includegraphics[width=\textwidth]{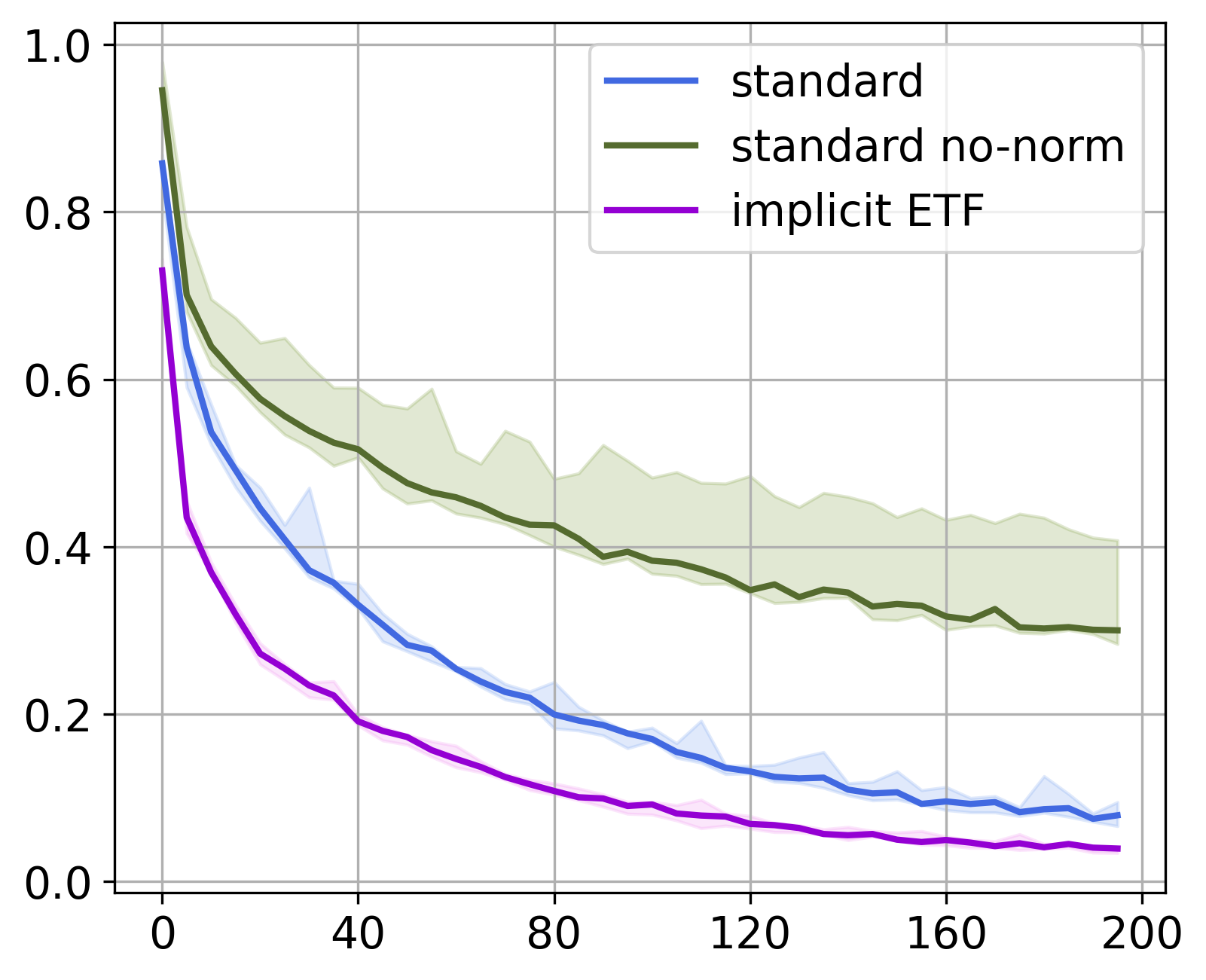}
            \caption{$NC3$}
        \end{subfigure}
        \hfill
        \begin{subfigure}{0.32\textwidth}
            \centering
            \includegraphics[width=\textwidth]{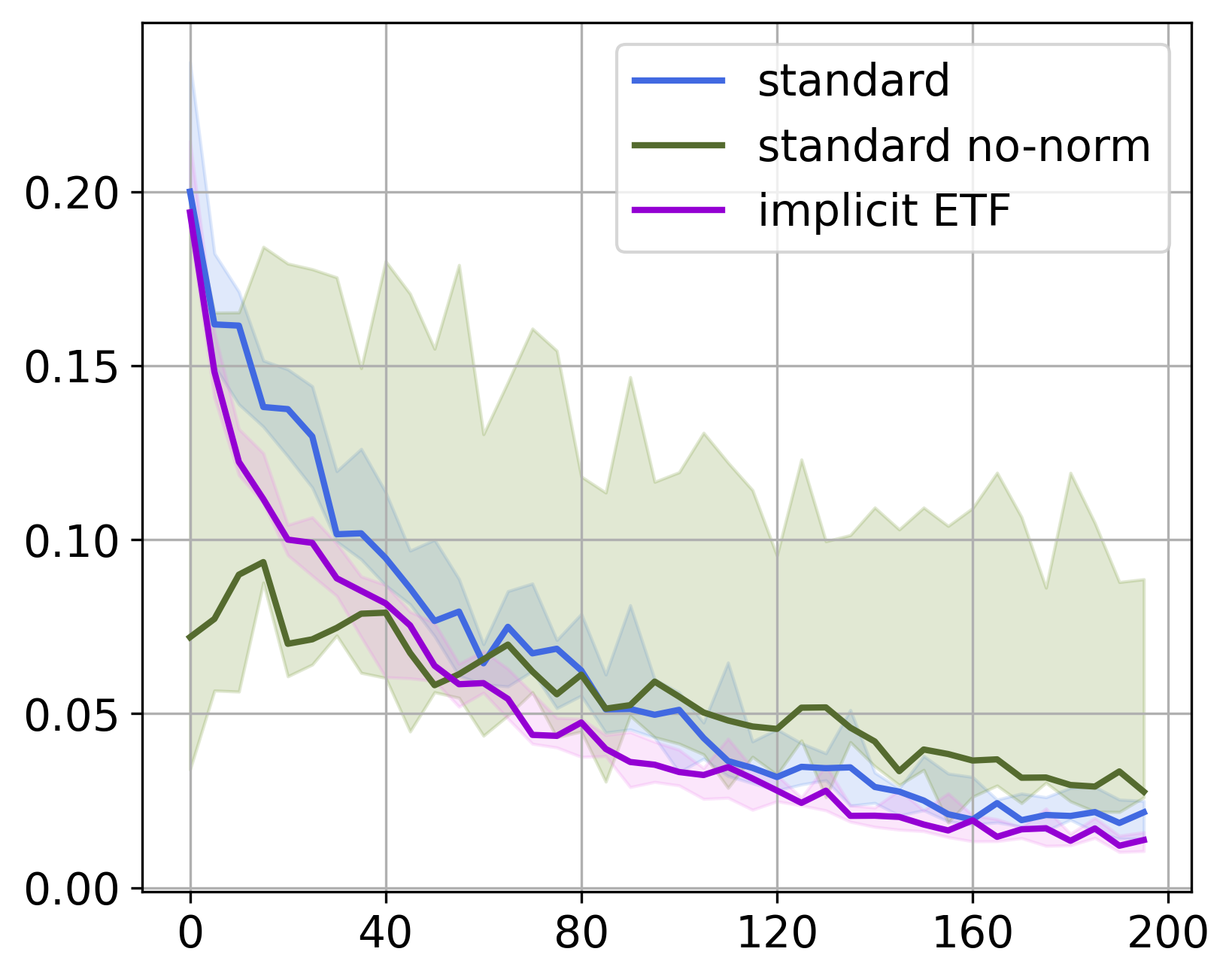}
            \caption{$\mW\Bar{\mH}$-Equinorm}
        \end{subfigure}
    \end{minipage}
    \hfill
    \begin{minipage}{0.24\textwidth}
        \centering
        \ContinuedFloat
        \begin{subfigure}{\textwidth}
            \centering
            \includegraphics[width=\textwidth]{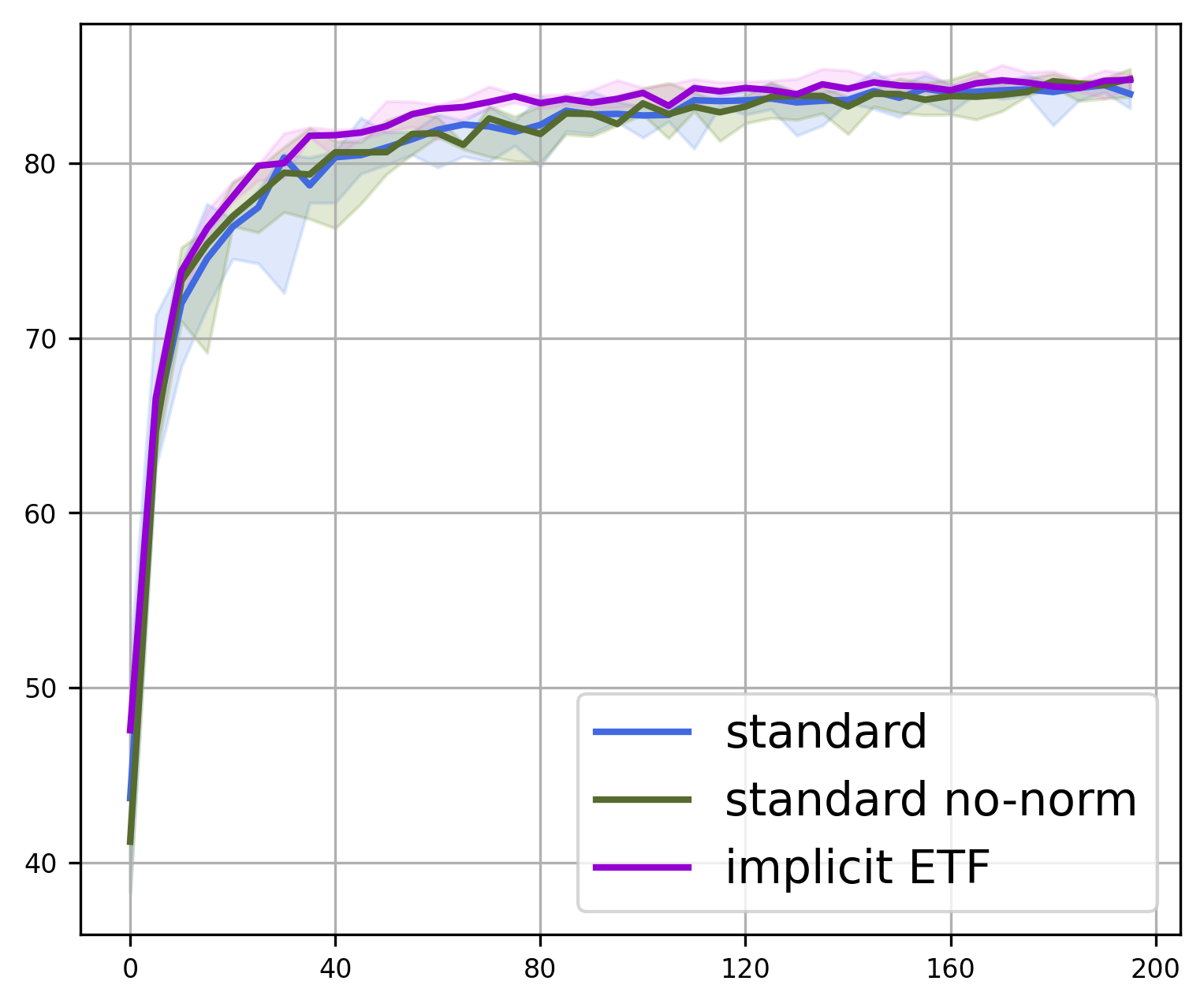}
            \caption{Test Top-1 Acc.}
        \end{subfigure}
    \end{minipage}
    \caption{CIFAR10 results on ResNet-18, where in standard no-norm we do not perform feature and weight normalisation. In all plots, the x-axis represents the number of epochs, except for plot (c), where the x-axis denotes the number of training examples.}
    \label{fig:standard-no-norm}
\end{figure}

\begin{figure}[!h]
    \centering
    \begin{minipage}{0.75\textwidth}
        \centering
        \begin{subfigure}{0.32\textwidth}
            \centering
            \includegraphics[width=\textwidth, height=\textheight, keepaspectratio]{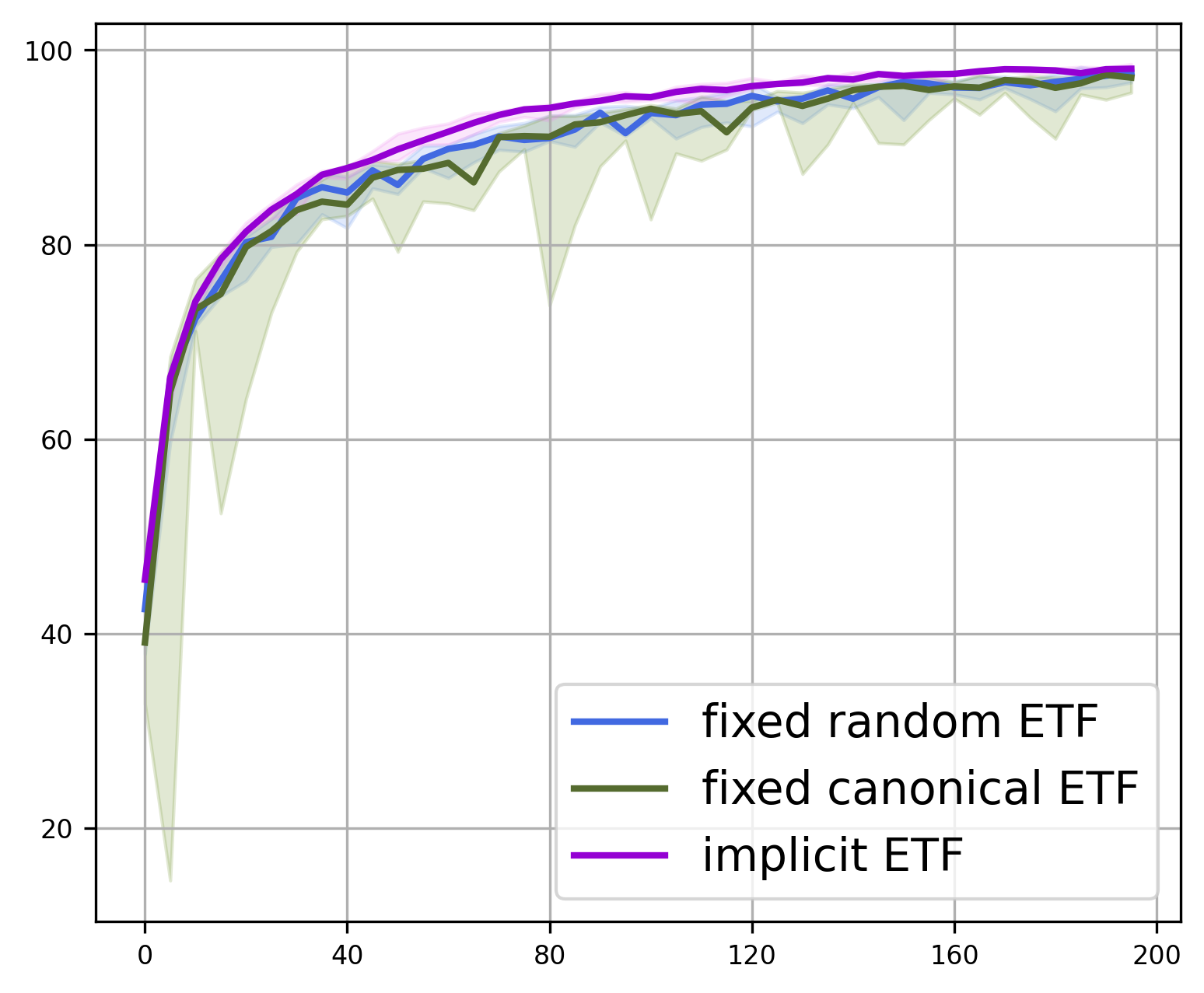}
            \caption{Train Top-1 Acc.}
        \end{subfigure}
        \hfill
        \begin{subfigure}{0.32\textwidth}
            \centering
            \includegraphics[width=\textwidth]{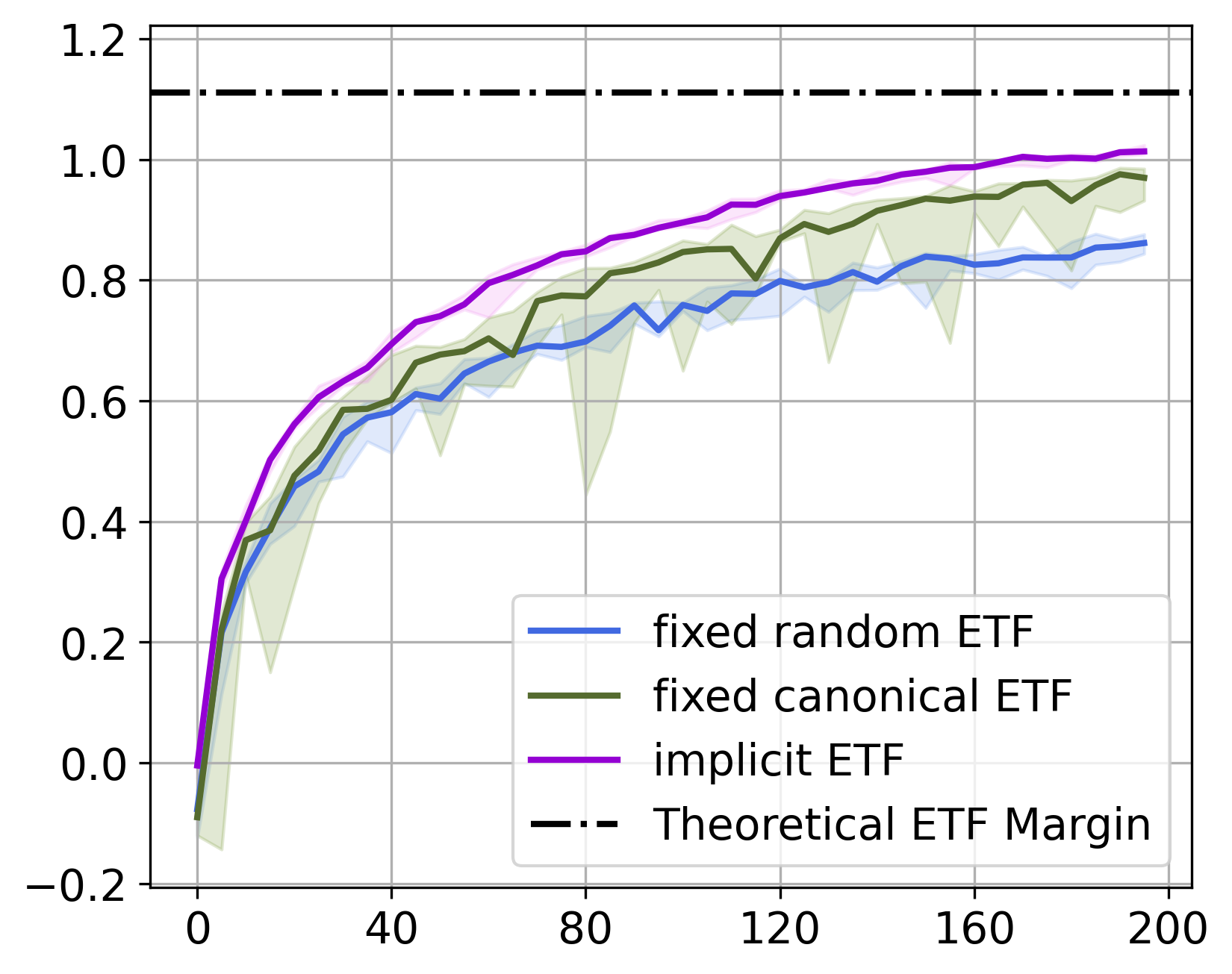}
            \caption{Avg Cos. Margin}
        \end{subfigure}
        \hfill
        \begin{subfigure}{0.32\textwidth}
            \centering
            \includegraphics[width=\textwidth]{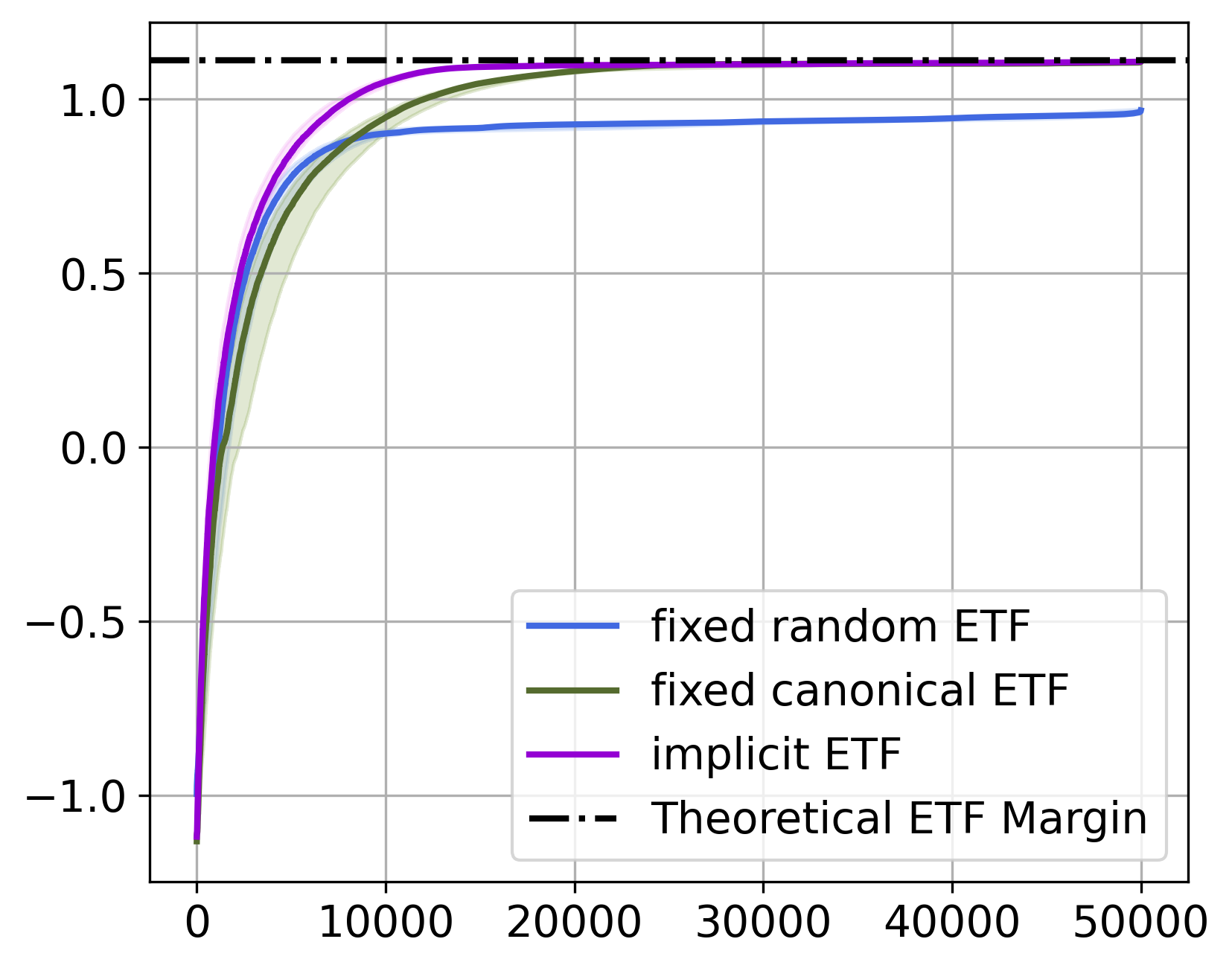}
            \caption{$\mathcal{P}_{CM}$ at EoT}
        \end{subfigure}
        \vskip0.3\baselineskip
        \begin{subfigure}{0.32\textwidth}
            \centering
            \includegraphics[width=\textwidth]{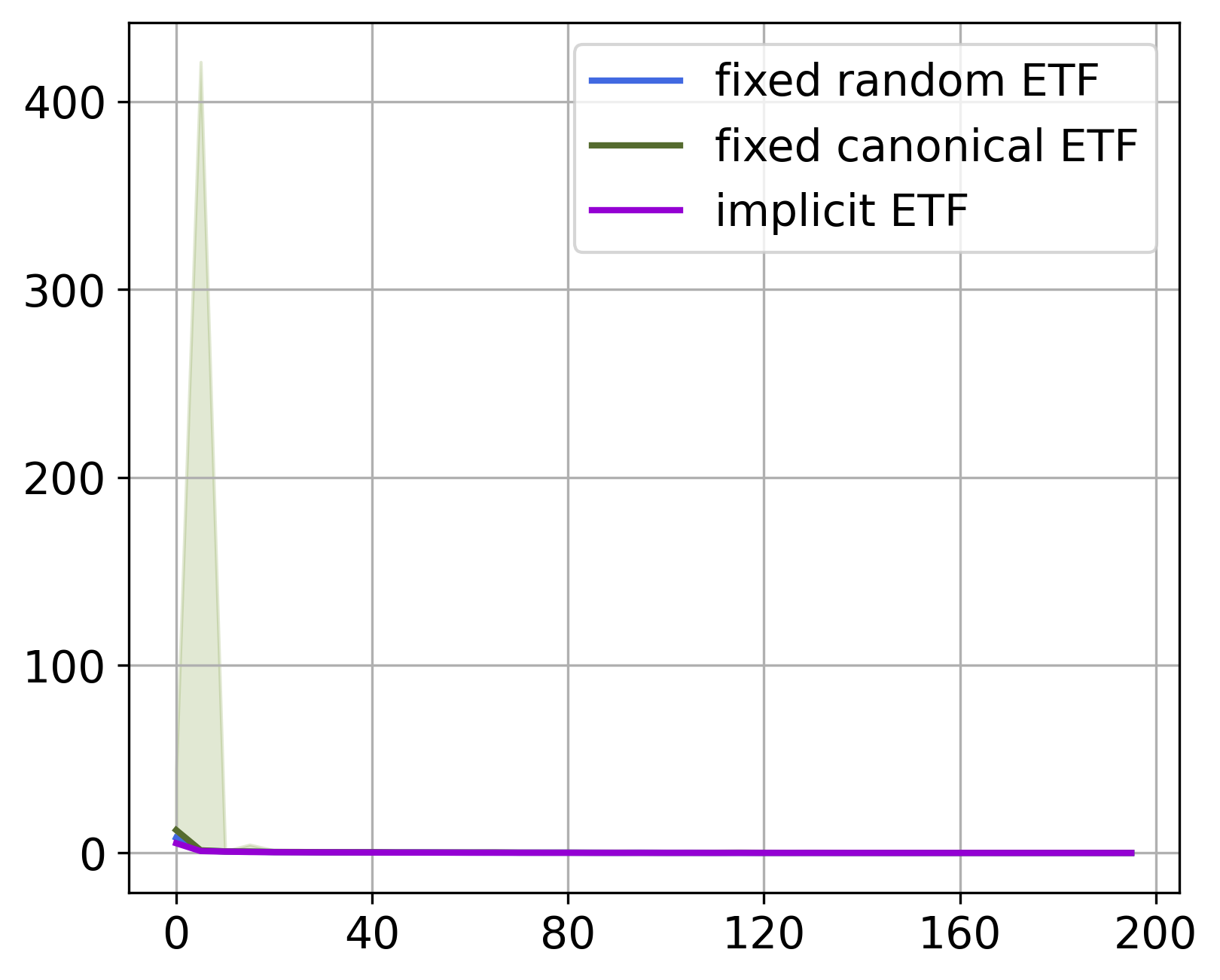}
            \caption{$NC1$}
        \end{subfigure}
        \hfill
        \begin{subfigure}{0.32\textwidth}
            \centering
            \includegraphics[width=\textwidth]{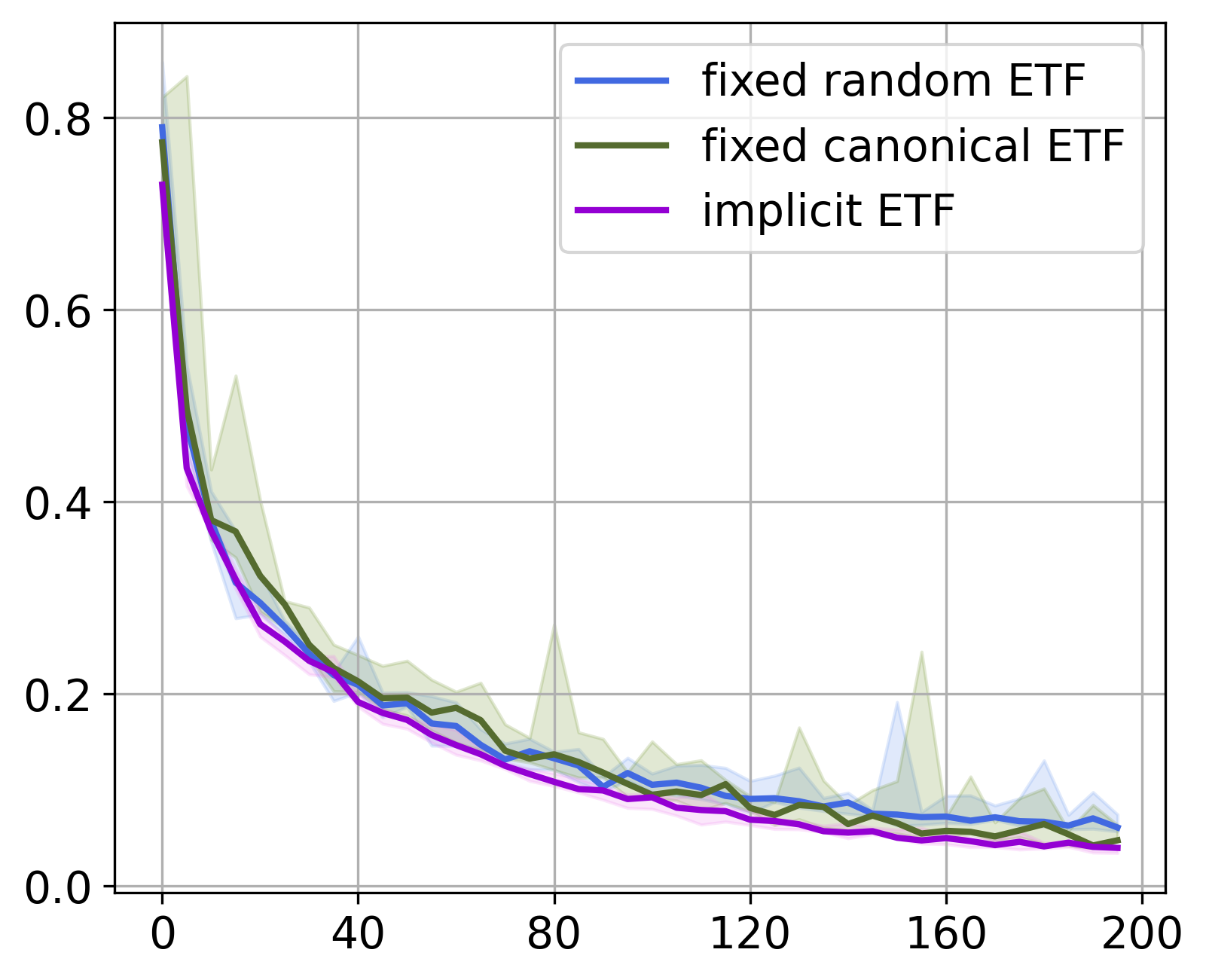}
            \caption{$NC3$}
        \end{subfigure}
        \hfill
        \begin{subfigure}{0.32\textwidth}
            \centering
            \includegraphics[width=\textwidth]{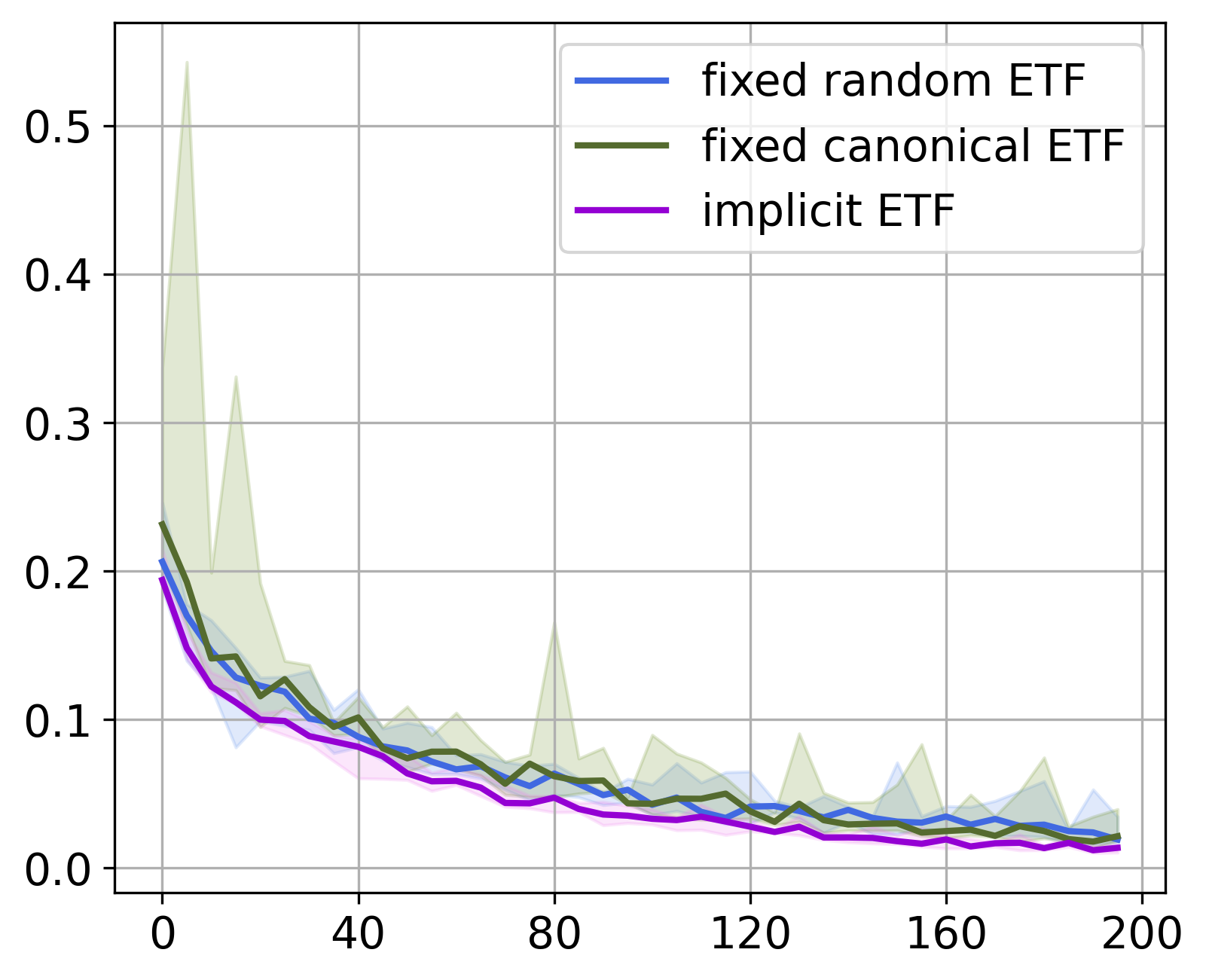}
            \caption{$\mW\Bar{\mH}$-Equinorm}
        \end{subfigure}
    \end{minipage}
    \hfill
    \begin{minipage}{0.24\textwidth}
        \centering
        \ContinuedFloat
        \begin{subfigure}{\textwidth}
            \centering
            \includegraphics[width=\textwidth]{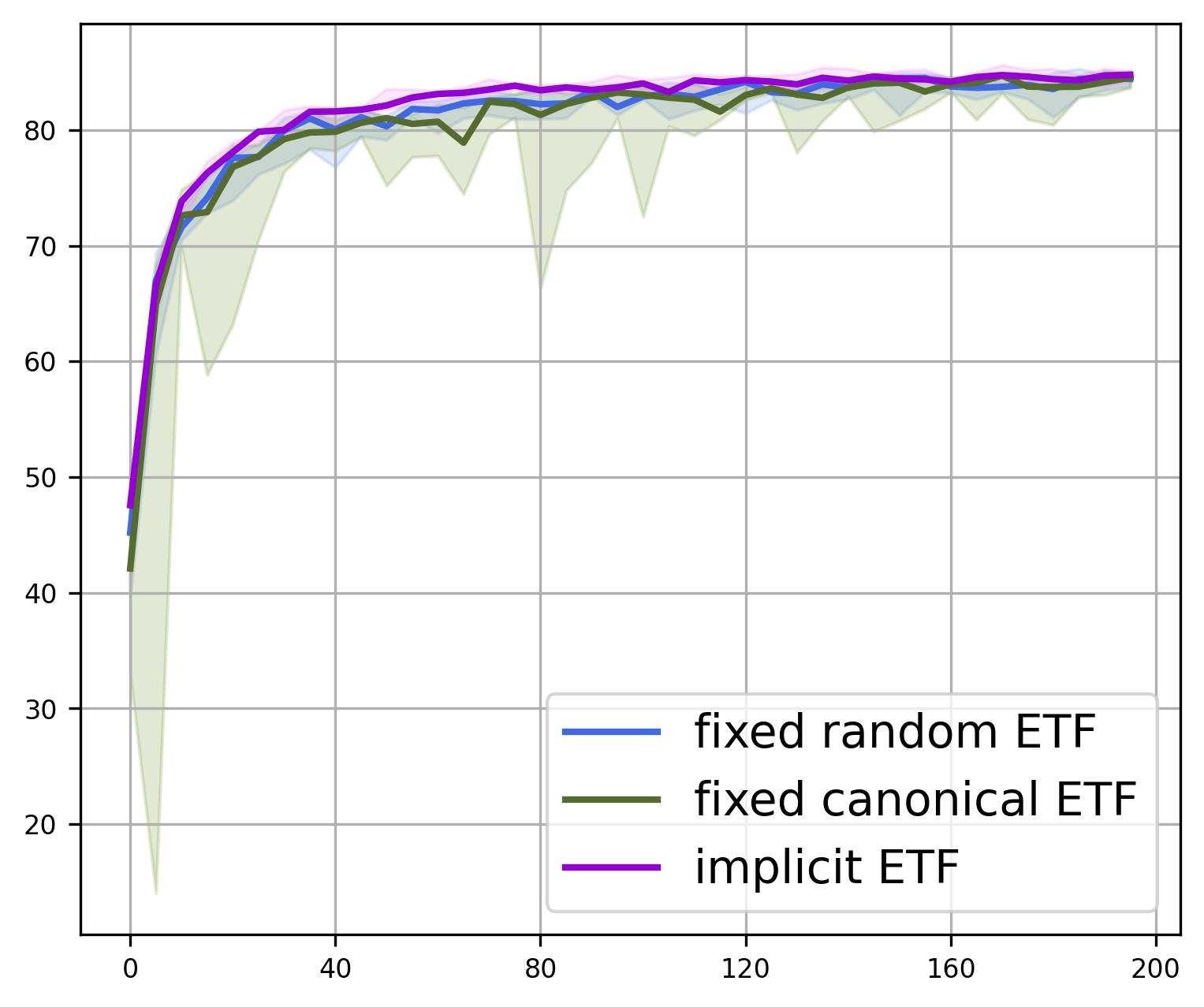}
            \caption{Test Top-1 Acc.}
        \end{subfigure}
    \end{minipage}
    \caption{CIFAR10 results on ResNet-18, where in fixed random ETF we choose a random orthogonal direction instead of the canonical one. In all plots, the x-axis represents the number of epochs, except for plot (c), where the x-axis denotes the number of training examples.}
    \label{fig:fixed-random-etf}
\end{figure}

\clearpage
\section*{NeurIPS Paper Checklist}

\begin{enumerate}

\item {\bf Claims}
    \item[] Question: Do the main claims made in the abstract and introduction accurately reflect the paper's contributions and scope?
    \item[] Answer: \answerYes{} 
    \item[] Justification: We introduced a novel method leveraging the simplex ETF structure and the neural collapse phenomenon to enhance training convergence and stability in neural networks, and we provided experimental results supporting our claims.

\item {\bf Limitations}
    \item[] Question: Does the paper discuss the limitations of the work performed by the authors?
    \item[] Answer: \answerYes{} 
    \item[] Justification: We discussed the limitations of our approach in Section 5 of the paper.

\item {\bf Theory Assumptions and Proofs}
    \item[] Question: For each theoretical result, does the paper provide the full set of assumptions and a complete (and correct) proof?
    \item[] Answer: \answerNA{} 
    \item[] Justification: The paper does not propose new theoretical results. We proposed a new method and its mathematical formulation as well as the appropriate mathematical background. 

    \item {\bf Experimental Result Reproducibility}
    \item[] Question: Does the paper fully disclose all the information needed to reproduce the main experimental results of the paper to the extent that it affects the main claims and/or conclusions of the paper (regardless of whether the code and data are provided or not)?
    \item[] Answer: \answerYes{} 
    \item[] Justification: All the necessary details required for reproducing the results are presented in the paper. Also, the code will be made publicly available upon acceptance. 

\item {\bf Open access to data and code}
    \item[] Question: Does the paper provide open access to the data and code, with sufficient instructions to faithfully reproduce the main experimental results, as described in supplemental material?
    \item[] Answer: \answerYes{} 
    \item[] Justification: Due to anonymity reasons, the code is not included in the submission. However, it will be made available in a GitHub repository upon acceptance.

\item {\bf Experimental Setting/Details}
    \item[] Question: Does the paper specify all the training and test details (e.g., data splits, hyperparameters, how they were chosen, type of optimizer, etc.) necessary to understand the results?
    \item[] Answer: \answerYes{} 
    \item[] Justification: All the specific implementation details can be found in Section 4 of the paper and the appendix.

\item {\bf Experiment Statistical Significance}
    \item[] Question: Does the paper report error bars suitably and correctly defined or other appropriate information about the statistical significance of the experiments?
    \item[] Answer: \answerYes{} 
    \item[] Justification: We have tested our results using different random seeds, and we present the median values along with the range.

\item {\bf Experiments Compute Resources}
    \item[] Question: For each experiment, does the paper provide sufficient information on the computer resources (type of compute workers, memory, time of execution) needed to reproduce the experiments?
    \item[] Answer: \answerYes{} 
    \item[] Justification: We have specified the GPUs that were used for our experiments.
    
\item {\bf Code Of Ethics}
    \item[] Question: Does the research conducted in the paper conform, in every respect, with the NeurIPS Code of Ethics \url{https://neurips.cc/public/EthicsGuidelines}?
    \item[] Answer: \answerYes{} 
    \item[] Justification: The paper conforms with the NeurIPS code of ethics.

\item {\bf Broader Impacts}
    \item[] Question: Does the paper discuss both potential positive societal impacts and negative societal impacts of the work performed?
    \item[] Answer: \answerNA{} 
    \item[] Justification: There is no societal impact of the work performed in this paper.

\item {\bf Safeguards}
    \item[] Question: Does the paper describe safeguards that have been put in place for responsible release of data or models that have a high risk for misuse (e.g., pretrained language models, image generators, or scraped datasets)?
    \item[] Answer: \answerNA{} 
    \item[] Justification: The paper does not pose such risks.

\item {\bf Licenses for existing assets}
    \item[] Question: Are the creators or original owners of assets (e.g., code, data, models), used in the paper, properly credited and are the license and terms of use explicitly mentioned and properly respected?
    \item[] Answer: \answerYes{} 
    \item[] Justification: Yes, every model architecture and dataset used were cited.

\item {\bf New Assets}
    \item[] Question: Are new assets introduced in the paper well documented and is the documentation provided alongside the assets?
    \item[] Answer: \answerNA{} 
    \item[] Justification: The paper does not release new assets.

\item {\bf Crowdsourcing and Research with Human Subjects}
    \item[] Question: For crowdsourcing experiments and research with human subjects, does the paper include the full text of instructions given to participants and screenshots, if applicable, as well as details about compensation (if any)? 
    \item[] Answer: \answerNA{} 
    \item[] Justification:  The paper does not involve crowdsourcing nor research with human subjects.

\item {\bf Institutional Review Board (IRB) Approvals or Equivalent for Research with Human Subjects}
    \item[] Question: Does the paper describe potential risks incurred by study participants, whether such risks were disclosed to the subjects, and whether Institutional Review Board (IRB) approvals (or an equivalent approval/review based on the requirements of your country or institution) were obtained?
    \item[] Answer: \answerNA{} 
    \item[] Justification:  The paper does not involve crowdsourcing nor research with human subjects.

\end{enumerate}

\end{document}